\def\munderbar#1{\underline{\sbox\tw@{$#1$}\dp\tw@\z@\box\tw@}}
\newtheorem{theorem}{Theorem}
\newtheorem{informaltheorem}{Theorem}
\newtheorem*{theorem*}{Theorem}
\newtheorem{definition}{Definition}
\newtheorem{lemma}{Lemma}
\newtheorem{remark}{Remark}
\newtheorem{corollary}{Corollary}
\newtheorem{proposition}{Proposition}
\newtheorem{assumption}{Assumption}
\newtheorem{example}{Example}
\newcommand{\be}{\begin{equation}}
\newcommand{\ee}{\end{equation}}
\newcommand{\bea}{\begin{equation*}\begin{aligned}}
\newcommand{\eea}{\end{aligned}\end{equation*}}
\newcommand{\R}{\mathbb{R}}
\newcommand{\Min}{\min\limits_}
\newcommand{\Sup}{\sup\limits_}
\newcommand{\Inf}{\inf\limits_}
\newcommand{\Tr}[1]{\Trace [ #1 ]}
\newcommand{\wh}{\widehat}
\newcommand{\mc}{\mathcal}
\newcommand{\mbb}{\mathbb}
\newcommand{\inner}[2]{\big \langle #1, #2 \big \rangle }
\newcommand{\norm}[1]{\left\|#1\right\|}
\newcommand{\cov}{\Sigma} 
\newcommand{\covsa}{\wh{\Sigma}} 
\newcommand{\Vsa}{\wh{V}}
\newcommand{\vsa}{\hat{v}}
\newcommand{\xsa}{\hat{x}}
\newcommand{\trueP}{\mbb P}
\DeclareMathOperator{\Trace}{Tr}
\DeclareMathOperator{\Diag}{Diag}
\DeclareMathOperator{\st}{s.t.}
\DeclareMathOperator{\KL}{KL}
\DeclareMathOperator{\W}{W}
\DeclareMathOperator{\FR}{FR}
\DeclareMathOperator*{\Argmin}{Argmin}
\DeclareMathOperator{\Minval}{Min}
\DeclareMathOperator{\Fea}{Fea}
\newcommand{\PSD}{\mathbb{S}_{+}} 
\newcommand{\PD}{\mathbb{S}_{++}} 
\newcommand{\opt}{^\star}
\newcommand{\eps}{\varepsilon}
\newcommand{\BB}{\mbb U}
\newcommand{\B}{\mc B}
\newcommand{\X}{X}
\newcommand{\EE}{\mathbb{E}}
\newcommand{\half}{\frac{1}{2}}
\newcommand{\dualvar}{\gamma}
\newcommand{\ie}{{\em i.e.}}
\newcommand{\p}{\mathbb P}
\newcommand{\psa}{\wh \p}
\newcommand{\Sym}{\mbb S}
\newcommand{\m}{\mu}
\newcommand{\msa}{\wh \m}
\title[A Geometric Unification of Distributionally Robust Covariance Estimators]{ {\Large A Geometric Unification\\of Distributionally Robust Covariance Estimators:\\Shrinking the Spectrum by Inflating the Ambiguity Set}}
 \author{Man-Chung Yue, Yves Rychener, Daniel Kuhn, Viet Anh Nguyen}
 \thanks{The authors are with the University of Hong Kong (\texttt{mcyue@hku.hk}), the Ecole Polytechnique F\'{e}d\'{e}rale de Lausanne (\texttt{yves.rychener, daniel.kuhn@epfl.ch}), and the Chinese University of Hong Kong (\texttt{nguyen@se.cuhk.edu.hk}).
 }
\date{\today}
\begin{document}

\begin{abstract}
The state-of-the-art methods for estimating high-dimensional covariance matrices all shrink the eigenvalues of the sample covariance matrix towards a data-insensitive shrinkage target. The underlying shrinkage transformation is either chosen {\em heuristically}---without compelling theoretical justification---or {\em optimally} in view of restrictive distributional assumptions. In this paper, we propose a principled approach to construct covariance estimators without imposing restrictive assumptions. That is, we study distributionally robust covariance estimation problems that minimize the worst-case Frobenius error with respect to all data distributions close to a nominal distribution, where the proximity of distributions is measured via a divergence on the space of covariance matrices. {We identify conditions} on this divergence under which the resulting minimizers represent shrinkage estimators. We show that the corresponding shrinkage transformations are intimately related to the geometrical properties of the underlying divergence. We also prove that our robust estimators are efficiently computable and asymptotically consistent and that they enjoy finite-sample performance guarantees. We exemplify our general methodology by synthesizing explicit estimators induced by the Kullback-Leibler, Fisher-Rao, and Wasserstein divergences. Numerical experiments based on synthetic and real data show that our robust estimators are competitive with state-of-the-art estimators. 
\end{abstract}

\maketitle


\section{Introduction}
The covariance matrix $\cov_0$ of a random vector $\xi\in\R^p$ is a fundamental summary statistic that captures the dispersion of~$\xi$. Together with the mean vector~$\mu_0$, it characterizes a unique member of the family of Gaussian distributions, which occupies the central stage in statistics and probability theory. Hence, any probabilistic model involving Gaussian distributions requires an estimate of~$\cov_0$ as an input. For example, Gaussian distributions are ubiquitous in finance ({\em e.g.}, in portfolio theory~\cite{markowitzportfolio}), in statistical learning ({\em e.g.}, in linear and quadratic discriminant analysis~\cite[\S~4.3]{hastie2009elements}) or control and signal processing ({\em e.g.}, in Kalman filtering~\cite{kalmanfilter}). In addition, $\cov_0$ is intimately related to the correlation matrix, including the Pearson correlation coefficients~\cite{pearson1895correlation}, and it permeates medical statistics~\cite{taylor1990interpretation} and correlation network analysis~\cite{eguiluz2005scale, mantegna1999hierarchical} etc.

If the distribution~$\trueP$ of $\xi$ is {\em known}, then the mean vector $\mu_0 = \EE_{\trueP}[\xi]$ and the covariance matrix $\cov_0 = \EE_{\trueP}[(\xi - \mu_0)(\xi - \mu_0)^\top]$ can be obtained by evaluating the relevant integrals with respect to~$\trueP$---either analytically or via numerical integration quadratures. If~$\trueP$ is {\em unknown}, however, one typically has to estimate~$\mu_0$ and~$\cov_0$ from~$n$ independent samples $\widehat{\xi}_1,\ldots,\widehat{\xi}_n \sim \trueP$. Arguably the simplest estimators for~$\mu_0$ and~$\cov_0$ are the sample mean $\widehat{\mu}_{\mathrm{SA}}=\frac{1}{n}\sum_{i=1}^n\widehat{\xi}_i$ and the sample covariance matrix $\covsa_{\mathrm{SA}} =\frac{1}{n-1}\sum_{i=1}^n(\widehat{\xi}_i-\widehat{\mu}_{\mathrm{SA}})(\widehat{\xi}_i-\widehat{\mu}_{\mathrm{SA}})^\top$, respectively. An elementary calculation shows that~$\covsa_{\mathrm{SA}}$ is unbiased. Up to scaling, $\covsa_{\mathrm{SA}}$ further coincides with the maximum likelihood estimator for~$\cov_0$ provided that~$\trueP$ constitutes a normal distribution.  
In~1975, much to the surprise of statisticians, Charles Stein showed that one can strictly reduce the mean squared error of~$\covsa_{\mathrm{SA}}$ by shrinking it towards a constant matrix independent of the data~\cite{james1992estimation, stein1975estimation}. Even though it improves the mean squared error, Stein's shrinkage transformation suffers from two major shortcomings, that is, it may alter the order of the estimator's eigenvalues and may even render some eigenvalues negative~\cite{ref:rajaratnam2016theoretical}. 
Nonetheless, since Stein's surprising discovery, the study of shrinkage estimators embodies an important research area in statistics.

Note also that~$\covsa_{\mathrm{SA}}$ is ill-conditioned if~$p\lesssim n$ and even singular if~$p>n$~\cite{van1961certain}. Indeed, as~$\covsa_{\mathrm{SA}}$ is unbiased and as the maximum eigenvalue function is convex on the space of symmetric matrices, Jensen's inequality ensures that the largest eigenvalue of $\covsa_{\mathrm{SA}}$ exceeds, in expectation, the largest eigenvalue of $\cov_0$. Similarly, the smallest eigenvalue of $\covsa_{\mathrm{SA}}$ undershoots, in expectation, the smallest eigenvalue of $\cov_0$. Hence, the condition number of $\covsa_{\mathrm{SA}}$, defined as the ratio of its largest to its smallest eigenvalue, tends to exceed the condition number of~$\cov_0$. This effect is most pronounced if $\cov_0$ is (approximately) proportional to the identity matrix~$I_p$ and is exacerbated with increasing dimension~$p$. A simple and effective method to improve the condition number is to construct a {\em linear} shrinkage estimator by forming a convex combination of~$\covsa_{\mathrm{SA}}$ and a data-insensitive shrinkage target such as $\frac{1}{p}\Tr{\covsa_{\mathrm{SA}}} I_p$ \cite{ref:ledoi2004well}. Other popular shrinkage targets include the constant correlation model~\cite{ledoit2004honey}, that is, a modified sample covariance matrix under which all pairwise correlations are equalized, the single index model~\cite{ledoit2003improved}, that is, the sum of a rank-one and a diagonal matrix representing systematic and idiosyncratic risk factors as in Sharpe's single index model~\cite{sharpe1963simplified}, and the diagonal matrix model~\cite{touloumis2015nonparametric}, that is, the diagonal matrix that contains all sample eigenvalues on its main diagonal. The shrinkage weight of~$\covsa_{\mathrm{SA}}$ is usually tuned to minimize the Frobenius risk, that is, the expected squared Frobenius norm distance between the estimator and~$\cov_0$.  Linear shrinkage estimators can be computed highly efficiently, improve the condition number of the sample covariance matrix, and are guaranteed to have full rank even if~$p>n$. 

In the remainder of the paper, we focus on covariance estimators that depend on the samples only indirectly through the sample covariance matrix. This assumption is unrestrictive. Indeed, it is satisfied by all commonly used covariance estimators. Moreover, it comes at no loss of generality if $\trueP$ is a normal distribution, in which case~$\covsa_{\mathrm{SA}}$ constitutes a sufficient statistic for~$\cov_0$. Without prior information about the eigenvectors of $\cov_0$, it is natural to restrict attention to rotation equivariant estimators. Rotation equivariance means that evaluating the estimator $\covsa$ on the rotated dataset $\{R\widehat{\xi}_i\}_{I=1}^N$ is equivalent to evaluating the rotated estimator $R\covsa R^\top$ on the the original dataset $\{\widehat{\xi}_i\}_{i=1}^n$ for any rotation matrix $R$. One can show that any rotation equivariant estimator~$\covsa$ commutes with the sample covariance matrix~$\covsa_{\mathrm{SA}}$, that is, $\covsa_{\mathrm{SA}}$ and $\covsa$ share the same eigenvectors, and the spectrum of~$\covsa$ can be viewed as a transformation of the spectrum of~$\covsa_{\mathrm{SA}}$\cite[Lemma~5.3]{perlman2007stat}. Such spectral transformations are referred to as shrinkage transformations. Note that the linear shrinkage estimators discussed above are rotation equivariant only if the shrinkage target commutes with~$\covsa_{\mathrm{SA}}$.

If $\trueP$ is governed by a spiked covariance model, that is, if $\trueP$ is Gaussian, $p$ and $n$ tend to infinity at an asymptotically constant ratio and 
$\cov_0$ constitutes a fixed-rank perturbation of the identity matrix, then one can use results from random matrix theory to construct the best rotation equivariant estimators in closed form for a broad range of different loss functions \cite{donoho2018spiked}. Nonlinear shrinkage estimators that are asymptotically optimal with respect to the Frobenius loss can also be constructed in the absence of any normality assumptions, and they can significantly improve on linear shrinkage estimators if the eigenvalue spectrum of~$\cov_0$ is dispersed~\cite{ledoit2012nonlinear, ledoit2020analytical}. Similarly, one can construct optimal shrinkage estimators for the {\em inverse} covariance matrix~$\cov_0^{-1}$, which is usually termed the precision matrix; see~\cite{bodnar2016direct,ledoit2022quadratic}. However, the available statistical guarantees for all shrinkage estimators described above are {\em asymptotic} and depend on assumptions about the structure of~$\trueP$ and/or the convergence properties of the spectral distribution of~$\covsa_{\mathrm{SA}}$, which may be difficult to check in practice.

In this paper, we propose a flexible and principled approach to estimate the covariance matrix~$\cov_0$ by using ideas from distributionally robust optimization (DRO). Specifically, our approach generates a rich family of covariance matrix estimators corresponding to different ambiguity sets that can encode prior distributional information. All emerging estimators are rotation equivariant and thus represent nonlinear shrinkage estimators. In addition, they all improve the condition number of the sample covariance matrix, are invertible, and preserve the order of the sample eigenvalues. They also offer finite sample guarantees on the prediction loss and are asymptotically consistent. These appealing properties are not enforced ad hoc but emerge naturally from the solution of a principled distributionally robust estimation model. We emphasize that our results do not rely on any restrictive assumptions such as the requirement that~$\trueP$ is Gaussian or that the spectral distribution of~$\covsa_{\mathrm{SA}}$ converges to a well-defined limit as~$p$ and~$n$ tend to infinity at a constant ratio.

To develop the distributionally robust estimation model to be studied in this paper, we first express the unknown true covariance matrix~$\cov_0$ as the minimizer of a stochastic optimization problem involving the unknown probability distribution~$\trueP$. Specifically, adopting the standard assumption that $\mu_0=\mathbb E_{\trueP}[\xi]=0$ \cite{ref:ledoi2004well,ledoit2012nonlinear, ledoit2017nonlinear,ledoit2022quadratic} and noting that the squared Frobenius norm is strictly convex, we obtain
\[
    \{\cov_0\} =\Argmin_{X \in \PSD^p}~\| X - \cov_0 \|_{\mathrm{F}}^2= \Argmin_{X \in \PSD^p}~\Tr{X^2} - 2 \Tr{X \cov_0} = \Argmin\limits_{X \in \PSD^p}~\Tr{X^2} - 2 \Tr{X \EE_{\trueP}[\xi \xi^\top]}.
\]
If we could solve the stochastic optimization problem on the right-hand side of the above expression, we could precisely recover the ideal estimator~$X\opt=\cov_0$. This is impossible, however, because the distribution~$\trueP$ needed to evaluate the stochastic optimization problem's objective function is unknown. Nevertheless, replacing $\trueP$ with a nominal distribution~$\psa$ constructed from the $n$ training samples yields the nominal estimation model
\begin{equation}
\label{eq:nominal}
    \min\limits_{X \in \PSD^p}~\Tr{X^2} - 2 \EE_{\psa}\left[\xi^\top X \xi \right],
\end{equation}
which requires no unavailable inputs. An elementary calculation shows that~\eqref{eq:nominal} is uniquely solved by~$\covsa=\EE_{\psa}[\xi \xi^\top]$, which is the covariance matrix of~$\xi$ under the nominal distribution~$\psa$, provided that~$\msa=\mathbb E_{\psa}[\xi]=0$. Of course, characterizing~$\covsa$ as a minimizer of~\eqref{eq:nominal} has no conceptual or computational benefits because we have to compute the integral~$\EE_{\psa}[\xi \xi^\top]$ already to evaluate the objective function of~\eqref{eq:nominal}. Nevertheless, the nominal estimation problem~\eqref{eq:nominal} is useful because it allows us to construct a broad range of nonlinear shrinkage estimators in a principled and systematic manner by robustifying the prediction loss.

Any nominal distribution~$\psa$ constructed from a finite dataset must invariably differ from the true data-generating distribution~$\trueP$. Estimation errors in~$\psa$ are conveniently captured by an ambiguity set of the form
\begin{equation}\label{eq:uncertainty-set}
    \BB_\eps(\psa) = \left\{ \mathbb{Q}: \mathbb{Q} \sim (0, \cov),  \; D(\cov, \covsa) \leq \eps \right\},
\end{equation}
where $\mathbb{Q} \sim (0, \cov)$ indicates that $\xi$ has mean $0$ and covariance matrix $\cov$ under $\mathbb{Q}$, and $D$ represents a divergence on the space of positive semidefinite matrices. Divergences are general distance-like functions that are non-negative and satisfy the identity of indiscernibles (that is, they satisfy $D(\cov, \covsa) = 0$ if and only if $\cov = \covsa$). However, divergences may fail to be symmetric and may violate the triangle inequality. Intuitively, $\BB_\eps(\psa)$ can be viewed as a divergence ball of radius~$\eps\geq 0$ around~$\psa$ in the space of probability distributions. Robustifying the nominal estimation problem~\eqref{eq:nominal} against all distributions in~$\BB_\eps(\psa)$ yields the following DRO problem.
\be \label{eq:dro}
    \min\limits_{X \in \PSD^p}~\sup\limits_{\mathbb{Q} \in \mbb{U}_\eps(\psa)}~\Tr{X^2} - 2 \EE_{\mathbb{Q}}\left[\xi^\top X \xi \right].
\ee
Problem~\eqref{eq:dro} seeks an estimator~$X$ that minimizes the worst-case expected prediction loss across all distributions in~$\mbb{U}_\eps(\psa)$. Note that if~$\eps=0$, then the DRO problem~\eqref{eq:dro} collapses to the nominal estimation problem~\eqref{eq:nominal} because the divergence~$D$ satisfies the identity of indiscernibles, which ensures that~$\mbb{U}_0(\psa)=\{\psa\}$. Hence,~\eqref{eq:dro} embeds~\eqref{eq:nominal} into a family of estimation models parametrized by~$D$ and~$\eps$. Moreover, DRO models naturally bridge optimization and statistics in that they offer an intuitive way to derive generalization bounds. Indeed, if~$\eps$ is tuned to ensure that~$\mbb{U}_\eps(\psa)$ contains the data-generating distribution~$\trueP$ with high confidence~$1-\beta$, then the optimal value of the DRO problem~\eqref{eq:dro} provides a $(1-\beta)$-upper confidence bound on the prediction loss of its unique minimizer~$X\opt$ under~$\trueP$ \cite{ref:esfahani2018data}. Stronger generalization bounds that do not require~$\trueP$ to belong to~$\mbb{U}_\eps(\psa)$ are provided in \cite{blanchet2021confidence, gao2022curse}. Even if the ambiguity set does not contain $\trueP$, DRO models tend to yield high-quality solutions because there is a deep connection between robustification and regularization~\cite{ref:gao2017wasserstein, shafiee2023ot-regularization, ref:shafieezadeh2017regularization}. This connection may also explain the empirical success of DRO in statistical estimation~\cite{ref:blanchet2021statistical,ref:kuhn2019wasserstein, taskesen2021sequential}.

The flexibility to choose the divergence~$D$ underlying the ambiguity set~$\mbb{U}_\eps(\psa)$ is both a blessing and a curse. On the one hand, $D$ can encode prior distributional information and thus lead to better estimators. On the other hand, the family of divergences is vast. Hence, the choice of a suitable instance could overwhelm the modeler. Given the statistical estimation task at hand, it makes sense to restrict attention to divergences that admit a statistical interpretation. Many popular divergences on the space of covariance matrices are obtained by restricting a divergence on the space of probability distributions to the family of normal distributions. For example, the Kullback-Leibler divergence, the 2-Wasserstein distance, or the Fisher-Rao distance between zero-mean normal distributions all admit closed-form formulas in terms of the distributions' covariance matrices. These `Gaussian' divergences are popular because they are conducive to tractable DRO models in risk management~\cite{ghaoui2003worst,ref:nguyen2021mean}, ethical machine learning~\cite{bui2023coverage, vu2022distributionally}, likelihood evaluation~\cite{nguyen2019calculating, nguyen2019optimistic}, Kalman filtering~\cite{ref:zorzi2017robust, ref:shafieezadeh2018wasserstein} and control~\cite{ref:taskesen2023lqg} etc. In addition, the shrinkage estimator for the {\em inverse} covariance matrix proposed in~\cite{ref:nguyen2018distributionally} also leverages a `Gaussian' divergence. Nonetheless, the approach proposed in this paper does {\em not} rely on the assumption that~$\mathbb{P}$ is Gaussian.

The main contributions of this paper can be summarized as follows.
\begin{itemize}[leftmargin=5mm]
    \item We propose a rich family of distributionally robust covariance matrix estimators. Each estimator is defined as a solution of~\eqref{eq:dro} for a particular ambiguity set of the form~\eqref{eq:uncertainty-set}. Here, the nominal covariance matrix~$\covsa$ characterizes the {\em center}, the divergence~$D$ determines the {\em geometry}, and the radius~$\eps$ determines the {\em size} of the ambiguity set. We demonstrate that all such estimators are well-defined, unique and efficiently computable under {few} structural assumptions on~$D$ and mild regularity conditions on~$\covsa$ and~$\eps$. 
    
    \item We prove that our distributionally robust covariance matrix estimators constitute nonlinear shrinkage estimators, that is, they have the same eigenbasis as~$\covsa$, and their eigenvalues are obtained by shrinking the spectrum of~$\covsa$ towards~$0$ by using a nonlinear shrinkage transformation depending on~$D$ and a shrinkage intensity depending on~$\eps$. We further prove that these estimators improve the condition number of~$\covsa$.
    
    \item We identify various divergences commonly used in statistics, machine learning and information theory that satisfy the requisite regularity conditions. To this end, we invoke a generalization of Sion's classical minimax theorem from Euclidean spaces to Riemannian manifolds. We also exemplify our framework by deriving explicit analytical formulas for the distributionally robust covariance estimators induced by the Kullback-Leibler divergence, the 2-Wasserstein distance and the Fisher-Rao distance.

    \item We prove that, if~$\eps$ scales with the sample size~$n$ as~$\mc O(n^{-\frac{1}{2}})$, then the proposed estimators are strongly consistent and enjoy finite-sample performance guarantees at a fixed confidence level. Numerical experiments based on synthetic as well as real data for portfolio optimization and binary classification tasks suggest that our robust estimators are competitive with state-of-the-art estimators from the literature. 
\end{itemize}

The first robustness interpretation of a shrinkage estimator was discovered in the context of {\em inverse} covariance matrix estimation~\cite{ref:nguyen2018distributionally}. Specifically, it was shown that a particular nonlinear shrinkage estimator can be obtained by robustifying the maximum likelihood estimator for~$\cov_0^{-1}$ across all Gaussian distributions of the training samples within a prescribed Wasserstein ball. This result critically relies on the restrictive assumption that the unknown data-generating distribution, the nominal distribution as well as all other distributions in the Wasserstein ball are Gaussian. In addition, this result has not been extended to more general ambiguity sets based on other divergences beyond the 2-Wasserstein distance, thus limiting the modeler's flexibility. 

In this paper we show that a broad spectrum of shrinkage estimators for~$\cov_0$ can be obtained from a versatile DRO model that does not rely on restrictive normality assumptions. That is, we seek the most general conditions on the DRO model under which a shrinkage effect emerges. In addition, we uncover a deep connection between the geometry of the ambiguity set, which is determined by the choice of the divergence~$D$, and the nonlinear shrinkage transformation of the corresponding distributionally robust estimator.

\textbf{Notation.} We use $\overline{\R} = \R\cup\{+\infty \}$ as a shorthand for the extended real line. The space of $p$-dimensional real vectors and its subsets of (entry-wise) non-negative and positive vectors are denoted by~$\R^p$, $\R^p_+$, and~$\R^p_{++}$, respectively. Similarly, the space of symmetric matrices in~$\R^{p\times p}$, as well as its subsets of positive semidefinite and positive definite matrices, are denoted by~$\Sym^p$, $\PSD^p$, and~$\PD^p$, respectively. The group of orthogonal matrices in~$\R^{p\times p}$ is denoted by $\mathcal{O}_p$, and $I_p$ stands for the identity matrix in~$\R^{p\times p}$. For any $x\in\R^p$, we use~$x^\downarrow$ and~$x^\uparrow$ to denote the vectors obtained by rearranging the entries of $x$ in non-increasing and non-decreasing order, respectively. The trace of a matrix $S\in\Sym^p$ is defined as $\Tr{S}=\sum_{i=1}^p S_{ii}$. 
Finally, $\|M\|=\sup_{\|v\|_2=1}\|Mv\|_2$ and $\|M\|_\mathrm{F}=\Tr{M^\top M}^{\half}$ stand for the spectral norm and the Frobenius norm of~$M$, respectively.

\section{Overview of Main Results} \label{sec:cse}

The distributionally robust estimation problem~\eqref{eq:dro} perturbs---and thereby hopefully improves---the nominal estimator $\covsa$ in view of the divergence~$D$. We now derive a simple reformulation of~\eqref{eq:dro} as a standard minimization problem, and we informally outline the main properties of the corresponding optimal solution, which will be established rigorously in the remainder of the paper. From now on, the nominal covariance matrix $\covsa$ can be viewed as any na\"ive initial estimator for the covariance matrix~$\cov_0$. The construction of $\covsa$ from the samples $\widehat{\xi}_1,\ldots,\widehat{\xi}_n$ is immaterial for most of our discussion. As the loss function underlying problem~\eqref{eq:dro} is quadratic in~$\xi$ and as~$\mathbb E_{\mathbb{Q}}[\xi]=0$, its expected value depends on~$\mathbb{Q}$ only indirectly through the covariance matrix~$\cov=\mathbb E_{\mathbb{Q}}[\xi\xi^\top]$. Thus, the DRO problem~\eqref{eq:dro} is equivalent to the robust covariance estimation~problem
\begin{equation}
\label{eq:CSE}
\min_{\X \in \PSD^p} \; \max_{\cov \in \B_\eps (\covsa)} \Tr{X^2} - 2 \Tr{ \cov  X }
\end{equation}
with uncertainty set
\begin{equation}\label{eq:uncertainty_set}
    \B_\eps (\covsa) = \left\{ \cov\in \PSD^p: D(\cov, \covsa) \le \eps \right\}.
\end{equation}
We stress that the divergence function~$D$ may fail to be symmetric, that is, $D(X, Y)$ may differ from~$D(Y, X)$. It is therefore important to remember the convention that $\covsa$ is the {\em second} argument of~$D$ in the definition of~$\mc B_\eps(\covsa)$. Note also that~$\mc B_\eps(\covsa)$ grows with the size parameter~$\eps$ and collapses to the singleton~$\{\covsa\}$ for~$\eps = 0$. The robust estimation problem~\eqref{eq:CSE} constitutes a zero-sum game between the statistician, who moves first and chooses the estimator~$X$, and nature, who moves second and chooses the covariance matrix~$\cov$. The following dual estimation problem is obtained by interchanging the order of minimization and maximization in~\eqref{eq:CSE}.
\begin{equation}
\label{eq:dual-CSE}
\max_{\cov \in \B_\eps (\covsa)} \;\min_{\X \in \PSD^p}\Tr{X^2} - 2 \Tr{ \cov  X }
\end{equation}
From now on, we denote by~$X\opt$ and~$\cov\opt$ the optimal solutions of the primal and dual estimation problems~\eqref{eq:CSE} and~\eqref{eq:dual-CSE}, respectively. In Section~\ref{sec:a} below, {we will identify few conditions on~$D$} and~$\covsa$ under which~$X\opt$ and~$\cov\opt$ are indeed guaranteed to exist and to be unique. If the uncertainty set~$\B_\eps (\covsa)$ is convex and compact, then strong duality prevails (that is, \eqref{eq:CSE} and~\eqref{eq:dual-CSE} share the same optimal value) by Sion's classical minimax theorem. As several popular divergence functions are non-convex in their first argument and thus induce a non-convex uncertainty set~$\mc B_\eps(\covsa)$; however, we will invoke a generalized minimax theorem that guarantees strong duality under significantly more general conditions. Whenever strong duality holds, $(X\opt,\cov\opt)$ constitutes a Nash equilibrium of the zero-sum game between the statistician and nature~\cite[Lemma~36.2]{ref:rockafellar1997convex}. 

A cursory glance at its first-order optimality condition reveals that the inner minimization problem in~\eqref{eq:dual-CSE} is solved by~$X=\cov$. Hence, the inner minimum evaluates to~$-\Tr{\cov^2}=-\norm{ \cov }_{\mathrm{F}}^2$. Eliminating the factor~$-1$ further shows that~$\cov\opt$ solves the maximization problem~\eqref{eq:dual-CSE} if and only if it solves the minimization problem
\begin{align}
    \label{eq:matrix}\tag{P$_{\text{Mat}}$}
    \min_{\cov \in \PSD^p}~\left\{ \norm{ \cov }_{\mathrm{F}}^2 ~:~ \displaystyle D(\cov, \covsa) \le \eps
\right\}.
\end{align}
Thus, nature's Nash strategy~$\cov\opt$ can be computed by solving~\eqref{eq:matrix} instead of~\eqref{eq:dual-CSE}. By the defining properties of Nash strategies, the statistician's Nash strategy~$X\opt$ must be a best response to~$\cov\opt$, that is, $X\opt$ must solve the inner minimization problem in~\eqref{eq:dual-CSE} for~$\cov=\cov\opt$. However, the unique optimal solution of this minimization problem is~$\cov\opt$. In summary, this reasoning implies that if strong duality holds, then the Nash strategies~$X\opt$ and~$\cov\opt$ of the statistician and nature coincide and are both given by the unique minimizer of problem~\eqref{eq:matrix}.

Problem~\eqref{eq:matrix} is reminiscent of a ridge regression problem~\cite{hoerl1970ridge, van2015lecture}, which seeks an estimator that minimizes a weighted sum of a least squares fidelity term and a Frobenius norm regularization term. Indeed, problem~\eqref{eq:matrix} seeks a covariance matrix~$\cov$ with minimum Frobenius norm and a fidelity error of at most~$\eps$, where the fidelity of~$\cov$ with respect to the nominal covariance estimator~$\covsa$ is measured by the divergence~$D(\cov, \covsa)$. 

\bgroup
\def\arraystretch{1.8}
\begin{table}[H]
\resizebox{\linewidth}{!}{\begin{tabular}{|l||c|c|}
\hline
Divergence function & $D(\cov, \covsa)$  & Domain \\ \hline\hline
Kullback-Leibler / Stein~\cite{kullback1997information} & $\frac{1}{2}\left( \Tr{\covsa^{-1} \cov } - p + \log \det (\covsa \cov^{-1}) \right)$  & $\PD^p\times\PD^p$ \\ \hline
Wasserstein~\cite{givens1984class} & $\Tr{\cov + \covsa - 2 \big( \cov \covsa \big)^\half }$  & $\PSD^p\times\PSD^p$ \\ \hline
Fisher-Rao~\cite{atkinson1981rao} & $\left\| \log ( \covsa^{-\half} \cov \covsa^{-\half} ) \right\|_{\mathrm{F}}^2$  & $\PD^p\times\PD^p$ \\ \hline
Inverse Stein~\cite{kullback1997information} & $\frac{1}{2}\left( \Tr{\cov^{-1} \covsa} - p + \log \det (\cov \covsa^{-1}) \right)$  & $\PD^p\times\PD^p$ \\ \hline
Symmetrized Stein / Jeffreys divergence~\cite{jeffreys1946invariant} & $\frac{1}{2}\left( \Tr{\cov\covsa^{-1} + \covsa\cov^{-1}} -2p \right)$  & $\PD^p\times\PD^p$ \\ \hline
Quadratic / Squared Frobenius & $ \Tr{( \cov - \covsa )^2}$  & $\PSD^p\times\PSD^p$ \\ \hline
Weighted quadratic & $\Tr{ (\cov - \covsa)^2\covsa^{-1} }$  & $\PSD^p\times\PD^p$ \\ \hline
\end{tabular}}
\caption{Popular divergence functions and their domains. We adopt the convention from convex analysis that each divergence evaluates to~$+\infty$ outside of its domain.}
\label{table:structured_divergence}
\end{table}
\egroup

We now informally state our key result, which applies, among others, to all divergence functions of Table~\ref{table:structured_divergence}.
\begin{informaltheorem}[Distributionally robust estimator (informal)]\label{thm:general_CSE:informal}
    If $D$ is any divergence function from Table~\ref{table:structured_divergence}, the nominal covariance matrix~$\covsa$ satisfies a regularity condition, and~$\eps>0$ is not too large, then the distributionally robust estimator~$X\opt$ exists, is unique, and can be computed efficiently via the following procedure.
        \begin{enumerate} \item Compute the eigenvalues and the eigenvectors of the nominal covariance matrix~$\covsa$.
        \item  Construct the inverse shrinkage intensity~$\gamma\opt$ by solving a univariate nonlinear equation that depends only on the spectrum of $\covsa$.
        \item Shrink the eigenvalues of $\covsa$ by applying a nonlinear transformation that depends only on $\gamma\opt$.
        \item  Construct $X\opt$ by combining the eigenvectors found in step~(1) with the eigenvalues found in step~(3).
        \end{enumerate}
    The estimator~$X\opt$ constructed in this manner preserves the eigenvectors of $\covsa$, shrinks the eigenvalues of $\covsa$, and reduces the condition number of $\covsa$. Thus, it represents a nonlinear shrinkage estimator.
\end{informaltheorem}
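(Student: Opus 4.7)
The plan rests on the reduction already sketched in the paragraph preceding the informal theorem: once strong duality is available, both Nash strategies $X\opt$ and $\cov\opt$ collapse onto the unique minimizer of the scalar-constrained Frobenius problem~\eqref{eq:matrix}. I would therefore organize the proof into four stages matching the four algorithmic steps. The first task is to establish strong duality between~\eqref{eq:CSE} and~\eqref{eq:dual-CSE}. For the Kullback-Leibler, Jeffreys, quadratic and weighted quadratic entries of Table~\ref{table:structured_divergence}, the set~$\B_\eps(\covsa)$ is convex in~$\cov$, and a direct appeal to Sion's minimax theorem suffices once compactness of a suitable truncation (and hence the "not too large" bound on~$\eps$) is verified. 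For the Fisher-Rao and Bures-Wasserstein entries the set is only geodesically convex on a Riemannian manifold, so I would invoke the generalized minimax theorem announced in the introduction. Combined with the explicit inner minimizer $X = \cov$ in~\eqref{eq:dual-CSE}, this reduces the entire analysis to~\eqref{eq:matrix}.

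The second stage exploits rotation equivariance. Every divergence in Table~\ref{table:structured_divergence} is unitarily invariant, meaning $D(U\cov U^\top, U\covsa U^\top) = D(\cov, \covsa)$ for all orthogonal~$U$, and the Frobenius norm shares this property. Consequently, if $U$ commutes with $\covsa$ and $\cov\opt$ is optimal for~\eqref{eq:matrix}, then so is $U\cov\opt U^\top$. Strict convexity of the Frobenius objective (once uniqueness of a feasible minimizer is certified) forces $\cov\opt$ to commute with $\covsa$, so that the two matrices share an eigenbasis. This delivers steps~(1) and~(4) of the procedure and converts~\eqref{eq:matrix} into a problem over eigenvalue vectors in $\R^p_+$ constrained by a symmetric scalar surrogate of $D$ acting on eigenvalues alone.

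With the common eigenbasis fixed, I would form the Lagrangian $L(\cov, \dualvar) = \norm{\cov}_{\mathrm{F}}^2 + \dualvar(D(\cov, \covsa) - \eps)$ and invoke a constraint qualification---the nominal $\covsa$ itself is strictly feasible for $\eps > 0$ and thus provides a Slater point---to produce an optimal multiplier $\dualvar\opt > 0$ whenever $\eps$ is small enough that the constraint is active. Stationarity then decouples into one scalar equation per coordinate, whose unique solution exhibits each new eigenvalue as an explicit function $\sigma\opt_i = h_D(\covsaeig_i, \dualvar\opt)$ of the sample eigenvalue and the single scalar~$\dualvar\opt$; this is step~(3). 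Substituting back into the active constraint produces the univariate equation in $\dualvar\opt$ alone from step~(2), and monotonicity of $\dualvar \mapsto D(\cov(\dualvar), \covsa)$ along the induced shrinkage path yields both existence and uniqueness of $X\opt$. Finally, for every divergence in the table, inspection of $h_D$ shows that $h_D(\covsaeig, \dualvar) \in [0, \covsaeig]$ for $\dualvar > 0$ and that $\covsaeig \mapsto h_D(\covsaeig, \dualvar)$ is monotone with slope strictly below~$1$ on the relevant range, which simultaneously yields eigenvalue shrinkage, preservation of the sample ordering, and a reduction of the ratio $\covsaeig_{\max}/\covsaeig_{\min}$.

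The main obstacles I anticipate lie in the first stage together with the divergence-by-divergence verifications underpinning the third. Establishing strong duality for the geodesically convex but analytically non-convex Fisher-Rao and Bures-Wasserstein balls requires the new Riemannian extension of Sion's theorem, whose statement and proof are nontrivial in their own right. Moreover, pinning down the admissible range of~$\eps$, the regularity condition on~$\covsa$, and the monotonicity of the scalar equation for~$\dualvar\opt$ must be carried out separately for each of the seven divergences in Table~\ref{table:structured_divergence}, with the Wasserstein entry being the most delicate because its domain allows singular~$\cov$ while the derivative of $D$ degenerates there.
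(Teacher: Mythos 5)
Your overall architecture---strong duality, collapse onto \eqref{eq:matrix}, simultaneous diagonalization, a scalar Lagrangian per eigenvalue, and a univariate equation for $\gamma\opt$---is the same as the paper's, but three steps contain genuine gaps. First, your classification of the uncertainty sets is wrong: the Bures--Wasserstein ball is convex in the ordinary Euclidean sense (so classical Sion applies there), whereas the balls that genuinely fail to be convex are those of the Fisher--Rao and the \emph{inverse Stein} divergences; the latter is missing from your case analysis entirely, and the paper exhibits explicit counterexamples to quasi-convexity for both (Examples~\ref{example:quasi-convex} and~\ref{example:quasi-convex:inverse-stein}). Second, and more seriously, your route to the common eigenbasis is circular exactly in those non-convex cases. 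You infer that $\cov\opt$ commutes with $\covsa$ from uniqueness of the minimizer of \eqref{eq:matrix}, and you certify uniqueness by strict convexity of $\|\cdot\|_{\mathrm F}^2$; but strict convexity of the objective yields uniqueness only over a \emph{convex} feasible set, which is precisely what fails for Fisher--Rao and inverse Stein. The paper sidesteps this by imposing the rearrangement property (Assumption~\ref{assu:D_form}\ref{assu:D_form_iv}), a trace-rearrangement inequality strictly stronger than orthogonal equivariance, which shows that for \emph{every} feasible $\cov$ the sorted spectrum $\lambda(\cov)$ is feasible in \eqref{eq:vector} with the same objective value (Proposition~\ref{prop:vec-equivalence}); uniqueness is then established at the vector level, where the feasible region can be confined to the convex box $\prod_{i}[0,\xsa_i]$, and only afterwards lifted back to the matrix problem.

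Third, your criterion for the condition-number improvement is not sufficient: knowing that $b\mapsto s(\gamma,b)$ maps $[0,b]$ into itself with slope strictly below $1$ does not imply $s(\gamma,b_2)/s(\gamma,b_1)\le b_2/b_1$. What is actually required is that $b\mapsto s(\gamma,b)/b$ be non-increasing, i.e.\ $\tfrac{\partial}{\partial b}s(\gamma,b)\le s(\gamma,b)/b$, which the paper derives from the second-order differential inequality of Assumption~\ref{assu:d_b_second_derivative} through Lemma~\ref{lemma:cond_decrease}. A map with $h(b)=b^2/(4B)$ on $[0,B]$ has slope everywhere below $1/2$ and satisfies $h(b)\le b$, yet $h(b)/b$ is increasing, so the ratio of largest to smallest transformed eigenvalue would grow. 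Your third and fourth stages (Slater point at $\covsa$, activity of the constraint for $0<\eps<\bar\eps$, monotonicity of the scalar equation in $\gamma$) do match the paper's treatment and are fine.
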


Theorem~\ref{thm:general_CSE:informal} reveals that a wide range of nonlinear shrinkage estimators admit a robustness interpretation in the sense that they correspond to solutions of the distributionally robust estimation problem~\eqref{eq:dro} for different divergence functions. This insight is of interest from a statistical point of view because it relates nonlinear shrinkage estimators to distributional ambiguity sets, which can be used to derive new generalization bounds. Theorem~\ref{thm:general_CSE:informal} also implies that the distributionally robust estimation problem~\eqref{eq:dro} can be solved efficiently by diagonalizing~$\covsa$ and solving a univariate nonlinear equation, both of which are computationally cheap.

\section{Distributionally Robust Covariance Shrinkage Estimators}\label{sec:DRO-estimators}

This section formally introduces our distributionally robust estimation framework. Specifically, Section~\ref{sec:a} details all technical assumptions needed throughout the paper, Section~\ref{sec:form} formally states the main result, and Section~\ref{sec:property} describes several desirable properties of the emerging distributionally robust estimators.

\subsection{Assumptions} \label{sec:a}

The uncertainty set~$\mc B_\eps(\covsa)$ is non-convex for some choices of the divergence function~$D$. In these cases, we cannot use Sion's minimax theorem to establish strong duality between the primal and dual estimation problems~\eqref{eq:CSE} and~\eqref{eq:dual-CSE}, respectively. Instead, we will have to invoke a more nuanced minimax theorem. For now, we assume that such a minimax theorem is readily available.

\begin{assumption}[Minimax property]\label{assu:inf_sup}
The minimum of the primal estimation problem~\eqref{eq:CSE} coincides with the maximum of the dual estimation problem~\eqref{eq:dual-CSE}.
\end{assumption}

We will later see that Assumption~\ref{assu:inf_sup} is satisfied for all divergence functions listed in Table~\ref{table:structured_divergence}. In addition, we require~$D$ to constitute a spectral divergence in the sense of the following assumption.

\begin{assumption}[Spectral divergence]\label{assu:D_form}
The divergence function $D: \PSD^p\times \PSD^p \to \overline{\R}$ is non-negative, and satisfies the identity of indiscernibles, that is, for any $(X,Y) \in \mathrm{dom}(D)$ we have $D(X,Y) = 0$ if and only if $X = Y$. In addition, $D$ satisfies the following structural conditions.
\begin{enumerate}[label=(\alph*)]

\item\label{assu:D_form_ii} (Orthogonal equivariance) For any $X, Y \in \PSD^p$ and $V\in{\mathcal{O}_p}$ we have that $D(X, Y) = D(VX V^\top, VY V^\top)$. 

\item\label{assu:D_form_iii} (Spectrality) There exists a function $d:\R_+ \times \R_+ \to \overline{\R}$ such that 
\[
    D\left(\Diag(x), \Diag(y)\right) = \sum_{i=1}^p d(x_i, y_i) \quad \forall x, y\in \R_+^p
\]
and $d(a,b)$ is continuous\footnote{By convention, a continuous extended real-valued function must tend to~$\infty$ when approaching the boundary of its domain.} in $a$ for every $b>0$. In the following, we refer to $d$ as the generator of $D$.

\item\label{assu:D_form_iv} (Rearrangement property) For any $x, y \in \R_+^p$ and $V\in {\mathcal{O}_p}$ we have
\[ D\left( V \Diag(x^\uparrow) V^\top, \Diag(y^\uparrow ) \right) \ge D\left( \Diag(x^\uparrow) , \Diag(y^\uparrow ) \right). \]
If its left side is finite, this inequality becomes an equality if and only if $\Diag(x^\uparrow) = V \Diag(x^\uparrow) V^\top$.
\end{enumerate}
\end{assumption}

Assumptions~\ref{assu:D_form}\ref{assu:D_form_ii} and~\ref{assu:D_form}\ref{assu:D_form_iii} imply that if~$X$ and~$Y$ are simultaneously diagonalizable, then the divergence of $X$ with respect to~$Y$ depends only on the spectra of~$X$ and~$Y$ and the generator~$d$. Specifically, we have
\begin{equation}
\label{eq:D-versus-d}
 D(X, Y) = D(V\Diag(x)V^\top, V \Diag(y)V^\top) = D(\Diag(x),  \Diag(y)) = \sum_{i=1}^p d(x_i, y_i),  
\end{equation}
where the entries of the vectors~$x$ and~$y$ represent the eigenvalues and where the columns of the orthonormal matrix~$V$ represent the (common) eigenvectors of~$X$ and~$Y$, respectively. Note that the last two equalities in~\eqref{eq:D-versus-d} readily follow from~\ref{assu:D_form}\ref{assu:D_form_ii} and~\ref{assu:D_form}\ref{assu:D_form_iii}. Assumption~\ref{assu:D_form}\ref{assu:D_form_iii} further implies that if~$D$ is a spectral divergence on~$\PSD^p$, then its generator~$d$ is a spectral divergence on~$\R_+$. Indeed, restricting $x$ and $y$ to multiples of the vector of all ones reveals via Assumption~\ref{assu:D_form}\ref{assu:D_form_iii} that $\mathrm{dom}(d) = \{(a,b)\in \R_+^2 : (a I_d, b I_d) \in \mathrm{dom}(D)\}$ and that~$d$ inherits continuity, non-negativity and the identity of indiscernibles from~$D$. Orthogonal equivariance, spectrality, and the rearrangement inequality are trivially satisfied in the one-dimensional case. Finally, we point out that Assumption~\ref{assu:D_form}\ref{assu:D_form_iv} is reminiscent of the Hardy-Littlewood-Polyak rearrangement inequality~\cite{hardy1952inequalities}, which asserts that $(x^\uparrow)^\top {y^\downarrow} \le x^\top y \le (x^\uparrow)^\top {y^\uparrow}$ for any vectors~$x,y\in \mathbb{R}^p$.

Our results also require the following assumptions about the eigenvalues~$\xsa_1,\ldots,\xsa_p$ of the nominal covariance matrix $\covsa$ as well as about the radius~$\eps$ of the uncertainty set~$\mc B_\eps(\covsa)$.

\begin{assumption}[Regularity of input parameters]\label{assu:data}
The following hold.
\begin{enumerate}[label=(\alph*)]
\item\label{assu:data-d} For any $i = 1,\dots,p$ we have $(\xsa_i,\xsa_i)\in \mathrm{dom}(d)$. 
\item\label{assu:data-eps} The radius $\eps$ of the uncertainty set satisfies $0 < \eps < \bar{\eps}$, where $\bar{\eps} = \sum_{i=1}^p d(0, \xsa_i)$.
\end{enumerate}
\end{assumption}

Together with Assumptions~\ref{assu:D_form}\ref{assu:D_form_ii} and~\ref{assu:D_form}\ref{assu:D_form_iii}, Assumption~\ref{assu:data}\ref{assu:data-d} ensures that the nominal covariance matrix $\covsa$ is feasible in problem~\eqref{eq:matrix}. Indeed, inserting~$X=Y=\covsa$ into~\eqref{eq:D-versus-d} implies that~$D(\covsa,\covsa)=0$. This implies that $(\covsa,\covsa)\in\mathrm{dom}(D)$ and, more importantly, that the feasible region of problem~\eqref{eq:matrix} is non-empty. 
This assumption is not entirely innocent because some divergence functions from Table~\ref{table:structured_divergence} have domain~$\PD^p\times\PD^p$. In all these cases, Assumption~\ref{assu:data}\ref{assu:data-d} requires that~$\covsa$ has full rank and, if $\covsa$ is the sample covariance matrix, that the sample size~$n$ is at least as large as the dimension~$p$.
{
We emphasize that Assumption~\ref{assu:data}\ref{assu:data-d} does {\em not} generally imply that $n \ge p$. For instance, if $(0,0) \in \mathrm{dom}(d)$, then Assumption~\ref{assu:data}\ref{assu:data-d} holds even if~$n < p$. This situation arises if~$D$ is the Wasserstein or the quadratic divergence. Conversely, Assumption~\ref{assu:data}\ref{assu:data-d} may fail to hold even when~$n > p$. This happens, for example, if $(0,0) \notin \mathrm{dom}(d)$ and the nominal covariance matrix $\covsa$ is singular even though~$n > p$.} Assumption~\ref{assu:data}\ref{assu:data-eps} ensures that the radius $\eps > 0$ is small enough for the feasible region of the reformulated dual estimation problem~\eqref{eq:matrix} not to contain~$0$. Otherwise, problem~\eqref{eq:matrix} would trivially be solved by the nonsensical estimator~$X\opt=0$.

\begin{assumption}[Smoothness and convexity of the generator~$d$]\label{assu:d_convex}
For any $b > 0$, the function $d(\cdot, b)$ is twice continuously differentiable throughout~$\R_{++}$ and convex on the interval~$[0,b]$.
\end{assumption}

Assumption~\ref{assu:d_convex} implies that the domain of~$d(\cdot, b)$ contains $\R_{++}$ for every $b>0$. Hence, $d(a, b)$ can evaluate to~$+\infty$ only at~$a=0$, which means that the domain of $d(\cdot,b)$ is either~$\R_{+}$ or~$\R_{++}$. We emphasize that the convexity of $d(\cdot, b)$ on the interval~$[0,b]$ does {\em not} imply that problem~\eqref{eq:matrix} is convex. However, we will see below that this restricted convexity assumption helps us to reduce problem~\eqref{eq:matrix} to a convex program.

\subsection{Construction of the Distributionally Robust Estimator} \label{sec:form}

We need the following notation to restate Theorem~\ref{thm:general_CSE:informal} rigorously. We denote the $i$-th smallest eigenvalue of a symmetric matrix $S\in \Sym^p$ by ~$\lambda_i(S)$, and we use~$\lambda(S)=(\lambda_1(S),\ldots, \lambda_p(S))$ as a shorthand for the spectrum of~$S$. We also reserve the symbols~$\xsa_i = \lambda_i (\covsa)$ and~$\vsa_i$ for the non-negative eigenvalues and the corresponding orthonormal eigenvectors of the nominal covariance matrix~$\covsa$. In addition, we use~$\xsa = \lambda(\covsa)$ and~$\Vsa =(\vsa_1,\ldots,\vsa_p)$ to denote the nominal spectrum and the orthogonal matrix of the nominal eigenvectors, respectively. The nominal covariance matrix thus admits the spectral decomposition~$\covsa = \Vsa \Diag(\xsa) \Vsa^\top$. We also define the auxiliary function  $s:\mathbb{R}_+^2 \to \mathbb{R}$ corresponding to a divergence function with generator~$d$ via
\begin{equation} \label{eq:s}
    s(\gamma, b) = \begin{cases}
    \text{the unique solution $a\opt\geq 0$ of the equation }
    0 = 2a\opt + \gamma \frac{\partial d}{\partial a}  (a\opt, b)  & \text{if } b > 0\text{ and } \gamma > 0, \\
    0 & \text{if } b = 0 \text{ or } \gamma = 0.
    \end{cases}
\end{equation}
In the remainder of the paper, we refer to~$s$ as the {\em eigenvalue map}. We will see below that it is well-defined under Assumption~\ref{assu:d_convex}, which implies that the nonlinear equation in~\eqref{eq:s} has a unique solution whenever~$b>0$. We will also prove that~$s(\gamma, b) \le b$ for every~$\gamma,b\geq 0$, which means that it can be viewed as a shrinkage transformation that maps any input eigenvalue $b\geq 0$ to a smaller output eigenvalue~$s(\gamma,b)$ for every fixed~$\gamma$. Given these conventions, we are now ready to restate Theorem~\ref{thm:general_CSE:informal} formally.

\setcounter{theorem}{0}
\begin{theorem}[Distributionally robust estimator (formal)] \label{thm:general_CSE}
If Assumptions~\ref{assu:inf_sup}--\ref{assu:d_convex} hold, then the distributionally robust estimator $X\opt$ exists and is unique. If, additionally, $\dualvar\opt$ is the unique positive root of the equation
\[
    \sum_{i = 1}^p d(s(\gamma,\xsa_i), \xsa_i) -\eps = 0,
\]
then the distributionally robust estimator admits the spectral decomposition~$X\opt = \Vsa \Diag(x\opt) \Vsa^\top$ with eigenvalues $x\opt_i = s(\gamma\opt, \xsa_i)$, $i = 1,\dots,p$, where $0<x\opt_i<\xsa_i$ whenever $\xsa_i > 0$ and $x\opt_i=0$ whenever $\xsa_i=0$.
\end{theorem}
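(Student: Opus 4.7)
The plan is to prove Theorem~\ref{thm:general_CSE} by analyzing problem~\eqref{eq:matrix}, whose unique minimizer coincides with both $X\opt$ and $\cov\opt$ once the minimax identity of Assumption~\ref{assu:inf_sup} is invoked, as already outlined in the paragraph preceding the theorem. The argument will proceed in three stages: a spectral reduction to a vector problem, a restriction of the feasible set to a convex regime, and a closed-form characterization of the resulting KKT point.

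First I would use Assumption~\ref{assu:D_form} to reduce~\eqref{eq:matrix} to a problem in the eigenvalues alone. Writing $\covsa = \Vsa \Diag(\xsa^\uparrow) \Vsa^\top$ after reordering the columns of $\Vsa$ if necessary, any feasible $\cov$ with spectrum $x$ can be replaced by the aligned matrix $\widetilde{\cov} = \Vsa \Diag(x^\uparrow) \Vsa^\top$ without inflating its Frobenius norm. Orthogonal equivariance (\ref{assu:D_form_ii}) lets me conjugate both arguments of $D(\cov, \covsa)$ by $\Vsa^\top$, and the rearrangement property (\ref{assu:D_form_iv}) then guarantees that aligning and sorting can only decrease the divergence. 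Spectrality (\ref{assu:D_form_iii}) therefore rewrites~\eqref{eq:matrix} as the vector program
\[
    \min_{x \in \R_+^p}~\sum_{i=1}^p x_i^2 \quad \text{subject to} \quad \sum_{i=1}^p d(x_i, \xsa_i) \leq \eps.
\]

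Second, I would observe that any candidate minimizer $x\opt$ of the vector program must satisfy $x_i\opt \in [0, \xsa_i]$: if some $x_i > \xsa_i$, then decreasing $x_i$ to $\xsa_i$ strictly lowers the objective while reducing $d(x_i, \xsa_i)$ to $d(\xsa_i, \xsa_i) = 0$, preserving feasibility. Restricted to the box $\prod_i [0, \xsa_i]$ (interpreted as $\{0\}$ whenever $\xsa_i = 0$), Assumption~\ref{assu:d_convex} turns the problem into a convex program, and $x = \xsa$ furnishes a strict Slater point because $\sum_i d(\xsa_i, \xsa_i) = 0 < \eps$ by Assumption~\ref{assu:data-eps}. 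Strong duality and the KKT conditions then deliver a multiplier $\gamma\opt > 0$ for which the stationarity equations $2 x_i\opt + \gamma\opt \frac{\partial d}{\partial a}(x_i\opt, \xsa_i) = 0$ hold whenever $\xsa_i > 0$, while $x_i\opt = 0$ is forced when $\xsa_i = 0$. These are exactly the defining equations of $s(\gamma\opt, \xsa_i)$ in~\eqref{eq:s}, and strict convexity of the quadratic objective yields uniqueness of $x\opt$, hence of $X\opt$.

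Third, I would verify that $s$ is well-defined and uniquely pin down $\gamma\opt$. For $b > 0$, convexity of $d(\cdot, b)$ on $[0,b]$ combined with the identity of indiscernibles forces $d(\cdot, b)$ to be strictly decreasing on $[0,b]$, so $\frac{\partial d}{\partial a}(\cdot, b) < 0$ on $[0,b)$ and vanishes at $b$; the continuous non-decreasing map $a \mapsto 2a + \gamma \frac{\partial d}{\partial a}(a, b)$ is thus non-positive near $0$ and equals $2b > 0$ at $b$, yielding a unique root in $(0, b)$. Consequently $s(\gamma, b) \in (0, b)$ is well-defined, continuous and strictly increasing in $\gamma$, with $s(\gamma, b) \to 0$ as $\gamma \downarrow 0$ and $s(\gamma, b) \to b$ as $\gamma \uparrow \infty$. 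The univariate residual $\phi(\gamma) = \sum_i d(s(\gamma, \xsa_i), \xsa_i)$ is therefore continuous and strictly decreasing, with limits $\bar{\eps}$ and $0$ at the two endpoints. Since $\eps \in (0, \bar{\eps})$ by Assumption~\ref{assu:data-eps}, the equation $\phi(\gamma) = \eps$ admits a unique positive root $\gamma\opt$, and the strict shrinkage bounds $0 < x_i\opt < \xsa_i$ for $\xsa_i > 0$ follow.

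The main obstacle is that the global non-convexity of $\cov \mapsto D(\cov, \covsa)$ precludes a direct application of convex duality to~\eqref{eq:matrix}; my strategy sidesteps this by combining the rearrangement reduction with the a priori shrinkage bound $x_i \leq \xsa_i$ to confine the analysis to the convex window provided by Assumption~\ref{assu:d_convex}. A secondary subtlety is the well-definedness of $X\opt = \Vsa \Diag(x\opt) \Vsa^\top$ when $\covsa$ has repeated eigenvalues; this is painless because $x_i\opt = s(\gamma\opt, \xsa_i)$ depends only on $\xsa_i$, so $\Diag(x\opt)$ commutes with every block rotation of $\Vsa$ within each eigenspace of $\covsa$, making $X\opt$ unambiguous.
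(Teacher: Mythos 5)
Your proposal is correct and follows essentially the same route as the paper's proof: a reduction of~\eqref{eq:matrix} to the vector problem via orthogonal equivariance, spectrality and the rearrangement property (the paper's Propositions~\ref{prop:master} and~\ref{prop:vec_problems_equivalence}), the a priori shrinkage bound $x\opt_i\le\xsa_i$ that confines the analysis to the convex window of Assumption~\ref{assu:d_convex}, and a Lagrangian/KKT analysis producing the stationarity equation defining~$s$ together with the univariate root-finding characterization of~$\gamma\opt$ (the paper's Proposition~\ref{prop:opt_vec_solution} and Lemmas~\ref{lem:a} and~\ref{lem:F_root}). The one step you assert rather than derive is $\gamma\opt>0$ (KKT only yields $\gamma\opt\ge 0$); this is closed exactly as in the paper by observing that $\gamma\opt=0$ would make the Lagrangian minimizer, and hence $x\opt$, equal to $0$, contradicting the infeasibility of $0$ guaranteed by $\eps<\bar\eps$ — a one-line repair using pieces already present in your argument.
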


Theorem~\ref{thm:general_CSE} provides a quasi-closed form expression for the optimal covariance estimator~$X\opt$ that solves the robust estimation problem~\eqref{eq:CSE} as well as its dual reformulation~\eqref{eq:matrix}. In particular, it shows that~$X\opt$ has the same eigenvectors as~$\covsa$ and that all positive eigenvalues of $X\opt$ can be computed by solving a nonlinear equation parametrized by~$\gamma\opt$. Remarkably, this nonlinear equation admits a closed-form solution for all divergences listed in Table~\ref{table:structured_divergence}. In addition, we will see that~$\gamma\opt$ can be computed efficiently by bisection. All of this implies that the complexity of computing~$X\opt$ is largely determined by the complexity of diagonalizing~$\covsa$. In addition, we will see that $x\opt_i = s(\gamma\opt, \xsa_i)$ decreases with~$\gamma\opt$. Thus, $X\opt$ and~$\gamma\opt$ are naturally interpreted as a nonlinear shrinkage estimator and inverse shrinkage intensity, respectively. 

We now outline the high-level structure of the proof of Theorem~\ref{thm:general_CSE}; see Figure~\ref{fig:sketch} for a visualization. The proof is divided into three steps that give rise to three propositions. Proposition~\ref{prop:exist} below first shows that there is a one-to-one relationship between the minimizers of the robust estimation problem~\eqref{eq:CSE} and problem~\eqref{eq:matrix}. 

\begin{proposition}[Dual characterization of~$X\opt$] \label{prop:exist}
If Assumption~\ref{assu:inf_sup} holds, then the primal and dual robust estimation problems~\eqref{eq:CSE} and~\eqref{eq:dual-CSE} are equivalent to problem~\eqref{eq:matrix} in the following sense.
\begin{enumerate}[label=(\roman*)]
    \item\label{prop:exist-i} If $\cov\opt$ solves~\eqref{eq:matrix}, then $X\opt=\cov\opt$ solves~\eqref{eq:CSE}, and $\cov\opt$ solves~\eqref{eq:dual-CSE}.

    \item\label{prop:exist-ii} If $X\opt$ solves~\eqref{eq:CSE} and $\cov\opt$ solves~\eqref{eq:dual-CSE}, then $X\opt$ coincides with $\cov\opt$ and solves~\eqref{eq:matrix}.
\end{enumerate}
\end{proposition}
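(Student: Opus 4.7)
The plan is to execute the dualization sketch from the preceding discussion and turn it into a clean saddle-point argument. The main observation is that the inner minimization in \eqref{eq:dual-CSE}, namely $\min_{X \in \PSD^p} \Tr{X^2} - 2\Tr{\cov X}$, is a strictly convex unconstrained quadratic (over the open set $\PD^p$, but note that its unique critical point $X = \cov$ is admissible because $\cov \in \PSD^p$). Its first-order condition $2X - 2\cov = 0$ yields the \emph{unique} minimizer $X = \cov$, with optimal value $-\Tr{\cov^2} = -\norm{\cov}_{\mathrm F}^2$. I would first record this fact as a lemma, since both parts of the proposition rely on it. Substituting the inner optimum into \eqref{eq:dual-CSE} immediately turns it into $\max_{\cov \in \B_\eps(\covsa)} -\norm{\cov}_{\mathrm F}^2$, which is the same optimization as \eqref{eq:matrix} up to a sign flip; hence the two problems share the same set of optimizers and negated values.

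Next I would invoke Assumption~\ref{assu:inf_sup} to conclude that the common value of \eqref{eq:CSE} and \eqref{eq:dual-CSE} equals $-\norm{\cov\opt}_{\mathrm F}^2$ whenever $\cov\opt$ solves \eqref{eq:matrix}, and that both the min and the max are attained. From here I would run the textbook saddle-point argument: given any primal optimizer $X\opt$ and any dual optimizer $\cov\opt$, the chain
\[
-\norm{\cov\opt}_{\mathrm F}^2 \;=\; \max_{\cov \in \B_\eps(\covsa)} \bigl(\Tr{(X\opt)^2} - 2\Tr{\cov X\opt}\bigr) \;\geq\; \Tr{(X\opt)^2} - 2\Tr{\cov\opt X\opt} \;\geq\; \min_{X \in \PSD^p} \bigl(\Tr{X^2} - 2\Tr{\cov\opt X}\bigr) \;=\; -\norm{\cov\opt}_{\mathrm F}^2
\]
must collapse to a string of equalities. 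In particular, $X\opt$ attains the inner minimum in the rightmost expression, and by the uniqueness established in Step~1, this forces $X\opt = \cov\opt$.

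With these two ingredients in hand, both statements follow immediately. For part~\ref{prop:exist-i}, starting from a solution $\cov\opt$ of \eqref{eq:matrix}, the reduction of Step~1 gives that $\cov\opt$ solves \eqref{eq:dual-CSE} with value $-\norm{\cov\opt}_{\mathrm F}^2$; Assumption~\ref{assu:inf_sup} then supplies a primal optimizer $X\opt$ attaining the same value, and the saddle-point chain forces $X\opt = \cov\opt$. For part~\ref{prop:exist-ii}, the assumed primal optimizer $X\opt$ and dual optimizer $\cov\opt$ plug directly into the saddle chain, again yielding $X\opt = \cov\opt$; feasibility of $\cov\opt$ in \eqref{eq:matrix} and the equivalence from Step~1 then show that $\cov\opt$ solves \eqref{eq:matrix}.

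I do not anticipate any serious obstacle: the non-convexity of $\B_\eps(\covsa)$, which typically threatens such reductions, is precisely what Assumption~\ref{assu:inf_sup} neutralizes, and the strict convexity of the quadratic in $X$ supplies the uniqueness needed to identify $X\opt$ with $\cov\opt$. The only point requiring mild care is to justify that the minimum over $\PSD^p$ in the inner problem is indeed attained at the unconstrained critical point $X = \cov \in \PSD^p$, which is immediate.
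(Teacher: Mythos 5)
Your argument is correct and is essentially the paper's own: the paper omits a formal proof, pointing to the Section~2 discussion, which consists of exactly your two ingredients --- the inner minimization in \eqref{eq:dual-CSE} is uniquely solved by $X=\cov$ with value $-\norm{\cov}_{\mathrm F}^2$ (reducing \eqref{eq:dual-CSE} to \eqref{eq:matrix}), and the minimax property makes $(X\opt,\cov\opt)$ a saddle point, forcing $X\opt$ to be the unique best response to $\cov\opt$, namely $\cov\opt$ itself. You merely spell out the saddle-point chain explicitly where the paper cites the Nash-equilibrium characterization from Rockafellar; the only cosmetic slip is calling the feasible set of the inner problem ``the open set $\PD^p$'' when it is the closed cone $\PSD^p$, which does not affect the argument since the unconstrained critical point $X=\cov$ is feasible.
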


The proof of Proposition~\ref{prop:exist} follows immediately from the discussion in Section~\ref{sec:cse} and is thus omitted. Next, we show that problem~\eqref{eq:matrix}, which optimizes over all matrices in the positive semidefinite cone~$\PSD^p$, is equivalent to problem~\eqref{eq:vector} below, which optimizes over all vectors in the non-negative orthant~$\R^p_+$:
\begin{equation}\label{eq:vector}\tag{P$_{\text{Vec}}$}
\min_{x \in \R^p_+}  ~\left\{ \norm{ x }_2^2 ~:~ \displaystyle \sum_{ i = 1 }^p d\left( x_i, \xsa_i \right)  \le \eps\right\}.
\end{equation}
We henceforth use~$x\opt$ to denote the unique minimizer of problem~\eqref{eq:vector} if it exists. 

\begin{figure}[!h]
    \centering
    \resizebox{\linewidth}{!}{\tikzset{every picture/.style={line width=0.75pt}} 

\begin{tikzpicture}[x=0.75pt,y=0.75pt,yscale=-1,xscale=1]

\draw   (10,62) .. controls (10,55.37) and (15.37,50) .. (22,50) -- (278,50) .. controls (284.63,50) and (290,55.37) .. (290,62) -- (290,98) .. controls (290,104.63) and (284.63,110) .. (278,110) -- (22,110) .. controls (15.37,110) and (10,104.63) .. (10,98) -- cycle ;
\draw    (292,80) -- (408,80) ;
\draw [shift={(410,80)}, rotate = 180] [color={rgb, 255:red, 0; green, 0; blue, 0 }  ][line width=0.75]    (10.93,-3.29) .. controls (6.95,-1.4) and (3.31,-0.3) .. (0,0) .. controls (3.31,0.3) and (6.95,1.4) .. (10.93,3.29)   ;
\draw [shift={(290,80)}, rotate = 0] [color={rgb, 255:red, 0; green, 0; blue, 0 }  ][line width=0.75]    (10.93,-3.29) .. controls (6.95,-1.4) and (3.31,-0.3) .. (0,0) .. controls (3.31,0.3) and (6.95,1.4) .. (10.93,3.29)   ;
\draw   (410,186) .. controls (410,177.16) and (417.16,170) .. (426,170) -- (654,170) .. controls (662.84,170) and (670,177.16) .. (670,186) -- (670,234) .. controls (670,242.84) and (662.84,250) .. (654,250) -- (426,250) .. controls (417.16,250) and (410,242.84) .. (410,234) -- cycle ;
\draw    (280,210) -- (408,210) ;
\draw [shift={(410,210)}, rotate = 180] [color={rgb, 255:red, 0; green, 0; blue, 0 }  ][line width=0.75]    (10.93,-3.29) .. controls (6.95,-1.4) and (3.31,-0.3) .. (0,0) .. controls (3.31,0.3) and (6.95,1.4) .. (10.93,3.29)   ;
\draw    (540,112) -- (540,168) ;
\draw [shift={(540,170)}, rotate = 270] [color={rgb, 255:red, 0; green, 0; blue, 0 }  ][line width=0.75]    (10.93,-3.29) .. controls (6.95,-1.4) and (3.31,-0.3) .. (0,0) .. controls (3.31,0.3) and (6.95,1.4) .. (10.93,3.29)   ;
\draw [shift={(540,110)}, rotate = 90] [color={rgb, 255:red, 0; green, 0; blue, 0 }  ][line width=0.75]    (10.93,-3.29) .. controls (6.95,-1.4) and (3.31,-0.3) .. (0,0) .. controls (3.31,0.3) and (6.95,1.4) .. (10.93,3.29)   ;
\draw    (150,170) -- (150,112) ;
\draw [shift={(150,110)}, rotate = 90] [color={rgb, 255:red, 0; green, 0; blue, 0 }  ][line width=0.75]    (10.93,-3.29) .. controls (6.95,-1.4) and (3.31,-0.3) .. (0,0) .. controls (3.31,0.3) and (6.95,1.4) .. (10.93,3.29)   ;
\draw   (410,62) .. controls (410,55.37) and (415.37,50) .. (422,50) -- (658,50) .. controls (664.63,50) and (670,55.37) .. (670,62) -- (670,98) .. controls (670,104.63) and (664.63,110) .. (658,110) -- (422,110) .. controls (415.37,110) and (410,104.63) .. (410,98) -- cycle ;
\draw   (20,186) .. controls (20,177.16) and (27.16,170) .. (36,170) -- (264,170) .. controls (272.84,170) and (280,177.16) .. (280,186) -- (280,234) .. controls (280,242.84) and (272.84,250) .. (264,250) -- (36,250) .. controls (27.16,250) and (20,242.84) .. (20,234) -- cycle ;

\draw (150,52) node [anchor=north][inner sep=0.75pt]   [align=left] {\textbf{Problem~\eqref{eq:CSE}}};
\draw (150,75) node [anchor=north][inner sep=0.75pt]    {$\{X\opt\}\triangleq\displaystyle\Argmin_{X\in\PSD^p} \max_{\cov \in \B_\eps (\covsa)} \Tr{X^2} - 2 \Tr{ \cov  X }$};
\draw (540,52) node [anchor=north][inner sep=0.75pt]   [align=left] {\textbf{Problem~\eqref{eq:matrix}}};
\draw (540,75) node [anchor=north][inner sep=0.75pt]    {$\{\cov\opt\}\triangleq\displaystyle\Argmin_{\cov\in\PSD^p}\left\{\|\cov\|_{\rm F}^2:\; D(\cov,\covsa)\leq\varepsilon \right\}$};
\draw (350,58) node [anchor=north][inner sep=0.75pt]    {$X^{\star } =\Sigma ^{\star }$};
\draw (350,85) node [anchor=north][inner sep=0.75pt]   [align=left] {Proposition~\ref{prop:exist}};
\draw (540,175) node [anchor=north][inner sep=0.75pt]   [align=left] {\textbf{Problem~\eqref{eq:vector}}};
\draw (540,195) node [anchor=north][inner sep=0.75pt]    {$\displaystyle\{x\opt\}\triangleq\Argmin_{x\in\mathbb{R}_+^p}\bigg\{\|x\|_2^2:\; \sum_{i=1}^p d(x_i,\widehat{x}_i)\leq\varepsilon\bigg\}$};
\draw (150,175) node [anchor=north][inner sep=0.75pt]   [align=left] {\textbf{First-order condition}};
\draw (150,193) node [anchor=north][inner sep=0.75pt]    {$\gamma ^{\star}>0$ unique solution of};
\draw (345,188) node [anchor=north][inner sep=0.75pt]    {$x_{i}^{\star } =s\left( \gamma ^{\star } ,\widehat{x}_{i}\right)\; \forall i$};
\draw (345,215) node [anchor=north][inner sep=0.75pt]   [align=left] {Proposition~\ref{prop:gamma-reconstruction}};
\draw (450,140) node [anchor=west][inner sep=-3pt]   [align=left] {Proposition~\ref{prop:vec-equivalence}};
\draw (545,139) node [anchor=south west][inner sep=0.75pt]    {$\Sigma ^{\star } =\widehat{V} \Diag(x\opt)\ \widehat{V}^{\top }$};
\draw (545,139) node [anchor=north west][inner sep=0.75pt]    {$x\opt_i=\lambda_i(\cov\opt)\;\forall i$};
\draw (150,205) node [anchor=north][inner sep=0.75pt]    {$\displaystyle\sum _{i=1}^{p} d( s( \gamma ,\widehat{x}_{i}) ,\widehat{x}_{i}) -\varepsilon =0$};
\draw (80,140) node [anchor=west][inner sep=-3pt]   [align=left] {Theorem~\ref{thm:general_CSE}};
\draw (155,139) node [anchor=west][inner sep=0.75pt]    {$X^{\star } =\widehat{V}\Diag(s(\gamma\opt,\widehat{x}_i))\widehat{V}^\top$};

\end{tikzpicture}}
    \caption{Structure of the proof of Theorem~\ref{thm:general_CSE}. An arc indicates that the solution to the problem at the arc's tail can be used to construct a solution for the problem at the arc's~head.
    }
    \label{fig:sketch}
\end{figure}

{
\begin{proposition}[Equivalence of~\eqref{eq:matrix} and~\eqref{eq:vector}] \label{prop:vec-equivalence}
If Assumption~\ref{assu:D_form} holds, then problem~\eqref{eq:matrix} is equivalent to problem~\eqref{eq:vector} in the following sense.
\begin{enumerate}[label=(\roman*)]
\item\label{prop:master_1} Problem~\eqref{eq:matrix} is feasible if and only if problem~\eqref{eq:vector} is feasible.    
    \item\label{prop:master_3} If $x\opt$ solves~\eqref{eq:vector}, then $\Vsa \Diag(x\opt) \Vsa^\top$ solves~\eqref{eq:matrix}.
    \item\label{prop:master_4} If $\cov\opt$ solves~\eqref{eq:matrix}, then $\lambda(\cov\opt)$ solves~\eqref{eq:vector}.

\item\label{prop:master_5} \eqref{eq:matrix} and~\eqref{eq:vector} share the same optimal value.
\end{enumerate}
\end{proposition}
}

In the third and last step, we solve problem~\eqref{eq:vector} in quasi-analytical form. To this end, we denote the Lagrange multiplier associated with the divergence constraint $\sum_{ i = 1 }^p d\left( x_i, \xsa_i \right)  \le \eps$ by~$\dualvar\opt$. The following proposition characterizes the unique solution of problem~\eqref{eq:vector} through an explicit function of~$\dualvar\opt$ and shows that~\eqref{eq:vector} can be computed by solving a single nonlinear equation.

\begin{proposition}[Solution of~\eqref{eq:vector}] \label{prop:gamma-reconstruction}
If Assumptions~\ref{assu:D_form}, \ref{assu:data} and~\ref{assu:d_convex} hold, then problem~\eqref{eq:vector} admits a unique optimal solution~$x\opt$ with components $x_i\opt = s(\dualvar\opt, \wh x_i)$, $i = 1, \ldots, p$, where $\gamma\opt$ is the unique positive root of the equation $\sum_{i = 1}^p d(s(\gamma,\xsa_i), \xsa_i) -\eps = 0$. We also have~$0<x\opt_i<\xsa_i$ whenever~$\xsa_i > 0$ and~$x\opt_i=0$ whenever~$\xsa_i=0$.
\end{proposition}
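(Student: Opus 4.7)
The plan is to solve $(\mathrm{P}_{\text{Vec}})$ by combining strict convexity with the KKT conditions. First, I would argue that any minimizer can be taken componentwise in $[0, \xsa_i]$: replacing any $x_i > \xsa_i$ by $\xsa_i$ strictly decreases the Frobenius objective and preserves feasibility because $d(\xsa_i, \xsa_i) = 0$; and if $\xsa_i = 0$, the identity of indiscernibles means any $x_i > 0$ strictly worsens both the objective and the divergence, so $x_i\opt = 0$. On this box, Assumption~\ref{assu:d_convex} makes each $d(\cdot, \xsa_i)$ convex, hence the feasible set is convex and compact, and the strictly convex objective admits a unique minimizer $x\opt$. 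Strict feasibility is witnessed by $x = \xsa$, which gives $\sum_i d(\xsa_i, \xsa_i) = 0 < \eps$, so Slater's condition holds and KKT is necessary and sufficient.

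Second, I would exploit the KKT system. With multiplier $\dualvar\opt \geq 0$ for the divergence constraint and $\nu_i \geq 0$ for $x_i \geq 0$, stationarity reads $2 x_i\opt + \dualvar\opt \frac{\partial d}{\partial a}(x_i\opt, \xsa_i) = \nu_i$ together with $\nu_i x_i\opt = 0$. I would rule out $\dualvar\opt = 0$: it would force $x\opt = 0$, contradicting $\sum_i d(0, \xsa_i) = \bar{\eps} > \eps$ from Assumption~\ref{assu:data}\ref{assu:data-eps}. I would also rule out $x_i\opt = 0$ whenever $\xsa_i > 0$: convexity of $d(\cdot, \xsa_i)$ on $[0, \xsa_i]$ together with $d(0, \xsa_i) > 0 = d(\xsa_i, \xsa_i)$ forces $\frac{\partial d}{\partial a}(0, \xsa_i) < 0$, so $\nu_i < 0$. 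Hence for each $i$ with $\xsa_i > 0$ the stationarity equation becomes exactly the defining equation of $s(\dualvar\opt, \xsa_i)$, and for $\xsa_i = 0$ we trivially have $x_i\opt = 0 = s(\dualvar\opt, 0)$.

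Third, I would verify that $s(\gamma, b)$ is well-defined and strictly increasing in $\gamma$ for each $b > 0$. The function $a \mapsto 2a + \gamma \frac{\partial d}{\partial a}(a, b)$ is continuous and strictly increasing on $[0, b]$ (the first summand is strictly increasing and the second is non-decreasing by convexity), takes a strictly negative value at $a = 0$ and equals $2b > 0$ at $a = b$, so it has a unique root in $(0, b)$. The implicit function theorem yields $\partial_\gamma s(\gamma, b) > 0$, and short limiting arguments give $s(\gamma, b) \to 0$ as $\gamma \to 0^+$ and $s(\gamma, b) \to b$ as $\gamma \to \infty$.

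Finally, since $\dualvar\opt > 0$ the divergence constraint is active, hence $\sum_i d(s(\dualvar\opt, \xsa_i), \xsa_i) = \eps$. Uniqueness of $\dualvar\opt$ follows from monotonicity: the map $\gamma \mapsto \sum_i d(s(\gamma, \xsa_i), \xsa_i)$ is continuous and strictly decreasing on $(0, \infty)$ (each $d(\cdot, \xsa_i)$ is decreasing on $[0, \xsa_i]$ and $s(\gamma, \xsa_i)$ is strictly increasing in $\gamma$ whenever $\xsa_i > 0$), with limits $\bar{\eps}$ and $0$ at the endpoints, so the equation has a unique root in $(0, \infty)$ whenever $\eps \in (0, \bar{\eps})$. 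The subtlest step is controlling the boundary behaviour of $\frac{\partial d}{\partial a}(\cdot, b)$ near $a = 0$, where the derivative may blow up; however, this never obstructs the monotonicity or limiting arguments because all of them only use continuity and the sign of the derivative on the open interval $(0, b)$.
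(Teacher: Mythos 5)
Your proposal follows essentially the same route as the paper's: reduce \eqref{eq:vector} to a box-constrained convex program, invoke Lagrangian duality/KKT, rule out $\gamma^\star=0$ and $x_i^\star=0$, and characterize $\gamma^\star$ as the unique root of a continuous, strictly decreasing function of $\gamma$. The paper phrases the duality step via Rockafellar's ordinary convex programs and saddle points, and it handles the boundary $a=0$ by directly showing that small $a>0$ strictly improves $a^2+\gamma^\star d(a,\xsa_i)$ over $a=0$ (rather than through the sign of the subderivative at $0$), but these are differences of execution, not of substance; your closing caveat about the possible blow-up of $\partial d/\partial a$ near $0$ is handled correctly in spirit.

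There is, however, one genuine omission. Convexity of the constraint is only guaranteed on the box $\prod_i[0,\xsa_i]$, so the KKT system of the convex problem you actually solve must also carry multipliers $\mu_i\ge 0$ for the upper bounds $x_i\le\xsa_i$; your stationarity condition $2x_i^\star+\gamma^\star\,\partial_a d(x_i^\star,\xsa_i)=\nu_i$ silently assumes these are inactive. (If you instead intend to apply KKT to the original problem over $\R_+^p$, that problem is non-convex off the box and necessity of KKT would require a separate constraint-qualification argument.) As a result, the strict inequality $x_i^\star<\xsa_i$ asserted in the proposition---and needed for your stationarity equation to coincide with the defining equation of $s(\gamma^\star,\xsa_i)$---is never established; your first step only gives $x_i^\star\le\xsa_i$. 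The fix is one line: at $x_i^\star=\xsa_i$ one has $\partial_a d(\xsa_i,\xsa_i)=0$ because $d(\cdot,\xsa_i)$ attains its minimum $0$ at the interior point $\xsa_i$ of its domain, so stationarity would force $\mu_i=-2\xsa_i<0$, a contradiction; equivalently, as in the paper, points $a<\xsa_i$ sufficiently close to $\xsa_i$ strictly improve $a^2+\gamma^\star d(a,\xsa_i)$. With that addition your argument is complete and matches the paper's.
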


In summary, Proposition~\ref{prop:gamma-reconstruction} provides a simple characterization of~$\gamma\opt$ and shows how one can use~$\gamma\opt$ to construct a unique solution~$x\opt$ for problem~\eqref{eq:vector}. Proposition~\ref{prop:vec-equivalence} reveals how~$x\opt$ can be used to construct a unique solution~$X\opt$ for problem~\eqref{eq:vector}, and Proposition~\ref{prop:exist} guarantees that $X\opt$ is uniquely optimal in the robust estimation problem~\eqref{eq:CSE}. Taken together, Propositions~\ref{prop:exist}, \ref{prop:vec-equivalence} and~\ref{prop:gamma-reconstruction} therefore prove Theorem~\ref{thm:general_CSE}.

\subsection{Properties of the Distributionally Robust Estimator} \label{sec:property}

We now highlight several desirable characteristics of the distributionally robust covariance estimator~$X\opt$.

\subsubsection{Efficient Computation}\label{sec:compute}

We have seen that~$X\opt$ can be constructed from~$x\opt$, which can be constructed from~$\dualvar\opt$. In addition, we have seen that the Lagrange multiplier~$\dualvar\opt$ is the unique positive root of the equation~$F(\dualvar)=0$, where the function~$F: \R_+ \to \overline{\R}$ is defined through $F(\gamma) = \sum_{i = 1}^p d(s(\gamma,\xsa_i), \xsa_i) -\eps$. The following proposition suggests that this root-finding problem can be solved highly efficiently by bisection or Newton's method.

\begin{proposition}[Structural properties of~$F$]
\label{prop:nonlinear_equation_gamma_opt}
    If Assumptions~\ref{assu:D_form}, \ref{assu:data} and~\ref{assu:d_convex} hold, then the function $F$ is differentiable and strictly decreasing on $\R_{++}$. In addition, we have $\lim_{\gamma\to 0} F(\gamma)>0$ and $\lim_{\gamma\to \infty}F(\gamma)<0$.
\end{proposition}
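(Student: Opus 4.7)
The approach is to analyze $F$ summand by summand via $f_i(\gamma)\triangleq d(s(\gamma,\xsa_i),\xsa_i)$. Indices with $\xsa_i=0$ can be dispensed with immediately: the eigenvalue map gives $s(\gamma,0)\equiv 0$, and the identity of indiscernibles forces $d(0,0)=0$, so these terms contribute nothing to $F$. I therefore fix $b=\xsa_i>0$ and study the scalar function $\gamma\mapsto d(s(\gamma,b),b)$.

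The first technical step is to show that $s(\gamma,b)\in(0,b)$ for every $\gamma>0$. The upper bound $s(\gamma,b)\le b$ is the shrinkage property recorded after the definition of~$s$. For strict positivity I would use Assumption~\ref{assu:d_convex}: convexity of $d(\cdot,b)$ on $[0,b]$ together with $d(0,b)>0=d(b,b)$ (by non-negativity and the identity of indiscernibles) forces $d(\cdot,b)$ to be strictly decreasing on $[0,b]$, so $\tfrac{\partial d}{\partial a}(a,b)<0$ for all $a\in[0,b)$; in particular $\tfrac{\partial d}{\partial a}(0,b)<0$, which rules out $s(\gamma,b)=0$ via the defining relation $2s+\gamma\,\tfrac{\partial d}{\partial a}(s,b)=0$. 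Consequently $\tfrac{\partial d}{\partial a}(s(\gamma,b),b)=-2s(\gamma,b)/\gamma<0$ for every $\gamma>0$.

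Next I would apply the implicit function theorem to $\Phi(a,\gamma)\triangleq 2a+\gamma\,\tfrac{\partial d}{\partial a}(a,b)$ at $(s(\gamma,b),\gamma)$. Since $s(\gamma,b)\in(0,b)$ and $d(\cdot,b)$ is $C^2$ on $\R_{++}$ and convex on $[0,b]$, one has $\tfrac{\partial \Phi}{\partial a}=2+\gamma\,\tfrac{\partial^2 d}{\partial a^2}(s,b)\ge 2>0$, so $s(\cdot,b)$ is $C^1$ on $\R_{++}$ with
\[
\frac{\partial s}{\partial \gamma}(\gamma,b)=-\frac{\tfrac{\partial d}{\partial a}(s,b)}{2+\gamma\,\tfrac{\partial^2 d}{\partial a^2}(s,b)}>0.
\]
The chain rule then yields $f_i'(\gamma)=\tfrac{\partial d}{\partial a}(s,\xsa_i)\cdot\tfrac{\partial s}{\partial \gamma}=-\bigl(\tfrac{\partial d}{\partial a}(s,\xsa_i)\bigr)^2\big/\bigl(2+\gamma\,\tfrac{\partial^2 d}{\partial a^2}(s,\xsa_i)\bigr)<0$, and summing over $i$ proves that $F$ is differentiable and strictly decreasing on $\R_{++}$.

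For the boundary limits I would use two compactness-plus-continuity arguments. As $\gamma\to 0^+$, any accumulation point $s_0\in(0,b]$ of $s(\gamma,b)$ would give $2s_0=\lim\bigl(-\gamma\,\tfrac{\partial d}{\partial a}(s(\gamma,b),b)\bigr)=0$, a contradiction; hence $s(\gamma,b)\to 0$, and continuity of $d(\cdot,b)$ in the extended sense of Assumption~\ref{assu:D_form}\ref{assu:D_form_iii} yields $f_i(\gamma)\to d(0,\xsa_i)$, whence $\lim_{\gamma\to 0^+}F(\gamma)=\bar\eps-\eps>0$ by Assumption~\ref{assu:data}\ref{assu:data-eps} (also covering $\bar\eps=+\infty$). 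As $\gamma\to\infty$, any accumulation point $s_0\in[0,b)$ would force $\tfrac{\partial d}{\partial a}(s_0,b)=\lim(-2s(\gamma,b)/\gamma)=0$, contradicting the strict negativity established above; thus $s(\gamma,b)\to b$, $f_i(\gamma)\to d(b,b)=0$, and $\lim_{\gamma\to\infty}F(\gamma)=-\eps<0$. The main obstacle is the strict-monotonicity step for $d(\cdot,b)$ on $[0,b]$: strict positivity of $s(\gamma,b)$, the sign of $\tfrac{\partial d}{\partial a}(s,b)$, and both boundary limits all rest on upgrading bare convexity plus the identity of indiscernibles to strict decrease on $[0,b)$.
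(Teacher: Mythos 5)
Your proposal is correct and follows essentially the same route as the paper, which modularizes the identical ingredients into Lemmas~\ref{lem:negative_subgradient}, \ref{lem:a} and~\ref{lem:F_root}: strict negativity of $\partial d/\partial a$ on $(0,b)$ from convexity plus the identity of indiscernibles, $s(\gamma,b)\in(0,b)$ with differentiability and strict monotonicity via the implicit function theorem, strict decrease of each summand by composition/chain rule, and the boundary limits from $s(\gamma,b)\to 0$ and $s(\gamma,b)\to b$. The only cosmetic caveat is that your appeal to $\tfrac{\partial d}{\partial a}(0,b)<0$ should be phrased via the right derivative (or the case $d(0,b)=+\infty$), since Assumption~\ref{assu:d_convex} only guarantees differentiability on $\R_{++}$; the paper handles this edge case inside the proof of Proposition~\ref{prop:opt_vec_solution}.
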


Suppose now that we have access to an {\em a priori} upper bound~$\bar{\gamma}>0$ on the Lagrange multiplier~$\gamma\opt$. Note that~$\bar\gamma$ is guaranteed to exist under the assumptions of the proposition. Section~\ref{sec:new_CSE} shows that~$\bar{\gamma}$ can be constructed explicitly for several popular divergence functions. The structural properties of~$F$ established in Proposition~\ref{prop:nonlinear_equation_gamma_opt} allow us to estimate the number of function evaluations needed to compute~$\gamma\opt$. For example, $\gamma\opt$ can be computed via bisection to within an absolute error of~$\delta>0$ using $\mathcal \log_2(\bar\gamma/\delta)$ function evaluations. Under additional mild conditions, $\gamma\opt$ can also be computed via Newton's method to within an absolute error of~$\delta > 0$ using merely $O(\log_2\log_2 ( \bar{\gamma} / \delta ))$ function and derivative evaluations~\cite[Theorem~2.4.3]{dennis1996numerical}.

\subsubsection{Shrinkage Properties} \label{sec:shrink}

Proposition~\ref{prop:gamma-reconstruction} asserts that if Assumptions~\ref{assu:D_form}, \ref{assu:data} and~\ref{assu:d_convex} hold, then the optimal solution~$x\opt$ of problem~\eqref{eq:vector} is unique and can thus be seen as a function~$x\opt(\eps)$ of the radius~$\eps \in(0, \bar\eps)$ of the uncertainty set, where $\bar\eps$ is defined as in Assumption~\ref{assu:data}\ref{assu:data-eps}. In fact, $x\opt(\eps)$ can naturally be extended to a function on~$[0, \bar\eps]$. As~$d$ satisfies the identity of indiscernibles, we can define~$x\opt(0) = \xsa$ as the unique solution of problem~\eqref{eq:vector} for~$\eps=0$. In addition, we may define $x\opt(\bar{\eps}) = 0$. One can then show that each component of~$x\opt(\eps)$ monotonically decreases to~$0$ on~$[0, \bar{\eps}]$. By Theorem~\ref{thm:general_CSE}, the distributionally robust estimator~$X\opt = \Vsa \Diag(x\opt) \Vsa^\top$ inherits the eigenbasis from the nominal covariance matrix~$\covsa$. Hence, $X\opt$ and $\covsa$ commute, and $X\opt$ is rotation equivariant. In summary, these insights imply that~$X\opt$ essentially shrinks the eigenvalues of~$\covsa$ towards zero.

\begin{proposition}[Shrinkage estimator]
    \label{prop:eigen_decreasing}
    If Assumptions~\ref{assu:D_form}, \ref{assu:data} and~\ref{assu:d_convex} hold, then $x\opt_i (\eps)$ is non-increasing on~$[0, \bar{\eps}]$ and satisfies $\lim_{\eps\uparrow \bar{\eps}} x\opt_i (\eps) = 0$ for every~$i= 1,\dots,p$.
    If additionally Assumption~\ref{assu:inf_sup} holds, then~$X\opt$ constitutes a shrinkage estimator, that is, it has the same eigenvectors as~$\covsa$ and satisfies~$0 \preceq X\opt\preceq \covsa$.
\end{proposition}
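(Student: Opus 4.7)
The plan is to exploit the quasi-analytical formula from Proposition~\ref{prop:gamma-reconstruction}: for every $\eps\in(0,\bar\eps)$ we have $x\opt_i(\eps)=s(\gamma\opt(\eps),\xsa_i)$, where $\gamma\opt(\eps)$ is the unique positive root of $G(\gamma)\defeq\sum_{i=1}^p d(s(\gamma,\xsa_i),\xsa_i)=\eps$. Accordingly, monotonicity of $x\opt_i$ in $\eps$ will follow by composing the monotonicity of $\gamma\opt$ in $\eps$ with the monotonicity of $s(\cdot,b)$ in $\gamma$, and the limiting statement will follow by tracking both dependences to their respective boundaries.

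\textbf{Step 1 ($\eps\mapsto\gamma\opt(\eps)$).} By Proposition~\ref{prop:nonlinear_equation_gamma_opt}, $F(\gamma)=G(\gamma)-\eps$ is continuous and strictly decreasing on $\R_{++}$ with $\lim_{\gamma\da 0}F(\gamma)>0$ and $\lim_{\gamma\to\infty}F(\gamma)<0$. A short continuity argument (which I discharge in Step~2) will show that $\lim_{\gamma\da 0}G(\gamma)=\sum_{i=1}^p d(0,\xsa_i)=\bar\eps$ and $\lim_{\gamma\to\infty}G(\gamma)=0$. Hence the implicit relation $G(\gamma\opt(\eps))=\eps$ defines a continuous strictly decreasing map $\gamma\opt:(0,\bar\eps)\to\R_{++}$ with $\gamma\opt(\eps)\da 0$ as $\eps\uparrow\bar\eps$.

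\textbf{Step 2 ($\gamma\mapsto s(\gamma,b)$).} Fix $b>0$. Because $d(\cdot,b)$ is non-negative with its unique zero at $a=b$ and is convex on $[0,b]$ by Assumption~\ref{assu:d_convex}, one has $\partial d/\partial a(a,b)\leq 0$ throughout $[0,b]$; the defining equation $2a+\gamma\,\partial d/\partial a(a,b)=0$ therefore forces $s(\gamma,b)\in[0,b]$. Implicit differentiation of this identity at $a=s(\gamma,b)$ yields
\[
    \frac{\partial s}{\partial\gamma}(\gamma,b)=\frac{-\,\partial d/\partial a(s(\gamma,b),b)}{2+\gamma\,\partial^2 d/\partial a^2(s(\gamma,b),b)}\geq 0,
\]
where the denominator is strictly positive by convexity on $[0,b]$. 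For the limit, if $s(\gamma_k,b)\geq c>0$ along some sequence $\gamma_k\da 0$, then $\partial d/\partial a(s(\gamma_k,b),b)$ would stay bounded on the compact interval $[c,b]$ by smoothness, contradicting $2s(\gamma_k,b)=-\gamma_k\partial d/\partial a(s(\gamma_k,b),b)\to 0$; hence $s(\gamma,b)\da 0$ as $\gamma\da 0$, which also justifies the limit used in Step~1.

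\textbf{Step 3 (conclusion and the second assertion).} Composing Steps~1 and~2, $x\opt_i(\eps)=s(\gamma\opt(\eps),\xsa_i)$ is non-increasing in $\eps$ on $(0,\bar\eps)$, with $\lim_{\eps\uparrow\bar\eps}x\opt_i(\eps)=\lim_{\gamma\da 0}s(\gamma,\xsa_i)=0$. Under Assumption~\ref{assu:inf_sup}, Theorem~\ref{thm:general_CSE} gives $X\opt=\Vsa\Diag(x\opt)\Vsa^\top$, which manifestly shares the eigenbasis $\Vsa$ with $\covsa=\Vsa\Diag(\xsa)\Vsa^\top$; the componentwise bounds $0\leq x\opt_i\leq\xsa_i$ from Theorem~\ref{thm:general_CSE} combined with simultaneous diagonalizability in $\Vsa$ then yield $0\preceq X\opt\preceq\covsa$. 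I expect the main obstacle to be the monotonicity claim in Step~2, whose three coupled ingredients---(i)~that $s(\gamma,b)$ lies in $[0,b]$, (ii)~that convexity of $d(\cdot,b)$ is only granted on $[0,b]$, and (iii)~that $\partial d/\partial a$ is non-positive on this subinterval---must be combined in exactly the right way so that the implicit-differentiation formula is valid at the relevant argument. A minor secondary point is the limit $s(\gamma,b)\da 0$: the compactness argument must sidestep possible singularities of $\partial d/\partial a$ at $a=0$ (as is the case, e.g., for the Kullback--Leibler generator).
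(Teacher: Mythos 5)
Your proposal is correct, and its overall skeleton (compose the monotonicity of $\eps\mapsto\gamma\opt(\eps)$ with the monotonicity of $\gamma\mapsto s(\gamma,b)$, then invoke Theorem~\ref{thm:general_CSE} for the matrix statement) matches the paper's. However, two of your sub-arguments differ genuinely from the paper's. First, for the monotonicity of $\gamma\opt(\eps)$ the paper proves a separate Lemma (Lemma~\ref{lem:gamma_star}) via convex duality: $\gamma\opt(\eps)$ minimizes the dual objective $\eps\gamma+G(\gamma)$ with $G$ convex, and monotonicity of the subdifferential yields $(\gamma_2-\gamma_1)(\eps_1-\eps_2)\ge 0$. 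You instead invert the strictly decreasing map $\gamma\mapsto\sum_i d(s(\gamma,\xsa_i),\xsa_i)$, whose strict monotonicity and boundary limits are exactly the content of Proposition~\ref{prop:nonlinear_equation_gamma_opt} and Lemma~\ref{lem:F_root}; this buys you strict monotonicity and continuity of $\gamma\opt$ essentially for free. Second, for $\lim_{\eps\uparrow\bar\eps}x\opt_i(\eps)=0$ the paper argues primally: it builds an explicit feasible point with all nonzero entries equal to a small $\delta$, sets $\eps=\sum_i d(x_i,\xsa_i)<\bar\eps$, and uses optimality to get $x\opt_i(\eps)\le\sqrt{p}\,\delta$; you instead track $\gamma\opt(\eps)\da 0$ and compose with $s(\gamma,b)\da 0$. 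Your route additionally requires $\lim_{\gamma\da 0}\sum_i d(s(\gamma,\xsa_i),\xsa_i)=\bar\eps$, which rests on the extended-real continuity convention for $d(\cdot,b)$ at $a=0$ (relevant when $d(0,b)=+\infty$, as for Kullback--Leibler); the paper's primal construction sidesteps this boundary issue. Two minor points you gloss over but the paper treats explicitly: the trivial cases $\xsa_i=0$ (where $x\opt_i\equiv 0$) and the extension of monotonicity to the endpoints $\eps\in\{0,\bar\eps\}$ via the conventions $x\opt(0)=\xsa$ and $x\opt(\bar\eps)=0$; both are immediate from $0\le x\opt_i(\eps)\le\xsa_i$.
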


Proposition~\ref{prop:eigen_decreasing} asserts that the eigenvalues of~$X\opt$ are bounded above by the corresponding nominal eigenvalues. This shrinkage property persists across a remarkably broad class of estimators. The shrinkage effects of robustification were first discovered in a distributionally robust \textit{inverse} covariance estimation problem with a Wasserstein ambiguity set~\cite{ref:nguyen2018distributionally}. The results presented here are significantly more general. Indeed, they reveal that a broad class of divergence functions gives rise to diverse shrinkage estimators.

\subsubsection{Improvement of the Condition Number}
The condition number~$\kappa(X)$ of a positive definite matrix~$X \in \PD^p$ is defined as the ratio of its largest to its smallest eigenvalue. It is well known that unless~$n\gg p$, the sample covariance matrix~$\covsa_{\mathrm{SA}}$ tends to be ill-conditioned, that is, $\kappa(\covsa_{\mathrm{SA}})\gg 1$ \cite{van1961certain}. Therefore, most shrinkage estimators are designed to improve the condition number of an ill-conditioned baseline estimator~$\covsa$. Below we will show that the distributionally robust estimator~$X\opt$ is also guaranteed to improve the condition number of~$\covsa$ whenever the generator~$d$ of the divergence~$D$ satisfies a second-order differential inequality.

\begin{assumption}[Differential inequality]\label{assu:d_b_second_derivative}
The generator~$d$ of the divergence function~$D$ is twice continuously differentiable on~$\R_{++}^2$ and satisfies the following differential inequality for all $a,b\in\R_{++}$ with~$a<b$.
\begin{equation*}
a \, \frac{\partial^2}{\partial a^2}d(a,b) + b\, \frac{\partial^2}{\partial a\partial b}d(a,b) \ge \frac{\partial}{\partial a}d(a,b)
\end{equation*}
\end{assumption}

Assumption~\ref{assu:d_b_second_derivative} may be difficult to check. In Theorem~\ref{thm:verification} below, we will show, however, that it is satisfied by all divergence functions of Table~\ref{table:structured_divergence}. We can now prove that robustification improves the condition number. 

\begin{proposition}[Improved condition number] \label{prop:condition}
If Assumptions~\ref{assu:inf_sup}--\ref{assu:d_b_second_derivative} hold and $\covsa\in\PD^p$, then $\kappa (X\opt) \le \kappa (\covsa)$.
\end{proposition}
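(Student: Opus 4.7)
Since $\widehat\Sigma\in\PD^p$, all nominal eigenvalues are positive, and by Theorem~\ref{thm:general_CSE} we have $x_i^\star = s(\gamma^\star,\widehat x_i)>0$ for every $i$, so $X^\star\in\PD^p$ and $\kappa(X^\star)$ is well defined. The plan is to reduce the claim to a one-dimensional monotonicity statement about the eigenvalue map $s(\gamma,\cdot)$, and then establish that statement by implicit differentiation, using Assumption~\ref{assu:d_b_second_derivative} as the decisive input.

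First I would verify that, for every fixed $\gamma>0$, the map $b\mapsto s(\gamma,b)$ is non-decreasing on $\R_{++}$. Differentiating the defining equation $2s(\gamma,b)+\gamma\,d_a(s(\gamma,b),b)=0$ implicitly in $b$ gives
\[
s_b(\gamma,b) \;=\; -\frac{\gamma\,d_{ab}(s,b)}{2+\gamma\,d_{aa}(s,b)},
\]
where the denominator is strictly positive because Assumption~\ref{assu:d_convex} yields $d_{aa}(s,b)\ge 0$ (as $s\le b$). Monotonicity of $s(\gamma,\cdot)$ then implies that the order of the $\widehat x_i$ is preserved by the shrinkage map, so that
\[
\lambda_{\min}(X^\star)=s(\gamma^\star,\widehat x_{\min}),\qquad \lambda_{\max}(X^\star)=s(\gamma^\star,\widehat x_{\max}),
\]
and hence $\kappa(X^\star)=s(\gamma^\star,\widehat x_{\max})/s(\gamma^\star,\widehat x_{\min})$.

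The technical core is the second step: showing that the ratio $b\mapsto s(\gamma,b)/b$ is non-increasing on $\R_{++}$ for every fixed $\gamma>0$. Computing the derivative, this amounts to $b\,s_b(\gamma,b)\le s(\gamma,b)$. Substituting the expression for $s_b$ from above and using the optimality identity $\gamma\,d_a(s,b)=-2s$ to eliminate $2s$, the inequality simplifies (after clearing the positive denominator $2+\gamma d_{aa}$ and dividing by the strictly negative factor $-\gamma$, which flips the sense) to
\[
b\,\frac{\partial^2 d}{\partial a\partial b}(s,b)+s\,\frac{\partial^2 d}{\partial a^2}(s,b)\;\ge\;\frac{\partial d}{\partial a}(s,b),
\]
which is precisely Assumption~\ref{assu:d_b_second_derivative} evaluated at $(a,b)=(s(\gamma,b),b)$ with $a\le b$. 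This is the main obstacle, since the algebraic route from $b\,s_b\le s$ to the assumed differential inequality must carefully track signs (recall that $d_a(s,b)\le 0$ because $d(\cdot,b)$ is convex on $[0,b]$ with minimum at $b$).

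Finally, combining the two steps yields the condition-number bound in one line:
\[
\kappa(X^\star)=\frac{s(\gamma^\star,\widehat x_{\max})/\widehat x_{\max}}{s(\gamma^\star,\widehat x_{\min})/\widehat x_{\min}}\cdot\frac{\widehat x_{\max}}{\widehat x_{\min}}\;\le\;\frac{\widehat x_{\max}}{\widehat x_{\min}}=\kappa(\widehat\Sigma),
\]
where the inequality uses that $s(\gamma^\star,b)/b$ is non-increasing in $b$ applied to $\widehat x_{\min}\le\widehat x_{\max}$. This completes the proof.
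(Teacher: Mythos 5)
Your argument is essentially the paper's: the "technical core" you identify (that $b\mapsto s(\gamma,b)/b$ is non-increasing, obtained by implicit differentiation of the stationarity condition, eliminating $2s$ via $\gamma\,\partial_a d(s,b)=-2s$, and recognizing the resulting inequality as Assumption~\ref{assu:d_b_second_derivative} at $(a,b)=(s(\gamma,b),b)$) is exactly the content and the proof of Lemma~\ref{lemma:cond_decrease}; the paper phrases it as monotonicity of $K(b)=\tfrac1b\partial_a d(s(\gamma,b),b)$, which equals $-\tfrac{2}{\gamma}\,s(\gamma,b)/b$, so the two computations coincide line by line, including the sign bookkeeping.

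There is, however, one incomplete sub-step. Your first step claims that $b\mapsto s(\gamma,b)$ is non-decreasing, but the justification only establishes that the denominator $2+\gamma\,\partial^2_{aa}d(s,b)$ in the implicit-differentiation formula is positive; the sign of $s_b$ also depends on the numerator $-\gamma\,\partial^2_{ab}d(s,b)$, and no assumption of the paper forces $\partial^2_{ab}d\le 0$ (it happens to hold for the divergences in Table~\ref{table:structured_divergence}, but it is not part of Assumptions~\ref{assu:D_form}--\ref{assu:d_b_second_derivative}). You do genuinely need order preservation to identify $\lambda_{\max}(X\opt)=s(\gamma\opt,\xsa_p)$ and $\lambda_{\min}(X\opt)=s(\gamma\opt,\xsa_1)$ — the ratio inequality of Lemma~\ref{lemma:cond_decrease} alone does not exclude that the shrinkage map reverses the order of some eigenvalues. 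The clean way to close this is not implicit differentiation but the structure of the optimization problem: the unique minimizer of~\eqref{eq:vector} coincides with that of~\eqref{opt:vector-up} (Proposition~\ref{prop:vec_problems_equivalence}, which rests on the rearrangement property of Assumption~\ref{assu:D_form}\ref{assu:D_form_iv}), hence $x\opt={x\opt}^\uparrow$, i.e., $x\opt_1\le\cdots\le x\opt_p$ whenever $\xsa_1\le\cdots\le\xsa_p$. With that substitution your final one-line computation is correct and matches the paper's conclusion.
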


The proof of Proposition~\ref{prop:condition} exploits a generalized monotonicity property of the eigenvalue map $s(\gamma, b)$.

\begin{lemma}[Generalized monotonicity property of the eigenvalue map~$s$]
\label{lemma:cond_decrease}
If Assumptions~\ref{assu:D_form}, \ref{assu:d_convex} and~\ref{assu:d_b_second_derivative} hold, then we have $s(\gamma, b_2)/s(\gamma, b_1)  \le b_2/b_1$ for all $\gamma > 0$ and $b_1,b_2\in \R_{++}$ with $b_2 \ge b_1$.
\end{lemma}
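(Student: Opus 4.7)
The plan is to show that the map $b \mapsto s(\gamma, b)/b$ is non-increasing on $\R_{++}$ for each fixed $\gamma > 0$, from which the claim follows immediately: if $b_2 \ge b_1 > 0$, then $s(\gamma, b_2)/b_2 \le s(\gamma, b_1)/b_1$, which rearranges to $s(\gamma,b_2)/s(\gamma,b_1) \le b_2/b_1$ (using $s(\gamma,b_1) > 0$, guaranteed by Proposition~\ref{prop:gamma-reconstruction}).

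First, I would establish that $s(\gamma,\cdot)$ is continuously differentiable on $\R_{++}$ via the implicit function theorem applied to the defining equation
\[
G(a,b) \defeq 2a + \gamma\, \tfrac{\partial d}{\partial a}(a,b) = 0.
\]
Under Assumption~\ref{assu:d_convex}, $d$ is twice continuously differentiable near the interior minimizer $a = s(\gamma,b) \in (0,b)$, and convexity of $d(\cdot,b)$ on $[0,b]$ gives $\frac{\partial^2 d}{\partial a^2}(a,b) \ge 0$ there, so $\partial G/\partial a = 2 + \gamma\, \tfrac{\partial^2 d}{\partial a^2}(a,b) > 0$. The implicit function theorem then yields
\[
\tfrac{\partial s}{\partial b}(\gamma,b) \;=\; -\,\frac{\gamma\, \tfrac{\partial^2 d}{\partial a\partial b}(a,b)}{2 + \gamma\, \tfrac{\partial^2 d}{\partial a^2}(a,b)}, \qquad a = s(\gamma,b).
\]

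Next, I would differentiate $r(b) \defeq s(\gamma,b)/b$, so $r'(b) \le 0$ is equivalent to $b\, \tfrac{\partial s}{\partial b}(\gamma,b) \le s(\gamma,b)$. Substituting the expression above and multiplying through by the positive denominator $2 + \gamma\, \tfrac{\partial^2 d}{\partial a^2}(a,b)$, this inequality becomes
\[
-\,b\,\gamma\,\tfrac{\partial^2 d}{\partial a \partial b}(a,b) \;\le\; a\bigl(2 + \gamma\, \tfrac{\partial^2 d}{\partial a^2}(a,b)\bigr).
\]
Using the first-order relation $2a = -\gamma\, \tfrac{\partial d}{\partial a}(a,b)$ from the definition of $s$, this simplifies (after dividing by $\gamma > 0$) exactly to
\[
\tfrac{\partial d}{\partial a}(a,b) \;\le\; a\,\tfrac{\partial^2 d}{\partial a^2}(a,b) + b\,\tfrac{\partial^2 d}{\partial a\partial b}(a,b),
\]
which is precisely Assumption~\ref{assu:d_b_second_derivative}, applicable because $a = s(\gamma,b) \in (0,b)$ by Proposition~\ref{prop:gamma-reconstruction}.

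Hence $r$ is non-increasing on $\R_{++}$, proving the lemma. The only mild subtlety is the translation of the inequality $b\,\tfrac{\partial s}{\partial b} \le s$ into a statement purely about partial derivatives of $d$ that can be matched against the hypothesis; the convexity half of Assumption~\ref{assu:d_convex} is needed twice, once to ensure $\partial G/\partial a > 0$ so the implicit derivative is well defined, and once to preserve the direction of the inequality when multiplying through. Everything else is a routine implicit-differentiation calculation that is engineered to produce the exact left-hand side of Assumption~\ref{assu:d_b_second_derivative}.
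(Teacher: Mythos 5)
Your proof is correct and follows essentially the same route as the paper: the paper shows that $K(b)=\tfrac{1}{b}\,\tfrac{\partial d}{\partial a}(s(\gamma,b),b)$ is non-decreasing, and since the defining equation of $s$ gives $K(b)=-\tfrac{2}{\gamma}\,s(\gamma,b)/b$, this is exactly your claim that $b\mapsto s(\gamma,b)/b$ is non-increasing, established by the same implicit-differentiation computation that reduces to Assumption~\ref{assu:d_b_second_derivative}. The only cosmetic issues are that the fact $s(\gamma,b)\in(0,b)$ and the differentiability of $s(\gamma,\cdot)$ are more properly cited from Lemma~\ref{lem:a} (and the joint $C^2$ smoothness of $d$ from Assumption~\ref{assu:d_b_second_derivative}) rather than from Proposition~\ref{prop:gamma-reconstruction} and Assumption~\ref{assu:d_convex}.
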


Recall from Theorem~\ref{thm:general_CSE} that $x\opt_i = s(\gamma\opt, \xsa_i)$ for all $i = 1,\dots, p$ and that $\gamma\opt > 0$. Therefore, Proposition~\ref{prop:condition} follows immediately from Lemma~\ref{lemma:cond_decrease}.

\subsubsection{Statistical Guarantees}
We finally show that the distributionally robust estimator is consistent and enjoys a finite-sample performance guarantee. To this end, we make the dependence on~$n$ explicit, that is, we let~$X\opt_n$ be the unique solution of~\eqref{eq:CSE}, where the nominal estimator is any covariance estimator~$\covsa_n$ constructed from~$n$ i.i.d.~training samples, and where the radius is set to a non-negative number~$\eps_n$ that may depend on~$n\in\mathbb N$. We say a covariance estimator is strongly consistent if it converges almost surely to~$\cov_0$ { for a fixed~$p$ as}~$n$ tends to~infinity.

\begin{proposition}[Consistency] \label{prop:consistency}
        Suppose that Assumptions~\ref{assu:inf_sup}--\ref{assu:d_convex} hold and that~$d$ is continuous on $\mathbb{R}_+\times \mathbb{R}_{++}$. If $\covsa_n$ is a strongly consistent estimator and~$\eps_n$ converges to~$0$ as~$n$ grows, then~$X\opt_n$ is strongly consistent. 
\end{proposition}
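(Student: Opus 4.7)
The plan is to work sample-path-wise on the almost sure event $\{\covsa_n\to\cov_0\}$ and establish convergence of $X\opt_n$ to $\cov_0$ via the explicit shrinkage representation from Theorem~\ref{thm:general_CSE}. The key observation is that, by Theorem~\ref{thm:general_CSE}, $X\opt_n$ and $\covsa_n$ share the same eigenbasis~$\Vsa_n$, so the orthogonal invariance of the Frobenius norm gives
\[
\|X\opt_n - \covsa_n\|_{\mathrm F}^2 = \sum_{i=1}^p \bigl(x\opt_{i,n} - \xsa_{i,n}\bigr)^2,
\]
which sidesteps the delicate issue of eigenvector convergence (which may fail at repeated eigenvalues). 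Combined with the triangle inequality $\|X\opt_n - \cov_0\|_{\mathrm F} \le \|X\opt_n - \covsa_n\|_{\mathrm F} + \|\covsa_n - \cov_0\|_{\mathrm F}$, the proof reduces to showing $x\opt_{i,n} - \xsa_{i,n} \to 0$ for each~$i$.

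Next, I would use continuity of eigenvalues (Weyl's inequality) to deduce that $\xsa_{i,n} \to \ell_i \triangleq \lambda_i(\cov_0)$ for every~$i$. Proposition~\ref{prop:eigen_decreasing} provides the uniform bound $0 \le x\opt_{i,n} \le \xsa_{i,n}$, so the sequence $(x\opt_{i,n})_n$ is eventually bounded. For indices with $\ell_i = 0$, the squeeze theorem immediately yields $x\opt_{i,n} \to 0 = \ell_i$. For indices with $\ell_i > 0$, I would apply a subsequential argument: any convergent subsequence $x\opt_{i,n_k} \to x^\star_i \in [0, 2\ell_i]$ satisfies, by continuity of $d$ on $\R_+ \times \R_{++}$,
\[
d(x^\star_i,\ell_i) \;=\; \lim_{k\to\infty} d(x\opt_{i,n_k}, \xsa_{i,n_k}) \;\le\; \lim_{k\to\infty} \sum_{j=1}^p d(x\opt_{j,n_k}, \xsa_{j,n_k}) \;\le\; \lim_{k\to\infty} \eps_{n_k} = 0,
\]
where the first inequality uses that each summand is non-negative (Assumption~\ref{assu:D_form}) and the second uses feasibility of~$x\opt_{n_k}$ in~\eqref{eq:vector}. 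Non-negativity of $d$ then forces $d(x^\star_i,\ell_i)=0$, and the identity of indiscernibles gives $x^\star_i = \ell_i$. Since every convergent subsequence has the same limit~$\ell_i$, the full sequence converges: $x\opt_{i,n} \to \ell_i$.

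Combining $x\opt_{i,n} \to \ell_i$ with $\xsa_{i,n} \to \ell_i$ yields $\|X\opt_n - \covsa_n\|_{\mathrm F} \to 0$, and the triangle inequality together with $\covsa_n \to \cov_0$ delivers $X\opt_n \to \cov_0$ almost surely, which is strong consistency.

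The main subtlety — and the step I expect to require the most care — is handling indices with $\ell_i = 0$ in tandem with the possibility that $d(\,\cdot\,, b)$ is defined only on $\R_{++}$ (as for Kullback--Leibler or Fisher--Rao divergences). In that regime, $\xsa_{i,n}$ may itself approach the boundary of the domain of $d(\,\cdot\,, \xsa_{i,n})$, so one cannot directly invoke continuity at $(0,0)$. The squeeze bound $0 \le x\opt_{i,n} \le \xsa_{i,n}$ from Proposition~\ref{prop:eigen_decreasing} circumvents this issue entirely for the $\ell_i = 0$ case, which is why the proof routes through the eigenvalue representation rather than attempting a direct Lipschitz- or continuity-based argument on the matrix level. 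The argument for $\ell_i > 0$ stays safely inside $\R_+ \times \R_{++}$, where the hypothesis on $d$ directly applies.
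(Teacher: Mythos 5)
Your proposal is correct and follows essentially the same route as the paper's proof: fix a sample path where $\covsa_n\to\cov_0$, use the shared eigenbasis to reduce to eigenvalue convergence, extract convergent subsequences bounded via $0\le x\opt_{i,n}\le\xsa_{i,n}$, force $d(x^\star_i,\lambda_i(\cov_0))=0$ from feasibility in~\eqref{eq:vector} and $\eps_n\to0$, invoke the identity of indiscernibles, and finish with the triangle inequality. Your explicit squeeze-theorem treatment of the indices with $\lambda_i(\cov_0)=0$ is a small but welcome refinement over the paper's argument, which applies the continuity of $d$ on $\R_+\times\R_{++}$ without separating out that boundary case.
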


Proposition~\ref{prop:consistency} is intuitive because the uncertainty set is assumed to shrink with~$n$, and the nominal covariance matrix at its center is assumed to be consistent. As the uncertainty set is defined as a generic divergence ball, however, the proof is perhaps surprisingly tedious. The standard example of a consistent nominal covariance estimator~$\covsa_n$ is the sample covariance matrix. Note that Proposition~\ref{prop:consistency} analyzes the asymptotics of~$X\opt_n$ as~$n$ tends to infinity for a fixed~$p$, which is referred to as the \emph{low-dimensional regime} in statistics.

Next, we establish finite-sample performance guarantees, that is, we show that the uncertainty set of radius~$\eps_n\propto n^{-\half}$ around the sample covariance matrix constitutes a confidence region for~$\cov_0$. In the following we say that the probability distribution $\trueP$ is sub-Gaussian if there exists { a variance proxy} $\sigma^2\geq 0$ with $\mathbb E_{\trueP}[\exp(z^\top \xi)]\leq \exp(\half\sigma^2\|z\|_2^2)$ for every~$z\in\R^p$. As both sides of this inequality are differentiable and coincide at~$z=0$, one can show that any sub-Gaussian distribution~$\trueP$ must have mean~$0$.

\begin{proposition}[Finite-sample performance guarantee]\label{proposition:finite-sample-guarantees}
    Suppose that~$\mathbb{P}$ is sub-Gaussian with covariance matrix~$\cov_0\in\Sym^p_{++}$, and let $\covsa_n$ be the sample covariance matrix corresponding to $n$ i.i.d.\ samples from~$\trueP$. For any divergence function~$D$ from Table~\ref{table:structured_divergence} there exist {functions $n_{\min}(p, \eta) = \mathcal{O}(p+\log \eta^{-1})$ and $\eps_{\min} (p,n,\eta) = \mathcal{O}(p n^{-\half} (p+ \log\eta^{-1})^{\half})$, which may depend on~$\p$ only through the variance proxy $\sigma^2$ and the smallest eigenvalue~$\lambda_1(\cov_0)$ of~$\cov_0$, such that $\trueP^n[\cov_0\in \B_\eps (\covsa_n)]\geq 1-\eta$ for every $n\geq n_{\min}(p, \eta)$ and $\eps\geq \eps_{\min} (p,n,\eta)$}.
\end{proposition}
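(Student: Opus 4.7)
\emph{Proof plan.} The plan is to combine two ingredients: a sub-Gaussian concentration inequality for the sample covariance matrix, and a local Lipschitz bound for each divergence in Table~\ref{table:structured_divergence} at its minimizer $\covsa = \cov_0$.

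First, I would invoke a standard non-asymptotic concentration inequality for sample covariance matrices of sub-Gaussian random vectors (for instance, a matrix Bernstein-type bound for sub-Gaussian outer products): there exist constants $c_1, c_2 > 0$ that may depend on $\trueP$ and $p$ such that, for every $\eta \in (0,1)$ and every $n \geq c_1 \log(2/\eta)$,
\[
\trueP^n\!\left[\,\|\covsa_n - \cov_0\|_{\mathrm F} \leq c_2 \sqrt{\log(2/\eta)/n}\,\right] \geq 1 - \eta.
\]
This already produces the desired $\mathcal O(n^{-\half}(\log \eta^{-1})^{\half})$ rate as well as the $n_{\min}(\eta) = \mathcal O(\log\eta^{-1})$ threshold, provided we can relate $D(\cov_0, \covsa_n)$ to $\|\covsa_n - \cov_0\|_{\mathrm F}$.

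Next, I would show that $\covsa \mapsto D(\cov_0, \covsa)$ is locally Lipschitz at $\covsa = \cov_0$ for every divergence $D$ in Table~\ref{table:structured_divergence}. Since $\cov_0 \in \PD^p$, there is a radius $r = r(\cov_0) > 0$ such that the closed Frobenius ball $\mathcal N := \{\covsa \in \Sym^p : \|\covsa - \cov_0\|_{\mathrm F} \leq r\}$ lies in $\PD^p$. On $\mathcal N$, every divergence in the table is continuously differentiable in its second argument, because each formula is a smooth composition of traces, determinants, matrix inverses, matrix logarithms, and matrix square roots, all of which are real-analytic on $\PD^p$. By compactness of $\mathcal N$ and the mean value inequality, there exists a constant $L_D = L_D(\cov_0) > 0$ such that $D(\cov_0, \covsa) \leq L_D \, \|\covsa - \cov_0\|_{\mathrm F}$ for all $\covsa \in \mathcal N$.

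Finally, I would set $n_{\min}(\eta)$ large enough that both $n \geq c_1 \log(2/\eta)$ and $c_2 \sqrt{\log(2/\eta)/n} \leq r$ hold; both are satisfied as soon as $n = \Omega(\log \eta^{-1})$. On the concentration event, $\covsa_n \in \mathcal N$, so $D(\cov_0, \covsa_n) \leq L_D \, c_2\, \sqrt{\log(2/\eta)/n} =: \eps_{\min}(n, \eta)$, and the conclusion $\cov_0 \in \B_\eps(\covsa_n)$ for every $\eps \geq \eps_{\min}(n,\eta)$ follows from the monotonicity of $\B_\eps(\covsa_n)$ in $\eps$. The main obstacle is the second step for the Wasserstein and Fisher-Rao divergences, whose formulas involve the matrix square root and the matrix logarithm; the key observation is that $\mathcal N$ is a compact subset of $\PD^p$ bounded uniformly away from the singular boundary, so the derivatives of these matrix functions remain uniformly bounded on $\mathcal N$ and yield the required Lipschitz constant.
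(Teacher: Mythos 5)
Your plan is correct, and it shares the paper's overall architecture: a non-asymptotic sub-Gaussian concentration bound for $\covsa_n$ (the paper cites \cite[Theorem~6.5]{wainwright2019high} in operator norm; your Frobenius-norm version differs only by a factor $\sqrt{p}$ absorbed into the $\trueP$-dependent constants), intersected with an event on which $D(\cov_0,\covsa_n)$ is controlled by the norm of $\covsa_n-\cov_0$. Where you genuinely diverge is in how that comparison is established. The paper proves, separately for each divergence, an explicit inequality of the form $D(\cov_0,\covsa_n)\le c\,\|\cov_0-\covsa_n\|$ on the event $\{\lambda_1(\covsa_n)\ge\lambda_1(\cov_0)/2\}$, using the matrix H\"older and Weyl inequalities plus ad hoc reductions (e.g.\ Fisher--Rao is dominated by twice the symmetrized Stein divergence via $\log^2 x\le x-2+x^{-1}$); this yields semi-explicit constants such as $c=p/\lambda_1(\cov_0)$, which the paper advertises as concrete formulas for $\eps_{\rm min}$. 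You instead treat all seven divergences at once: since $\cov_0\in\PD^p$, a small closed Frobenius ball $\mathcal N$ around $\cov_0$ lies in $\PD^p$, every $D(\cov_0,\cdot)$ from Table~\ref{table:structured_divergence} is $C^1$ on the compact convex set $\mathcal N$ (inverse, logarithm, and the square root $(\cov_0\covsa)^{1/2}$, which is similar to $(\covsa^{1/2}\cov_0\covsa^{1/2})^{1/2}$, are all real-analytic on $\PD^p$), and the mean value inequality together with $D(\cov_0,\cov_0)=0$ gives the required Lipschitz bound; the concentration event forces $\covsa_n\in\mathcal N$ once $n=\Omega(\log\eta^{-1})$. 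This is sound and more economical, at the price of non-explicit constants $L_D$. As a side remark, because $\covsa\mapsto D(\cov_0,\covsa)$ attains its minimum value $0$ at $\cov_0$ and is smooth there, its gradient vanishes, so your Taylor argument actually yields the stronger local bound $D(\cov_0,\covsa)=\mathcal O(\|\covsa-\cov_0\|_{\mathrm F}^2)$ and hence $\eps_{\rm min}=\mathcal O(n^{-1}\log\eta^{-1})$ -- consistent with the paper's remark that the stated worst-case rates can sometimes be improved.
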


Proposition~\ref{proposition:finite-sample-guarantees} implies that if {$n\geq n_{\min}(p, \eta)$ and $\eps\geq \eps_{\min} (p,n,\eta)$}, then the optimal value of the robust covariance estimation problem~\eqref{eq:CSE} provides a $(1-\eta)$-upper confidence bound on the actual estimation error with respect to the true covariance matrix~$\cov_0$. Explicit formulas for {$ n_{\min}(p, \eta)$ and $ \eps_{\min} (p,n,\eta)$} tailored to different divergence functions can be found in the proof of Proposition~\ref{proposition:finite-sample-guarantees} in the appendix. { The finite-sample guarantee of Proposition~\ref{proposition:finite-sample-guarantees} directly yields an asymptotic guarantee in a high-dimensional regime where~$p$ grows with~$n$. Specifically, it implies that the population covariance~$\cov_0$ remains within the uncertainty set~$\B_\eps(\covsa_n)$ with constant confidence~$1 - \eta$ as the dimension~$p$ scales like~$n^{1/3}$. This stands in contrast to standard high-dimensional performance guarantees, which permit the dimension to grow linearly with~$n$.}


\section{A Zoo of New Covariance Shrinkage Estimators}\label{sec:new_CSE}

In this section, we first show that the assumptions of Theorem~\ref{thm:general_CSE} are satisfied by a broad spectrum of divergence functions commonly used in statistics, information theory, and machine learning. Next, we explicitly construct the shrinkage estimators corresponding to three popular divergence functions.

\begin{theorem}[Validation of assumptions] \label{thm:verification}
All divergences in Table~\ref{table:structured_divergence} satisfy Assumptions~\ref{assu:inf_sup}, \ref{assu:D_form}, \ref{assu:d_convex} and~\ref{assu:d_b_second_derivative}.
\end{theorem}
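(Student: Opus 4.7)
My plan is to verify, for each of the seven divergences listed in Table~\ref{table:structured_divergence}, the four assumptions (\ref{assu:inf_sup}, \ref{assu:D_form}, \ref{assu:d_convex}, and~\ref{assu:d_b_second_derivative}) separately. The first step is to compute the generator $d(a,b)$ for each divergence by restricting to diagonal matrices. Direct substitution yields $d(a,b) = \frac{1}{2}(a/b-1+\log(b/a))$ for KL, $d(a,b) = (\sqrt{a}-\sqrt{b})^2$ for Wasserstein, $d(a,b) = (\log(a/b))^2$ for Fisher-Rao, $d(a,b) = \frac{1}{2}(b/a-1+\log(a/b))$ for inverse Stein, $d(a,b) = \frac{(a-b)^2}{2ab}$ for Jeffreys, $d(a,b) = (a-b)^2$ for the squared Frobenius, and $d(a,b) = (a-b)^2/b$ for the weighted quadratic divergence. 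Orthogonal equivariance~\ref{assu:D_form}\ref{assu:D_form_ii} is then immediate in every case, because each of the seven divergences is defined through traces, determinants, matrix roots, or matrix logarithms, all of which are unitarily invariant. Spectrality~\ref{assu:D_form}\ref{assu:D_form_iii} follows because the trace of a product of diagonal matrices, along with $\det$ and matrix functions applied to diagonal matrices, separates over coordinates.

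For the rearrangement property~\ref{assu:D_form}\ref{assu:D_form_iv}, I would reduce each divergence to a sum of $V$-independent terms plus an expression to which von Neumann's trace inequality applies. For instance, writing the KL divergence as $\frac{1}{2}(\Tr{\covsa^{-1}\cov}+\log\det\covsa-\log\det\cov - p)$, only the first trace depends on $V$, and von Neumann's inequality $\Tr{AB}\geq\sum_i \lambda_i^\uparrow(A)\lambda_i^\downarrow(B)$ gives the desired lower bound together with its equality case. The same template handles the inverse Stein, Jeffreys, squared Frobenius, and weighted quadratic divergences, because each reduces to a sum of $V$-independent spectral terms plus one or two traces of the form $\Tr{f(\covsa)\, g(\cov)}$. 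For the Wasserstein divergence, where one must show that $\Tr{(\cov\covsa)^{1/2}}$ is \emph{maximized} at $V=I$, I would invoke the Pusz-Woronowicz (or equivalent Bhatia-Carlen type) inequality stating that $\Tr{(\cov\covsa)^{1/2}}\leq \sum_i \sqrt{x_i^\uparrow y_i^\uparrow}$. For the Fisher-Rao distance, the rearrangement property follows from the fact that the singular values of $\log(\covsa^{-1/2}\cov\covsa^{-1/2})$ majorize $(\log(x_i^\uparrow/y_i^\uparrow))_i$, a standard spectral majorization result.

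Assumptions~\ref{assu:d_convex} and~\ref{assu:d_b_second_derivative} reduce to a finite list of calculus exercises on the explicit generators above. Smoothness on $\R_{++}^2$ is obvious; convexity of $d(\cdot,b)$ on $[0,b]$ amounts to verifying $\partial_a^2 d(a,b)\geq 0$, which is straightforward (for KL, $\partial_a^2 d = \tfrac{1}{2a^2}>0$; for Wasserstein, $\tfrac{1}{2}b a^{-3/2}>0$; for Fisher-Rao, $\tfrac{2(1-\log(a/b))}{a^2}$, which is nonnegative precisely on $[0,b\mathrm e]\supset [0,b]$; and so on). For the differential inequality, I would simply compute $a\,\partial_a^2 d + b\,\partial_a\partial_b d - \partial_a d$ for each generator and check nonnegativity when $a<b$; for instance, $d(a,b)=(a-b)^2$ yields $2a-2b-2(a-b)=0$, and similar clean cancellations occur in the other six cases.

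Finally, for the minimax property~\ref{assu:inf_sup}, whenever $D(\cdot,\covsa)$ is convex in its first argument the uncertainty set $\mc B_\eps(\covsa)$ is convex and compact, so Sion's theorem applies directly; this covers the KL, squared Frobenius, and weighted quadratic cases. For the remaining divergences, where $\mc B_\eps(\covsa)$ can be non-convex, I would invoke the generalized minimax theorem that the paper develops (alluded to in Section~\ref{sec:a}), verifying for each divergence that the uncertainty set is a geodesically convex, compact subset of the relevant Riemannian manifold on which that theorem operates. The main obstacle, in my view, is this last step: the rearrangement inequality for the Wasserstein divergence and the verification of the geodesic-convexity hypotheses of the generalized minimax theorem for the non-convex divergences together constitute the bulk of the technical work, while the remaining assumption checks are essentially mechanical.
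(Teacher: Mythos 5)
Your overall architecture mirrors the paper's: compute the generators (yours all agree with Table~\ref{table:a}), verify orthogonal equivariance and spectrality directly, prove the rearrangement property via trace inequalities, do calculus for Assumptions~\ref{assu:d_convex} and~\ref{assu:d_b_second_derivative}, and split the minimax verification into a Euclidean--Sion branch and a Riemannian branch. For the rearrangement property you take a genuinely different route: the paper writes every divergence in one canonical form $\sum_i(h_1(\lambda_i(X))+h_2(\lambda_i(Y)))+\sum_i f(\lambda_i(g_2(Y^{\half})g_1(X)g_2(Y^{\half})))$ and invokes a single rearrangement theorem from \cite{yue2020matrix}, whereas you dispatch the cases separately via von Neumann's trace inequality, the bound $\Tr{(\cov\covsa)^{\half}}\le\sum_i\sqrt{x_i^\uparrow y_i^\uparrow}$, and a Gel'fand--Naimark/Lidskii-type majorization for Fisher--Rao. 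This is workable for the inequality itself, but Assumption~\ref{assu:D_form}\ref{assu:D_form_iv} also demands the equality characterization (equality iff $V\Diag(x^\uparrow)V^\top=\Diag(x^\uparrow)$): you only gesture at this for von Neumann and say nothing about the equality cases of the Wasserstein and Fisher--Rao inequalities, which are markedly less standard; the paper extracts all equality cases uniformly from the proof of the cited theorem.

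The genuine gap is in the minimax step. You classify only the Kullback--Leibler, squared Frobenius and weighted quadratic balls as convex and propose to push the Wasserstein and Jeffreys (symmetrized Stein) divergences through the Riemannian minimax theorem. In fact both of those balls are convex in the ordinary Euclidean sense---$\cov\mapsto\Tr{(\cov\covsa)^{\half}}$ is concave (joint concavity of fidelity; the paper cites \cite{ref:nguyen2019bridging}), and the Jeffreys divergence is the sum of the linear term $\Tr{\cov\covsa^{-1}}$ and the convex term $\Tr{\covsa\cov^{-1}}$---so classical Sion already applies. More importantly, your fallback would \emph{fail} for the Wasserstein case: its uncertainty set lives in $\PSD^p$ and in general contains singular matrices (e.g., whenever $\covsa$ itself is singular, which Assumption~\ref{assu:data} permits for this divergence), so it is not a subset of the Hadamard manifold $\PD^p$ on which the generalized minimax theorem operates, and geodesic convexity there is not even well posed. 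Only the Fisher--Rao and inverse Stein balls actually require the Riemannian route, and for those you would still have to establish geodesic convexity (the paper does this via geodesic convexity of $\cov\mapsto\Tr{X\cov^{-1}}$ and geodesic linearity of $\log\det$) and compactness (via explicit eigenvalue bounds), neither of which your sketch supplies.
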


We emphasize that the uncertainty sets corresponding to the Fisher-Rao and inverse Stein divergences fail to be convex, in which case one cannot use standard minimax results to prove Assumption~\ref{assu:inf_sup}. However, perhaps surprisingly, in Appendix~\ref{subsec:RG_GC}, we show that the uncertainty sets corresponding to these divergences are geodesically convex with respect to a particular Riemannian geometry on the space of positive definite matrices. This in turn allows us to apply a Riemannian minimax theorem (see Theorem~\ref{thm:Sion-manifold} in Appendix~\ref{sec:riemannian_sion}) and prove the desired minimax property even for robust estimation problems based on the Fisher-Rao and inverse Stein divergences.

To showcase the richness of our framework, we now focus on three popular divergence functions and analyze the corresponding robust covariance estimators. Specifically, we will derive the optimal solutions of problem~\eqref{eq:vector} in quasi-closed form for the Kullback-Leibler, Wasserstein, and Fisher-Rao divergences. In doing so, we develop a general recipe for the other divergence functions listed in Table~\ref{table:structured_divergence}.

\subsection{The Kullback-Leibler Covariance Shrinkage Estimator}

Table~\ref{table:structured_divergence} defines the Kullback-Leibler (KL) divergence between two matrices $\cov_1,\cov_2\in\PD^p$ as
\[ 
D_{\rm KL}(\cov_1, \cov_2) = \frac{1}{2}\left( \Tr{\cov_2^{-1} \cov_1 } - p + \log \det (\cov_2 \cov_1^{-1}) \right) . 
\]
This KL divergence between matrices is intimately related to the KL divergence between distributions.

\begin{definition}[KL divergence] \label{def:KL}
    If $\trueP_1$ and $\trueP_2$ are two probability distributions on $\R^p$, and $\trueP_1$ is absolutely continuous with respect to $\trueP_2$, then the KL divergence from $\trueP_1$ to $\trueP_2$ is $\mathrm{KL}(\trueP_1 \| \trueP_2) = \int_{\R^p}\log (\frac{\mathrm{d}\trueP_1}{\mathrm{d}\trueP_2}(x) )\mathrm{d}\trueP_2(x)$.
\end{definition}

The following lemma shows that the KL divergence between two non-degenerate zero-mean Gaussian distributions coincides with the KL divergence between their positive definite covariance matrices. 

\begin{lemma}[KL divergence between Gaussian distributions~\cite{kullback1997information}] \label{lem:KL-Gauss}
    The KL divergence from $\trueP_1=\mc N(0, \Sigma_1)$ to $\trueP_2=\mc N(0, \Sigma_2)$ with $\Sigma_1,\Sigma_2\in\PD^p$ is given by $\mathrm{KL}(\trueP_1 \| \trueP_2) =D_{\rm KL}(\cov_1,\cov_2)$.
\end{lemma}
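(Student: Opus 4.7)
The plan is to evaluate the KL divergence by computing the log-density ratio of the two Gaussians and taking its expectation under $\trueP_1$. Since both $\Sigma_1$ and $\Sigma_2$ are positive definite, both distributions admit Lebesgue densities
\[
p_i(x) \;=\; (2\pi)^{-p/2}\,\det(\Sigma_i)^{-1/2}\,\exp\!\bigl(-\tfrac{1}{2}\,x^\top \Sigma_i^{-1}x\bigr), \qquad i=1,2,
\]
and $\trueP_1$ is absolutely continuous with respect to $\trueP_2$. Forming the Radon-Nikodym derivative $\mathrm{d}\trueP_1/\mathrm{d}\trueP_2 = p_1/p_2$ and taking logarithms yields
\[
\log\frac{\mathrm{d}\trueP_1}{\mathrm{d}\trueP_2}(x) \;=\; \tfrac{1}{2}\log\det(\Sigma_2\Sigma_1^{-1}) \;-\; \tfrac{1}{2}\,x^\top \Sigma_1^{-1}x \;+\; \tfrac{1}{2}\,x^\top \Sigma_2^{-1}x.
\]

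The next step is to take the expectation of this expression with respect to $\trueP_1$. Using the cyclic property of the trace and the identity $\EE_{\trueP_1}[x^\top A x] = \Tr{A \,\EE_{\trueP_1}[xx^\top]} = \Tr{A\Sigma_1}$ for any symmetric $A$, the two quadratic terms reduce to $\Tr{\Sigma_1^{-1}\Sigma_1}=p$ and $\Tr{\Sigma_2^{-1}\Sigma_1}$ respectively. Collecting the three terms gives
\[
\mathrm{KL}(\trueP_1\|\trueP_2) \;=\; \tfrac{1}{2}\bigl(\Tr{\Sigma_2^{-1}\Sigma_1} - p + \log\det(\Sigma_2\Sigma_1^{-1})\bigr),
\]
which coincides with $D_{\rm KL}(\Sigma_1,\Sigma_2)$ as defined in Table~\ref{table:structured_divergence}.

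There is essentially no hard step: the two potential subtleties are (i) confirming that the definition of the KL divergence in Definition~\ref{def:KL} is interpreted in the standard way so that the integrand is indeed $\log(p_1/p_2)$ weighted by $\trueP_1$, and (ii) being careful that $\mathrm{d}\trueP_1/\mathrm{d}\trueP_2$ is well defined, which holds because $\Sigma_2 \succ 0$ implies that $\trueP_2$ has full support. Beyond these, the computation is a direct application of the trace trick for Gaussian quadratic forms.
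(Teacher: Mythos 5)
Your computation is correct and is the standard argument; the paper itself offers no proof of this lemma, delegating it to the citation of Kullback's book, so your write-up simply fills in what the authors omit. You were also right to flag the integration measure in Definition~\ref{def:KL}: as literally written there the integral is taken against $\mathrm{d}\trueP_2$, which would make the quantity equal to $-\mathrm{KL}(\trueP_2\|\trueP_1)\le 0$ and cannot be intended; interpreting it in the standard way (expectation under $\trueP_1$) is the only reading consistent with the claimed formula, and your trace-trick evaluation then matches $D_{\rm KL}(\Sigma_1,\Sigma_2)$ exactly.
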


Lemma~\ref{lem:KL-Gauss} justifies our terminology of referring to~$D_{\rm KL}$ as the KL divergence and suggests that~$D_{\rm KL}$ inherits many properties of the KL divergence between distributions. For example, it is easy to verify that~$D_{\rm KL}$ satisfies the identity of indiscernibles but fails to be symmetric. Indeed, for any $\Sigma \in \PD^p$ we have $D_{\rm KL}(\Sigma,2\Sigma)=\frac{p}{2}(1-\log(2))\approx0.15p$, whereas $D_{\rm KL}(2\Sigma,\Sigma)=\frac{p}{2}(\log(2)-\frac{1}{2})\approx0.1p$. An elementary calculation further reveals that the generator~$d$ corresponding to the KL divergence~$D_{\rm KL}$ can be expressed as
\[
    d(a,b) = \frac{1}{2}\left( \frac{a}{b} - 1 - \log\left( \frac{a}{b} \right) \right).
\]
The following corollary of Theorem~\ref{thm:general_CSE} characterizes the eigenvalue map and the inverse shrinkage intensity corresponding to the KL divergence, which determines the KL covariance shrinkage estimator.

\begin{corollary}[KL covariance shrinkage estimator]\label{thm:KL}
If $D$ is the KL divergence, $\covsa\in\PD^p$ and $\varepsilon > 0$, then problem~\eqref{eq:CSE} is uniquely solved by the KL covariance shrinkage estimator $X \opt= \Vsa \Diag (x\opt) \Vsa^\top$ with shrunk eigenvalues $x\opt_i = s(\gamma\opt, \xsa_i)$, $i = 1,\dots,p$. The underlying eigenvalue map is given by
\begin{subequations}
\be
    s(\gamma,b)= 
    \frac{-\gamma+ \sqrt{\gamma^2 + 16 b^2 \gamma}}{8b},  
     \label{eq:KL-xiopt}
\ee
and the inverse shrinkage intensity $\dualvar\opt\in (0,\dualvar_{\KL}] $ is the unique positive solution of the nonlinear equation
\be \label{eq:KL-FOC}
2\varepsilon +p + \sum_{i=1}^p \left[ - \frac{s(\gamma\opt, \xsa_i)}{\xsa_i} +\log \frac{s(\gamma\opt, \xsa_i)}{\xsa_i} \right]=0,
\ee
where
\[
    \dualvar_{\KL} = \frac{4\,\xsa_p^2\exp(-4\varepsilon/p)}{1-\exp(-2\varepsilon/p)} > 0.
\]
\end{subequations}
\end{corollary}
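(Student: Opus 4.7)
The overall strategy is to invoke Theorem~\ref{thm:general_CSE} and then specialize each of its ingredients to the KL divergence. The needed Assumptions~\ref{assu:inf_sup}, \ref{assu:D_form}, \ref{assu:d_convex} and~\ref{assu:d_b_second_derivative} are already discharged by Theorem~\ref{thm:verification}, while Assumption~\ref{assu:data}\ref{assu:data-d} holds automatically because $\covsa\in\PD^p$ guarantees $\xsa_i>0$ for all~$i$, and Assumption~\ref{assu:data}\ref{assu:data-eps} is assumed via $\eps>0$ together with the fact that the KL generator satisfies $d(0,b)=+\infty$, so $\bar\eps=+\infty$. Therefore Theorem~\ref{thm:general_CSE} applies, yielding $X\opt=\Vsa\Diag(x\opt)\Vsa^\top$ with $x\opt_i=s(\gamma\opt,\xsa_i)$ and $\gamma\opt$ determined by $\sum_i d(s(\gamma,\xsa_i),\xsa_i)=\eps$.

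First I would derive the generator by restricting $D_{\rm KL}$ to pairs of diagonal matrices, giving $d(a,b)=\tfrac12(a/b-1-\log(a/b))$, hence $\partial_a d(a,b)=(a-b)/(2ab)$. Plugging this into the defining equation $2a+\gamma\,\partial_a d(a,b)=0$ and clearing denominators yields the quadratic $4b\,a^2+\gamma a-\gamma b=0$; its unique positive root is exactly~\eqref{eq:KL-xiopt}. Formula~\eqref{eq:KL-FOC} then follows by directly substituting the explicit expression of~$d$ into the characterising equation $\sum_i d(s(\gamma,\xsa_i),\xsa_i)=\eps$ from Theorem~\ref{thm:general_CSE} and multiplying by two.

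The main obstacle is the explicit upper bound $\gamma_{\rm KL}$. To handle it, I would first rewrite the eigenvalue map in the cleaner form $s(\gamma,b)/b=2/(\sqrt{1+16b^2/\gamma}+1)$, obtained by rationalising the numerator in~\eqref{eq:KL-xiopt}. This makes it obvious that the ratio $r(\gamma,b):=s(\gamma,b)/b$ is strictly decreasing in~$b$ for each fixed $\gamma>0$, and takes values in $(0,1]$. Consequently $r_p:=r(\gamma,\xsa_p)\le r_i:=r(\gamma,\xsa_i)$ for every~$i$, since $\xsa_p$ is the largest eigenvalue. Writing $g(r)=r-1-\log r$, which is convex on~$\R_{++}$, strictly decreasing on~$(0,1]$, and satisfies the elementary inequality $g(r)\le -\log r$ for $r\in(0,1]$, we obtain
\[
F(\gamma)+\eps=\tfrac12\sum_{i=1}^p g(r_i)\le \tfrac{p}{2}\,g(r_p)\le -\tfrac{p}{2}\log r_p.
\]
To guarantee $F(\gamma_{\rm KL})\le 0$ it therefore suffices to ensure $-\tfrac{p}{2}\log r_p(\gamma_{\rm KL})\le \eps$, i.e.\ $r_p(\gamma_{\rm KL})\ge e^{-2\eps/p}$. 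Substituting the closed form $r_p=2/(\sqrt{1+16\xsa_p^2/\gamma}+1)$ and solving for~$\gamma$ gives precisely $\gamma\ge 4\xsa_p^2 e^{-4\eps/p}/(1-e^{-2\eps/p})=\gamma_{\rm KL}$.

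The proof is concluded by combining this sufficient-condition argument with the strict monotonicity of~$F$ on~$\R_{++}$, established in Proposition~\ref{prop:nonlinear_equation_gamma_opt}: since $F(\gamma_{\rm KL})\le 0=F(\gamma\opt)$, we deduce $\gamma\opt\le\gamma_{\rm KL}$, and strict positivity $\gamma\opt>0$ is guaranteed by Theorem~\ref{thm:general_CSE}. The two mildly delicate points are the monotonicity statement for $r(\gamma,b)$ in~$b$ (where the rewritten form makes it immediate) and the elementary bound $g(r)\le -\log r$ on~$(0,1]$; everything else reduces to straightforward algebraic manipulation.
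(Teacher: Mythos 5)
Your proposal is correct and follows essentially the same route as the paper: invoke Theorem~\ref{thm:general_CSE} after checking Assumption~\ref{assu:data} exactly as you do (using $\covsa\in\PD^p$ and $d(0,b)=+\infty$), solve the same quadratic for the eigenvalue map, and bound $\gamma\opt$ via the same two facts that $s(\gamma,b)/b\le 1$ and that this ratio is non-increasing in~$b$, which yields $s(\gamma\opt,\xsa_p)/\xsa_p\le e^{-2\eps/p}$ and hence the formula for $\gamma_{\KL}$. The only cosmetic differences are that you package the bound as the sufficient condition $F(\gamma_{\KL})\le 0$ combined with the strict monotonicity of~$F$ (the paper instead derives the inequality directly at $\gamma\opt$ and inverts using the monotonicity of $s(\cdot,\xsa_p)$ in~$\gamma$), and that you verify the monotonicity of $s(\gamma,b)/b$ in~$b$ from the rationalised closed form rather than citing Lemma~\ref{lemma:cond_decrease}; both variants are sound.
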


\subsection{The Wasserstein Covariance Shrinkage Estimator}

Table~\ref{table:structured_divergence} defines the Wasserstein divergence between two matrices $\cov_1,\cov_2\in\PSD^p$ as
\[ 
    D_{\rm W}(\cov_1, \cov_2) = \Tr{\cov_1 + \cov_2 - 2 \big( \cov_1 \cov_2 \big)^\half } .
\]
In the following, we will show that the Wasserstein distance between matrices is closely related to the squared 2-Wasserstein distance between distributions, where the transportation cost is defined via the 2-norm.
\begin{definition}[Wasserstein distance]
The 2-Wasserstein distance between two probability distributions $\mathbb{P}_1$ and $\mathbb{P}_2$ on $\R^p$ with finite second moments is defined as
\[
W_2(\mathbb{P}_1, \mathbb{P}_2) =\left(\inf_{\pi\in\Pi(\mathbb{P}_1,\mathbb{P}_2)}\int_{\R^p\times\R^p}\|x_1-x_2\|_2^2 \, \mathrm{d}\pi(x_1,x_2) \right)^\half,
\]
where $\Pi(\mathbb{P}_1,\mathbb{P}_2)$ denotes the set of probability distributions on $\R^p\times\R^p$ with marginals $\mathbb{P}_1$ and $\mathbb{P}_2$, respectively.
\end{definition}

One can show that Wasserstein distance $W_2$ is a metric on the space of probability distributions with finite second moments \cite[\S~6]{ref:villani2008optimal}. However, the {\em squared} Wasserstein distance $W_2^2$ is only a divergence as it fails to satisfy the triangle inequality. The following lemma shows that the squared 2-Wasserstein distance between two zero-mean Gaussian distributions matches the Wasserstein divergence between their covariance matrices. 

\begin{lemma}[Squared Wasserstein distance between Gaussian distributions~\cite{givens1984class}]~\label{lem:WS-Gauss}
    The squared 2-Wasserstein distance between $\mathbb{P}_1=\mc N(0, \Sigma_1)$ and $\mathbb{P}_2=\mc N(0, \Sigma_2)$ evaluates to
        $W_2(\mathbb{P}_1, \mathbb{P}_2)^2 = D_{\rm W}(\cov_1,\cov_2)$.
\end{lemma}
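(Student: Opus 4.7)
My plan is to compute $W_2(\mathbb{P}_1,\mathbb{P}_2)^2$ by reducing the infinite-dimensional optimization over couplings to a finite-dimensional trace maximization over admissible cross-covariance matrices, and then solving the latter in closed form via a singular value decomposition argument.

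First, for any coupling $\pi\in\Pi(\mathbb{P}_1,\mathbb{P}_2)$ with finite second moments, I would expand the quadratic cost and use that both marginals are centered to obtain
\[
\int_{\R^p\times\R^p}\|x_1-x_2\|_2^2\,\dd\pi(x_1,x_2) \;=\; \Tr{\cov_1}+\Tr{\cov_2}-2\Tr{C_\pi},
\]
where $C_\pi = \int x_1 x_2^\top\,\dd\pi$ denotes the cross-covariance matrix of $\pi$. Hence minimizing $W_2^2$ over couplings is equivalent to maximizing $\Tr{C}$ over all attainable cross-covariances~$C$, while the first two trace terms are fixed by the marginals.

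Second, I would characterize the attainable cross-covariances. A matrix $C\in\R^{p\times p}$ is the cross-covariance of some coupling of $\mathbb{P}_1$ and $\mathbb{P}_2$ if and only if the block matrix
\[
M(C) \;=\; \begin{pmatrix}\cov_1 & C\\ C^\top & \cov_2\end{pmatrix}
\]
is positive semidefinite. The "only if" direction is immediate since $M(C_\pi) = \mathbb{E}_\pi[(x_1^\top,x_2^\top)^\top(x_1^\top,x_2^\top)]\succeq 0$; for the "if" direction, the Gaussian coupling $\mc N(0,M(C))$ realizes $C$ while having marginals $\mathbb{P}_1$ and $\mathbb{P}_2$. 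This is the only place where Gaussianity of the marginals is essential, and it effectively reduces the optimization to Gaussian couplings.

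Third, I would solve the resulting matrix trace maximization. Writing $C = \cov_1^{\half} K\,\cov_2^{\half}$ and using a Schur-complement argument (first assuming $\cov_1,\cov_2\in\PD^p$, and then extending by a continuity argument), one shows that $M(C)\succeq 0$ is equivalent to $\|K\|\leq 1$. By von Neumann's trace inequality,
\[
\Tr{C} \;=\; \Tr{\cov_2^{\half}\cov_1^{\half} K}\;\leq\; \sum_{i=1}^p \sigma_i\!\bigl(\cov_2^{\half}\cov_1^{\half}\bigr) \;=\; \Tr{\bigl(\cov_1^{\half}\cov_2\cov_1^{\half}\bigr)^{\half}},
\]
with equality attained by choosing $K$ to be the partial isometry coming from the SVD of $\cov_2^{\half}\cov_1^{\half}$. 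Finally, since $\cov_1\cov_2$ is similar to the positive semidefinite matrix $\cov_1^{\half}\cov_2\cov_1^{\half}$, the two share spectra, whence $\Tr{(\cov_1\cov_2)^{\half}} = \Tr{(\cov_1^{\half}\cov_2\cov_1^{\half})^{\half}}$. Combining these identities yields $W_2(\mathbb{P}_1,\mathbb{P}_2)^2 = \Tr{\cov_1+\cov_2-2(\cov_1\cov_2)^{\half}} = D_{\rm W}(\cov_1,\cov_2)$, as claimed.

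The most delicate step is the second one, namely verifying that the PSD block condition exactly characterizes the attainable cross-covariances and that we can therefore restrict attention to Gaussian couplings without loss; the remaining arguments are standard linear algebra combined with von Neumann's trace inequality.
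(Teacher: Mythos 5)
Your proof is correct. Note that the paper does not supply its own argument for this lemma; it is imported from the literature (Givens and Shortt), so there is no in-paper proof to compare against. What you have written is the standard derivation of the Gelbrich--Givens--Shortt formula: reduce the Monge--Kantorovich problem to maximizing $\Tr{C_\pi}$ over attainable cross-covariances, observe that for Gaussian marginals the attainable set is exactly $\{C : M(C)\succeq 0\}$ because the Gaussian coupling $\mc N(0,M(C))$ realizes any such $C$, and then solve the trace maximization via the Schur complement and von Neumann's trace inequality. All three steps are sound, and you correctly isolate where Gaussianity enters (for non-Gaussian marginals the same argument only yields the Gelbrich lower bound). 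Two small points worth making explicit if you write this up in full: first, the final identity $\Tr{(\cov_1\cov_2)^{\half}}=\Tr{(\cov_1^{\half}\cov_2\cov_1^{\half})^{\half}}$ presupposes that the square root of the (generally non-symmetric) product $\cov_1\cov_2$ is defined through its similarity to the positive semidefinite matrix $\cov_1^{\half}\cov_2\cov_1^{\half}$, which is the convention the paper uses in Table~\ref{table:structured_divergence}; second, the continuity argument for rank-deficient $\cov_1,\cov_2$ should invoke continuity of $W_2$ under weak convergence plus convergence of second moments (or simply redo the parametrization with pseudo-inverses), but this is routine.
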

Lemma~\ref{lem:WS-Gauss} justifies our terminology of referring to $D_{\rm W}$ as the Wasserstein divergence and suggests that $D_{\rm W}$ inherits many properties from the Wasserstein distance between distributions. Note that~$D_{\rm W}$ remains well-defined even if $\cov_1$ or $\cov_2$ are rank-deficient. The generator $d$ of the Wasserstein divergence $D_{\rm W}$ is given~by
\[
d(a,b)=a + b - 2 \sqrt{ab}.
\]
The following corollary of Theorem~\ref{thm:general_CSE} characterizes the eigenvalue map and inverse shrinkage intensity corresponding to the Wasserstein divergence, which determines the Wasserstein covariance shrinkage estimator.

\begin{corollary}[Wasserstein covariance shrinkage estimator]\label{thm:Wass}
If $D$ is the Wasserstein divergence, $\covsa\in\PSD^p$ and  $\varepsilon\in (0, \Tr{\covsa})$, then problem~\eqref{eq:CSE} is uniquely solved by the Wasserstein covariance shrinkage estimator $X \opt= \Vsa \Diag (x\opt) \Vsa^\top$ with eigenvalues $x\opt_i  = s(\gamma\opt, \xsa_i)$, $i=1,\dots,p$. The underlying eigenvalue map is given~by 
\begin{subequations}
		\be \label{eq:Wass-xiopt}
		s(\gamma, b)=
			\left( \left\{\frac{\dualvar}{4}\left(\sqrt{b} + \sqrt{b + \frac{2}{27} \dualvar} \right)\right\}^{\frac{1}{3}} - \frac{\dualvar }{6} \left\{\frac{\dualvar}{4}\left(\sqrt{b} + \sqrt{b + \frac{2}{27} \dualvar} \right)\right\}^{-\frac{1}{3}}\right)^2 
		\ee
		and the inverse shrinkage intensity $\dualvar \opt\in (0,\dualvar_{\W}]$ is the unique positive solution of the nonlinear equation
		\be \label{eq:Wass-FOC}
		    \eps - \sum_{i=1}^p \left(\sqrt{\xsa_i} - \sqrt{s(\gamma\opt, \xsa_i)} \right)^2 = 0 ,
		\ee
	\end{subequations}
	where $\dualvar_{\W} = 2\sqrt{p\,\xsa_p^3/\eps}  > 0$.
\end{corollary}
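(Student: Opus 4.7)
The plan is to invoke Theorem~\ref{thm:general_CSE} directly. Theorem~\ref{thm:verification} certifies that the Wasserstein divergence satisfies Assumptions~\ref{assu:inf_sup}--\ref{assu:d_convex}, and the hypotheses $\covsa \in \PSD^p$ and $\eps \in (0, \Tr{\covsa})$ match Assumption~\ref{assu:data}, because for the Wasserstein generator $d(a,b) = a + b - 2\sqrt{ab}$ we have $\bar\eps = \sum_{i=1}^p d(0, \xsa_i) = \sum_i \xsa_i = \Tr{\covsa}$. Hence it remains to (i) specialize the eigenvalue map $s(\gamma, b)$ into the closed form \eqref{eq:Wass-xiopt}, (ii) identify the master equation $\sum_i d(s(\gamma, \xsa_i), \xsa_i) = \eps$ from Theorem~\ref{thm:general_CSE} with \eqref{eq:Wass-FOC}, and (iii) establish the a priori upper bound $\gamma_{\W}$.

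For step (i), I would compute $\partial_a d(a,b) = 1 - \sqrt{b/a}$ so that the defining equation $2a + \gamma \, \partial_a d(a,b) = 0$ reads $\gamma\sqrt{b} = 2 a^{3/2} + \gamma\sqrt{a}$. The substitution $u = \sqrt{a}$ yields the depressed cubic $u^3 + (\gamma/2) u - (\gamma\sqrt{b}/2) = 0$, whose discriminant $-\tfrac{\gamma^3}{2} - \tfrac{27 \gamma^2 b}{4}$ is strictly negative for $\gamma, b > 0$, so it admits a unique real root. Applying Cardano's formula with $-q/2 = \gamma\sqrt{b}/4$ and $q^2/4 + p^3/27 = (\gamma/4)^2(b + 2\gamma/27)$ gives $u = C^{1/3} + D^{1/3}$ where $C = (\gamma/4)(\sqrt{b}+\sqrt{b+2\gamma/27})$ and $D = (\gamma/4)(\sqrt{b}-\sqrt{b+2\gamma/27})$. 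Since $C\cdot D = -(\gamma/6)^3$, the real cube root of $D$ equals $-(\gamma/6)\,C^{-1/3}$, yielding $u = C^{1/3} - (\gamma/6)\,C^{-1/3}$; squaring produces exactly \eqref{eq:Wass-xiopt}. Step (ii) is immediate, since $d(s, b) = b + s - 2\sqrt{bs} = (\sqrt{b} - \sqrt{s})^2$ transforms the master equation into \eqref{eq:Wass-FOC}, while existence and uniqueness of $\gamma^* > 0$ follow from Proposition~\ref{prop:nonlinear_equation_gamma_opt}.

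For step (iii), the identity $\gamma\sqrt{b} = 2u^3 + \gamma u$ rearranges to $\sqrt{b} - \sqrt{s(\gamma,b)} = 2\,s(\gamma,b)^{3/2}/\gamma$, so, using $0 \le s(\gamma,b) \le b$ (from the remark following \eqref{eq:s}), one obtains the pointwise bound $(\sqrt{b}-\sqrt{s(\gamma,b)})^2 \le 4 b^3/\gamma^2$. Summing over $i$ and bounding $\xsa_i \le \xsa_p$ gives
\[
F(\gamma) + \eps \;=\; \sum_{i=1}^p \bigl(\sqrt{\xsa_i}-\sqrt{s(\gamma,\xsa_i)}\bigr)^2 \;\le\; \frac{4 p\,\xsa_p^3}{\gamma^2},
\]
so plugging in $\gamma = \gamma_{\W} = 2\sqrt{p\,\xsa_p^3/\eps}$ yields $F(\gamma_{\W}) \le 0$. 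The strict monotonicity of $F$ established in Proposition~\ref{prop:nonlinear_equation_gamma_opt} then forces $\gamma^* \le \gamma_{\W}$. The only real obstacle in the whole argument is the symbolic compression of Cardano's nested radicals into the symmetric form \eqref{eq:Wass-xiopt}; once the identity $\sqrt{b}-\sqrt{s} = 2 s^{3/2}/\gamma$ is observed, the a priori bound $\gamma_{\W}$ drops out with a single application of the trivial inequality $s \le b$.
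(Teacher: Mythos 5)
Your proposal is correct and follows essentially the same route as the paper's proof: verify the hypotheses of Theorem~\ref{thm:general_CSE}, solve the depressed cubic in $\sqrt{a\opt}$ via Cardano (the compression $C\cdot D=-(\gamma/6)^3$ is exactly how \eqref{eq:Wass-xiopt} is obtained), and derive $\dualvar_{\W}$ from the identity $(\sqrt{b}-\sqrt{s(\gamma,b)})^2=4s(\gamma,b)^3/\gamma^2$. Your bound in step (iii) is in fact marginally cleaner than the paper's, which routes the estimate through the generalized monotonicity of $s(\gamma,b)/b$ (Lemma~\ref{lemma:cond_decrease}) before invoking $s\le b$, whereas you apply the pointwise bound $4s(\gamma,\xsa_i)^3/\gamma^2\le 4\xsa_i^3/\gamma^2\le 4\xsa_p^3/\gamma^2$ termwise; the only cosmetic omission is that indices with $\xsa_i=0$ contribute zero to both sides, which the paper handles explicitly.
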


The requirement that $\varepsilon$ be strictly smaller than~$\Tr{\covsa}$ is equivalent to Assumption~\ref{assu:data}\ref{assu:data-eps}. It is needed to prevent problem~\eqref{eq:vector} from admitting the trivial solution~$x\opt = 0$. To see this, note that the condition $\eps \geq \Tr{\covsa}$ is equivalent to $\sum_{i=1}^p d(0,\xsa_i)\le \eps$, which in turn implies that~$0$ is feasible and even optimal in~\eqref{eq:vector}. In this case, the trivial (and essentially nonsensical) estimator~$X\opt = 0$ would be optimal in problem~\eqref{eq:CSE}.

\subsection{The Fisher-Rao Covariance Shrinkage Estimator}
Table~\ref{table:structured_divergence} defines the Fisher-Rao divergence between two matrices $\cov_1,\cov_2\in\PD^p$ as
\[ 
    D_{\rm FR}(\cov_1,\cov_2)= \| \log (\cov_2^{-\frac{1}{2}} \cov_1 \cov_2^{-\frac{1}{2}}) \|_{\mathrm{F}}^2 . 
\]
The Fisher-Rao divergence can be interpreted as the Fisher-Rao distance on a particular statistical manifold. 

\begin{definition}[Fisher-Rao distance]
Consider a family of probability density functions~$\{f_\theta(\xi)\}_{\theta\in \Theta}$ whose parameter~$\theta$ ranges over a Riemannian manifold~$\Theta$ with metric $$I_\theta=\int_\Xi f_\theta(\xi)\,\nabla_\theta\log(f_\theta(\xi))\nabla_\theta\log(f_\theta(\xi))^\top\mathrm{d}\xi.$$ The geodesic distance~$\mathrm{FR}(\theta_1, \theta_2)$ on~$\Theta$ induced by this metric is referred to as the Fisher-Rao distance.
\end{definition}

Note that~$I_\theta$ represents the Fisher information matrix corresponding to the parameter~$\theta$. Next, we show that the squared Fisher-Rao distance between two non-degenerate zero-mean Gaussian probability density functions is proportional to the Fisher-Rao divergence between their positive definite covariance matrices.

\begin{lemma}[Fisher-Rao distance between positive definite covariance matrices~\cite{atkinson1981rao}] \label{lem:FR-Gauss}
    Let~$\{f_\theta(\xi)\}_{\theta\in \Theta}$ be the family of all non-degenerate zero-mean Gaussian probability density functions encoded by their covariance matrices~$\theta=\cov$, which range over the Riemannian manifold~$\Theta=\PD^p$ equipped with the Fisher-Rao distance. If $\theta_1=\cov_1$ and $\theta_2=\cov_2$ belong to~$\PD^p$, then~$\mathrm{FR}(\theta_1, \theta_2)^2 = \half D_{\rm FR}(\cov_1,\cov_2)$.
\end{lemma}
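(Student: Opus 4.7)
The plan is to compute the Fisher information metric on $\Theta=\PD^p$ induced by the zero-mean Gaussian family in explicit coordinates, recognize it as the classical affine-invariant metric on $\PD^p$, and then read off its geodesic distance from a standard formula.

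First, I would write the log-density as $\log f_\Sigma(\xi) = -\tfrac{p}{2}\log(2\pi) -\tfrac{1}{2}\log\det\Sigma -\tfrac{1}{2}\xi^\top\Sigma^{-1}\xi$, and differentiate it in a direction $U\in\Sym^p$ at the point $\Sigma$. Using $D_U\log\det\Sigma = \Tr{\Sigma^{-1}U}$ and $D_U\Sigma^{-1} = -\Sigma^{-1}U\Sigma^{-1}$, I obtain
\[
D_U \log f_\Sigma(\xi) \;=\; \tfrac{1}{2}\bigl(\xi^\top \Sigma^{-1}U\Sigma^{-1}\xi - \Tr{\Sigma^{-1}U}\bigr).
\]
A short verification using $\EE[\xi^\top A\xi] = \Tr{A\Sigma}$ shows this has mean zero, so the Fisher information metric evaluated on two symmetric tangent vectors $U,V$ is $I_\Sigma(U,V) = \EE[D_U\log f_\Sigma \cdot D_V\log f_\Sigma]$. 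The decisive computation is the covariance of two Gaussian quadratic forms: for $\xi\sim\N(0,\Sigma)$ and symmetric $A,B$, Isserlis' theorem yields $\mathrm{Cov}(\xi^\top A\xi,\xi^\top B\xi) = 2\Tr{A\Sigma B\Sigma}$. Applying this with $A = \Sigma^{-1}U\Sigma^{-1}$ and $B = \Sigma^{-1}V\Sigma^{-1}$ collapses the trace and gives
\[
I_\Sigma(U,V) \;=\; \tfrac{1}{2}\Tr{\Sigma^{-1}U\Sigma^{-1}V}.
\]

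Second, I would recognize this as (up to the constant $1/2$) the well-known affine-invariant Riemannian metric on $\PD^p$. The key properties are (i) invariance under congruence: for any invertible $G$, the map $\Sigma\mapsto G\Sigma G^\top$ is an isometry, which is immediate from the trace identity above; and (ii) the associated geodesic emanating from $\Sigma_1$ with initial velocity $U$ is $\gamma(t) = \Sigma_1^{1/2}\exp(t\Sigma_1^{-1/2}U\Sigma_1^{-1/2})\Sigma_1^{1/2}$. Both facts can be verified by computing the Christoffel symbols of $I_\Sigma$, or more cleanly by passing to the point $\Sigma_1 = I_p$ via a congruence and reducing to geodesics on the symmetric space $GL(p)/O(p)$, where they are matrix exponentials.

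Third, I would use the geodesic formula to evaluate the distance. Setting $\gamma(1)=\Sigma_2$ gives $U = \Sigma_1^{1/2}\log(\Sigma_1^{-1/2}\Sigma_2\Sigma_1^{-1/2})\Sigma_1^{1/2}$, and the squared length of the geodesic is $I_{\Sigma_1}(U,U) = \tfrac{1}{2}\|\log(\Sigma_1^{-1/2}\Sigma_2\Sigma_1^{-1/2})\|_{\mathrm F}^2$. A brief argument shows the right-hand side is symmetric in $(\Sigma_1,\Sigma_2)$: the matrices $\Sigma_1^{-1/2}\Sigma_2\Sigma_1^{-1/2}$ and $\Sigma_2^{-1/2}\Sigma_1\Sigma_2^{-1/2}$ have reciprocal eigenvalues, so their matrix logarithms have opposite eigenvalues and identical Frobenius norms. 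Therefore
\[
\FR(\theta_1,\theta_2)^2 \;=\; \tfrac{1}{2}\bigl\|\log\bigl(\Sigma_2^{-1/2}\Sigma_1\Sigma_2^{-1/2}\bigr)\bigr\|_{\mathrm F}^2 \;=\; \tfrac{1}{2}D_{\mathrm{FR}}(\Sigma_1,\Sigma_2),
\]
which is the claim.

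The main obstacle I anticipate is the careful derivation of the geodesic formula on $\PD^p$ for the metric $I_\Sigma$: while affine invariance reduces the problem to geodesics through the identity, one still needs to argue that horizontal lifts of one-parameter subgroups in $GL(p)/O(p)$ produce precisely the matrix exponential family, and that these trajectories are globally minimizing (not merely locally geodesic). The Gaussian quadratic-form covariance identity is a second potential snag but is a standard Isserlis computation.
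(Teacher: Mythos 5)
Your derivation is correct, but it is worth noting that the paper does not actually prove Lemma~\ref{lem:FR-Gauss}: it imports the result wholesale from Atkinson and Mitchell~\cite{atkinson1981rao}, exactly as it does for the Kullback--Leibler and Wasserstein analogues (Lemmas~\ref{lem:KL-Gauss} and~\ref{lem:WS-Gauss}). What you supply is therefore a self-contained replacement for that citation, and the chain of reasoning is sound: the Isserlis identity $\mathrm{Cov}(\xi^\top A\xi,\xi^\top B\xi)=2\Tr{A\cov B\cov}$ gives $I_\cov(U,V)=\half\Tr{\cov^{-1}U\cov^{-1}V}$, which is exactly one half of the affine-invariant metric that the paper itself introduces in Example~\ref{exam:PD_Hadamard}; the geodesic distance for that metric is $\|\log(\cov_1^{-1/2}\cov_2\cov_1^{-1/2})\|_{\mathrm F}$, and your symmetry observation (reciprocal eigenvalues, hence logarithms of equal Frobenius norm) correctly reconciles the order of arguments with the paper's convention that $\covsa$ sits in the second slot of $D_{\rm FR}$. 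The one step you rightly flag as nontrivial --- that the curves $t\mapsto\cov_1^{1/2}\exp(tL)\cov_1^{1/2}$ are globally minimizing geodesics --- does not need an independent argument here, because the paper already records (via Lang~\cite{lang2012fundamentals}) that $\PD^p$ with this metric is a Hadamard manifold, so the exponential map is a global diffeomorphism and every geodesic is minimizing; citing that fact closes the only gap you anticipated. In short, your route is more informative than the paper's (it explains \emph{why} the constant is $\half$ and ties the Fisher metric to the geometry used in Appendix~\ref{subsec:RG_GC}), at the cost of importing the standard symmetric-space geodesic formula that the original reference establishes.
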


Lemma~\ref{lem:FR-Gauss} justifies our terminology of referring to $D_{\rm FR}$ as the Fisher-Rao divergence. As~$D_{\rm FR}$ is proportional to the {\em squared} Fisher-Rao distance ${\rm FR}^2$, it fails to satisfy the triangle inequality and is indeed only a divergence. Moreover, Example~\ref{example:quasi-convex} in Appendix~\ref{subsec:non-quasi-convex} reveals that $D_{\rm FR}$ is neither convex nor quasi-convex. However, it is geodesically convex. The generator~$d$ corresponding to~$D_{\rm FR}$ can be expressed as
\[
d(a,b)=(\log(a/b))^2.
\]
The following corollary of Theorem~\ref{thm:general_CSE} characterizes the eigenvalue map and inverse shrinkage intensity corresponding to the Fisher-Rao divergence, which characterizes the Fisher-Rao covariance estimator.

\begin{corollary}[Fisher-Rao covariance shrinkage estimator]\label{thm:FR}
If $D$ is the Fisher-Rao divergence, $\covsa\in\PD^p$ and $\varepsilon > 0$, then problem~\eqref{eq:CSE} is uniquely solved by the Fisher-Rao covariance shrinkage estimator $X \opt= \Vsa \Diag (x\opt) \Vsa^\top$ with eigenvalues $x\opt_i  = s(\gamma\opt, \xsa_i)$, $i=1,\dots,p$. 
The underlying eigenvalue map is given by 
\begin{subequations}
\be
    s(\gamma, b) = 
    b \exp\left( - \frac{1}{2} W_0 \left( 2b^2/\gamma \right) \right), \label{eq:FR-xiopt}
\ee
and $W_0$ denotes the principal branch of the Lambert-W function. In addition, the inverse shrinkage intensity $\dualvar\opt \in (0,\dualvar_{\FR}]$ with $\dualvar_{\FR} =\|\covsa\|_{\mathrm{F}}^2/\sqrt{\eps} > 0$ is the unique positive solution of the nonlinear equation
\begin{equation}\label{eq:FR_opt_gamma}
        \sum_{i = 1}^p W_0^2 \left(2\,\xsa_i^2 /\gamma\right) = 4\eps.
\end{equation}
\end{subequations}
\end{corollary}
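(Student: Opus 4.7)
The plan is to apply Theorem~\ref{thm:general_CSE} to the Fisher-Rao divergence and then reduce the resulting quasi-analytical description to the explicit formulas \eqref{eq:FR-xiopt}--\eqref{eq:FR_opt_gamma}, finishing with the a priori upper bound $\dualvar_{\FR}$. Assumptions~\ref{assu:inf_sup}, \ref{assu:D_form}, \ref{assu:d_convex} and~\ref{assu:d_b_second_derivative} hold by Theorem~\ref{thm:verification}. Assumption~\ref{assu:data}\ref{assu:data-d} holds because $\covsa\in\PD^p$ places each pair $(\xsa_i,\xsa_i)$ in the interior of $\mathrm{dom}(d)=\R_{++}\times\R_{++}$ for the Fisher-Rao generator, while Assumption~\ref{assu:data}\ref{assu:data-eps} is automatic as $d(0,b)=(\log(0/b))^2=+\infty$ gives $\bar\eps=+\infty$. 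Theorem~\ref{thm:general_CSE} then yields existence, uniqueness, and the spectral form $X\opt=\Vsa\Diag(x\opt)\Vsa^\top$ with $x\opt_i=s(\dualvar\opt,\xsa_i)$, so only the explicit computation of $s$ and $\dualvar\opt$ remains.

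For the eigenvalue map, differentiating $d(a,b)=(\log(a/b))^2$ gives $\partial_a d(a,b)=2\log(a/b)/a$, so the defining equation $2a+\dualvar\,\partial_a d(a,b)=0$ becomes, after multiplying by $a/2$, the relation $a^2+\dualvar\log(a/b)=0$. The substitution $w=-2\log(a/b)\ge 0$ sends $a=b\exp(-w/2)$ and $a^2=b^2\exp(-w)$, so the equation collapses to $we^w=2b^2/\dualvar$. The unique non-negative solution is $w=W_0(2b^2/\dualvar)$, which upon unwinding the substitution gives formula~\eqref{eq:FR-xiopt}. Inserting the resulting ratio $s(\dualvar,\xsa_i)/\xsa_i=\exp\bigl(-W_0(2\xsa_i^2/\dualvar)/2\bigr)$ into the characterizing identity $\sum_i d(s(\dualvar,\xsa_i),\xsa_i)=\eps$ produces $\tfrac{1}{4}\sum_i W_0^2(2\xsa_i^2/\dualvar)=\eps$, which is \eqref{eq:FR_opt_gamma}. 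Existence and uniqueness of $\dualvar\opt\in(0,\infty)$ then follow directly from Proposition~\ref{prop:nonlinear_equation_gamma_opt}.

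The remaining loose end is the explicit a priori bound $\dualvar\opt\le\dualvar_{\FR}=\|\covsa\|_{\mathrm{F}}^2/\sqrt{\eps}$. Since $F(\dualvar)=\tfrac{1}{4}\sum_i W_0^2(2\xsa_i^2/\dualvar)-\eps$ is strictly decreasing by Proposition~\ref{prop:nonlinear_equation_gamma_opt}, it suffices to verify $F(\dualvar_{\FR})\le 0$. The elementary inequality $W_0(x)\le x$ for $x\ge 0$, which follows from $W_0(x)e^{W_0(x)}=x$ together with $e^{W_0(x)}\ge 1$, gives $W_0^2(2\xsa_i^2/\dualvar_{\FR})\le 4\xsa_i^4/\dualvar_{\FR}^2$. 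Summing and using the bound $\sum_i \xsa_i^4\le\bigl(\sum_i\xsa_i^2\bigr)^2=\|\covsa\|_{\mathrm{F}}^4$ (which holds because each $\xsa_i^2\ge 0$), I obtain $\sum_i W_0^2(2\xsa_i^2/\dualvar_{\FR})\le 4\|\covsa\|_{\mathrm{F}}^4/\dualvar_{\FR}^2=4\eps$, so $F(\dualvar_{\FR})\le 0$ and hence $\dualvar\opt\le\dualvar_{\FR}$.

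The main obstacle is identifying the substitution $w=-2\log(a/b)$ that transforms the first-order optimality condition into the canonical Lambert-$W$ form $we^w=2b^2/\dualvar$; this is the only non-routine step, after which everything else reduces to direct substitution and to a routine estimate involving the crude $W_0(x)\le x$ bound. A mild subtlety to double-check is that the substitution indeed enforces $w\ge 0$, which is consistent with the shrinkage property $s(\dualvar,b)\le b$ guaranteed by Theorem~\ref{thm:general_CSE} (equivalently, $a/b\le 1$ so $\log(a/b)\le 0$ and $w\ge 0$), confirming that the principal branch $W_0$ is the correct choice.
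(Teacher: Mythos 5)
Your proposal is correct and follows essentially the same route as the paper: verify the assumptions via Theorem~\ref{thm:verification}, invoke Theorem~\ref{thm:general_CSE}, solve the first-order condition by recognizing the canonical Lambert-$W$ form (your substitution $w=-2\log(a/b)$ is the same quantity as the paper's $2(a\opt)^2/\gamma$), and bound $\gamma\opt$ using $0\le W_0(t)\le t$ together with $\sum_i\xsa_i^4\le\|\covsa\|_{\mathrm{F}}^4$. Your derivation of the bound is in fact slightly cleaner, since the paper's final display contains a typographical slip writing $\|\covsa\|_{\mathrm{F}}^2\sqrt{\eps}$ where $\|\covsa\|_{\mathrm{F}}^2/\sqrt{\eps}$ is meant, whereas you arrive at the stated $\dualvar_{\FR}$ correctly.
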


\subsection{Other Covariance Shrinkage Estimators}

Theorem~\ref{thm:verification} ensures that all divergence functions from Table~\ref{table:structured_divergence} satisfy Assumptions~\ref{assu:inf_sup}, \ref{assu:D_form}, \ref{assu:d_convex} and~\ref{assu:d_b_second_derivative} and thus induce via Theorem~\ref{thm:general_CSE} a distributionally robust covariance shrinkage estimator. The generators and eigenvalue maps corresponding to all these divergences can be derived by using similar techniques as in Corollaries~\ref{thm:KL}, \ref{thm:Wass}, and~\ref{thm:FR}. Details are omitted for brevity. All generators and eigenvalue maps are provided in Table~\ref{table:a}.
\bgroup
\def\arraystretch{1.8}
\begin{table}[H]
\resizebox{\linewidth}{!}{\begin{tabular}{|l||c|c|c|}
\hline
Divergence & $d(a, b)$ & $\text{dom}(d)$ & $s(\gamma, b)$ for $b> 0$ \\ \hline\hline
Kullback-Leibler/Stein & $\frac{1}{2}\left( \frac{a}{b} - 1 - \log \frac{a}{b}  \right)$ & $\R_{++}\times\R_{++}$ & $\frac{-\gamma+ \sqrt{\gamma^2 + 16 b^2 \gamma}}{8b}$ \\ \hline
Wasserstein & $a + b - 2 \sqrt{ab}$ & $\R_{+}\times\R_{+}$ & $\left(\left( \frac{\dualvar}{4}\left(\sqrt{b} + \sqrt{b + \frac{2}{27} \dualvar} \right)\right)^{\frac{1}{3}} - \frac{\dualvar }{6} \left(\frac{\dualvar}{4}\left(\sqrt{b} + \sqrt{b + \frac{2}{27} \dualvar} \right)\right)^{-\frac{1}{3}}\right)^2$ \\ \hline
Fisher-Rao & $( \log \frac{a}{b} )^2$ & $\R_{++}\times\R_{++}$ & $b \exp( - \frac{1}{2} W_0 ( 2b^2/\gamma ) )$ \\ \hline
Inverse Stein & $\frac{1}{2} \left( \frac{b}{a} - 1 - \log\frac{b}{a} \right)$ & $\R_{++}\times\R_{++}$ & $\frac{3^{1/3}\big(\sqrt{3\gamma^2(27b^2+\gamma)}+9\gamma b\big)^{2/3}-3^{2/3}\gamma}{6\big(\sqrt{3\gamma^2(27b^2+\gamma)}+9\gamma b\big)^{1/3}}$ \\ \hline
\hspace{-1.5mm}$\begin{array}{l}
     \text{Symmetrized Stein/}\\[-2.5ex]
     \text{Jeffreys divergence}
\end{array}$ & $\frac{1}{2} \left( \frac{b}{a} + \frac{a}{b} - 2\right)$ & $\R_{++}\times\R_{++}$ & $\begin{aligned}{\textstyle\frac{1}{12}\Big(}&{\textstyle\frac{\gamma^2}{b\big(216\gamma b^4+12\sqrt{3(108(\gamma^2 b^8-3(\gamma b)^4)}-\gamma^3\big)}}\\&{\textstyle+\frac{\big(216\gamma b^4+12\sqrt{3(108(\gamma^2 b^8-3(\gamma b)^4)}-\gamma^3\big)}{b}-\frac{\gamma}{b}\Big)}\end{aligned}$ \\ \hline
\hspace{-1.5mm}$\begin{array}{l}
     \text{Quadratic/}\\[-2.5ex]
     \text{Squared Frobenius}
\end{array}$ & $(a - b)^2$ & $\R_{+}\times\R_{+}$ & $\frac{b}{\gamma+b}$ \\ \hline
Weighted quadratic & $\frac{(a - b)^2}{b}$ & $\R_{+}\times\R_{++}$ & $\frac{\gamma b}{\gamma+b}$ \\ \hline
\end{tabular}}
\caption{Generators and eigenvalue maps of the divergences from Table~\ref{table:structured_divergence}.}
\label{table:a}
\end{table}
\egroup

\section{Numerical Experiments} \label{sec:numerical}

We now compare our distributionally robust covariance estimators against the linear shrinkage estimator with shrinkage target  $\frac{1}{n} \Tr{\covsa} I_n$~\cite{ref:ledoi2004well} as well as a state-of-the-art nonlinear shrinkage estimator proposed by Ledoit and Wolf~\cite{ledoit2020analytical}, henceforth referred to as the \textit{NLLW} estimator. The performance of the linear shrinkage estimator depends on the choice of the mixing parameter $\alpha\in[0,1]$, which we calibrate via cross-validation.

We first study the dependence of our estimators on the radius~$\eps$ of the uncertainty set{, and we numerically validate the asymptotic consistency and finite-sample guarantees of Propositions~\ref{prop:consistency} and~\ref{proposition:finite-sample-guarantees}, respectively. Using synthetic data, we then assess the Frobenius risk of our estimators} as a function of the sample size. Using real data, we further test the performance of minimum variance portfolios constructed from our estimators. In addition, we illustrate the use of covariance estimators in the context of linear and quadratic discriminant analysis.
The code for all experiments as well as an implementation of our methods can be found on GitHub.\footnote{\url{https://github.com/yvesrychener/covariance_DRO}}

\subsection{Dependence on the Radius of the Uncertainty Set}

We first study the decay of the eigenvalues and the condition number of the Kullback-Leibler, Wasserstein, and Fisher-Rao covariance shrinkage estimators with the radius~$\eps$ of the uncertainty set. To this end, we set~$p=3$ and consider a nominal covariance matrix with eigenvalue spectrum $\lambda(\covsa)=[1,2,3]$. Figure~\ref{fig:eig_path} visualizes the eigenvalues of~$X\opt$ as a function of~$\eps$. In agreement with Proposition~\ref{prop:eigen_decreasing}, we observe that~$X\opt$ shrinks the eigenvalues of the underlying nominal estimator~$\covsa$ towards~$0$ as~$\eps$ grows. Recall from Assumption~\ref{assu:data}\ref{assu:data-eps} and the subsequent discussion that~$X\opt=0$ whenever~$\eps\geq\sum_{i=1}^p d(0, \xsa_i)$. As the generator of the Wasserstein divergence satisfies~$d(0,b)=b$, the eigenvalues of the Wasserstein covariance shrinkage estimator thus vanish for any~$\eps \geq \Tr{\covsa}$. In contrast, the eigenvalues of the Kullback-Leibler and Fisher-Rao covariance shrinkage estimators remain strictly positive for all~$\eps$. We further observe that, for small values of~$\varepsilon$, the Wasserstein and Fisher-Rao covariance shrinkage estimators primarily shrink the large eigenvalues of~$\covsa$ and keep the small ones constant. Figure~\ref{fig:CN_path} visualizes the condition number~$\kappa(X\opt)$ as a function of~$\eps$. As predicted by Proposition~\ref{prop:condition}, $\kappa(X\opt)$ is at most as large as~$\kappa(\covsa)$.  Note also that~$\kappa(X\opt)$ is undefined for $\eps\geq\sum_{i=1}^p d(0, \xsa_i)$. Figure~\ref{fig:CN_path} indicates that the condition number of~$X\opt$ decreases monotonically as~$\eps$ tends to~$\sum_{i=1}^p d(0, \xsa_i)$. \begin{figure}
    \centering
    \subfigure[Kullback-Leibler]{\includegraphics[width=0.3\textwidth, trim= 0cm 0cm 13.2cm 1.cm, clip]{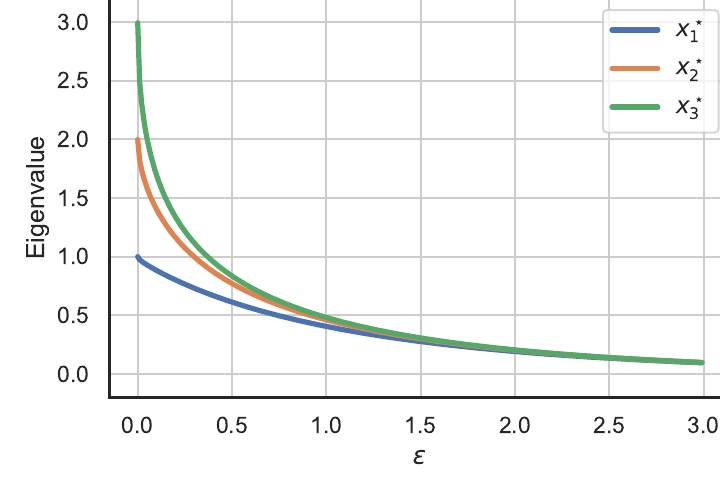}}
    \subfigure[Wasserstein]{\includegraphics[width=0.3\textwidth, trim= 0cm 0cm 13.2cm 1.cm, clip]{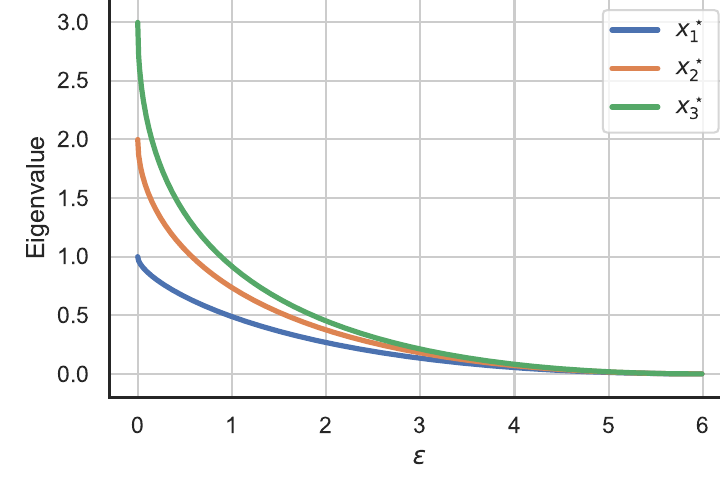}}
    \subfigure[Fisher-Rao]{\includegraphics[width=0.3\textwidth, trim= 0cm 0cm 13.2cm 1.cm, clip]{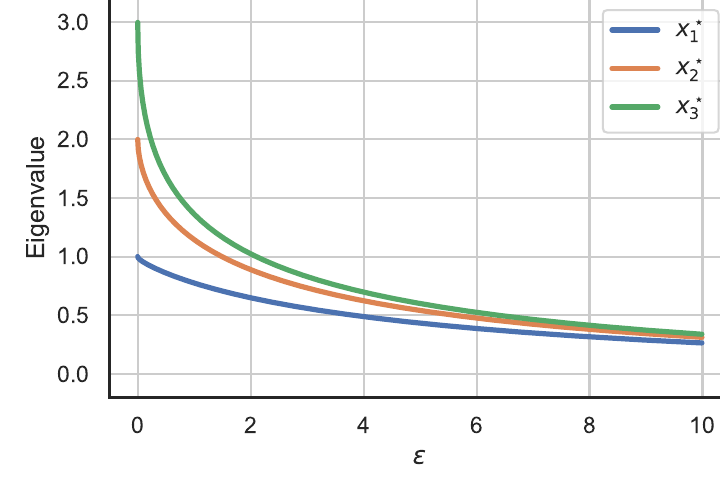}}
    \caption{Eigenvalues of three different distributionally robust covariance estimators as a function of the radius~$\eps$ for $\lambda(\covsa)=[1,2,3]$.}
    \label{fig:eig_path}
\end{figure}
\begin{figure}
    \centering
    \subfigure[Kullback-Leibler]{\includegraphics[width=0.3\textwidth, trim= 12.5cm 0cm 0.5cm 0.8cm, clip]{plots/sensitivity/kl_123.pdf}}
    \subfigure[Wasserstein]{\includegraphics[width=0.3\textwidth, trim=  12.5cm 0cm 0.5cm 0.8cm, clip]{plots/sensitivity/ws_123.pdf}}
    \subfigure[Fisher-Rao]{\includegraphics[width=0.3\textwidth, trim= 12.5cm 0cm 0.5cm 0.8cm, clip]{plots/sensitivity/fr_123.pdf}}
    \caption{Condition number of three different distributionally robust covariance estimators as a function of the radius~$\eps$ for $\lambda(\covsa)=[1,2,3]$.}
    \label{fig:CN_path}
\end{figure}

{
\subsection{Consistency and Finite-Sample Performance}
To validate both the asymptotic consistency and the finite-sample guarantees established in Propositions~\ref{prop:consistency} and~\ref{proposition:finite-sample-guarantees}, we examine the behavior of the estimation error as~$n$ tends to infinity both in the low-dimensional regime with fixed~$p$ and the high-dimensional regime where the ratio~$p/n$ remains constant. In both cases, we evaluate our estimators under two scenarios: (i) when the true covariance matrix is~$\cov_0 = I_{p}$, and (ii) when~$\cov_0$ is a banded $p \times p$ matrix with ones on the diagonal and $0.5$ on the immediate off-diagonals above and below.

\subsubsection{Consistency (Low-Dimensional Regime)}
\label{sec:consistency}
Assume that~$p = 10$ is fixed and that~$\covsa_n$ is the sample covariance matrix constructed from~$n$ independent samples drawn from the distribution $\mathbb{P} = \mathcal{N}(0, \cov_0)$. According to Proposition~\ref{proposition:finite-sample-guarantees}, finite-sample guarantees require uncertainty set radii of order $\mathcal{O}(n^{-1/2})$. This motivates us to set~$\varepsilon_n = 5 n^{-1/2}$. Proposition~\ref{prop:consistency} asserts that the distributionally robust estimator~$X^\star_n$ converges almost surely to~$\cov_0$ as~$n$ tends to infinity, given that the sample covariance matrix is consistent and $\varepsilon_n$ tends to zero. To empirically verify this result, we plot the Frobenius distance between~$X^\star_n$ and~$\cov_0$ as a function of~$n$. Figure~\ref{fig:consistency:big-data} displays the mean Frobenius losses (solid lines) along with one-standard-deviation bands (shaded regions), computed over~$10$ independent datasets of size~$n$. The results reveal that the Frobenius errors of the Wasserstein, Kullback-Leibler and Fisher-Rao estimators all approximate straight lines with negative slopes on a log-log scale, indicating polynomial decay in~$n$. This observed behavior is consistent with the theoretical convergence guarantee of Proposition~\ref{prop:consistency}. However, the empirical covariance matrix converges faster than all tested distributionally robust estimators.

\begin{figure}
    \centering
    {\subfigure[True covariance $\cov_0$ is the identity matrix]{\includegraphics[width=0.45\textwidth]{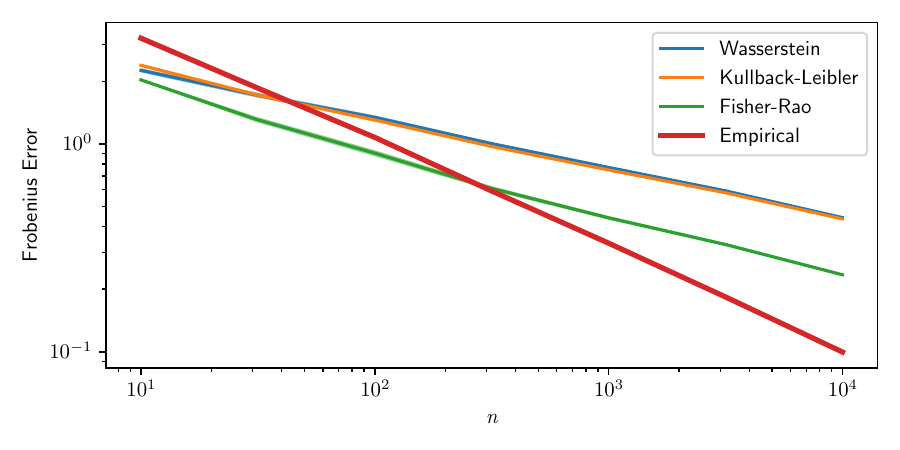}}}
    {\subfigure[True covariance $\cov_0$ is band-diagonal]{\includegraphics[width=0.45\textwidth]{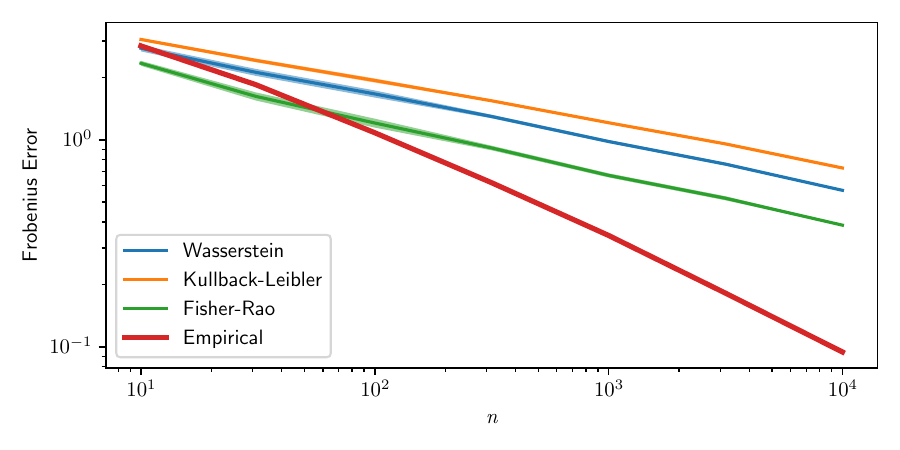}}}
    {\caption{Consistency of $X\opt_n$ and $\covsa_n$ in the low-dimensional regime when $p$ is fixed.}
    \label{fig:consistency:big-data}}
\end{figure}

\subsubsection{Finite-Sample Performance (High-Dimensional Regime)}
We adopt the same experimental setup as in Section~\ref{sec:consistency} but now focus on a high-dimensional regime where the dimension~$p_n = 0.8n$ grows linearly with $n$. Proposition~\ref{proposition:finite-sample-guarantees} states that if $\varepsilon_n = \mathcal{O}(p_n^{3/2} n^{-1/2}) = \mathcal{O}(n)$ and~$n$ is sufficiently large, then the true covariance matrix~$\cov_0$ lies within the uncertainty set $\mathcal{B}_{\varepsilon_n}(\covsa_n)$ with constant confidence. By construction, the distributionally robust estimator~$X^\star_n$, which essentially minimizes the worst-case Frobenius error over all $\cov \in \mathcal{B}_{\varepsilon_n}(\covsa_n)$, is expected to exhibit a small Frobenius error. We now empirically investigate this hypothesis. Specifically, for each~$n$ we use a ternary search algorithm to determine a radius~$\widehat{\varepsilon}_n$ such that the corresponding distributionally robust estimator~$X^\star_n$ minimizes the average Frobenius distance to~$\cov_0$ over~$10$ independent datasets of size~$n$. Figure~\ref{fig:consistency:small-data-optimal radius} shows the empirically optimal radius~$\widehat{\varepsilon}_n$ as a function of~$n$ for the banded covariance matrix~$\cov_0$ (the results are qualitatively similar when~$\cov_0$ is the identity matrix). We observe that~$\widehat{\varepsilon}_n$ grows approximately linearly with~$n$, consistent with the theoretical scaling of~$\varepsilon_n$ from Proposition~\ref{proposition:finite-sample-guarantees} when~$p_n = 0.8n$. This observation suggests that, in the standard high-dimensional regime where $p_n = \mathcal{O}(n)$, one may simply approximate the optimal radius by $c\, p_n^{3/2} n^{-1/2}$ and select the tuning constant~$c>0$ via cross-validation, for example. Figure~\ref{fig:consistency:small-data-optimized} plots the normalized Frobenius loss $\|X^\star_n - \cov_0\|_F / \|\cov_0\|_F$ as a function of~$n$ for the distributionally robust estimator corresponding to~$\widehat{\varepsilon}_n$. The normalization by~$\|\cov_0\|_F$ accounts for increasing dimension, allowing for meaningful comparison across different values of~$n$. We find that the Wasserstein, Kullback-Leibler and Fisher-Rao estimators all achieve significantly smaller relative Frobenius error than the empirical covariance matrix across all values of~$n$. This suggests that robustification is beneficial in high-dimensional regimes where $p_n = \mathcal{O}(n)$, even though the relative Frobenius loss may not decrease with~$n$.

\begin{figure}
    \centering
    \includegraphics[trim={0 0 0 1cm},clip,width=0.9\linewidth]{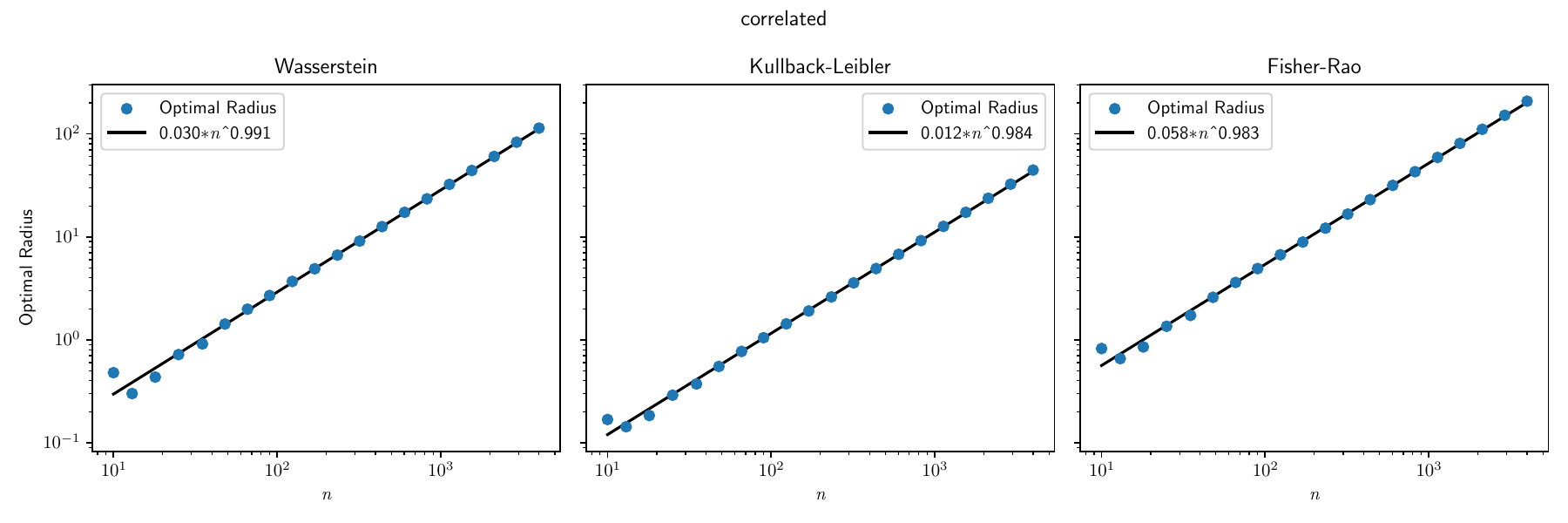}
    {\caption{Optimal radius in the high-dimensional regime with a least-squares fit in log-log space. The plot shows $\widehat \varepsilon_n$ as a function of $n$, along with a fitted curve of the form $c n^\alpha$.}
    \label{fig:consistency:small-data-optimal radius}}
\end{figure}

\begin{figure}
    \centering
    {\subfigure[True covariance $\cov_0$ is the identity matrix]{\includegraphics[trim={0 0 0 1cm},clip, width=0.45\textwidth]{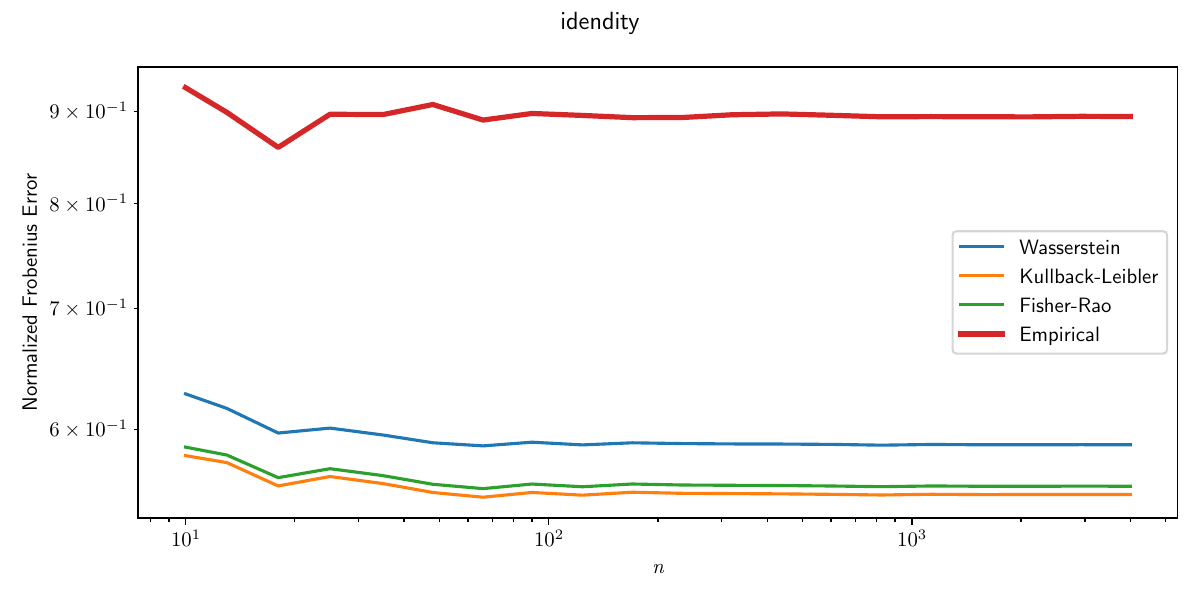}}}
    {\subfigure[True covariance $\cov_0$ is band-diagonal]{\includegraphics[trim={0 0 0 1cm},clip, width=0.45\textwidth]{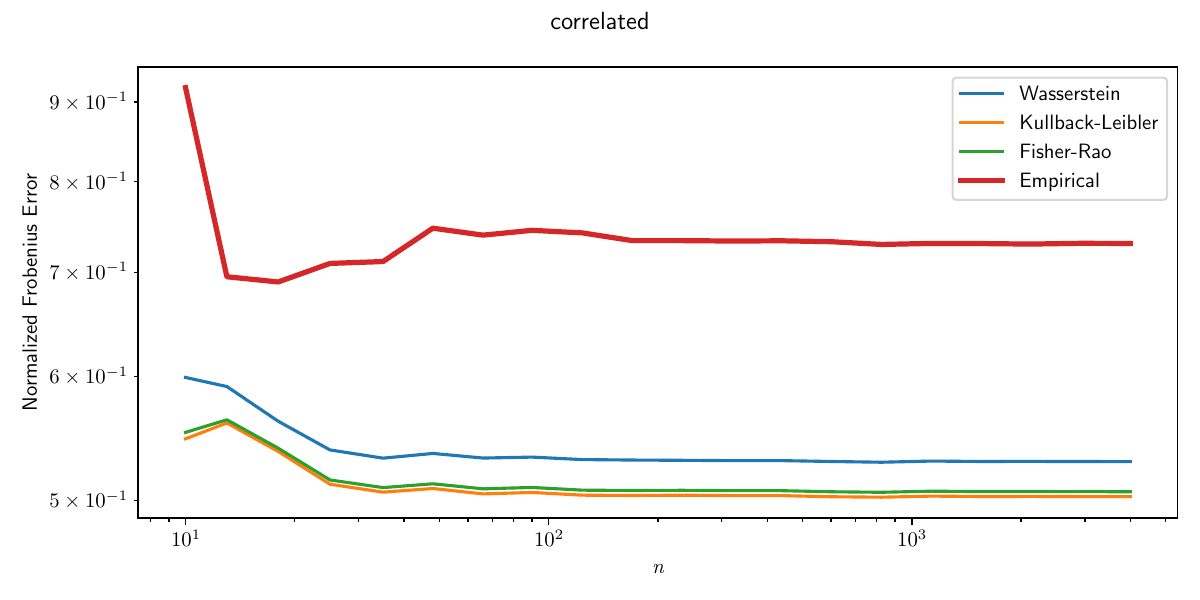}}}
    {\caption{Normalized Frobenius error of the distributionally robust estimator based on the empirically optimal radius~$\widehat{\varepsilon}_n$ in the high-dimensional regime, plotted as a function of~$n$.}
    \label{fig:consistency:small-data-optimized}}
\end{figure}
}

\subsection{Frobenius Error}
\label{sec:ex:synthetic}

In the next experiment, we use synthetic data to analyze the Frobenius risk of different covariance estimators.
Specifically, we construct a diagonal covariance matrix~$\cov_0\in\PD^{100}$ with~$90$ eigenvalues equal to~$1$ and~$10$ `spiking' eigenvalues equal to~$M\in\{10, 100, 500\}$. Thus, we have~$\kappa(\cov_0)=M$. Next, we let~$\covsa$ be the sample covariance matrix constructed from~$n\in\{100, 200, 500\}$ independent samples from~$\mathbb P=\mathcal N(0,\cov_0)$. This experimental setup captures the small to medium sample size regime with~$n\gtrsim p$, in which we expect~$\covsa$ to provide a poor approximation for~$\cov_0$. We thus compare~$\covsa$ against the Kullback-Leibler, Wasserstein, and Fisher-Rao covariance shrinkage estimators as well as against the linear shrinkage estimator with shrinkage target~$\frac{1}{p} \Tr{\covsa} I_p$ and against the NLLW estimator. Figure~\ref{fig:synthetic_rho} visualizes the Frobenius loss of all estimators as a function of the underlying hyperparameters, that is, the radius~$\eps$ of the uncertainty set for the distributionally robust estimators and the mixing weight~$\alpha$ for the linear shrinkage estimator. The NNLW estimator and the sample covariance matrix involve no hyperparameters and are thus visualized as horizontal lines. Figure~\ref{fig:synthetic_rho} shows both the means (solid lines) as well as the areas within one standard deviation of the means (shaded areas) of the Frobenius loss based on~$10$ independent training sets for all possible combinations of~$M$ and~$n$. As~$\eps$ tends to~$0$, all distributionally robust estimators approach the sample covariance matrix. Thus, they overfit the data and display a high variance. As~$\eps$ tends to~$\sum_{i=1}^p d(0, \xsa_i)$, on the other hand, all distributionally robust estimators collapse to~$0$ and thus display a high bias. We thus face a classical bias-variance trade-off. Figure~\ref{fig:synthetic_rho} reveals that the Frobenius loss of the distributionally robust estimators is minimal at intermediate values of~$\eps$. We observe that the linear shrinkage estimator is competitive with the distributionally robust estimators for well-conditioned covariance matrices (small $M$, top row). As the covariance matrix becomes more ill-conditioned (large~$M$, middle and bottom rows), the linear shrinkage estimator is dominated by the distributionally robust estimators, which attain a significantly smaller Frobenius loss. The advantage of the distributionally robust estimators relative to the nominal sample covariance matrix diminishes with increasing sample size~$n$. The NLLW estimator is designed to be asymptotically optimal and, therefore, dominates the other estimators for large sample sizes. However, it is suboptimal if training samples are scarce.

The insights of this synthetic experiment can be summarized as follows. Linear shrinkage estimators are suitable for well-conditioned covariance matrices and small sample sizes, while the NLLW estimator is preferable for large sample sizes, irrespective of the condition number. The distributionally robust estimators perform better when the covariance matrix is ill-conditioned and training samples are scarce.

\begin{figure}
    \centering
    \subfigure[$n=100$, $M=10$]{\includegraphics[width=0.3\textwidth, trim= 0 0 0 0, clip]{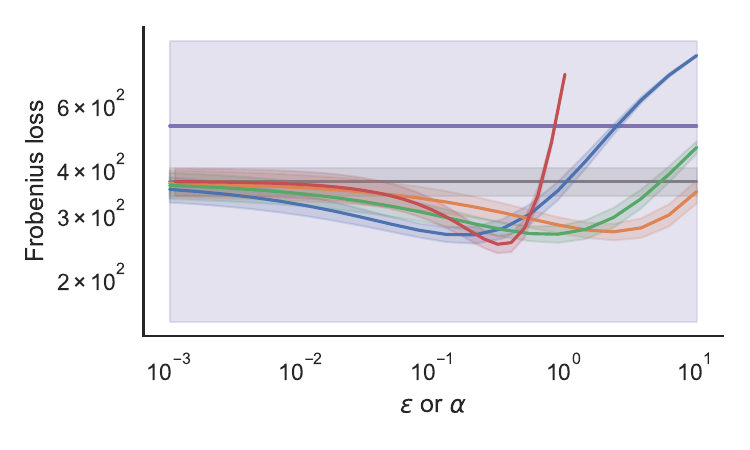}}
    \subfigure[$n=200$, $M=10$]{\includegraphics[width=0.3\textwidth, trim= 0 0 0 0, clip]{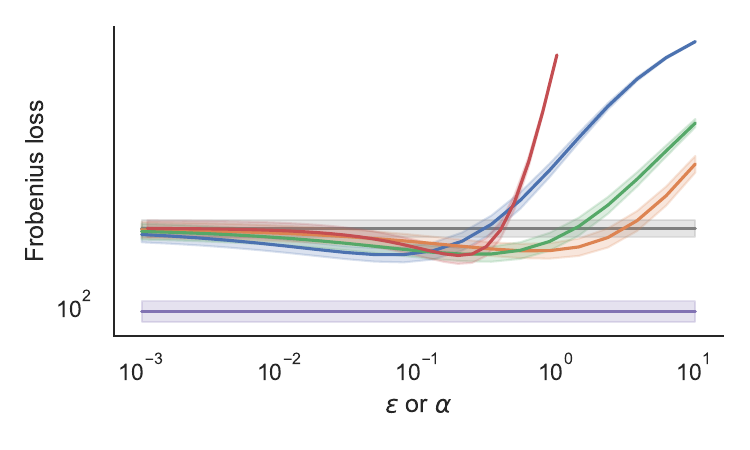}}
    \subfigure[$n=500$, $M=10$]{\includegraphics[width=0.3\textwidth, trim= 0 0 0 0, clip]{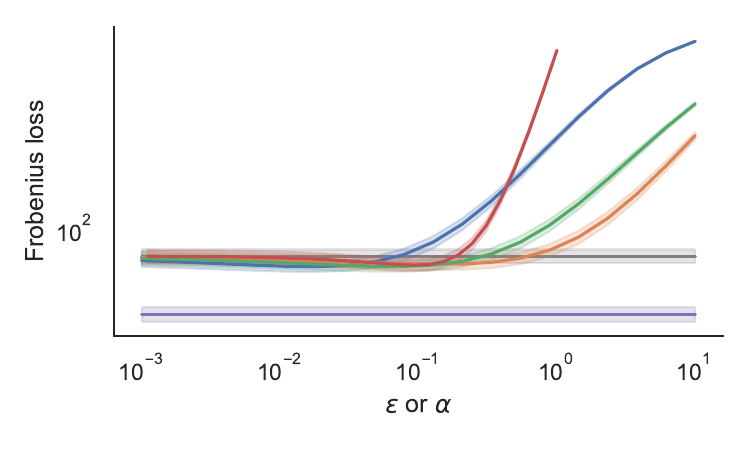}}
    
    \subfigure[$n=100$, $M=100$]{\includegraphics[width=0.3\textwidth, trim= 0 0 0 0, clip]{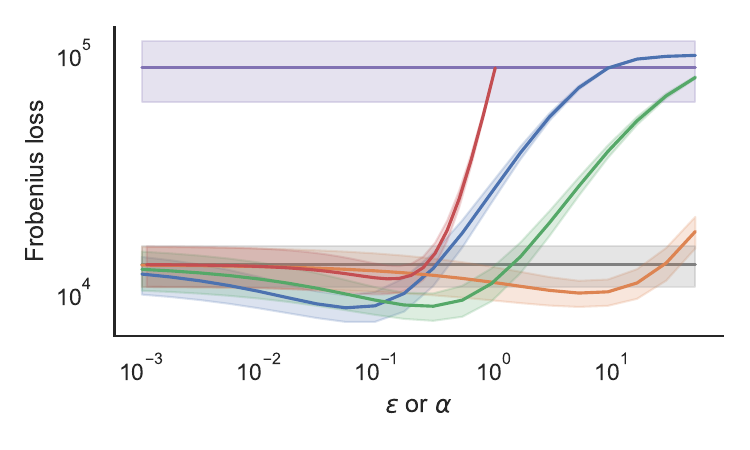}}
    \subfigure[$n=200$, $M=100$]{\includegraphics[width=0.3\textwidth, trim= 0 0 0 0, clip]{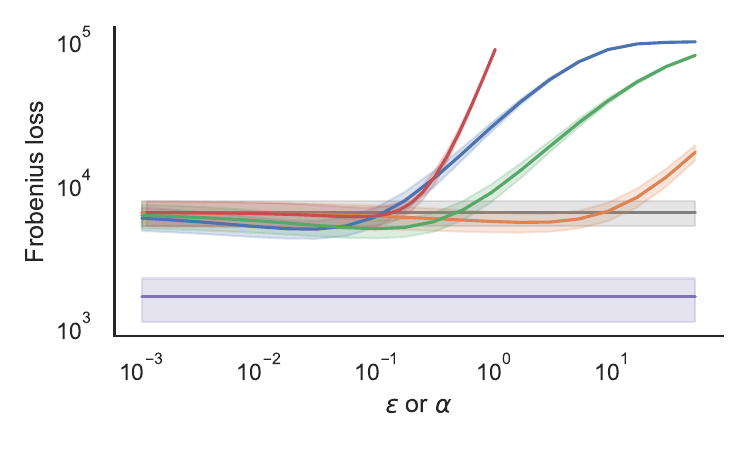}}
    \subfigure[$n=500$, $M=100$]{\includegraphics[width=0.3\textwidth, trim= 0 0 0 0, clip]{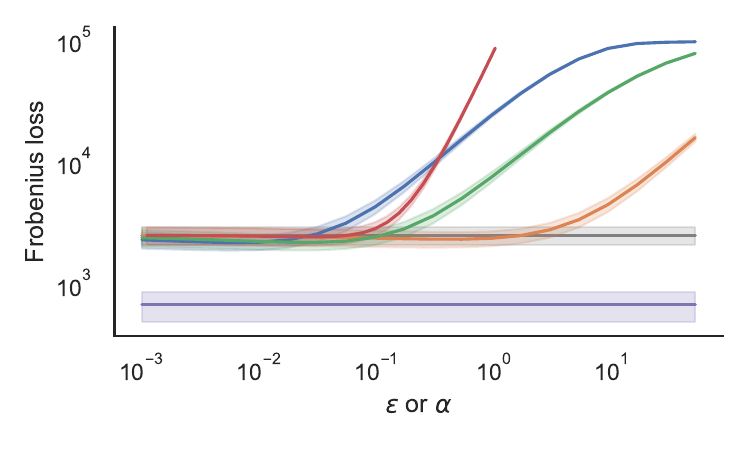}}
    
    \subfigure[$n=100$, $M=500$]{\includegraphics[width=0.3\textwidth, trim= 0 0 0 0, clip]{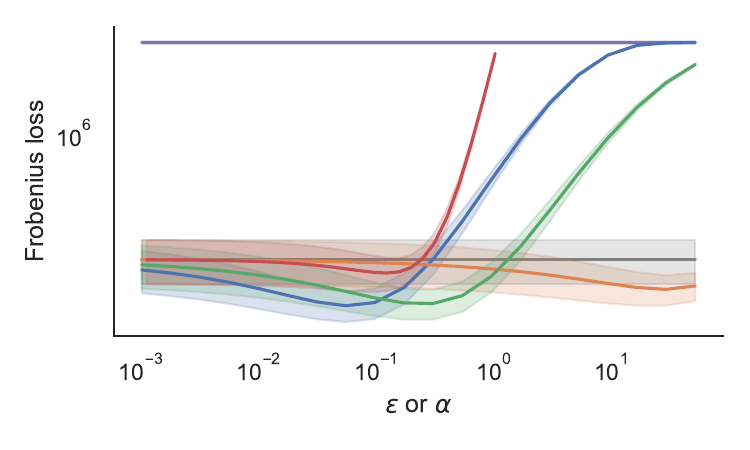}}
    \subfigure[$n=200$, $M=500$]{\includegraphics[width=0.3\textwidth, trim= 0 0 0 0, clip]{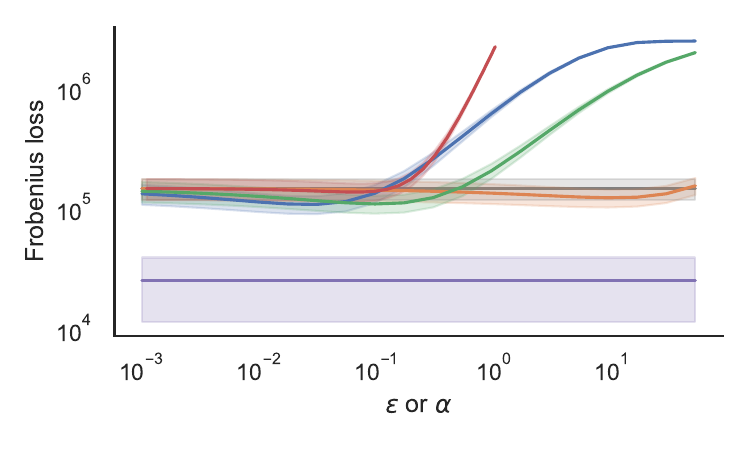}}
    \subfigure[$n=500$, $M=500$]{\includegraphics[width=0.3\textwidth, trim= 0 0 0 0, clip]{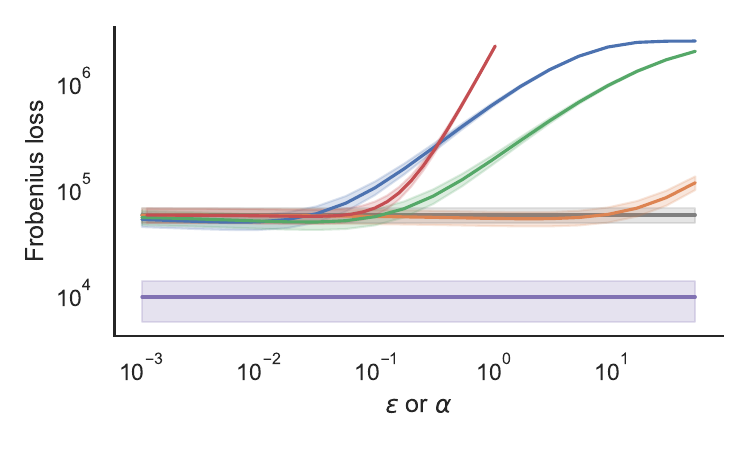}}

    \caption{Frobenius loss of the Kullback-Leibler (blue), Wasserstein (orange), and Fisher-Rao (green) covariance shrinkage estimators and of the linear shrinkage estimator (red) as a function of the underlying hyperparameter (radius~$\eps$ or mixing weight~$\alpha$) for different spike sizes~$M$ and sample sizes~$n$. The sample covariance matrix (gray) and the NLLW estimator (purple) involve no hyperparameters; thus, their Frobenius error is constant.}
    \label{fig:synthetic_rho}
\end{figure}

\subsection{Minimum Variance Portfolio Selection}

We consider the problem of constructing the minimum variance portfolio of $p$~risky assets by solving the convex program $\min_{w \in \R^p} \{ w^\top\cov_0 w: w^\top \mathds{1} = 1\}$ \cite{jagannathan2003risk}, where $\mathds{1}$ denotes the $p$-dimensional vector of ones, and $\cov_0\in\PD^p$ stands for the covariance matrix of the asset returns over the investment horizon. The unique optimal solution of this problem is given by~$w\opt=\cov_0^{-1}\mathds{1}/\mathds{1}^\top\cov_0^{-1}\mathds{1}$. In practice, however, the distribution of the asset returns is unknown, and thus the covariance matrix~$\cov_0$ needs to be estimated from historical data. If the chosen covariance estimator~$\covsa$ is invertible, then it is natural to use $\wh w\opt = \covsa^{-1}\mathds{1}/\mathds{1}^\top\covsa^{-1}\mathds{1}$ as an estimator for the minimum variance portfolio. This approach seems reasonable, provided that the asset return distribution is stationary over the (past) estimation window and the (future) investment horizon.

In the next experiment, we assess the minimum variance portfolios induced by several covariance estimators on the ``48 Industry Portfolios" dataset from the Fama-French online library,\footnote{\url{https://mba.tuck.dartmouth.edu/pages/faculty/ken.french/data_library.html}} which contains monthly returns of 48 portfolios grouped by industry. Specifically, we adopt the following rolling horizon procedure from January~1974 to December~2022. First, we estimate~$\cov_0$ from the historical asset returns within a rolling estimation window of $50$~months and construct the corresponding minimum variance portfolio. We then compute the returns of this portfolio over the $k$~months immediately after the estimation window. Finally, the covariance estimators are recalibrated based on a new estimation window shifted ahead by $k$~months, and the procedure starts afresh. Some covariance estimators involve a hyperparameter, which we calibrate via leave-one-out cross-validation on the $50$~return samples in each estimation window. To this end, we assume that the mixing weight~$\alpha$ of the linear shrinkage estimator with shrinkage target~$\frac{1}{p} \Tr{\covsa} I_p$ ranges from~$10^{-5}$ to~$1$, whereas the radius~$\eps$ of the uncertainty set ranges from~$10^{-5}$ to $10^{2}$ for the Kullback-Leibler shrinkage estimator, from~$10^{-10}$ to $10^{4}$ for the Fisher-Rao covariance shrinkage estimators and from~$10^{-10}$ to $10^{8}$ for the Wasserstein covariance shrinkage estimator. We discretize these parameter ranges into $50$~logarithmically spaced candidate values and select the one that induces the smallest portfolio variance. Given the selected hyperparameter, the covariance estimator corresponding to the current estimation window is computed using all $50$~data points. In the following, we measure the quality of a given covariance estimator by Sharpe ratio and the mean and the standard deviation of the portfolio returns generated by the above rolling horizon procedure over the backtesting period.

Figure~\ref{fig:fama:results} displays the Sharpe ratios, means, and standard deviations  corresponding to different covariance estimators as a function of the length~$k$ of an updating period. All shrinkage estimators produce lower standard deviations and higher Sharpe ratios than the sample covariance matrix. Even though the mean portfolio returns of the sample covariance matrix are---on average---similar to those of the shrinkage estimators, they change rapidly with~$k$, which is troubling for investors who need to select~$k$ before seeing the results of the backtest. The distributionally robust estimators proposed in this paper outperform the other shrinkage estimators in terms of mean returns and Sharpe ratios for most choices of $k$, and the Wasserstein covariance shrinkage estimator results in the globally highest Sharpe ratio. However, the Kullback-Leibler and Fisher-Rao covariance shrinkage estimators result in slightly higher means and standard deviations. 
\begin{figure}
    \centering
    \subfigure[Sharpe ratio]{\includegraphics[height=0.2\textwidth, trim= 0 0 4.2cm 0, clip]{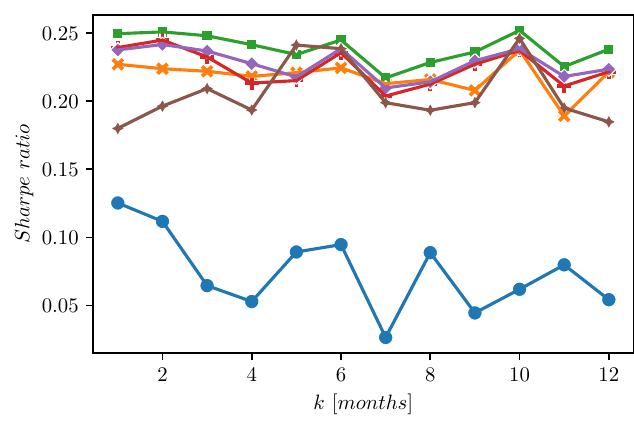}}
    \subfigure[Mean return]{\includegraphics[height=0.2\textwidth, trim= 0 0 4.2cm 0, clip]{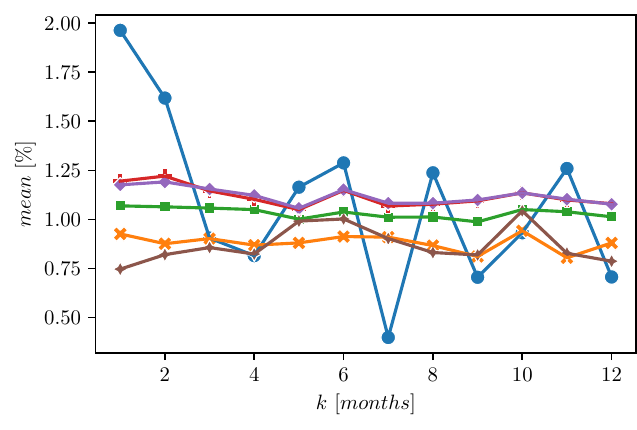}}
    \subfigure[Standard deviation of return]{\includegraphics[height=0.2\textwidth, trim= 0 0 0 0, clip]{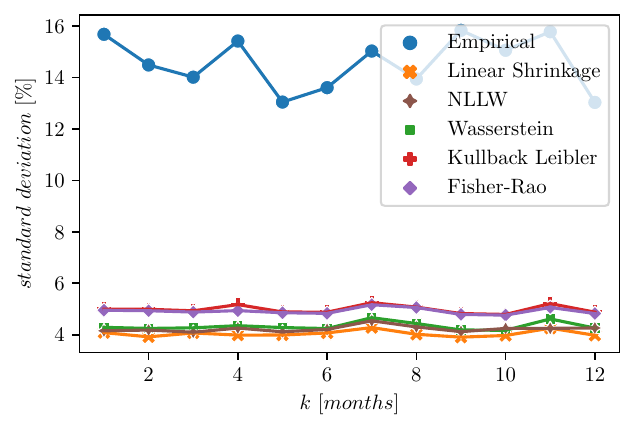}}
    \caption{Sharpe ratios, means, and standard deviations induced by different covariance estimators on the ``48 Industry Portfolios" depending on the length~$k$ of an updating period.}
    \label{fig:fama:results}
\end{figure}

\subsection{Linear and Quadratic Discriminant Analysis}
Quadratic discriminant analysis (QDA) seeks to predict a label~$y\in\{0,1\}$ from a feature vector~$z\in\R^p$ under the assumption that $z|y\sim \mathcal{N}(\mu_y,\cov_y)$ for every~$y\in\{0,1\}$. If the mean~$\mu_y$, the covariance matrix~$\cov_y$ as well as the marginal class probability~$p_y$ are known for all~$y\in\{0,1\}$, then the Bayes-optimal classifier predicts~$y$ as a solution of $\min_{y\in\{0,1\}} (z-\mu_y)\cov_y^{-1}(z-\mu_y)+\log\det(\cov_y)-2\log(p_y)$. Linear discriminant analysis (LDA) operates under the additional assumption that $\cov_0=\cov_1$. The decision boundaries of the resulting LDA and QDA classifiers are thus given by linear hyperplanes and quadratic hypersurfaces, respectively~\cite{hastie2009elements}. 

In the last experiment, we use LDA and QDA to address the breast cancer detection~\cite{breast_cancer_dataset} and banknote authentication~\cite{bankonte_dataset} problems from the UCI Machine Learning Repository. As the distribution governing~$y$ and~$z$ is unobservable, we replace the unknown class probabilities~$p_y$ and class means~$\mu_y$ by the empirical frequencies and sample average estimators, respectively, and we use different shrinkage estimators for the unknown covariance matrices~$\cov_y$. All tested shrinkage estimators use the debiased empirical covariance matrix as the nominal estimator. QDA constructs a separate covariance estimator for each class~$y$ that only uses class-$y$ samples, whereas LDA pools all samples to construct a single joint covariance estimator. 

We use 50\% of each dataset for training and the rest for testing. The hyperparameters~$\varepsilon$ (for the distributionally robust shrinkage estimators) and~$\alpha$ (for the linear shrinkage estimator) are selected by the holdout method with a validation set comprising 20\% of the training data. The quality of a covariance estimator is then measured by the accuracy ({\em i.e.}, the proportion of correct predictions) of the resulting LDA and QDA classifiers. Table~\ref{tab:lda-qda} reports the means and standard errors of the accuracy achieved by different covariance estimators. We observe that shrinking the empirical covariance estimator can improve the performance of LDA and QDA, and that nonlinear shrinkage methods outperform the linear shrinkage method across all experiments. The Kullback-Leibler covariance shrinkage estimator consistently performs well. QDA based on the NLLW estimator attains the highest accuracy on the banknote authentication dataset but performs poorly on the breast cancer dataset. On the other hand, the distributionally robust covariance estimators are consistently on par with or better than the empirical and the linear shrinkage estimator. Note that the best-performing distributionally robust shrinkage estimator changes with the dataset. This highlights the usefulness of our approach, which results in a zoo of complementary covariance shrinkage estimators. 

\begin{table}[t]
\centering
\caption{Mean (standard error) of the LDA and QDA accuracy based on 100 independent permutations of the underlying dataset}
\label{tab:lda-qda}
\resizebox{\linewidth}{!}{
\begin{tabular}{ll|cccccc}
\toprule
 & Dataset      & Empirical    & Linear & NLLW            & Wasserstein       & Kullback-Leibler                & Fisher-Rao                             \\ \midrule
\multirow{2}{*}{LDA} & Banknote     & 0.9751(0.0005) & 0.9754(0.0005)  & 0.9510(0.0011) & 0.9761(0.0005) & \textbf{0.9763(0.0005)} & 0.9759(0.0005)\\
                     &  Cancer & \textbf{0.9520(0.0011)} & 0.9365(0.0015)& 0.8902(0.0015) & \textbf{0.9520(0.0011)} & 0.8874(0.0043) & 0.9515(0.0013)  \\\midrule
\multirow{2}{*}{QDA} & Banknote     & 0.9854(0.0005) & 0.9839(0.0005)& \textbf{0.9877(0.0004)} & 0.9854(0.0005) & 0.9853(0.0005) & 0.9854(0.0005)  \\
                     &  Cancer & 0.9418(0.0012) & 0.8945(0.0027)& 0.6320(0.0052) & 0.9418(0.0012) & \textbf{0.9451(0.0013)} & 0.9414(0.0016)  \\
\bottomrule
\end{tabular}}
\end{table}

\textbf{Acknowledgments.} This research was supported by the Hong Kong Research Grants Council under the GRF project 15304422, by the Swiss National Science Foundation under the NCCR Automation, grant agreement 51NF40\_180545, and by CUHK through the `Improvement on Competitiveness in Hiring New Faculties Funding Scheme' and the CUHK Direct Grant with project number 4055191.

\bibliographystyle{siam}
\bibliography{bibliography}

\newpage

\begin{appendix}
\section*{Appendix}
The appendix is organized as follows. In Appendix~\ref{app:proof-general_CSE}, we prove Theorem~\ref{thm:general_CSE} and derive basic properties of~$\dualvar\opt$ and~$x_i^\star$, which will be used in Appendix~\ref{sec:app-property} to establish the computational, structural and statistical properties of the distributionally robust estimators. Appendices~\ref{app:ass-1-verification} and~\ref{app:verify-ass2c} verify Assumptions~\ref{assu:inf_sup} and~\ref{assu:D_form} for all divergences in Table~\ref{table:structured_divergence}, respectively. The insights of Appendices~\ref{app:ass-1-verification} and~\ref{app:verify-ass2c} are used in Appendix~\ref{app:proofs_section-4} to prove the results of Section~\ref{sec:new_CSE}.

\section{Proof of Theorem~\ref{thm:general_CSE}}\label{app:proof-general_CSE}

\subsection{Proof of Proposition~\ref{prop:vec-equivalence}} \label{sec:app-prop2}

To simplify the subsequent discussions, for any minimization problem designated by~``P,'' say, we use ``$\Minval(\text{P})$,'' ``$\Argmin(\text{P})$'' and ``$\Fea(\text{P})$'' to denote its minimum/infimum, the set of its optimal solutions and its feasible region, respectively.

\begin{proof}[Proof of Proposition~\ref{prop:vec-equivalence}]
Select any $\cov\in \Fea\eqref{eq:matrix} $, and use $\cov = V_\cov\Diag(\lambda(\cov))V_\cov^\top$ to denote its eigenvalue decomposition. By our notational conventions, we have $0 \leq \lambda_1(\cov) \leq \cdots \leq \lambda_p(\cov)$. We then obtain
\begin{equation}\label{ineq:thm:master:proof:1}
\begin{split}
\sum_{i=1}^p d\left(\lambda_i(\cov), \hat{x}_i\right)  = &\, D\left(\Diag(\lambda(\cov)), \Diag(\xsa) \right)   
\le D\left( \Vsa^\top V_\cov \Diag(\lambda(\cov)) V_\cov^\top \Vsa, \Diag(\hat{x}) \right) \\  
=&\, D\left( V_\cov \Diag(\lambda(\cov)) V_\cov^\top, \Vsa \Diag(\hat{x})\Vsa^\top \right)  = D(\cov, \covsa) \le \eps,
\end{split}
\end{equation}
where the first equality follows from Assumption~\ref{assu:D_form}\ref{assu:D_form_iii}, the first inequality follows from Assumption~\ref{assu:D_form}\ref{assu:D_form_iv}, and the second equality follows from Assumption~\ref{assu:D_form}\ref{assu:D_form_ii}.
This implies that $\lambda(\cov) \in \Fea\eqref{eq:vector}$. 

Next, select any $x\in \Fea\eqref{eq:vector}$ such that $\Vsa \Diag(x) \Vsa^\top\in \PSD^p$. We thus have
\begin{equation}\label{ineq:thm:master:proof:2}
\begin{split}
&\, D(\Vsa \Diag(x) \Vsa^\top, \covsa)  = D(\Vsa \Diag(x) \Vsa^\top, \Vsa \Diag(\hat{x}) \Vsa^\top) = D(\Diag(x), \Diag(\hat{x}))=  \sum_{i=1}^p d(x_i, \hat{x}_i)\le \eps,
\end{split}
\end{equation}
where the three equalities follow from the eigenvalue decomposition of $\covsa$, Assumption~\ref{assu:D_form}\ref{assu:D_form_ii} and Assumption~\ref{assu:D_form}\ref{assu:D_form_iii}, respectively.
This implies that $\Vsa \Diag(x) \Vsa^\top \in \Fea\eqref{eq:matrix}$. 
In summary, we have thus shown that problem~\eqref{eq:matrix} is feasible if and only if problem~\eqref{eq:vector} is feasible. This establishes assertion~\ref{prop:master_1}.

As for assertion~\ref{prop:master_3}, assume that $\Argmin\eqref{eq:vector}\neq\emptyset$  for otherwise the claim is trivial. Choose then any $x\opt\in\Argmin\eqref{eq:vector}$, and note that~$\Vsa \Diag({x\opt}) \Vsa^\top \in \Fea\eqref{eq:matrix}$ by virtue of~\eqref{ineq:thm:master:proof:2}. It remains to be shown that $\Vsa \Diag({x\opt}) \Vsa^\top \in \Argmin\eqref{eq:matrix}$. Suppose, for the sake of contradiction, that there is $ \cov'\in \Fea\eqref{eq:matrix}$~with
\[
    \norm{ \cov' }_{\mathrm{F}}^2 < \norm{ \Vsa \Diag({x\opt}) \Vsa^\top }_{\mathrm{F}}^2 ,
\] 
and let $\cov' = V' \Diag(\lambda (\cov')){V'}^\top $ be the eigenvalue decomposition of $\cov'$ for some $V'\in\mathcal{O}_p$. By~\eqref{ineq:thm:master:proof:1}, we then have $\lambda (\cov')\in \Fea\eqref{eq:vector}$, which contradicts the optimality of $x\opt$ in problem~\eqref{eq:vector} because
\[
    \norm{\lambda (\cov')}_2^2 = \norm{ \cov' }_{\mathrm{F}}^2 < \norm{ \Vsa \Diag({x\opt}) \Vsa^\top }_{\mathrm{F}}^2= \norm{x\opt}_2^2 .
\] 
Therefore, $\Vsa \Diag({x\opt}) \Vsa^\top \in \Argmin\eqref{eq:matrix}$. This proves assertion~\ref{prop:master_3}.

As for assertion~\ref{prop:master_4}, assume that $\Argmin\eqref{eq:matrix}\neq\emptyset$ for otherwise the claim is trivial. Choose then any $\cov\opt\in\Argmin\eqref{eq:matrix}$, and note that $\lambda(\cov\opt)\in \Fea\eqref{eq:vector}$ by virtue of~\eqref{ineq:thm:master:proof:1}. It remains to be shown that $\lambda(\cov\opt) \in \Argmin\eqref{eq:vector}$. Suppose, for the sake of contradiction, that there is $x'\in\Fea\eqref{eq:vector}$ with 
\[
    \norm{x'}_2^2 < \norm{\lambda(\cov\opt)}_2^2.
\] 
By~\eqref{ineq:thm:master:proof:2}, we then have $\Vsa \Diag(x') \Vsa^\top \in \Fea\eqref{eq:matrix}$, which contradicts the optimality of $\cov\opt$ in~\eqref{eq:matrix} because
\[ \norm{\Vsa \Diag(x') \Vsa^\top}_{\mathrm{F}}^2 = \norm{x'}_2^2 < \norm{\lambda(\cov\opt)}_2^2  = \norm{\cov\opt}_{\mathrm{F}}^2 .\]
Therefore, $\lambda(\cov\opt) \in \Argmin\eqref{eq:vector}$. This proves assertion~\ref{prop:master_4}.

Finally, in order to prove assertion~\ref{prop:master_5}, we need to show that any $\cov\in \Fea\eqref{eq:matrix} $ corresponds to some $x \in \Fea\eqref{eq:vector}$ with the same objective function value and vice versa. However, this follows in a straightforward manner from the proof of assertion~\ref{prop:master_1}. Details are omitted for brevity.
\end{proof}

\subsection{Proof of Proposition~\ref{prop:gamma-reconstruction}} \label{sec:app-prop3}

The next lemma shows that any solution of problem~\eqref{eq:vector} shrinks $\xsa$ towards the origin. This will imply that our proposed distributionally robust estimators constitute shrinkage estimators. From now on we use $d_b(\cdot)$ as a notational shorthand for the function $d(\cdot, b)$ for any fixed~$b\ge 0$.

\begin{lemma}[Eigenvalue shrinkage]
\label{lem:opt_vector_sol_upper_bound}
If Assumptions~\ref{assu:D_form} and~\ref{assu:data}\ref{assu:data-d} hold and $x\opt$ solves problem~\eqref{eq:vector}, then we have $x\opt_i \in \mathrm{dom}(d_{\xsa_i})$ and $x\opt_i \le \xsa_i$ for all $i = 1,\dots,p$.
\end{lemma}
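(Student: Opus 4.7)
My plan is to argue both claims by a direct swap argument, without invoking the convexity hypothesis of Assumption~\ref{assu:d_convex}. The only ingredients needed are the feasibility of $x\opt$, the non-negativity of $d$ and the identity of indiscernibles (both part of Assumption~\ref{assu:D_form}), together with Assumption~\ref{assu:data}\ref{assu:data-d}, which guarantees $d(\xsa_i,\xsa_i)$ is well-defined (and equals $0$).

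First I would dispense with the domain claim. Since $x\opt$ solves~\eqref{eq:vector}, it is feasible, so $\sum_{i=1}^p d(x\opt_i,\xsa_i)\le \eps<\infty$. By the non-negativity of $d$, each individual term $d(x\opt_i,\xsa_i)$ is a non-negative real number; in particular it is finite, hence $x\opt_i\in \mathrm{dom}(d(\cdot,\xsa_i))$ for every $i=1,\ldots,p$.

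For the upper bound, I would argue by contradiction. Suppose there exists an index $j$ with $x\opt_j>\xsa_j$. Consider the vector $\tilde x\in\R^p_+$ obtained from $x\opt$ by replacing the $j$-th coordinate with $\xsa_j$, i.e.\ $\tilde x_i=x\opt_i$ for $i\ne j$ and $\tilde x_j=\xsa_j$. Assumption~\ref{assu:data}\ref{assu:data-d} together with the identity of indiscernibles yields $d(\tilde x_j,\xsa_j)=d(\xsa_j,\xsa_j)=0$, and consequently
\[
\sum_{i=1}^p d(\tilde x_i,\xsa_i)=\sum_{i\ne j} d(x\opt_i,\xsa_i)\le \sum_{i=1}^p d(x\opt_i,\xsa_i)\le \eps,
\]
where the first inequality uses the non-negativity of $d(x\opt_j,\xsa_j)$. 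Hence $\tilde x$ is feasible in~\eqref{eq:vector}. However, since $x\opt_j>\xsa_j\ge 0$, we have ${x\opt_j}^2>\xsa_j^2$, whence $\|\tilde x\|_2^2=\|x\opt\|_2^2-{x\opt_j}^2+\xsa_j^2<\|x\opt\|_2^2$. This contradicts the optimality of $x\opt$, so the assumption $x\opt_j>\xsa_j$ is untenable and $x\opt_i\le \xsa_i$ for every $i$.

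I do not anticipate any significant obstacle: the argument is entirely elementary and relies only on the bare-bones properties of a divergence together with the fact that $(\xsa_i,\xsa_i)$ lies in the domain of $d$. It is worth noting that neither Assumption~\ref{assu:D_form}\ref{assu:D_form_iv} (the rearrangement property) nor Assumption~\ref{assu:d_convex} is used here, which is consistent with the weaker hypotheses of the lemma.
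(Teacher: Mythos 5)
Your proposal is correct and matches the paper's own proof essentially step for step: the domain claim follows from feasibility plus non-negativity of $d$, and the upper bound follows from the same coordinate-swap argument that replaces $x\opt_j$ with $\xsa_j$, preserves feasibility via $d(\xsa_j,\xsa_j)=0$, and strictly decreases the objective. No gaps.
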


\begin{proof}[Proof of Lemma~\ref{lem:opt_vector_sol_upper_bound}]
Select any $x\opt\in\Argmin\eqref{eq:vector}$. As $x\opt\in\Fea \eqref{eq:vector}$, it is clear that $x\opt_i \in \mathrm{dom}(d_{\xsa_i})$ for all $i=1,\dots,p$.
Next, suppose that $x\opt_j > \hat{x}_j$ for some $j = 1,\dots,p$, and define $\tilde{x}\in \R^p_+$ through 
\begin{equation*}
\tilde{x}_i = \begin{cases}
\hat{x}_j & \text{if } i = j,\\
x\opt_i &\text{if } i\neq j.
\end{cases}
\end{equation*} 
Recall now that if Assumption~\ref{assu:D_form}\ref{assu:D_form_iii} holds, then~$d$ constitutes a spectral divergence on~$\mathbb{R}_+$. Assumption~\ref{assu:data}\ref{assu:data-d} further implies that $(\xsa_j,\xsa_j)\in \mathrm{dom}(d)$. Hence, $d(\hat{x}_j, \hat{x}_j) = 0 < d(x\opt_j, \hat{x}_j) $, which ensures that $\tilde{x}\in \Fea\eqref{eq:vector}$. However, from the construction of~$\tilde{x}$ it is evident that $\norm{\tilde{x}}_2^2< \norm{x\opt}_2^2$, which contradicts the optimality of~$x\opt$ in~\eqref{eq:vector}. Thus, we have $x\opt_i \le \hat{x}_i$ for all $i=1,\dots,p$. This observation completes the proof.
\end{proof}

Lemma~\ref{lem:opt_vector_sol_upper_bound} allows us to prove the existence and uniqueness of the proposed robust covariance estimators.

{
\begin{proposition}[Existence and uniqueness of optimal solutions]\label{prop:opt_sol_existence}
If Assumptions~\ref{assu:D_form}, \ref{assu:data} and~\ref{assu:d_convex} hold, then problems~\eqref{eq:vector} and~\eqref{eq:matrix} admit a unique optimal solution.
In addition, if Assumptions~\ref{assu:inf_sup}, \ref{assu:D_form}, \ref{assu:data} and~\ref{assu:d_convex} hold, then there exists a unique distributionally robust estimator that solves problem~\eqref{eq:dro}.
\end{proposition}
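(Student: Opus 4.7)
The plan is to prove existence and uniqueness for problem~\eqref{eq:vector} first, and then transfer both properties upward to~\eqref{opt:vector-up} via Proposition~\ref{prop:vec_problems_equivalence} and to~\eqref{eq:matrix} via Proposition~\ref{prop:master}. The final assertion about the distributionally robust estimator will then drop out of Proposition~\ref{prop:exist} under Assumption~\ref{assu:inf_sup}.

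For~\eqref{eq:vector}, existence is routine: the vector~$\xsa$ is feasible because Assumption~\ref{assu:data}\ref{assu:data-d} together with the identity of indiscernibles yields $d(\xsa_i,\xsa_i)=0$; the objective is continuous and coercive on~$\R_+^p$; and the map $g(x)=\sum_i d(x_i,\xsa_i)$ is lower semicontinuous by Assumption~\ref{assu:D_form}\ref{assu:D_form_iii}. Weierstrass applied to the compact set $\{x\in\R_+^p:\|x\|_2^2\leq\|\xsa\|_2^2\}\cap\{g\leq\eps\}$ then delivers a minimizer. For uniqueness, Lemma~\ref{lem:opt_vector_sol_upper_bound} confines every minimizer to the compact box $\prod_{i=1}^p[0,\xsa_i]$, on which Assumption~\ref{assu:d_convex} makes each $d(\cdot,\xsa_i)$ convex — trivially so when $\xsa_i=0$, since then Lemma~\ref{lem:opt_vector_sol_upper_bound} forces $x_i=0$. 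The feasible region intersected with this box is therefore convex, and the strict convexity of $\|\cdot\|_2^2$ rules out two distinct minimizers by the usual convex-combination argument.

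Propagation to~\eqref{opt:vector-up} is immediate from Proposition~\ref{prop:vec_problems_equivalence}, and existence for~\eqref{eq:matrix} follows from Proposition~\ref{prop:master}\ref{prop:master_3} by taking the candidate $\Vsa\Diag(x\opt)\Vsa^\top$. The main obstacle is the uniqueness of~\eqref{eq:matrix}: given any minimizer~$\cov\opt$, Proposition~\ref{prop:master}\ref{prop:master_4} already forces $\lambda(\cov\opt)=x\opt$, but the eigenbasis is not yet pinned down. The plan is to first show that the constraint in~\eqref{eq:vector} is tight at~$x\opt$. Indeed, if $g(x\opt)<\eps$, then $x\opt\neq 0$ (else $g(0)=\bar\eps>\eps$ by Assumption~\ref{assu:data}\ref{assu:data-eps}), so some component $x\opt_j>0$, and Lemma~\ref{lem:opt_vector_sol_upper_bound} forces $\xsa_j\geq x\opt_j>0$; the continuity of $d(\cdot,\xsa_j)$ from Assumption~\ref{assu:D_form}\ref{assu:D_form_iii} then allows decreasing $x\opt_j$ slightly while preserving feasibility and strictly reducing the objective, a contradiction. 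With tightness in hand, write $\cov\opt=U\Diag(x\opt)U^\top$ for some $U\in\mathcal{O}(p)$ and chain the orthogonal equivariance, spectrality, and rearrangement parts of Assumption~\ref{assu:D_form} to obtain
\[
\eps=\sum_{i=1}^p d(x\opt_i,\xsa_i)=D\bigl(\Diag(x\opt),\Diag(\xsa)\bigr)\leq D\bigl(\Vsa^\top U\Diag(x\opt)U^\top\Vsa,\Diag(\xsa)\bigr)=D(\cov\opt,\covsa)\leq\eps.
\]
Every inequality is therefore an equality, and the equality clause of Assumption~\ref{assu:D_form}\ref{assu:D_form_iv} forces $\Diag(x\opt)=\Vsa^\top U\Diag(x\opt)U^\top\Vsa$, i.e., $\cov\opt=\Vsa\Diag(x\opt)\Vsa^\top$. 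Under Assumption~\ref{assu:inf_sup}, Proposition~\ref{prop:exist} then identifies the unique minimizer of~\eqref{eq:dro} with this unique~$\cov\opt$, completing the argument.
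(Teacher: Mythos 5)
Your proof is correct and, for the vector problem, follows the same skeleton as the paper's (Weierstrass existence, then convexity of each $d(\cdot,\xsa_i)$ on $[0,\xsa_i]$ plus strict convexity of $\|\cdot\|_2^2$ for uniqueness). Two remarks. First, a repairable slip: you assert that $g(x)=\sum_i d(x_i,\xsa_i)$ is lower semicontinuous on all of $\R_+^p$ by Assumption~\ref{assu:D_form}\ref{assu:D_form_iii}, but that assumption only gives continuity of $d(\cdot,b)$ for $b>0$; when some $\xsa_i=0$ nothing is guaranteed about $d(\cdot,0)$, so the set $\{g\le\eps\}$ need not be closed and your Weierstrass set need not be compact. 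The paper avoids this by passing to the box-constrained problem via Lemma~\ref{lem:opt_vector_sol_upper_bound} \emph{before} invoking Weierstrass, so that the coordinate with $\xsa_i=0$ is pinned to the single point $0$; since you already use that lemma for uniqueness, the same restriction fixes your existence step. Second, and to your credit, you treat the uniqueness of~\eqref{eq:matrix} far more carefully than the paper, which dismisses it as an immediate consequence of Propositions~\ref{prop:exist}, \ref{prop:master} and~\ref{prop:vec_problems_equivalence}; those results only force every minimizer of~\eqref{eq:matrix} to have spectrum $x\opt$ and do not by themselves pin down the eigenbasis. Your argument---showing the divergence constraint is active at $x\opt$, sandwiching $D(\cov\opt,\covsa)$ between $\eps$ and $\eps$, and invoking the equality clause of Assumption~\ref{assu:D_form}\ref{assu:D_form_iv} to conclude $\cov\opt=\Vsa\Diag(x\opt)\Vsa^\top$---is exactly the missing ingredient, and both the tightness step and the rearrangement-equality step check out; it is the same mechanism that underlies inequality~\eqref{ineq:thm:master:proof:1} in the proof of Proposition~\ref{prop:master}, made explicit.
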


\begin{proof}[Proof of Proposition~\ref{prop:opt_sol_existence}]
Suppose first that only Assumptions~\ref{assu:D_form}, \ref{assu:data} and~\ref{assu:d_convex} hold. Lemma~\ref{lem:opt_vector_sol_upper_bound} then implies that problem~\eqref{eq:vector} has the same set of optimal solutions as the following variant of~\eqref{eq:vector} with box constraints
\begin{equation}\label{opt:vector'}\tag{P$'_{\text{Vec}}$}
\begin{array}{cl}
    \Inf{x \in \R^p}  & \norm{ x }_2^2 \\
    \st & \displaystyle \sum_{ i = 1 }^p d\left( x_i, \xsa_i \right)  \le \eps\\
    & 0\le x_i \le \xsa_i \quad \forall i=1,\dots,p.
\end{array}
\end{equation}
Note that problem~\eqref{opt:vector'} is feasible due to Assumption~\ref{assu:data}\ref{assu:data-d}, which posits that $d(\xsa_i, \xsa_i) = 0$ for all $i=1,\dots,p$. Next, we show that the feasible region of~\eqref{opt:vector'} is compact. To this end, note that $d(x_i,\xsa_i)$ is continuous in~$x_i$ on the interval $[0,\xsa_i]$ for every $i=1,\ldots,p$. Indeed, continuity trivially holds if $\xsa_i=0$, in which case~$[0,\xsa_i]$ collapses to a point. Otherwise, if $\xsa_i>0$, then continuity follows from Assumption~\ref{assu:D_form}\ref{assu:D_form_iii}. This readily implies that the feasible region of~\eqref{opt:vector'} is closed and---thanks to the box constraints---also compact. The solvability of problem~\eqref{opt:vector'} thus follows from Weierstrass' maximum theorem, which applies because the objective function is continuous. Assumption~\ref{assu:d_convex} further implies that $d(x_i, \xsa_i)$ is convex in~$x_i$ on~$[0,\xsa_i]$ for all $i=1,\dots,p$, which implies that the feasible region of~\eqref{opt:vector'} is convex. The uniqueness of the optimal solution~$x\opt$ thus follows from the strong convexity of the objective function. This shows that problem~\eqref{eq:vector} has a unique optimal solution. 
The other claims immediately follow from Propositions~\ref{prop:exist} and~\ref{prop:vec-equivalence}.
\end{proof}
}

\begin{proposition}[Solution of problem~\eqref{eq:vector}]
\label{prop:opt_vec_solution}
If Assumptions~\ref{assu:D_form}, \ref{assu:data} and~\ref{assu:d_convex} hold, then the unique minimizer~$x\opt$ of problem~\eqref{eq:vector} has the following properties. If~$\xsa_i = 0$, then~$x\opt_i = 0$, and if $\xsa_i > 0$, then $x\opt_i\in ( 0, \xsa_i)$ and 
\begin{equation}
\label{eq:x_star_optimality_condition}
    0 = 2 x\opt_i + \gamma\opt d_{\xsa_i}' (x\opt_i),
\end{equation}
where $\gamma\opt$ is a solution of the nonlinear equation $\sum_{i = 1}^p d(s(\gamma\opt,\xsa_i), \xsa_i) -\eps = 0$.
\end{proposition}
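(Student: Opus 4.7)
The approach is to apply KKT theory to the equivalent convex box-constrained program~\eqref{opt:vector'} introduced in the proof of Proposition~\ref{prop:opt_sol_existence}, which by Assumption~\ref{assu:d_convex} has a compact convex feasible set, shares its unique minimizer $x\opt$ with~\eqref{eq:vector}, and contains the strictly feasible point $x = \xsa$ for the divergence constraint (since $d(\xsa_i,\xsa_i) = 0 < \eps$ by Assumptions~\ref{assu:data}\ref{assu:data-d} and~\ref{assu:data}\ref{assu:data-eps}). Slater's condition therefore holds and KKT is both necessary and sufficient for optimality. The case $\xsa_i = 0$ is already handled by Lemma~\ref{lem:opt_vector_sol_upper_bound}, which forces $0 \le x\opt_i \le 0$, so I fix $i$ with $\xsa_i > 0$ from now on.

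Introducing multipliers $\gamma \ge 0$ on the divergence constraint, $\mu_i \ge 0$ on $x_i \ge 0$, and $\nu_i \ge 0$ on $x_i \le \xsa_i$, stationarity at the optimum reads
\[ 2x\opt_i + \gamma\opt d'_{\xsa_i}(x\opt_i) - \mu_i + \nu_i = 0. \]
Two elementary properties of $d_{\xsa_i}$ drive the boundary analysis. Since $d$ is non-negative and satisfies the identity of indiscernibles, $\xsa_i$ is the unique minimizer of $d_{\xsa_i}$ on $\mathbb{R}_+$, so Assumption~\ref{assu:d_convex} yields $d'_{\xsa_i}(\xsa_i) = 0$; and convexity of $d_{\xsa_i}$ on $[0,\xsa_i]$ together with this vanishing at the right endpoint gives $d'_{\xsa_i}(a) \le 0$ for every $a \in [0,\xsa_i]$.

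The core of the proof is to exclude $x\opt_i \in \{0,\xsa_i\}$ and the degenerate case $\gamma\opt = 0$. If $x\opt_i = \xsa_i$, complementary slackness sets $\mu_i = 0$ and stationarity forces $\nu_i = -2\xsa_i < 0$, contradicting dual feasibility. If $\gamma\opt = 0$, the same case analysis applied coordinate by coordinate forces $x\opt = 0$, whence feasibility would require $\sum_i d(0,\xsa_i) \le \eps$, contradicting Assumption~\ref{assu:data}\ref{assu:data-eps}. Hence $\gamma\opt > 0$. Now assume $x\opt_i = 0$: then $\mu_i = \gamma\opt d'_{\xsa_i}(0)$, and combining $\mu_i \ge 0$ with $d'_{\xsa_i}(0) \le 0$ pins $d'_{\xsa_i}(0) = 0$. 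Since $d'_{\xsa_i}$ is non-decreasing on $[0,\xsa_i]$ and also vanishes at $\xsa_i$, this forces $d'_{\xsa_i} \equiv 0$ on $[0,\xsa_i]$, so $d_{\xsa_i}$ is constant there, contradicting $d_{\xsa_i}(\xsa_i) = 0 \ne d_{\xsa_i}(0)$ via the identity of indiscernibles. Therefore $x\opt_i \in (0,\xsa_i)$, both box multipliers $\mu_i$ and $\nu_i$ vanish, and stationarity collapses to equation~\eqref{eq:x_star_optimality_condition}.

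Finally, since $\gamma\opt > 0$, complementary slackness on the divergence constraint yields $\sum_i d(x\opt_i,\xsa_i) = \eps$; substituting $x\opt_i = s(\gamma\opt,\xsa_i)$ via definition~\eqref{eq:s} produces the desired equation $\sum_{i=1}^p d(s(\gamma\opt,\xsa_i),\xsa_i) - \eps = 0$. The main obstacle is the third paragraph's simultaneous exclusion of $x\opt_i = 0$, $x\opt_i = \xsa_i$, and $\gamma\opt = 0$: the interplay between the convexity of $d_{\xsa_i}$ on $[0,\xsa_i]$, the vanishing of its derivative at the right endpoint, the identity of indiscernibles, and the strict positivity of $\eps$ must all be deployed in concert to rule out each degenerate configuration.
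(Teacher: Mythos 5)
Your overall route is the same as the paper's: both proofs pass to the reduced convex program with box constraints, invoke Lagrangian duality (the paper via Rockafellar's ordinary-convex-program machinery, you via Slater plus KKT), establish $\gamma\opt>0$ by showing that $\gamma\opt=0$ would force $x\opt=0$ in contradiction with Assumption~\ref{assu:data}\ref{assu:data-eps}, and then rule out the boundary values $x\opt_i\in\{0,\xsa_i\}$ so that the unconstrained first-order condition~\eqref{eq:x_star_optimality_condition} and complementary slackness deliver the claim. Your exclusions of $x\opt_i=\xsa_i$ and of $\gamma\opt=0$, and the final complementary-slackness step, are sound.

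There is, however, a genuine gap in the exclusion of $x\opt_i=0$. You write the stationarity identity $\mu_i=\gamma\opt d_{\xsa_i}'(0)$ and argue with the sign of $d_{\xsa_i}'(0)$; but Assumption~\ref{assu:d_convex} only guarantees differentiability of $d_{\xsa_i}$ on $\R_{++}$, so $d_{\xsa_i}'(0)$ need not exist as a finite number (and your earlier claim that $d_{\xsa_i}'(a)\le 0$ for \emph{every} $a\in[0,\xsa_i]$ already presupposes that it does). For the Wasserstein generator $d(a,b)=a+b-2\sqrt{ab}$ the point $a=0$ is feasible, since $d(0,b)=b<\infty$, yet $d_b'(0^+)=-\infty$, so the stationarity equation you rely on simply does not exist at $x\opt_i=0$; for the KL, Fisher--Rao and Stein-type generators one instead has $d_b(0)=+\infty$, a case you also do not address (there $x\opt_i=0$ is excluded trivially by feasibility). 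The conclusion survives --- an infinitely negative one-sided slope at $0$ makes $0$ even less attractive --- but the argument must be restated, e.g.\ in subgradient form, or, as the paper does, by observing that $x\opt_i$ minimizes $x_i^2+\gamma\opt d(x_i,\xsa_i)$ over $[0,\xsa_i]$ and showing that small $a>0$ strictly beats $a=0$ using the bound $d_{\xsa_i}'(a)\le -d(a,\xsa_i)/(\xsa_i-a)$ of Lemma~\ref{lem:negative_subgradient}, which is evaluated at positive $a$ only.
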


The following lemma shows that $d_b$ is strictly decreasing on $[0,b]$, which will be used to prove Proposition~\ref{prop:opt_vec_solution}.
\begin{lemma}[Derivative of $d_b$]
\label{lem:negative_subgradient}
If Assumptions~\ref{assu:D_form} and~\ref{assu:d_convex} hold, then we have
\[
    d_b' (a) \le -\frac{d(a,b)}{b - a} < -\frac{d(a,b)}{b } < 0 \quad \forall a\in (0,b), ~\forall b>0.
\]
\end{lemma}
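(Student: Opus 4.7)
The plan is to exploit convexity of $d_b$ on $[0,b]$ together with the boundary value $d_b(b)=0$. Since Assumption~\ref{assu:D_form} guarantees the identity of indiscernibles, we have $d_b(b)=d(b,b)=0$; since $d$ is non-negative and vanishes only on the diagonal, we also get $d(a,b)>0$ for every $a\in(0,b)$. Assumption~\ref{assu:d_convex} supplies differentiability of $d_b$ on $\mathbb{R}_{++}$ and convexity on $[0,b]$, which is exactly what we need to compare $d_b$ with its tangent line at $a$.

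The first step is to apply the standard subtangent inequality for convex functions at the point $a\in(0,b)$, evaluated at $b$: since $d_b$ is convex on $[0,b]$,
\[
d_b(b)\;\ge\;d_b(a)+d_b'(a)(b-a).
\]
Substituting $d_b(b)=0$ and dividing by the strictly positive quantity $b-a>0$ yields $d_b'(a)\le -d(a,b)/(b-a)$, which is the first of the three claimed inequalities. This is the main (and essentially only) nontrivial step, and it works without the convexity of $d_b$ on any larger interval than $[0,b]$; in particular Assumption~\ref{assu:d_convex} gives exactly what is needed here.

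The remaining two inequalities are elementary and do not require any additional hypothesis beyond $d(a,b)>0$ and $0<b-a<b$. Indeed, $b-a<b$ forces $1/(b-a)>1/b>0$, and multiplying by the strictly positive quantity $d(a,b)$ and negating reverses the chain to $-d(a,b)/(b-a)<-d(a,b)/b<0$. Chaining this with the convexity bound from the previous step gives the full display in the statement.

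I do not anticipate any real obstacle: the argument is a one-line application of the subtangent inequality plus a pair of elementary sign manipulations. The only point that merits a brief comment is that the subtangent inequality requires convexity only on the closed interval $[0,b]$ containing both the linearization point $a$ and the evaluation point $b$, which is precisely the convexity guaranteed by Assumption~\ref{assu:d_convex}; we do not need $d_b$ to be convex anywhere beyond $b$, nor do we need any smoothness or convexity of $d$ in its second argument.
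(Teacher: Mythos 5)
Your proof is correct and follows essentially the same route as the paper's: the subtangent inequality $0 = d_b(b) \ge d_b(a) + d_b'(a)(b-a)$ from convexity of $d_b$ on $[0,b]$, followed by rearrangement; your explicit remark that $d(a,b)>0$ on $(0,b)$ (via the identity of indiscernibles) is the right justification for the strict inequalities, which the paper leaves implicit.
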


\begin{proof}[Proof of Lemma~\ref{lem:negative_subgradient}]
Select any $b>0$. As $d(\cdot,b)$ is finite and convex on $[0,b]$ thanks to Assumption~\ref{assu:d_convex}, we have
\[ 
    0 = d(b, b) \ge d(a, b) + (b -a) \,d_b' (a)\quad \forall a\in(0,b).
\]
The desired inequality then follows from an elementary rearrangement.
\end{proof}

\begin{proof}[Proof of Proposition~\ref{prop:opt_vec_solution}] 
Lemma~\ref{lem:opt_vector_sol_upper_bound} allows us to rewrite problem~\eqref{eq:vector} equivalently as
\begin{equation}
\label{opt:vector''}\tag{P$''_{\text{Vec}}$}
    \begin{array}{cl}
    \Min{x\in \mathcal{C}}  & \norm{ x }_2^2 \\ 
    \st & \displaystyle \sum_{ i = 1 }^p d\left( x_i, \xsa_i \right)  \le \eps,
    \end{array}
\end{equation}
where $\mathcal{C} = \mathcal{C}_1\times\cdots \times \mathcal{C}_p$ with $\mathcal{C}_i = [0,\xsa_i]\cap \mathrm{dom}(d_{\xsa_i})$ for each $i = 1,\dots,p$. 
Note that the objective and constraint functions adopt finite values on $\mathcal{C}$. { By Proposition~\ref{prop:opt_sol_existence} and Lemma~\ref{lem:opt_vector_sol_upper_bound}, problem~\eqref{opt:vector''} has a unique minimizer~$x\opt$ satisfying $x\opt_i = 0$ for all $i$ with $\xsa_i  = 0$.} For such indices~$i$, $d (0, 0) = d(\xsa_i, \xsa_i) = 0$ by Assumption~\ref{assu:data}\ref{assu:data-d}. By removing the corresponding decision variables from~\eqref{opt:vector''} and focusing on the optimization problem in the remaining variables, we can therefore assume without loss of generality that~$\xsa_i > 0$ for all $i = 1,\dots, p$. Hence, problem~\eqref{opt:vector''} can be viewed as an ordinary convex program in the sense of \cite[Section~28]{ref:rockafellar1997convex}.

Following \cite[Section~28]{ref:rockafellar1997convex}, we define the Lagrangian $L:\mathbb{R}\times \mathbb{R}^p \rightarrow\overline{\mathbb R}$ of problem~\eqref{opt:vector''} through
\begin{equation*}
    L( \gamma, x ) = 
    \begin{cases}
        \|x\|_2^2 + \gamma (\sum_{i=1}^p d (x_i , \xsa_i) - \eps) & \text{if } x\in \mathcal{C}, \gamma\ge 0, \\
        -\infty & \text{if } x\in \mathcal{C}, \gamma < 0,\\
        +\infty & \text{if } x\not\in \mathcal{C}.
    \end{cases}
\end{equation*}
By~\cite[Corollary~28.2.1 and Theorem~28.3]{ref:rockafellar1997convex}, problem~\eqref{opt:vector''} is thus equivalent to the minimax problem
\begin{align*}
    \min_{x\in \mathbb{R}^p} \sup_{\gamma \in \mathbb{R}} ~ L(\gamma, x)  =  \max_{\gamma \in \mathbb{R}} \min_{x\in \mathbb{R}^p} ~ L(\gamma, x).
\end{align*}
Specifically, the dual maximization problem on the right-hand side is solvable, and every maximizer~$\gamma\opt \ge 0$ gives rise to a saddle point $(\gamma\opt, x\opt)$ of the minimax problem.
Next, we prove that $\gamma\opt > 0$. Suppose for the sake of contradiction that $\gamma\opt = 0 $. Since $x\opt\in \mathcal{C}$, we find $L (\gamma\opt, x\opt) = L(0, x\opt) = \| x\opt \|_2^2$.  
If~$x\opt_i > 0 $ for some~$i$,~then 
\begin{equation*}
    0 < \| x\opt \|_2^2 = L(0, x\opt) \le L(0, x) = \| x \|_2^2\quad \forall x\in \mathcal{C},
\end{equation*}
where the second inequality holds because $(0, x\opt)$ is a saddle point. However, the discussion after Assumption~\ref{assu:d_convex} implies that $\mathrm{dom}(d_{\xsa_i}) $ either equals $\mathbb{R}_+$ or $\mathbb{R}_{++}$ for every $i = 1,\dots,p$. Hence, we have $\prod_{ i =1}^p (0,\xsa_i] \subseteq \mathcal{C}$, that is, $\mathcal C$ contains points that are arbitrarily close to~$0$. This leads to the contradiction
\[ 
    0 = \inf_{x\in\mathcal{C}} \|x\|_2^2 \ge \|x\opt \|_2^2 > 0.
\]
We may thus conclude that if~$\gamma\opt = 0$, then~$x_i\opt = 0$ for all $i$, that is, $x\opt=0$. However, this contradicts Assumption~\ref{assu:data}\ref{assu:data-eps}, which implies that $0\not\in \Fea\eqref{eq:vector}$. In summary, this shows that~$\gamma\opt > 0$. 

Next, we note that for any dual optimal solution $\gamma\opt > 0$, the minimization problem
\begin{equation}
    \label{opt:min_x}
    \min_{x\in\mathbb{R}^p } L (\gamma\opt, x) = \min_{ x\in \mathcal{C} } \|x\|_2^2 + \gamma\opt \left(\sum_{i=1}^p d(x_i, \xsa_i) - \eps\right)
\end{equation}
admits a unique optimal solution, and by \cite[Corollary~28.1.1]{ref:rockafellar1997convex} this minimizer must coincide with the unique optimal solution~$x\opt$ of problem~\eqref{opt:vector''}. Given~$\gamma\opt$, we can thus solve~\eqref{opt:min_x} instead of~\eqref{opt:vector''}. This is attractive from a computational point of view because~$\mathcal C$ is rectangular, whereby problem~\eqref{opt:min_x} can be simplified to 
\begin{align*}
     -\eps \gamma\opt  +  \sum_{i=1}^p \min_{x_i\in\mathcal{C}_i }\left\lbrace  x_i^2 + \gamma\opt d(x_i, \xsa_i)  \right\rbrace =  -\eps \gamma\opt  +  \sum_{i=1}^p \min_{ x_i \in [0,\xsa_i]}\left\lbrace  x_i^2 + \gamma\opt d(x_i, \xsa_i)  \right\rbrace .
\end{align*} 
Therefore, it suffices to solve the following simple univariate minimization problem for each $i=1,\ldots, p$.
\begin{equation}
    \label{opt:inner_min}
    \min_{ x_i \in [0,\xsa_i]}~ x_i^2 + \gamma\opt d(x_i, \xsa_i) 
\end{equation}
If $\xsa_i = 0$, then $(0,0) \in \mathrm{dom}(d)$ by Assumption~\ref{assu:data}\ref{assu:data-d}, and hence $d(0,0) = d(\xsa_i, \xsa_i) = 0$. In this case, $x\opt_i = 0$ is the only feasible---and thus unique optimal---solution of~\eqref{opt:inner_min}. Assume next that $\xsa_i > 0$. In this case we need to prove that $x\opt_i$ falls within the open interval~$(0,\xsa_i)$ and satisfies \eqref{eq:x_star_optimality_condition}. We will first show that~$x\opt_i > 0$. From the discussion after Assumption~\ref{assu:d_convex} we know that~$d_{x\opt_i}$ can evaluate to~$+\infty$ only at~$0$.  If~$d_{\xsa_i} (0) = +\infty$, then we trivially have $x\opt_i > 0$. Assume next that~$d_{\xsa_i} (0) < +\infty$. 
By Assumption~\ref{assu:D_form}\ref{assu:D_form_iii}, $d_{\xsa_i} $ is continuous and $d_{\xsa_i}(0) > 0$. There exists a threshold~$\delta > 0$ such that $d_{\xsa_i} (a) \ge \delta$ for all sufficiently small $a \in [0, \xsa_i]$. In addition, as the function $a^2 + \gamma\opt d(a,\xsa_i)$ is convex and differentiable in~$a$ by virtue of Assumption~\ref{assu:d_convex}, we have 
\begin{align*}
    0^2 + \gamma\opt d(0,\xsa_i) & \ge a^2 + \gamma\opt d(a,\xsa_i) + (2a + \gamma\opt d_{\xsa_i}'(a) ) (0 - a)  \\
    & > a^2 + \gamma\opt d(a,\xsa_i) - 2a^2 + \frac{a \gamma\opt d(a,\xsa_i)}{\xsa_i } \\
    & \ge a^2 + \gamma\opt d(a,\xsa_i) - 2a^2 + \frac{a \gamma\opt \delta}{\xsa_i } 
\end{align*}
for all sufficiently small~$a\geq 0$. Here, the second inequality follows from Lemma~\ref{lem:negative_subgradient}, and the third inequality holds because~$d_{\xsa_i} (a) \ge \delta$ for all sufficiently small~$a\geq 0$. This reasoning implies that 
\begin{equation}
    \label{ineq:a>0}
    \gamma\opt d(0,\xsa_i) > a^2 + \gamma\opt d(a,\xsa_i) - 2a^2 + \frac{a \gamma\opt \delta}{\xsa_i } > a^2 + \gamma\opt d(a,\xsa_i)
\end{equation}
for all sufficiently small~$a\geq 0$. Thus, small $a > 0$ are strictly preferable to~$0$, that is, $x\opt_i > 0$.

Next, we prove that~$x\opt_i < \xsa_i$. As the differentiable function~$d_b(a)$ is non-negative and attains its minimum~$0$ at~$a = b$, we may conclude that its derivative~$d'_b (a)$ converges to~$0$ as~$a$ tends to~$b$. For any $a < b$ sufficiently close to~$b$ we thus have $(b-a) (2a + \gamma\opt d'_b(a)) > 0$. As $a^2 + \gamma\opt d(a,b)$ is convex in~$a$ on~$[0,b]$, this ensures that
\[ 
    b^2 + \gamma\opt d(b,b) \ge a^2 + \gamma\opt d(a, b) + (b-a) (2a + \gamma\opt d'_b(a) ) > a^2 + \gamma\opt d(a, b). 
\]
Hence, any $a<b$ sufficiently close to~$b$ is strictly preferable to~$b$. Setting~$b=\xsa_i$, we thus find~$x\opt_i < \xsa_i$.

Finally, note that since~$x\opt_i \in (0, \xsa_i)$, the constraints of problem~\eqref{opt:inner_min} are not binding at optimality. Thus, the minimizer of~\eqref{opt:inner_min}
is uniquely determined by the problem's first-order optimality condition~\eqref{eq:x_star_optimality_condition}.

It remains to be shown that~$\gamma\opt$ is unique. As~$0 \not\in \Fea\eqref{eq:vector}$ thanks to Assumption~\ref{assu:data}\ref{assu:data-eps}, there exists at least one $i = 1,\dots,p$ with $x\opt_i > 0$, and hence $\xsa_i > 0$. Since $d_{\xsa_i}$ is differentiable on $\R_{++}$, equation~\eqref{eq:x_star_optimality_condition} implies
\[ 
    \gamma\opt = -\frac{2x\opt_i}{d'_{\xsa_i} (x\opt_i)}. 
\]
Hence, $\gamma\opt$ is unique because $x\opt_i$ is unique. Note also that~$\gamma\opt$ is the Lagrange multiplier associated with the constraint $\sum_{i=1}^p d(x_i, \xsa_i ) \le \eps$ in problem~\eqref{opt:vector''}. As strong duality holds and~$\gamma\opt>0$, we have
\[ 
    \sum_{i=1}^p d (x\opt_i, \xsa_i) - \eps = 0
\]
by complementary slackness.
Using the definition~\eqref{eq:s} of the eigenvalue map $s$, we then obtain
\[ \sum_{i = 1}^p d(s(\gamma\opt,\xsa_i), \xsa_i) -\eps = 0 .\]
This observation completes the proof. 
\end{proof}

\subsubsection{{Properties of $s$ and $\gamma\opt$}}\label{sec:app-basic-gamma-x}
We first provide a detailed analysis of the nonlinear equation that defines the eigenvalue map~$s$.

\begin{lemma}[Properties of $s$]
\label{lem:a}
If Assumptions~\ref{assu:D_form} and~\ref{assu:d_convex} hold, then the the following hold.
\begin{enumerate}[label=(\roman*)]
\item\label{lem:a-3} If $\gamma > 0$ and $b >0$, then the equation $0 = 2a + \gamma d_b'(a)$ admits a unique solution in $(0,b)$. Hence, the eigenvalue map $s(\gamma, b)$ is well-defined on $\R_+^2$.
\item\label{lem:a-1}  If $b>0$, then $ s_b(\gamma)=s(\gamma, b)$ is continuous and strictly increasing on $\R_+$ and differentiable on $\mathbb{R}_{++}$.
\item\label{lem:a-2} If $b>0$, then $\lim_{\gamma \downarrow 0} s_b(\gamma) = 0$ and $\lim_{\gamma \to \infty} s_b(\gamma) = b$.
\end{enumerate} 
\end{lemma}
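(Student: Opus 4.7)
My plan is to address the three claims in sequence, since each builds on the previous one. The core tool will be the convexity of $d_b$ on $[0,b]$ combined with the lower bound on $d_b'$ from Lemma~\ref{lem:negative_subgradient}.

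For part~\ref{lem:a-3}, fix $\gamma>0$ and $b>0$, and consider the function $h(a) = 2a + \gamma\, d_b'(a)$. By Assumption~\ref{assu:d_convex}, $d_b$ is $C^2$ on $\mathbb R_{++}$ and convex on $[0,b]$, so $h$ is continuous on $(0,b]$ with $h'(a) = 2 + \gamma\, d_b''(a) \ge 2 > 0$; hence $h$ is strictly increasing on $(0,b]$, which already gives uniqueness. For existence via IVT, I will show $h(b)>0$ and that $h(a)<0$ for small enough $a\in(0,b)$. The first follows because $d_b\ge 0$ attains its minimum~$0$ at the interior point~$b$, so $d_b'(b)=0$ and $h(b)=2b>0$. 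For the second, Lemma~\ref{lem:negative_subgradient} gives $d_b'(a)\le -d(a,b)/(b-a)$ for $a\in(0,b)$. Since $d(a,b)$ is continuous in $a$ on the (relative) domain by Assumption~\ref{assu:D_form}\ref{assu:D_form_iii} and $d(0,b)>0$ by the identity of indiscernibles (interpreting $+\infty$ as simply making the bound stronger), there exists $\delta>0$ with $d(a,b)\ge d(0,b)/2$ whenever $a\in(0,\delta)\cap(0,b)$; thus $h(a) \le 2a - \gamma d(0,b)/(2b) < 0$ for all sufficiently small $a$. This establishes well-definedness on $\gamma,b>0$, and the two remaining cases in the definition~\eqref{eq:s} cover $\gamma=0$ and $b=0$.

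For part~\ref{lem:a-1}, I will apply the implicit function theorem to $H(\gamma,a)=2a+\gamma\,d_b'(a)$. On $\mathbb R_{++}\times(0,b)$ both partials exist and are continuous, with $\partial_a H = 2+\gamma\,d_b''(a)\ge 2>0$, so the map $\gamma\mapsto s_b(\gamma)$ defined implicitly by $H(\gamma,s_b(\gamma))=0$ is $C^1$ on $\mathbb R_{++}$ and satisfies
\[
s_b'(\gamma) \;=\; -\,\frac{d_b'(s_b(\gamma))}{\,2+\gamma\,d_b''(s_b(\gamma))\,}\;>\;0,
\]
where positivity uses Lemma~\ref{lem:negative_subgradient} (so $d_b'(s_b(\gamma))<0$ because $s_b(\gamma)\in(0,b)$). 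Hence $s_b$ is strictly increasing on $\mathbb R_{++}$. For continuity at $0$, note $s_b$ is monotone and bounded in $(0,b)$, so the limit $a_0 := \lim_{\gamma\downarrow 0} s_b(\gamma)\in[0,b]$ exists. If $a_0>0$, then $d_b'$ is continuous at $a_0$, and passing to the limit in $2\,s_b(\gamma) = -\gamma\,d_b'(s_b(\gamma))$ gives $2a_0=0$, a contradiction. Hence $a_0=0=s_b(0)$, establishing continuity on all of $\mathbb R_+$ and completing part~\ref{lem:a-1} (the monotone extension from $\mathbb R_{++}$ to $\mathbb R_+$ is automatic).

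For part~\ref{lem:a-2}, the first limit is exactly the argument just given. For the second, monotonicity and the bound $s_b(\gamma)\in(0,b)$ ensure $a_\infty := \lim_{\gamma\to\infty} s_b(\gamma)\in(0,b]$ exists (and is positive since $s_b$ is strictly increasing with $s_b(1)>0$). Rewriting the defining equation as $d_b'(s_b(\gamma)) = -2\,s_b(\gamma)/\gamma$ and taking $\gamma\to\infty$, the right-hand side tends to $0$. If $a_\infty<b$, then $a_\infty\in(0,b)$ and Lemma~\ref{lem:negative_subgradient} forces $d_b'(a_\infty)<0$, contradicting continuity of $d_b'$ at $a_\infty$; hence $a_\infty=b$.

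The main obstacle is the existence step in part~\ref{lem:a-3}, because the generator $d_b$ may blow up at $0$ (as in the KL, Fisher–Rao and Stein divergences), so neither $d_b'$ nor its sign near $0$ is directly controlled by Assumption~\ref{assu:d_convex} alone. Invoking Lemma~\ref{lem:negative_subgradient} to get a uniform negative upper bound on $d_b'$ near $0$ is what cleanly bypasses this issue and makes the IVT argument go through uniformly across all divergences satisfying our assumptions.
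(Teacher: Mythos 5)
Your proof is correct, and for parts~\ref{lem:a-1} and~\ref{lem:a-2} it follows essentially the same route as the paper: the implicit function theorem with $\partial_a H = 2+\gamma d_b''\ge 2$, the explicit formula for $s_b'(\gamma)$ whose positivity comes from Lemma~\ref{lem:negative_subgradient}, and the limit arguments at $\gamma\downarrow 0$ and $\gamma\to\infty$ using $d_b'(s_b(\gamma))=-2s_b(\gamma)/\gamma$ together with the fact that $d_b'$ has no root in $(0,b)$. The only genuine divergence is in part~\ref{lem:a-3}: the paper does not reprove existence and uniqueness there but points back to the proof of Proposition~\ref{prop:opt_vec_solution}, where the root of $0=2a+\gamma d_b'(a)$ arises as the unique interior minimizer of the strictly convex function $a\mapsto a^2+\gamma d(a,b)$ on $[0,b]$ (small $a>0$ beat $a=0$ via Lemma~\ref{lem:negative_subgradient}, and $a$ near $b$ beats $a=b$ via $d_b'(b)=0$). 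You instead run the intermediate value theorem directly on $h(a)=2a+\gamma d_b'(a)$, using exactly the same two boundary estimates; this is a self-contained repackaging of the same mathematics, and your handling of the case $d(0,b)=+\infty$ (where $d(a,b)\to\infty$ as $a\downarrow 0$ by the continuity convention, making $h$ tend to $-\infty$) is sound. Your limit-passing argument for continuity of $s_b$ at $\gamma=0$ is also a slightly cleaner substitute for the paper's explicit $\epsilon$--$\delta$ estimate. No gaps.
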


Recall that, for and fixed~$\gamma>0$, the function $s_\gamma(b)$ shrinks the input~$b$ in the sense that~$s_\gamma(b)\leq b$. Lemma~\ref{lem:a} further shows that, for any fixed~$b>0$, $s_b(\gamma)$ strictly increases from~$0$ to~$b$ as~$\gamma$ grows. Therefore, we can interpret $\gamma$ as an inverse shrinkage intensity.

\begin{proof}[Proof of Lemma~\ref{lem:a}] 
Assertion~\ref{lem:a-3} follows directly from the proof of Proposition~\ref{prop:opt_vec_solution} and is thus not repeated.

Next, we prove assertion~\ref{lem:a-1}.
Recall from Assumption~\ref{assu:d_convex} that~$d_b$ is twice continuously differentiable on~$\R_{++}$. Thus, the function $H(\gamma, a) = 2a + \gamma d_b' (a)$ is continuously differentiable on $\mathbb{R}_{++}^2$.
Assumption~\ref{assu:d_convex} further stipulates that~$d_b$ is convex on~$[0,b]$. Hence, $H(\gamma,a)$ is strictly increasing in~$a$ in the sense that 
\[
    \frac{\partial H(\gamma,a)}{\partial a} = 2 + \gamma d_{b}''(a)\ge 2 > 0 \quad \forall a\in (0,b]. 
\]
As~$s_b(\gamma)\in (0,b)$ by assertion~\ref{lem:a-3}, the implicit function theorem ensures that~$s_b(\gamma)$ is differentiable (and in particular continuous) at any~$\gamma > 0$. It remains to be shown that~$s_b(\gamma)$ is continuous at~$0$. Given any~$\epsilon > 0$ and as~$s_b(0)=0$ by definition, we thus need to show that there is $\delta > 0$ such that $s_b(\gamma) \le \epsilon$ for all~$\gamma\in[0,\delta]$. As $s_b (\gamma) \in (0,b)$ for all $\gamma, b > 0$, we may assume without loss of generality that~$\epsilon \in (0,b)$. By Lemma~\ref{lem:negative_subgradient}, we have~$d_b' (\epsilon) < 0$, which guarantees that~$\delta = -2\epsilon / d_b' (\epsilon)$ is positive. For any~$\gamma\in[0,\delta]$, we thus obtain 
\begin{align*}
    s_b(\gamma) = -\frac{\gamma d_b'(s_b(\gamma))}{2} \le \frac{\epsilon d_b'(s_b(\gamma))}{d_b' (\epsilon)},
\end{align*}
where the equality follows from the definition of~$s_b$ in~\eqref{eq:s}, and the inequality follows from the definition of~$\delta$. This confirms that~$s_b(\gamma)\leq \varepsilon$. Suppose to the contrary that~$s_b (\gamma) > \epsilon$. Then the above inequality implies $ d_b'(s_b(\gamma)) < d_b' (\epsilon)$. As~$d_b'$ is non-decreasing by virtue of the convexity of~$d_b$, this in turn leads to the contradiction~$s_b(\gamma)> \varepsilon$. Thus, $s_b(\gamma)\leq\varepsilon$ for all $\gamma\in[0,\delta]$. We conclude that~$s_b(\gamma)$ is indeed continuous at~$0$.

To show that~$s_b (\gamma)$ is strictly increasing on~$\mathbb{R}_{++}$, recall that $s_b (\gamma)$ is differentiable on $\mathbb{R}_{++}$. We may thus differentiate both sides of the equation $ 0 = 2s_b(\gamma) + \gamma d_b' (s_b(\gamma))$ with respect to $\gamma$ to obtain
\[
    0 = 2 s_b'(\gamma) + d_b'(s_b(\gamma) ) + \gamma d_b'' (s_b(\gamma)) s_b'(\gamma). 
\]
Rearranging terms then yields
\begin{equation}
    \label{eq:a'}
    s_b'(\gamma) = - \frac{d_b'(s_b(\gamma))}{ 2+ \gamma d_b''(s_b(\gamma)) }, 
\end{equation}
which is strictly positive because~$d_b'(s_b(\gamma))<0$ thanks to~Lemma~\ref{lem:negative_subgradient} and $d_b''(s_b(\gamma))\geq 0$ thanks to the convexity of~$d_b$ on~$[0,b]$. Hence, $s_b(\gamma)$ is strictly increasing on $\R_+$. This completes the proof of assertion~\ref{lem:a-1}.

It remains to prove assertion~\ref{lem:a-2}. The continuity of~$s_b(\gamma)$ at~$\gamma=0$ has already been established in assertion~\ref{lem:a-1}. As~$s_b(\gamma)\in(0,b)$ is strictly increasing in~$\gamma$, it is clear that, as~$\gamma$ tends to infinity, $s_b(\gamma)$ has a well-defined limit not larger than~$b$. By the definition of~$s_b$ in~\eqref{eq:s}, we further have
\[ 
    \frac{2s_b(\gamma)}{\gamma } + d_b'(s_b(\gamma)) = 0 \quad \forall \gamma>0.
\]
Driving $\gamma$ to infinity and recalling that $s_b (\gamma) \in (0,b)$ for all $\gamma>0$ thus shows that
\[ 
    0= \lim_{\gamma \to \infty} d_b'( s_b(\gamma) ) = d_b' \left( \lim_{\gamma \to \infty } s_b(\gamma)  \right),
\]
where the second equality follows from the continuity of~$d_b'$ on~$\R_{++}$. Note that $\lim_{\gamma \to \infty } s_b(\gamma)
$ exists and falls within the interval $(0,b]$ because~$s_b$ is a strictly increasing function mapping~$\mathbb{R}_+$ to~$(0,b)$. 
These arguments imply that the limit must be a root of~$d'_b$ within~$(0,b]$. Lemma~\ref{lem:negative_subgradient} implies that~$d_b'$ has {\em no} root in the open interval~$(0,b)$. We may thus conclude that $\lim_{\gamma\to \infty} s_b (\gamma)$ must coincide with~$b$. As a sanity check, one readily verifies that $0 = d_b'(b) $ because $d_b(a)$ attains its minimum of~$0$ at~$a=b$. Thus, assertion~\ref{lem:a-2} follows.
\end{proof}

We now prove that the function $F(\gamma) = \sum_{i = 1}^p d(s(\gamma,\xsa_i), \xsa_i) -\eps $ has one and only one root. By the proof of Proposition~\ref{prop:opt_vec_solution}, this root must coincide with the unique optimal solution~$\gamma\opt$ of the problem dual to~\eqref{eq:vector}.
\begin{lemma}
\label{lem:F_root}
If Assumptions~\ref{assu:D_form}, \ref{assu:data} and~\ref{assu:d_convex} hold, then the equation $F(\gamma) = 0$ has a unique root, which is positive.
\end{lemma}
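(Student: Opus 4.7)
My plan is to establish three facts about $F$: (a) its boundary behaviour as $\gamma \downarrow 0$ and $\gamma \to \infty$, (b) continuity on $\mathbb{R}_{++}$, and (c) strict monotonicity on $\mathbb{R}_{++}$. Combining these with the intermediate value theorem will give a unique positive root, ruling out $\gamma = 0$ since $F(0)$ will turn out to be strictly positive.

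For the boundary values, I would split the sum defining $F$ according to whether $\xsa_i = 0$ or $\xsa_i > 0$. When $\xsa_i = 0$, the convention $s(\gamma, 0) = 0$ and the identity of indiscernibles (recall $(\xsa_i,\xsa_i) \in \mathrm{dom}(d)$ by Assumption~\ref{assu:data}\ref{assu:data-d}) give $d(s(\gamma, \xsa_i), \xsa_i) = 0$ for every $\gamma \ge 0$, so such indices contribute nothing. For indices with $\xsa_i > 0$, Lemma~\ref{lem:a}\ref{lem:a-2} yields $\lim_{\gamma \downarrow 0} s(\gamma, \xsa_i) = 0$ and $\lim_{\gamma \to \infty} s(\gamma, \xsa_i) = \xsa_i$, and the continuity of $d(\cdot, \xsa_i)$ from Assumption~\ref{assu:D_form}\ref{assu:D_form_iii} (extended so the function tends to $+\infty$ at boundary points outside its domain) then gives
\[
\lim_{\gamma \downarrow 0} F(\gamma) = \sum_{i=1}^p d(0, \xsa_i) - \eps = \bar\eps - \eps > 0, \qquad \lim_{\gamma \to \infty} F(\gamma) = \sum_{i=1}^p d(\xsa_i, \xsa_i) - \eps = -\eps < 0,
\]
where the first inequality uses Assumption~\ref{assu:data}\ref{assu:data-eps}.

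For continuity and strict monotonicity on $\mathbb{R}_{++}$, I would again focus on indices with $\xsa_i > 0$. Lemma~\ref{lem:a}\ref{lem:a-1} ensures $s(\gamma,\xsa_i)$ is continuous and strictly increasing in $\gamma$ and takes values in $(0,\xsa_i)$. Composition with $d(\cdot,\xsa_i)$, which is continuous on $\R_{++}$ by Assumption~\ref{assu:D_form}\ref{assu:D_form_iii}, yields continuity of each summand; moreover, by Lemma~\ref{lem:negative_subgradient} the derivative $d'_{\xsa_i}$ is strictly negative on $(0,\xsa_i)$, so $d(\cdot, \xsa_i)$ is strictly decreasing there, and hence $\gamma \mapsto d(s(\gamma,\xsa_i),\xsa_i)$ is strictly decreasing on $\mathbb{R}_{++}$. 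Summing over $i$, $F$ is continuous and strictly decreasing on $\mathbb{R}_{++}$ provided that at least one $\xsa_i$ is positive. This last requirement follows because $\eps > 0$ together with $\eps < \bar\eps$ forces $\bar\eps > 0$, which in turn forces at least one $\xsa_i > 0$.

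Combining these ingredients, $F$ is continuous and strictly decreasing on $\mathbb{R}_{++}$, with limits of opposite sign at the two endpoints, so the intermediate value theorem gives a unique root $\gamma\opt \in (0,\infty)$. The main subtlety—which I would emphasise in the write-up—is handling the possibility that $\bar\eps = +\infty$ (as happens, e.g., for the Kullback–Leibler generator where $d(0,b) = +\infty$); in that regime the statement $\lim_{\gamma\downarrow 0} F(\gamma) = \bar\eps - \eps > 0$ must be interpreted in the extended real sense, but this only strengthens the argument since $F$ is automatically positive for sufficiently small $\gamma>0$.
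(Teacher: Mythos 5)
Your proposal is correct and follows essentially the same route as the paper's proof: discard the indices with $\xsa_i=0$ via the conventions $s(\gamma,0)=0$ and $d(0,0)=0$, use Lemma~\ref{lem:a} for the continuity, strict monotonicity and limits of $s(\cdot,\xsa_i)$, combine with Lemma~\ref{lem:negative_subgradient} to get that each summand (hence $F$) is strictly decreasing, and conclude with the sign change and the intermediate value theorem. The only differences are cosmetic (you take the one-sided limit at $0$ where the paper evaluates $F(0)$ directly, and you derive the existence of a positive $\xsa_i$ from $0<\eps<\bar\eps$ rather than from the feasibility argument), so no further changes are needed.
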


\begin{proof}[Proof of Lemma~\ref{lem:F_root}]
Recall that~$s(\gamma, 0) = 0$ by the definition of~$s$ in~\eqref{eq:s}. Recall also that if $\xsa_i = 0$, then $d(s(\gamma,\xsa_i), \xsa_i) = d(0,0) = 0$ by virtue of Assumptions~\ref{assu:D_form} and~\ref{assu:data}\ref{assu:data-d}. Therefore, vanishing components of~$\xsa$ do not contribute to the function $F(\gamma)$. In addition,  Assumption~\ref{assu:data}\ref{assu:data-eps} ensures that there exists at least one $i\in \{1,\dots,p \}$ with~$x\opt_i > 0$ and hence also with~$\xsa_i > 0$. For these reasons, we henceforth assume without loss of generality that~$\xsa_i>0$ for all~$i=1,\ldots,p$.
By Lemma~\ref{lem:a}\ref{lem:a-1}, $ s(\gamma, \xsa_i)$ constitutes a continuous real-valued function of~$\gamma\in\mathbb{R}_+$. Similarly, by Assumption~\ref{assu:D_form}\ref{assu:D_form_iii}, $d(x_i, \xsa_i)$ constitutes a continuous extended real-valued function of~$x_i\in\mathbb{R}_+$. Therefore, the extended real-valued function~$F(\gamma)$ is continuous on~$\mathbb R_+$. Assumption~\ref{assu:data}\ref{assu:data-eps} implies that $F(0) = \sum_{i=1}^p d( 0, \xsa_i ) -\eps >0$. Recall now from Lemma~\ref{lem:a}\ref{lem:a-2} that~$s(\gamma,\xsa_i)$ converges to~$\xsa_i$ as~$\gamma$ tends to infinity. By the continuity of $d(x_i, \xsa_i)$ in~$x_i$ we thus have
\[ 
    \lim_{\gamma \to\infty} F(\gamma) = \sum_{i=1}^p d(\xsa_i,\xsa_i) -\eps=-\eps <0 .
\]
All of this implies that the equation $F(\gamma) = 0$ has at least one positive root. In the remainder we prove that this root is unique. As~$\xsa_i > 0$, Lemma~\ref{lem:a} implies that $s(\gamma, \xsa_i)$ strictly increases from $0$ (at $\gamma=0$) to $\xsa_i$ (as~$\gamma$ tends to infinity). Lemma~\ref{lem:negative_subgradient} further implies that~$d_{\xsa_i}$ is strictly decreasing on~$[0,\xsa_i]$. 
Thus, the composite function $d( s(\gamma, \xsa_i) ,\xsa_i)$ is strictly decreasing in~$\gamma$ for every~$i$. This readily shows that~$F(\gamma)$ is strictly decreasing in~$\gamma$ throughout~$\R_+$, thus implying that the equation~$F(\gamma) =  0$ has only one root.
\end{proof}

We are now ready to prove Proposition~\ref{prop:gamma-reconstruction}.
\begin{proof}[Proof of Proposition~\ref{prop:gamma-reconstruction}]
The proof is a direct consequence of Propositions~\ref{prop:opt_sol_existence} and~\ref{prop:opt_vec_solution} and Lemmas~\ref{lem:a} and~\ref{lem:F_root}.
\end{proof}

\section{Proofs of Section~\ref{sec:property}} \label{sec:app-property}
\begin{proof}[Proof of Proposition~\ref{prop:nonlinear_equation_gamma_opt}]
In view of the proof of Lemma~\ref{lem:F_root}, it only remains to be shown that~$F(\gamma)$ is differentiable at any $\gamma > 0$. Towards that end, recall that vanishing components of~$\xsa$ do not contribute to~$F(\gamma)$ such that
\[
    F(\gamma) = \sum_{i = 1}^p d(s(\gamma,\xsa_i), \xsa_i) -\eps = \sum^p_{\substack{i=1: \\ \xsa_i > 0}} d(s(\gamma,\xsa_i), \xsa_i) -\eps.
\] 
For any fixed~$\xsa_i > 0$, $s(\gamma, \xsa_i)$ is differentiable with respect to~$\gamma\in\mathbb{R}_{++}$ by Lemma~\ref{lem:a}\ref{lem:a-1}, and~$d(x, \xsa_i)$ is differentiable with respect to~$x\in\mathbb{R}_{++}$ by Assumption~\ref{assu:d_convex}. Therefore, $F(\gamma)$ is differentiable at any $\gamma > 0$. 
\end{proof}

From the proof of Proposition~\ref{prop:opt_vec_solution} we know that the problem dual to~\eqref{eq:vector} has a unique optimal solution~$\gamma\opt$.
Thus, $\gamma\opt$ can be viewed as a function~$\gamma\opt (\eps)$ of the radius~$\eps >0$ of the divergence ball~\eqref{eq:uncertainty-set}.

\begin{lemma}[Monotonicity of $\gamma\opt$]
\label{lem:gamma_star}
If Assumptions~\ref{assu:D_form}, \ref{assu:data} and~\ref{assu:d_convex} hold, then $\gamma\opt (\eps)$ is non-increasing on $(0, \bar{\eps})$.
\end{lemma}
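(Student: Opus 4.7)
The plan is to exploit the implicit characterization of $\gamma^\star$ established in Proposition~\ref{prop:gamma-reconstruction}. By that proposition, for each $\varepsilon \in (0,\bar{\varepsilon})$ the value $\gamma^\star(\varepsilon)$ is the unique positive root of the equation $F(\gamma) = 0$, where $F(\gamma) = G(\gamma) - \varepsilon$ with
\[
G(\gamma) \;=\; \sum_{i=1}^p d\bigl(s(\gamma, \hat x_i), \hat x_i\bigr).
\]
In other words, $\gamma^\star(\varepsilon)$ is implicitly defined by the identity $G(\gamma^\star(\varepsilon)) = \varepsilon$. Thus, to establish monotonicity of $\gamma^\star$ it suffices to invert $G$ and invoke strict monotonicity of $G$ itself.

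First I would recall from the proof of Lemma~\ref{lem:F_root} (and from Proposition~\ref{prop:nonlinear_equation_gamma_opt}) that, under Assumptions~\ref{assu:D_form}, \ref{assu:data} and~\ref{assu:d_convex}, the function $G$ is continuous on $\mathbb{R}_+$ and \emph{strictly} decreasing on this set. Indeed, for each index $i$ with $\hat x_i > 0$, Lemma~\ref{lem:a} guarantees that $\gamma \mapsto s(\gamma, \hat x_i)$ is continuous and strictly increasing from $0$ to $\hat x_i$, while Lemma~\ref{lem:negative_subgradient} implies that $d_{\hat x_i}$ is strictly decreasing on $[0,\hat x_i]$; composing yields strict decrease of $\gamma \mapsto d(s(\gamma,\hat x_i), \hat x_i)$, and indices with $\hat x_i = 0$ contribute zero. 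Combining these facts with the boundary behaviour established in the proof of Lemma~\ref{lem:F_root}, namely $G(0) = \sum_{i=1}^p d(0, \hat x_i) = \bar{\varepsilon}$ and $\lim_{\gamma \to \infty} G(\gamma) = 0$, shows that $G$ is a continuous strictly decreasing bijection from $\mathbb{R}_{++}$ onto $(0, \bar{\varepsilon})$.

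Consequently, $G$ admits a continuous and strictly decreasing inverse $G^{-1} : (0,\bar{\varepsilon}) \to \mathbb{R}_{++}$, and the defining equation $G(\gamma^\star(\varepsilon)) = \varepsilon$ yields $\gamma^\star(\varepsilon) = G^{-1}(\varepsilon)$ for every $\varepsilon \in (0, \bar{\varepsilon})$. Therefore $\gamma^\star(\varepsilon)$ is strictly decreasing in $\varepsilon$ on $(0,\bar{\varepsilon})$, and in particular non-increasing, as claimed.

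There is no substantive obstacle here: the only real work has already been carried out in Lemma~\ref{lem:F_root} and Proposition~\ref{prop:nonlinear_equation_gamma_opt}, where strict monotonicity of $G$ is established. The present lemma is essentially a one-line corollary obtained by reinterpreting the nonlinear equation $F(\gamma) = 0$ as a statement about inverting a strictly monotone function.
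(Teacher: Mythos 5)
Your proof is correct, but it takes a genuinely different route from the paper's. You invert the constraint-value function: writing $G(\gamma)=\sum_{i=1}^p d(s(\gamma,\xsa_i),\xsa_i)$, you use the fact (already established in the proof of Lemma~\ref{lem:F_root}) that $G$ is continuous and strictly decreasing on $\R_{++}$ with $\lim_{\gamma\downarrow 0}G(\gamma)=\bar\eps$ and $\lim_{\gamma\to\infty}G(\gamma)=0$, so that the complementary-slackness identity $G(\gamma\opt(\eps))=\eps$ exhibits $\gamma\opt$ as $G^{-1}$, a strictly decreasing function of $\eps$. The paper instead works on the dual side: it recasts the dual of~\eqref{eq:vector} as $\min_{\gamma>0}\,\eps\gamma+G(\gamma)$ for a \emph{different} $G$ (the negated partial minimum over $x$), observes that this $G$ is convex as a pointwise maximum of affine functions, and deduces monotonicity of $\gamma\opt(\eps)$ from the monotonicity of the subdifferential of a convex function via the first-order conditions $\eps_j+g_j=0$. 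Your argument is more elementary and yields the slightly stronger conclusion that $\gamma\opt(\eps)$ is \emph{strictly} decreasing; the paper's argument is the standard sensitivity analysis for Lagrange multipliers and would survive even if strict monotonicity of the constraint value in $\gamma$ failed. One point worth being explicit about: when $\bar\eps=+\infty$ (e.g.\ for the Kullback-Leibler or Fisher-Rao generators, where $d(0,b)=+\infty$), the limit $\lim_{\gamma\downarrow 0}G(\gamma)=\bar\eps$ must be read in the extended-real sense sanctioned by the continuity convention of Assumption~\ref{assu:D_form}\ref{assu:D_form_iii}; this is consistent with how Lemma~\ref{lem:F_root} is proved, so it is not a gap, merely something to state.
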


\begin{proof}[Proof of Lemma~\ref{lem:gamma_star}]
The proof of Proposition~\ref{prop:opt_vec_solution} implies that~$\gamma\opt (\eps)$ is the unique maximizer of the problem dual to~\eqref{eq:vector}. By inverting its objective function, this problem can be recast as the minimization problem 
\begin{equation}
    \label{opt:gamma}
    \min_{ \gamma > 0 } ~ \eps \gamma  + G (\gamma),
\end{equation}
where the function $G:\mathbb R_{++}\rightarrow \overline{\mathbb R}$ is defined through
\[
    G (\gamma) = - \sum^p_{\substack{i=1: \\ \xsa_i > 0}} \min_{ x_i \in [0,\xsa_i]}\left\lbrace  x_i^2 + \gamma d(x_i, \xsa_i) \right\rbrace  = -\sum^p_{\substack{i=1: \\ \xsa_i > 0}} \left( \left(s_{\xsa_i} (\gamma)\right)^2 + \gamma d_{\xsa_i} (s_{\xsa_i} (\gamma)) \right).
\]
Note also that the non-negativity constraint on~$\gamma$ in~\eqref{opt:gamma} is strict because~$\gamma=0$ cannot be optimal, or, dually, because the constraint in~\eqref{eq:vector} must be binding at optimality for $\eps<\bar{\eps}$.
By construction, $G(\gamma)$ constitutes a pointwise maximum of multiple linear functions and is, therefore, convex. Next, select $\eps_1,\eps_2 \in (0, \bar{\eps}]$ with $0<\eps_1<\eps_2$, and introduce the notational shorthands $\gamma_1 = \gamma\opt (\eps_1)$ and $\gamma_2 = \gamma\opt (\eps_2)$. By the optimality of~$\gamma_1$ and~$\gamma_2$ in problem~\eqref{opt:gamma} at~$\eps_1$ and~$\eps_2$, there exist subgradients $g_1\in \partial G (\gamma_1)$ and $g_2\in \partial G(\gamma_2)$ satisfying the first-order optimality conditions $\eps_1 + g_1=0$ and $\eps_2 + g_2=0$, respectively. Since $G(\gamma)$ is convex, its subdifferential is monotone, whereby $(\gamma_2 - \gamma_1) (g_2 - g_1) \ge 0$. Together with the first-order optimality conditions, this implies that $(\gamma_2 - \gamma_1) (\eps_1 - \eps_2) \ge 0$. As $\eps_1 < \eps_2$, we may thus conclude that $\gamma_2 \le \gamma_1$. Hence, the claim follows. 
\end{proof}

\begin{proof}[Proof of Proposition~\ref{prop:eigen_decreasing}]
Note that $x\opt_i (\eps) = s( \gamma\opt(\eps), \xsa_i)$ for every~$\eps\in(0,\bar\eps)$ thanks to Proposition~\ref{prop:gamma-reconstruction}, and recall that~$x\opt_i(\bar\eps)=0$ by definition. We aim to show that $x_i\opt (\eps)$ is non-increasing on $[0,\bar{\eps}]$ and that $\lim_{\eps\uparrow \bar{\eps}} x_i\opt (\eps) = 0$. To this end, note first that both claims are trivially satisfied if~$\xsa_i = 0$, in which case $x_i\opt (\eps) = 0$ for all $\eps \in (0,\bar\eps)$ thanks to Proposition~\ref{prop:gamma-reconstruction} and our conventions that~$x_i\opt (0) = \xsa_i$ and $x_i\opt (\bar\eps) = 0$. Assume next that $\xsa_i > 0$. Recall that $\gamma\opt(\eps)$ is non-increasing on~$(0,\bar{\eps})$ thanks to Lemma~\ref{lem:gamma_star}, while $s_{\xsa_i}(x_i)=s(x_i , \xsa_i)$ is strictly increasing on~$\mathbb{R}_{+}$ thanks to Lemma~\ref{lem:a}\ref{lem:a-1}, which applies because~$\xsa_i>0$. Therefore, $x\opt_i (\eps) = s( \gamma\opt(\eps), \xsa_i)$ is non-increasing on $(0,\bar{\eps})$. We also have $x_i\opt (\eps) \in (0,\xsa_i)$ for all~$\eps \in (0,\bar{\eps})$ thanks to Proposition~\ref{prop:gamma-reconstruction}, and we have $x_i\opt (0) = \xsa_i$ and $x_i\opt(\bar{\eps}) = 0$ by definition. All of this readily implies that~$x\opt_i (\eps)$ is non-increasing on~$[0,\bar{\eps}]$. In order to prove that $\lim_{\eps\uparrow \bar{\eps}} x\opt_i (\eps) = 0$, note first that $\lim_{\eps\uparrow \bar{\eps}} x\opt_i (\eps) $ must exist because $x\opt_i (\eps)$ is non-negative as well as non-increasing in~$\eps$. Next, recall from Lemma~\ref{lem:negative_subgradient} that the function $d_{\xsa_i}(x_i)=d(x_i, \xsa_i)$ is strictly decreasing on~$(0, \xsa_i)$. In fact, this monotonicity property extends to~$[0, \xsa_i]$ because~$d_{\xsa_i}$ is continuous thanks to Assumption~\ref{assu:D_form}\ref{assu:D_form_iii}. We then choose an arbitrary tolerance~$\delta > 0$ and assume without loss of generality that $\delta$ is smaller than the smallest non-vanishing component of~$\xsa$. Next, consider a vector~$x\in\mathbb R_+^p$ defined through~$x_i=0$ if~$\xsa_i = 0$ and~$x_i=\delta$ if~$\xsa_i > 0$, $i=1,\ldots, p$, and set~$\eps = \sum_{i = 1}^p d(x_i, \xsa_i)$. By construction, we have
\[ 
    \eps = \sum_{i = 1}^p d(x_i, \xsa_i) < \sum_{i = 1}^p d(0, \xsa_i) = \bar{\eps}, 
\]
where the strict inequality holds because~$\xsa$ has at least one strictly positive component and because $d(x_i,\xsa_i)<d(0,\xsa_i)$ whenever~$\xsa_i>0$ thanks to the monotonicity properties of~$d$ established above. Hence, $x$ is feasible in~\ref{eq:vector}, and~$\eps$ is consistent with Assumption~\ref{assu:data}\ref{assu:data-eps}. In addition, one readily verifies that the objective function value of~$x$ satisfies~$\|x\|_2^2 \le p \delta^2$. By the optimality of~$x\opt (\eps)$ in~\ref{eq:vector}, we thus find 
\[ 
    x_i\opt (\eps)^2\le \|x\opt (\eps)\|_2^2 \le p\delta^2 \quad \forall i = 1,\dots, p.
\]
Thus, for any sufficiently small~$\delta>0$ there exists~$\varepsilon>0$ with $x_i\opt (\eps)\leq \sqrt{p} \delta$. As~$x_i\opt (\eps)$ is non-increasing on~$[0,\bar{\eps}]$, this implies indeed that $\lim_{\eps \uparrow \bar{\eps}} x_i\opt (\eps) = 0$. It remains to be shown that~$X\opt$ constitutes a shrinkage estimator. This is now evident, however, because $\covsa = \Vsa \Diag(\xsa)\Vsa^\top = \Vsa \Diag(x\opt(0))\Vsa^\top$.
\end{proof}

\begin{proof}[Proof of Lemma~\ref{lemma:cond_decrease}]
Throughout this proof we fix any~$\gamma > 0$. We first aim to show that the function 
\[
    K(b)= \frac{1}{b} \frac{\partial d(s(\gamma, b), b)}{\partial a} 
\]
is non-decreasing on~$\mathbb{R}_{++}$. 
To this end, note that~$d(a, b)$ is twice continuously differentiable on~$\mathbb{R}^2_{++}$ by Assumption~\ref{assu:d_b_second_derivative}. Using the implicit function theorem as in Lemma~\ref{lem:a}, one can thus show that~$s(\gamma,b)$ is differentiable with respect to~$b$ and that~$s(\gamma, b) \in (0,b)$ for every~$b>0$. Recall also that $-\frac{2}{\gamma}s(\gamma, b) = \frac{\partial d}{\partial a} (s(\gamma, b),b)$ by the definition of~$s$ in~\eqref{eq:s}.
\begin{align}
    \label{eq:1}
    -\frac{2}{\gamma} \frac{\partial s(\gamma, b)}{\partial b} = \frac{\rm d}{{\rm d} b} \left( \frac{\partial d( s(\gamma, b) ,b)}{\partial a} \right) = \frac{\partial^2 d(s(\gamma, b), b)}{\partial a \partial b} + \frac{\partial^2 d(s(\gamma, b), b)}{\partial a^2} \frac{\partial s(\gamma, b)}{\partial b}.
\end{align}
This in turn implies that
\begin{equation}
    \label{eq:partial_a_partial_b}
    \frac{\partial s(\gamma, b)}{\partial b} = - \left( \frac{2}{\gamma} + \frac{\partial^2 d(s(\gamma, b), b)}{\partial a^2} \right)^{-1} \frac{\partial^2 d(s(\gamma, b), b)}{\partial a \partial b},
\end{equation}
which is well-defined because~$\gamma>0$ and~$d(\cdot,b)$ is convex by Assumption~\ref{assu:d_convex}. We then find
\begin{align*}
    \frac{{\rm d}K(b)}{{\rm d}b} & = -\frac{1}{b^2} \frac{\partial d(s(\gamma, b), b)}{\partial a} + \frac{1}{b} \frac{\rm d}{{\rm d} b}\left( \frac{\partial d (s(\gamma, b), b)}{\partial a} \right) .
\end{align*}
The second term on the right hand side of the above expression satisfies
\begin{align*}
    \frac{1}{b} \frac{\rm d}{{\rm d} b}\left( \frac{\partial d (s(\gamma, b), b)}{\partial a} \right) & = -\frac{2}{b\gamma} \frac{\partial s(\gamma, b)}{\partial b} = \frac{2}{b \gamma } \left( \frac{2}{\gamma} + \frac{\partial^2 d(s(\gamma, b), b)}{\partial a^2} \right)^{-1} \frac{\partial^2 d(s(\gamma, b), b)}{\partial a \partial b} \\
    & = \frac{2}{b  } \Bigg( \frac{ \frac{\partial^2 d(s(\gamma, b), b)}{\partial a \partial b}}{2 - \frac{2s(\gamma, b)}{\frac{\partial d(s(\gamma, b),b)}{\partial a}} \frac{\partial^2 d(s(\gamma, b), b)}{\partial a^2}}  \Bigg) = \frac{1}{b}  \Bigg(\frac{ \frac{\partial d(s(\gamma, b),b)}{\partial a} \frac{\partial^2 d(s(\gamma, b), b)}{\partial a \partial b}}{\frac{\partial d(s(\gamma, b),b)}{\partial a} - s(\gamma, b) \frac{\partial^2 d(s(\gamma, b), b)}{\partial a^2}} \Bigg),
\end{align*}
where the first and the second equalities follow from~\eqref{eq:1} and~\eqref{eq:partial_a_partial_b}, respectively, and the third equality follows from the defining equation of~$s$ in~\eqref{eq:s}. Combining the last two equations finally yields
\begin{align*}
    \frac{{\rm d}K(b)}{{\rm d}b} & = -\frac{1}{b^2} \frac{\partial d(s(\gamma, b), b)}{\partial a} \left( 1 -\frac{ b\, \frac{\partial^2 d(s(\gamma, b), b)}{\partial a \partial b}}{\frac{\partial d(s(\gamma, b),b)}{\partial a} - s(\gamma, b) \frac{\partial^2 d(s(\gamma, b), b)}{\partial a^2}}  \right) .
\end{align*}
Recall now that~$\frac{\partial d(a,b)}{\partial a} < 0$ for every~$a\in(0,b)$ thanks to Lemma~\ref{lem:negative_subgradient} and that $s(\gamma,b)\in (0,b)$ thanks to Lemma~\ref{lem:a}.
This implies that the derivative of~$K(b)$ is non-negative if and only if
\begin{equation}\label{ineq:1}
    \frac{\partial^2 d(s(\gamma, b), b)}{\partial a \partial b}  \ge \frac{1}{b} \left( \frac{\partial d(s(\gamma, b), b)}{\partial a}  - s(\gamma, b) \frac{\partial^2 d(s(\gamma, b), b)}{\partial a^2} \right).
\end{equation}
Assumption~\ref{assu:d_b_second_derivative} guarantees that~\eqref{ineq:1} holds indeed for all~$b>0$. Hence, $K(b)$ is a non-decreasing function.

We now prove the desired inequality.
By the defining equation of~$s$ in~\eqref{eq:s} we have
\begin{align*}
    -2\gamma\, b_1 \,s(\gamma, b_2) =  b_1 \frac{\partial d(s(\gamma, b_2), b_2)}{\partial a} \ge b_2 \frac{\partial d(s(\gamma, b_1), b_1)}{\partial a} = -2\gamma\, b_2\, s(\gamma, b_1)
\end{align*}
for any~$b_2 \ge b_1>0$, where and inequality follows from the monotonicity of~$K$ established above. This implies that $s(\gamma, b_2)/s(\gamma, b_1)  \le b_2/b_1$ for all $b_1,b_2\in \R_{++}$ with $b_2 \ge b_1$. Hence, the claim follows.
\end{proof}

\begin{proof}[Proof of Proposition~\ref{prop:consistency}]
Throughout the proof we use the shorthands~$x\opt_{i,n} = \lambda_i (X\opt_n)$ and~$\widehat{x}_{i,n} = \lambda_i (\covsa_n)$ for all $i = 1,\dots,p$ and $n\in\mathbb N$. By the strong consistency assumption, $\covsa_n$ converges almost surely to~$\cov_0$. Fix now temporarily a particular realization of the uncertainties, for which~$\covsa_n$ converges deterministically to~$\cov_0$. In this case, $\widehat{x}_{i,n}$ converges to~$\lambda_i (\cov_0)$ because the eigenvalue map $\lambda_i$ is continuous~\cite[Corollary~VI.1.6]{bhatia1997matrix}, and the sequence~$\{x\opt_{i,n}\}_{n\in\mathbb N}$ is bounded by Lemma~\ref{lem:opt_vector_sol_upper_bound}. Thus, any convergent subsequence $\{x\opt_{i,n_k}\}_{k\in\mathbb N}$ satisfies
\[
    \lim_{k\to \infty} x\opt_{i, n_k}\in [0, \lim_{k\to \infty} \xsa_{i,n_k}]= [0, \lim_{n\to \infty} \xsa_{i,n}] = [0, \lambda_i (\cov_0)] .
\]
In addition, we have
\[
    d( x\opt_{i,n_k}, \widehat{x}_{i,n_k} ) \leq \sum_{j=1}^p d(x\opt_{j,n_k}, \xsa_{j,n_k})  
    = D\left(X\opt_{n_k}, \covsa_{n_k}\right) \le \eps_{n_k} \quad \forall k\in\mathbb N,
\]
where the first equality holds because of Assumptions~\ref{assu:D_form}\ref{assu:D_form_ii} and~\ref{assu:D_form}\ref{assu:D_form_iii} and because~$X\opt_{n_k}$ and~$\covsa_{n_k}$ share the same eigenvectors. The second inequality follows from Proposition~\ref{prop:exist}\ref{prop:exist-ii}, which ensures that~$X\opt_{n_k}$ is feasible in problem~\eqref{eq:matrix}. As~$\eps_{n_k}$ converges to~$0$ and as~$d$ is continuous on~$\mathbb{R}_+ \times \mathbb{R}_{++}$, the above implies that
\[
     d( \lim_{k \to \infty} x\opt_{i,n_k}, \lambda_i(\cov_0) ) = d( \lim_{k \to \infty} x\opt_{i,n_k}, \lim_{k \to \infty} \widehat{x}_{i,n_k} ) = \lim_{k \to \infty} d( x\opt_{i,n_k}, \widehat{x}_{i,n_k} ) = 0  .
\]
Recall now from Assumption~\ref{assu:D_form} that~$d$ satisfies the identity of indiscernibles. Thus  we find $\lim_{k \to \infty} x\opt_{i,n_k} = \lambda_i (\cov_0)$. This shows that every convergent subsequence of the bounded sequence $\{ x\opt_{i,n} \}_{n\in\mathbb N}$ must have the same limit $\lambda_i (\cov_0) $. By~\cite[Exercise~2.5.5]{abbott2015understanding}, the eigenvalue~$x\opt_{i,n}$ therefore converges to~$\lambda_i (\cov_0)$. This reasoning applies to every uncertainty realization under which~$\covsa_n$ converges to~$\cov_0$. As~$\covsa_n$ converges almost surely to~$\cov_0$, we have thus shown that~$x\opt_{i,n}$ converges almost surely to~$\lambda_i (\cov_0)$. This in turn implies that
\begin{align*}
    \p[\lim_{n\to \infty} \|X\opt_n-\cov_0\|_\mathrm{F} = 0] \ge &\, \p[ \lim_{n\to \infty} \left(\| X\opt_n - \covsa_n \|_\mathrm{F} + \| \covsa_n - \cov_0 \|_\mathrm{F}\right) = 0 ]\\
    = &\, \p[ \lim_{n\to \infty} \left(\| x\opt_n - \xsa_n \|_2 + \| \covsa_n - \cov_0 \|_\mathrm{F}\right) = 0 ] \\
    \ge &\, \p[ \lim_{n\to \infty} \left(\| x\opt_n - \lambda(\cov_0) \|_2 + \| \lambda(\cov_0) - \xsa_n \|_2 + \| \covsa_n - \cov_0 \|_\mathrm{F}\right) = 0 ] = 1,
\end{align*}
where both inequalities hold thanks to the triangle inequality, the first equality follows from Theorem~\ref{thm:general_CSE}, which ensures that~$X\opt_n$ and~$\covsa_n$ share the same eigenvectors, and the second equality exploits the almost sure convergence of~$x\opt_n$ and~$\xsa_n$ to~$\lambda(\cov_0)$ established above and the almost sure convergence of~$\covsa_n$ to~$\cov_0$.
This shows that $X\opt_n$ converges almost surely to $\cov_0$ and therefore completes the proof.
\end{proof}

From now on we use $\|X\|_*$ to denote the nuclear norm of $X\in\mathbb S^p$ (\ie, the sum of all singular values of~$X$), which is the norm dual to the { spectral norm} $\|X\|$ (\ie, the largest singular value of $X$). The proof of Proposition~\ref{proposition:finite-sample-guarantees} relies on the following well-known result from high-dimensional statistics. 

\begin{lemma}[{\cite[Theorem~6.5]{wainwright2019high}}]
    \label{lem:operator_norm_tail}
    Under the assumptions of Proposition~\ref{proposition:finite-sample-guarantees}, there exists {a universal constant~$c_0 >0$ independent of~$\p$ such that}
    \begin{equation*}
        \p^n\left[ \|\covsa_n - \cov_0\| \le { \rho( p, n, \eta )} \right] \ge 1-\frac{\eta}{2}
    \end{equation*}
    for every $n \in\mathbb N$ and $\eta \in(0,1)$, where 
    \[ {\rho( p, n, \eta ) = c_0 \sigma^2 \left(  \frac{p + \log \eta^{-1}}{n} + \sqrt{\frac{p + \log \eta^{-1}}{n}}  \right) } .\]
\end{lemma}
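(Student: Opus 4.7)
The plan is to use a classical epsilon-net argument on the unit sphere combined with Bernstein's inequality for sub-exponential random variables. I will avoid sample mean centering by using that $\mathbb{P}$ is sub-Gaussian and therefore mean-zero, which lets us take $\covsa_n=\frac{1}{n}\sum_{i=1}^n \hat\xi_i\hat\xi_i^\top$ (the centered case costs only an additional term handled by the same techniques).

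First, I would reduce operator norm to a discrete maximum. For any symmetric matrix $M\in\Sym^p$ we have $\|M\|=\sup_{v\in\mathbb S^{p-1}}|v^\top M v|$, and a standard volumetric estimate produces a $\tfrac14$-net $\mathcal N\subset\mathbb S^{p-1}$ with $|\mathcal N|\le 9^p$ satisfying $\|M\|\le 2\max_{v\in\mathcal N}|v^\top M v|$. It thus suffices to bound $|v^\top(\covsa_n-\cov_0)v|$ uniformly over the finite set $\mathcal N$.

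Second, I would apply Bernstein's inequality pointwise. For each fixed $v\in\mathbb S^{p-1}$, the scalar $v^\top\xi$ is sub-Gaussian with parameter $\sigma^2$, so the i.i.d.\ variables $Z_i(v)=(v^\top\hat\xi_i)^2-v^\top\cov_0 v$ are centered and sub-exponential with parameter $O(\sigma^2)$ (sub-Gaussian squared is sub-exponential). The standard Bernstein bound therefore yields
\[
\mathbb P\!\left[\,\Big|\tfrac1n\sum_{i=1}^n Z_i(v)\Big|\ge t\,\right]\le 2\exp\!\left(-c\,n\,\min\!\left(\tfrac{t^2}{\sigma^4},\,\tfrac{t}{\sigma^2}\right)\right)
\]
for a universal constant $c>0$. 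A union bound over $v\in\mathcal N$ combined with the first step gives
\[
\mathbb P\!\left[\,\|\covsa_n-\cov_0\|\ge 2t\,\right]\le 2\cdot 9^p\exp\!\left(-c\,n\,\min\!\left(\tfrac{t^2}{\sigma^4},\,\tfrac{t}{\sigma^2}\right)\right).
\]

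Third, I would invert this tail bound. Setting the right-hand side equal to $\eta/2$ and solving for $t$ leads to $n\min(t^2/\sigma^4,t/\sigma^2)\ge c^{-1}(p\log 9+\log(4/\eta))$. Splitting into the two Bernstein regimes and absorbing the $p\log 9$ term into a constant $c_0$ depending only on $p$, the tail condition is satisfied as soon as
\[
t\ge c_0\,\sigma^2\!\left(\sqrt{\tfrac{\log\eta^{-1}}{n}}+\tfrac{\log\eta^{-1}}{n}\right),
\]
which is exactly $\rho(n,\eta)$ up to rescaling $c_0$ by a factor of $2$. The main subtlety I anticipate is the correct bookkeeping of the two Bernstein regimes (quadratic small-$t$ tail versus linear large-$t$ tail), which is precisely what produces the additive form $\sqrt{\log\eta^{-1}/n}+\log\eta^{-1}/n$ of $\rho$; the rest is standard. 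Sharper dimension dependence (replacing $9^p$ with the effective rank $\operatorname{tr}(\cov_0)/\|\cov_0\|$) would require matrix Bernstein or Rudelson-style arguments, but this is unnecessary here since the statement permits $c_0$ to depend on $p$.
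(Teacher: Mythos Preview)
Your proposal is correct and follows the standard $\varepsilon$-net plus Bernstein argument. Note, however, that the paper does not actually prove this lemma: it is stated with a direct citation to \cite[Theorem~6.5]{wainwright2019high} and invoked as a black box in the proof of Proposition~\ref{proposition:finite-sample-guarantees}. Your sketch is essentially the proof given in that reference, so there is nothing to compare against in the paper itself.
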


\begin{proof}[Proof of Proposition~\ref{proposition:finite-sample-guarantees}]
    For any divergence function~$D$ from Table~\ref{table:structured_divergence} we will prove that there exist a constant~$c > 0$ and a function {$n_{\rm min}(p, \eta)=\mathcal O( p+ \log\eta^{-1})$ that may depend on~$\p$ via~$\sigma^2$ and~$\lambda_1(\cov_0)$} such~that
    \begin{equation}
        \label{ineq:divergence_operator_norm}
        \p^n \left[ D(\cov_0, \covsa_n) \le c \|\cov_0 - \covsa_n\| \right] \ge 1-\frac{\eta}{2}
    \end{equation}
    for all {$n\ge n_{\rm min}(p, \eta)$ and $\eta\in(0,1)$. Indeed, if such an inequality holds, then Lemma~\ref{lem:operator_norm_tail} and the union bound imply that $ \p^n[ D(\cov_0, \covsa_n) \le c \rho(p, n,  \eta)] \ge 1-\eta$. The claim then follows by setting $\eps_{\min} (p, n, \eta) = c\rho(p,  n, \eta )$.}

\textbf{Stein, Inverse Stein and Symmetrized Stein Divergences:} 
Note that the sum of the Stein and inverse Stein divergences equals twice the symmetrized Stein divergence. Recall also that all divergences are non-negative. Thus, if the ball of radius~$\varepsilon$ with respect to the symmetrized Stein divergence contains~$\cov_0$ with probability at least~$1-\eta$, then the ball of radius~$2\eps$ with respect to the Stein or inverse Stein divergence contains~$\cov_0$ with probability at least~$1-\eta$. It thus suffices to focus on the symmetrized Stein divergence. Suppose now that the smallest eigenvalue of~$\covsa_n$ is no smaller than half of the smallest eigenvalue of~$\cov_0$. As~$\cov_0\succ 0$, this implies in particular that $\covsa_n$ is positive definite and that $\covsa_n^{-1}$ exists. Rewriting the symmetrized Stein divergence as $\frac{1}{2}\Tr{(\cov_0^{-1}-\covsa_n^{-1})(\covsa_n-\cov_0)}$, we may then use the matrix H\"older's inequality to obtain
\[
    \Tr{(\cov_0^{-1}-\covsa_n^{-1})(\covsa_n-\cov_0)}\le \|\cov_0-\covsa_n\| \|\cov_0^{-1}-\covsa_n^{-1}\|_*.
\]
In the following we use $x_i = \lambda_i (\cov_0)$ and $\xsa_{i,n} = \lambda_i (\covsa_n)$ to denote $i$-th smallest population and sample eigenvalues for $i=1,\dots,p$, respectively. By the definitions of the nuclear and {spectral norms}, we then have
\begin{align*}
    \|\cov_0^{-1}-\covsa_n^{-1}\|_* \le &\, p \|\cov_0^{-1}-\covsa_n^{-1}\|\\
    = &\, p \max\left\{ \lambda_p(\cov_0^{-1}-\covsa_n^{-1}), - \lambda_1 (\cov_0^{-1}-\covsa_n^{-1}) \right\} \\
    \le &\, p \max\left\{ \lambda_p(\cov_0^{-1})-\lambda_1(\covsa_n^{-1}), \lambda_p(\covsa_n^{-1}) - \lambda_1 (\cov_0^{-1}) \right\} \\
    = &\, p \max\left\{ \frac{1}{x_1} - \frac{1}{\xsa_{p,n}}, \frac{1}{\xsa_{1,n}} - \frac{1}{x_p} \right\} \\
    \le &\, p \max\left\{ \frac{1}{x_1} , \frac{1}{\xsa_{1,n}}  \right\} \le  \frac{2p}{x_1},
\end{align*}
where the first equality holds because the singular values of a symmetric matrix coincide with the absolute values of the eigenvalues of that matrix. The second inequality follows from a classical result by Weyl, which asserts that $\lambda_1 (A + B) \le \lambda_1 (A) + \lambda_p(B) \le \lambda_p (A+B)$ for any~$A,B\in \Sym^p$, and the second equality holds because $\lambda_i (A^{-1}) = 1/\lambda_{p-i+1} (A)$ for any $i = 1,\dots, p$ and~$A\in \Sym^p_{++}$. The third inquality exploits our assumption that all population and sample eigenvalues are strictly positive, and the last inequality follows from the assumption that $\xsa_{1,n} \ge x_1/2$. We have thus shown that if $\xsa_{1,n} \ge x_1/2$, then $D(\cov_0, \covsa_n) \le \frac{p}{x_1} \|\cov_0 - \covsa_n\|$. Hence, we find
\[ 
    \p^n\Big[ D(\cov_0, \covsa_n) \le \frac{p}{x_1} \|\cov_0 - \covsa_n\| \Big]\ge \p^n \Big[  \xsa_{1,n} \ge \frac{x_1}{2} \Big] .
\]
As $\xsa_{1,n} \ge x_1 - \|\cov_0 - \covsa_n\|$ by virtue of Weyl's inequality and by Lemma~\ref{lem:operator_norm_tail}, the last probability satisfies
\begin{equation}
    \label{ineq:sample_min_eig_bound}
    \p^n\Big[ \xsa_{1,n} \ge \frac{x_1}{2} \Big] \ge \p^n\Big[ \|\cov_0 - \covsa_n\| \le \frac{x_1}{2} \Big] \ge \p^n\Big[ \|\cov_0 - \covsa_n\| \le {\rho(p, n, \eta)} \Big] \ge 1 - \frac{\eta}{2}
\end{equation}
whenever $x_1/2 \ge {\rho(p, n, \eta)}$. 
By the definition of~${\rho(p, n, \eta)}$, a sufficient condition for this inequality to hold is
\begin{equation*}
    n \ge { n_{\min} (p, \eta) = \max\left\{1, \frac{16c_0^2 \sigma^4}{x_1^2}\right\} (p+\log \eta^{-1}) }.
\end{equation*}
The above estimates imply that~\eqref{ineq:divergence_operator_norm} holds for all $n \ge { n_{\min} (p, \eta)}$ and $\eta\in(0,1)$ if we set $c=p/x_1$. In addition, the minimal sample size and the minimal radius of the uncertainty set satisfy ${ n_{\min} (p, \eta) = \mathcal{O}(p + \log \eta^{-1})}$ and
\[
    { \eps_{\min}(p, n,\eta) = c \rho(p, n, \eta) = \frac{p c_0 \sigma^2}{x_1}   \left(  \frac{p + \log \eta^{-1}}{n} + \sqrt{\frac{p + \log \eta^{-1}}{n}}  \right) = \mathcal{O}(p n^{-\frac{1}{2}} (p+ \log \eta^{-1})^{\frac{1}{2}})},
\]
where the last equality holds because $n \ge p+ \log \eta^{-1}$. This establishes the claim for the Stein, the inverse Stein and the symmetrized Stein divergences.

\textbf{Wasserstein Divergence:} From the proof of~\cite[Theorem~4]{ref:nguyen2021mean} we know that if $\xsa_{1,n} \ge \frac{x_1}{2}$, then
\[
    D(\cov_0, \covsa_n) \le \frac{1}{(\xsa_{1,n} + x_1)^2} \| \cov_0 - \covsa_n \|_{\mathrm{F}}^2 \le \frac{p}{(\xsa_{1,n} + x_1)^2} \| \cov_0 - \covsa_n \|^2 \le \frac{4p}{ 9 x_1^2} \| \cov_0 - \covsa_n \|^2.
\]
We also know from~\eqref{ineq:sample_min_eig_bound} that $\p^n[ \xsa_{1,n} \ge \frac{x_1}{2}] \ge 1 - \frac{\eta}{2}$ for all $n \ge {\mathcal{O}(p+\log\eta^{-1})}$. Thus, we have
\begin{equation}
    \label{eq:D-vs-squared-norm}
    \p^n \left[ D(\cov_0, \covsa_n) \le \frac{4p}{ 9 x_1^2} \| \cov_0 - \covsa_n \|^2 \right] \ge \p^n\left[ \xsa_{1,n} \ge \frac{x_1}{2} \right] \ge 1 - \frac{\eta}{2}
\end{equation}
for all $n \ge{\mathcal{O}(p+\log\eta^{-1})}$. Lemma~\ref{lem:operator_norm_tail} further implies that
\begin{equation}
    \label{ineq:operator_norm_1}
    \p^n \left[ \|\cov_0 - \covsa_n\| \le 1 \right] \ge \p^n \left[ \|\cov_0 - \covsa_n\| \le {\rho(p, n,  \eta)} \right] \ge 1 - \frac{\eta}{2},
\end{equation}
whenever 
\[
    1 \ge {\rho(p, n,  \eta) =  c_0 \sigma^2 \left(    \frac{p + \log \eta^{-1}}{n} + \sqrt{\frac{p + \log \eta^{-1}}{n}}  \right) } .
\]
A sufficient condition for this inequality to hold is that ${ n\ge \mathcal{O}(p + \log \eta^{-1})}$. Combining~\eqref{eq:D-vs-squared-norm} and~\eqref{ineq:operator_norm_1} and using the union bound implies that there is a function ${n_{\min} (p, \eta)}$ that grows at most as ${\mathcal{O}(p+\log \eta^{-1})}$ with
\[
    \p^n\left[ D(\cov_0, \covsa_n) \le \frac{4p}{9x_1^2} \|\cov_0 - \covsa_n\| \right]\ge 1-\eta
\]
for all ${n \ge n_{\min} (p, \eta)}$. Thus, \eqref{ineq:divergence_operator_norm} holds for all ${n \ge n_{\min} (p, \eta)}$ and $\eta\in(0,1)$ if we set $c=4p/(9x^2_1)$. Similar calculations as in the last part of the proof reveal that ${\eps_{\min}(p, n,\eta) = c \rho(p, n,  \eta)}$ grows at most as ${\mathcal{O}(pn^{-\frac{1}{2}} (p+\log \eta^{-1})^{\frac{1}{2}})}$. This establishes the claim for the Wasserstein divergence.

\textbf{Quadratic Divergence:} Since $\|A\|_{\rm F}\leq\sqrt{p}\|A\|$ for all $A\in\mathbb{S}^p$, we have  
\[
    D(\cov_0, \covsa_n) = \| \cov_0 - \covsa_n \|_{\rm F}^2 \le p \|\cov_0 - \covsa_n \|^2 . 
\]
From~\eqref{ineq:operator_norm_1} we already know that $\p^n[ \|\cov_0 - \covsa_n\| \le 1 ]  \ge 1 - \eta$ for all ${n\ge \mathcal{O}(p+ \log \eta^{-1})}$. Thus, there is a function ${n_{\min} (p, \eta)=\mathcal{O}(p+ \log \eta^{-1})}$ such that~\eqref{ineq:divergence_operator_norm} holds for all ${ n \ge n_{\min} (p, \eta)}$ and $\eta\in(0,1)$ if we set $c=p$. As usual, one verifies that ${\eps_{\min}(p, n,\eta) = c \rho(p, n, \eta) = \mathcal{O} (p n^{-\frac{1}{2}} (p+\log \eta^{-1})^{\frac{1}{2}})}$. This proves the claim for the quadratic divergence.

\textbf{Weighted Quadratic Divergence:} As $\Tr{AB}\le \|A\|  \|B\|_* \le p \|A\| \|B\|$ for all $A,B\in \Sym^p$, we have
\[ 
    D(\cov_0, \covsa_n) = \Tr{(\cov_0-\covsa_n)^2\covsa_n^{-1}}\le p \|(\cov_0-\covsa_n)^2\|\|\covsa_n^{-1}\|\le  \frac{p}{\xsa_{1,n}}\|\cov_0-\covsa_n\|^2 \le  \frac{2p}{x_1}\|\cov_0-\covsa_n\|^2 
\]
whenever $\xsa_{1,n} \ge \frac{x_1}{2}$. Recall also that $\covsa_n$ is indeed invertible under this assumption. 
Together with~\eqref{ineq:sample_min_eig_bound} and~\eqref{ineq:operator_norm_1}, the above inequality implies that there exists a function { $n_{\min} (p, \eta) = \mathcal{O}(p+\log \eta^{-1})$} such that
\[
    \p^n \left[ D(\cov_0, \covsa_n) \le \frac{2p}{x_1} \|\cov_0 - \covsa_n\| \right]\ge 1-\eta,
\]
for all~${n \ge n_{\min} (p, \eta)}$. Thus, \eqref{ineq:divergence_operator_norm} holds for all ${n \ge n_{\min} (p, \eta)}$ and $\eta\in(0,1)$ if we set $c=2p/x_1$. As usual, { we have $\eps_{\min}(p, n,\eta) = \mathcal{O} (p n^{-\frac{1}{2}} (p+\log \eta^{-1})^{\frac{1}{2}})$}. This proves the claim for the weighted quadratic divergence.

\textbf{Fisher-Rao Divergence:} As $\log^2 x \le x- 2 + x^{-1}$ for all~$x>0$, the Fisher-Rao divergence satisfies
\begin{align*}
D(X, Y) & = \sum_{i=1}^p \log^2 \lambda_i (XY^{-1})  \le \sum_{i=1}^p \left( \lambda_i (XY^{-1}) - 2 + \frac{1}{\lambda_i (XY^{-1})}\right)  = \Tr{XY^{-1}} - 2p + \Tr{ Y X^{-1} } 
\end{align*}
for all~$X,Y\in\Sym^p_{++}$, where the last expression equals twice the symmetrized Stein divergence of~$X$ and~$Y$. We have already shown that~\eqref{ineq:divergence_operator_norm} holds for symmetrized Stein divergence for all ${n \ge n_{\min} (p, \eta)=\mathcal{O}(p+ \log \eta^{-1})}$ and $\eta\in(0,1)$ provided that $c = \frac{p}{x_1}$. Thus, \eqref{ineq:divergence_operator_norm} must also hold for the Fisher-Rao divergence if $c = \frac{2p}{x_1}$. As usual, { we have $\eps_{\min}(p, n, \eta) = \mathcal{O}(p n^{-\frac{1}{2}} (p+ \log \eta^{-1})^{\frac{1}{2}})$}. This proves the claim for the Fisher-Rao divergence.
\end{proof}

\section{Verification of the Minimax Property}\label{app:ass-1-verification}
\begin{proposition}\label{prop:minimax_assumption}
All the divergences listed in Table~\ref{table:structured_divergence} satisfy Assumption~\ref{assu:inf_sup}.
\end{proposition}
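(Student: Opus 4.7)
The strategy is to split the divergences of Table~\ref{table:structured_divergence} into two groups according to the geometry of the induced uncertainty set~$\mc B_\eps(\covsa)$, and then invoke an appropriate minimax theorem for each group. In both cases the objective $(X,\cov)\mapsto \Tr{X^2}-2\Tr{\cov X}$ is jointly continuous, strictly convex in $X$, and affine (hence both convex and concave) in $\cov$, so all structural conditions on the loss function in any Sion-type theorem are automatically satisfied; only the convexity and compactness of the feasible sets deserve scrutiny.

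For the Kullback--Leibler, Wasserstein, symmetrized Stein/Jeffreys, quadratic, and weighted quadratic divergences, I would first prove that $\cov\mapsto D(\cov,\covsa)$ is Euclidean convex on its domain, so that $\mc B_\eps(\covsa)$ is a convex subset of $\PSD^p$. The required facts are classical: linearity of $\cov\mapsto \Tr{\covsa^{-1}\cov}$ and convexity of $-\log\det$ on $\PD^p$ handle the KL case; operator concavity of the matrix square root implies concavity of $\cov\mapsto\Tr{(\cov\covsa)^{\half}}$ and hence convexity of the Wasserstein divergence; convexity of $\cov\mapsto\Tr{A\cov^{-1}}$ for $A\succeq 0$ (a standard Schur-complement argument) handles the symmetrized Stein case; the quadratic and weighted-quadratic cases are convex by inspection. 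Combined with lower semicontinuity of $D(\cdot,\covsa)$ and coercivity along rays in $\PSD^p$ (for KL and symmetrized Stein, this uses the blow-up of $-\log\det$ at the boundary of $\PD^p$; for Wasserstein, it uses the elementary bound $\Tr{(\cov\covsa)^{\half}}\le\sqrt{\Tr{\cov}\Tr{\covsa}}$), this renders $\mc B_\eps(\covsa)$ convex and compact. Sion's classical minimax theorem, with $\cov$ ranging over the compact convex set $\mc B_\eps(\covsa)$ and $X$ ranging over the convex set $\PSD^p$, then yields Assumption~\ref{assu:inf_sup} for these five divergences.

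For the Fisher--Rao and inverse Stein divergences, $\mc B_\eps(\covsa)$ is genuinely non-convex in the Euclidean sense (for Fisher--Rao this will be certified by Example~\ref{example:quasi-convex}), so the above route is unavailable. The plan is to equip $\PD^p$ with the affine-invariant Riemannian metric, whose geodesics are the curves $t\mapsto\covsa^{\half}\exp(tA)\covsa^{\half}$, and to show that both divergences are geodesically convex with respect to this metric. The Fisher--Rao divergence is, up to a factor of two, the squared Riemannian distance to~$\covsa$, so its sublevel sets are geodesic balls and hence geodesically convex. Geodesic convexity of $D_{\rm IS}(\cdot,\covsa)$ can be established by a direct second-derivative computation along the geodesic, leveraging the convexity of $u\mapsto e^u-u$. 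Once geodesic convexity of $\mc B_\eps(\covsa)$ is in hand, I would invoke the Riemannian generalization of Sion's minimax theorem (Theorem~\ref{thm:Sion-manifold}) to close the argument, verifying its hypotheses on the affine (hence trivially quasi-concave and upper semicontinuous) objective and using coercivity of $D(\cdot,\covsa)$ to secure geodesic compactness.

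The main obstacle lies in the Fisher--Rao/inverse Stein case. The difficulty is twofold. First, one must select the \emph{correct} Riemannian structure: the affine-invariant metric is standard but not canonical, and verifying geodesic convexity of $D_{\rm IS}$ requires a nontrivial computation rather than a citation. Second, the objective $\Tr{\cov X}$, although Euclidean-linear in $\cov$, is not geodesically affine on $(\PD^p,g_{\rm AI})$, so one must confirm that the bespoke Riemannian minimax theorem is flexible enough to accommodate an objective whose concavity structure is Euclidean while the feasible set's convexity structure is geodesic. Establishing this generalization of Sion's theorem, and then checking its hypotheses in the two non-convex cases, is the technical heart of the argument and is developed in Appendices~\ref{subsec:RG_GC} and~\ref{sec:riemannian_sion}.
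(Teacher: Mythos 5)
Your proposal follows essentially the same route as the paper: Sion's classical minimax theorem for the Kullback--Leibler, Wasserstein, symmetrized Stein, quadratic and weighted quadratic divergences after establishing convexity and compactness of $\B_\eps(\covsa)$, and the affine-invariant Riemannian geometry on $\PD^p$ together with a geodesic generalization of Sion's theorem (Theorem~\ref{thm:Sion-manifold}) for the Fisher--Rao and inverse Stein divergences. One clarification on the point you single out as the main obstacle: there is no need for the Riemannian minimax theorem to ``accommodate'' a Euclidean concavity structure, because the map $\cov\mapsto\Tr{\cov X}$ is itself geodesically convex on $(\PD^p,\{\langle\cdot,\cdot\rangle_\cov\})$ for every $X\in\PSD^p$ (a short Riemannian Hessian computation, Lemma~\ref{lemma:convex-inner}), so the objective $-2\Tr{\cov X}$ is geodesically concave in $\cov$ and the hypotheses of Theorem~\ref{thm:Sion-manifold} are verified directly rather than by relaxing them.
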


\begin{proof}[Proof of Proposition~\ref{prop:minimax_assumption}]
Our goal is to prove the minimax equality
\begin{equation}\label{eq:minimax_proof}
    \min_{\X \in \PSD^p} \;\max_{\cov \in \B_\eps( \covsa)} \;  \Tr{X^2} - 2 \langle \cov , X \rangle
  = \max_{\cov \in \B_\eps( \covsa)}  \;\min_{\X \in \PSD^p} \;  \Tr{X^2} - 2 \langle \cov , X \rangle.
\end{equation}
If~$D$ is the Kullback-Leibler, Fisher-Rao, inverse Stein, symmetrized Stein or weighted quadratic divergence and if~$\covsa$ is singular, then $(\cov, \covsa) \not\in\mathrm{dom}(D)$ for every $\cov\in \PSD^p$. In this case, the uncertainty set $\B_\eps (\covsa) = \{\cov\in\PSD^p: D(\cov, \covsa)\le \eps\}$ is empty, and the minimax equality~\eqref{eq:minimax_proof} holds trivially because both sides of~\eqref{eq:minimax_proof} evaluate to~$\infty$. 
Thus, we may always assume that $\covsa \in \PD^p$ for these divergences.

The objective function $\Tr{X^2} - 2 \langle \cov , X \rangle$ of the minimax problem~\eqref{eq:minimax_proof} is convex and continuous in~$X$ for any fixed $\cov \in \B_\eps (\covsa)$, and it is concave and continuous in~$\cov$ for any fixed $X\in \PSD^p$. If $\B_\eps (\covsa)$ is convex and compact, then~\eqref{eq:minimax_proof} follows readily from Sion's classical minimax theorem. We will argue below that this is true for the Kullback-Leibler, Wasserstein, symmetrized Stein, quadratic, and weighted quadratic divergences. The uncertainty sets associated with the quadratic and weighted quadratic divergences constitute ellipsoids and are, therefore, trivially convex and compact. In addition, the convexity and compactness of the uncertainty set induced by the Wasserstein divergence follow from~\cite[Lemma~A.6]{ref:nguyen2019bridging}. We next show that the Kullback-Leibler and symmetrized Stein divergences also induce convex and compact uncertainty sets.

\textbf{Kullback-Leibler Divergence:} For any fixed $\covsa\in \PD^p$, the Kullback-Leibler divergence $D(\cov, \covsa)$ constitutes a continuous extended real-valued function of~$\cov$. Indeed, one can show that $D(\cov,\covsa)$ tends to infinity as~$\cov$ approaches the boundary of~$\PSD^p$ and~$\covsa\in \PD^p$ is kept fixed. Therefore, the uncertainty set~$\B_\eps (\covsa)$ is closed as a sublevel set of a continuous function. As $t - 1 - \log t\geq 0$ for every~$t>0$, any~$\cov\in \B_\eps (\covsa)$ satisfies 
\begin{align*}
        \eps \ge D(\cov, \covsa) = \frac{1}{2} \sum_{i=1}^p \left( \lambda_i (\covsa^{-1} \cov) - 1 - \log \lambda_i (\covsa^{-1} \cov) \right) \ge \frac{1}{2}\left( \lambda_p (\covsa^{-1} \cov) - 1 - \log \lambda_p (\covsa^{-1} \cov) \right).
\end{align*}
Note that the function $t - 1 - \log t$ grows indefinitely as~$t$ tends to infinity. Consequently, the above inequality implies that there exists~$\overline{\lambda} >0$ with $\lambda_p (\covsa^{-1} \cov) \le \overline{\lambda}$ for all $\cov \in\B_\eps (\covsa)$. Recall now that the { spectral norm} of any positive definite matrix coincides with its maximum eigenvalue. For any $\cov \in\B_\eps (\covsa)$ we thus have 
\[  
    \| \cov\| = \| \covsa^{\frac{1}{2}} \covsa^{-\half} \cov \covsa^{-\half}  \covsa^{\frac{1}{2}} \|\le \| \covsa^{-\half} \cov \covsa^{-\half} \| \|\covsa\|  = \lambda_p(\cov \covsa^{-1} )\lambda_p(\covsa) \le \overline{\lambda}\, \lambda_p(\covsa), 
\]
where the second equality holds because  $\| \covsa^{-\half} \cov \covsa^{-\half} \|=\lambda_p( \covsa^{-\half} \cov \covsa^{-\half})$ and because $\cov \covsa^{-1}$ has the same eigenvalues as $\covsa^{-\half} \cov \covsa^{-\half}$. This shows that $\B_\eps (\covsa)$ is bounded and thus compact. Finally, note that $D(\cov, \covsa)$ is convex in~$\cov$ because $\Tr{\covsa^{-1} \cov }$ is linear and $\log \det (\covsa \cov^{-1})$ is convex in $\cov$. Hence, $\B_\eps (\covsa)$ is convex.

\textbf{Symmetrized Stein Divergence:} For any fixed $\covsa\in \PD^p$, the symmetrized Stein divergence~$D(\cov, \covsa)$ is continuous in~$\cov$. Thus, the corresponding uncertainty set~$\B_\eps (\covsa)$ is closed. Also, any $\cov\in \B_\eps (\covsa)$ satisfies 
\begin{align*}
        \eps \ge D(\cov, \covsa) = \frac{1}{2} \sum_{i=1}^p \left( \lambda_i (\covsa^{-1} \cov) + \lambda_i^{-1} (\covsa^{-1}\cov )  - 2  \right) 
        \ge \frac{1}{2}\left( \lambda_p (\covsa^{-1} \cov) + \lambda_p^{-1} (\covsa^{-1}\cov )  - 2  \right),
\end{align*}
where the second inequality holds because all eigenvalues of $\covsa^{-1} \cov$ are positive. Note that $t + t^{-1} - 2$ grows indefinitely as~$t$ tends to infinity. Hence, there exists $\overline{\lambda} >0$ with $\lambda_p (\covsa^{-1} \cov) \le \overline{\lambda}$ for all~$\cov \in\B_\eps (\covsa)$. By using a similar reasoning as for the Kullback-Leibler divergence, we can thus show that $\B_\eps (\covsa)$ is compact. To prove convexity, we need to show that~$D(\cov, \covsa)$ is a convex function of~$\cov$. But this follows from \cite[Exercise~3.18(a)]{ref:boyd2004convex}.

The uncertainty sets induced by the Fisher-Rao and inverse Stein divergences fail to be convex in the standard Euclidean sense; see Section~\ref{subsec:non-quasi-convex}. We will show, however, that these uncertainty sets are geodesically convex with respect to a certain Riemannian geometry on the cone~$\PD^p$. This will allow us to prove the minimax equality~\eqref{eq:minimax_proof} by appealing to Theorem~\ref{thm:Sion-manifold}, which establishes a generalized version of Sion's minimax theorem for geodesic quasi-convex-quasi-concave minimax problems on Hadamard manifolds. 

In order to apply Theorem~\ref{thm:Sion-manifold}, we embed the feasible set~$\PSD^p$ of the minimization problem in~\eqref{eq:minimax_proof} into~$\Sym^p$ equipped with the usual Euclidean geometry. Recall from Example~\ref{exam:Euclidean_Hadamard} that~$\Sym^p$ can be viewed as a Hadamard manifold and that the associated geodesic convexity coincides with the usual Euclidean convexity. Thus, the feasible set~$\PSD^p$ constitutes a convex subset of the Hadamard manifold~$\Sym^p$. In addition, we embed the feasible set~$\B_\eps (\covsa)$ of the maximization problem in~\eqref{eq:minimax_proof} into~$\PD^p$. Recall from Example~\ref{exam:PD_Hadamard} that~$\PD^p$ also constitutes a Hadamard manifold. The objective function $\Tr{X^2} - 2 \langle \cov , X \rangle$ of~\eqref{eq:minimax_proof} is ostensibly convex and continuous in~$X$.
Similarly, by Lemma~\ref{lemma:convex-inner}, the objective function is geodesically concave and continuous in~$\cov$. Hence, Theorem~\ref{thm:Sion-manifold} applies, and the desired minimax equality~\eqref{eq:minimax_proof} follows if we can prove that~$\B_\eps (\covsa)$ is geodesically convex as well as compact with respect to the metric topology induced by the Riemannian geometry on~$\PD^p$. By Remark~\ref{rmk:manifold_topology}, however, this  notion of compactness is equivalent to the usual compactness notion with respect to the Euclidean space~$\Sym^p$. Therefore, it suffices to show that $\B_\eps (\covsa)$ is compact in the usual sense.

As for the Fisher-Rao divergence, the compactness and geodesic convexity of~$\B_\eps (\covsa)$ follow from Lemma~\ref{lem:FR_compact_g-convex}. It thus remains to prove the desired properties of~$\B_\eps (\covsa)$ for the inverse Stein divergence.
 
\textbf{Inverse Stein Divergence:} For any fixed $\covsa\in \PD^p$, the inverse Stein divergence~$D(\cov, \covsa)$ is continuous in~$\cov$. Therefore, the corresponding uncertainty set~$\B_\eps (\covsa)$ is closed. In addition, any~$\cov\in \B_\eps (\covsa)$ satisfies
    \begin{align*}
        \eps \ge D(\cov, \covsa)=\frac{1}{2} \sum_{i=1}^p \left( \lambda_i (\cov^{-1} \covsa) - 1 - \log \lambda_i (\cov^{-1} \covsa) \right) \ge \frac{1}{2}\left( \lambda_1 (\cov^{-1} \covsa) - 1 - \log \lambda_1 (\cov^{-1} \covsa) \right),
\end{align*}
where the second inequality holds because~$ t - 1 - \log t\ge 0$ for all~$t>0$. As $t - 1 - \log t$ grows indefinitely when~$t$ tends to~$0$, the above inequality implies that there exists~$\underline{\lambda} >0$ with~$\lambda_1 (\cov^{-1} \covsa) \ge \underline{\lambda}$ for all~$\cov \in\B_\eps (\covsa)$. This in turn implies that $\lambda_p (\covsa^{-1}  \cov) = \lambda_1^{-1} (\cov^{-1} \covsa) \le \underline{\lambda}^{-1}$ for all $\cov \in\B_\eps (\covsa)$. We may thus conclude that~$\B_\eps (\covsa)$ is compact. Finally, since $ D(\cov, \covsa) = \frac{1}{2}\left( \Tr{\cov^{-1} \covsa} - p + \log \det \cov - \log \det  \covsa \right) $, $D(\cov, \covsa)$ is a geodesically convex function of~$\cov$ thanks to Lemmas~\ref{lemma:convex-inner}\ref{lemma:convex-inner_ii} and \ref{lemma:convex-inner}\ref{lemma:convex-inner_iii}. Therefore, $\B_\eps (\covsa)$ is a geodesically convex set by virtue of Proposition~\ref{prop:level_set_geodesic_convex}.
\end{proof}

\subsection{Inapplicability of Sion's Minimax Theorem} \label{subsec:non-quasi-convex}
We now show through counterexamples that if~$D(\cov, \covsa)$ is the Fisher-Rao or inverse Stein divergence, then the corresponding uncertainty set $\B_\eps (\covsa) = \left\{ \cov\in \PSD^p: D(\cov, \covsa) \le \eps \right\}$ fails to be a convex subset of~$\Sym^p$. Hence, for these divergences, we cannot appeal to Sion's classical minimax theorem to prove~\eqref{eq:minimax_proof}. More precisely, we will show that $D(\cov, \covsa)$ fails to be quasi-convex and thus has non-convex sublevel sets.
\begin{definition}[Quasi-convex function] 
	A function $\psi : \PSD^p \to \overline{\R}$ is quasi-convex if for any $\cov_1, \cov_2 \in \PSD^p$ and $\lambda \in [0, 1]$, we have $\psi \left(\lambda \cov_1 + (1-\lambda) \cov_2 \right) \leq \max\{ \psi (\cov_1), \psi (\cov_2) \}$.
\end{definition}

\begin{example}[Non-convexity of the Fisher-Rao uncertainty set] \label{example:quasi-convex}
The divergence $D(\cov, \covsa) = \| \log (\covsa^{-\half} \cov \covsa^{-\half}) \|_{\mathrm{F}}^2$ is not quasi-convex in~$\cov$ for any fixed $\covsa \in \PD^3$. To see this, assume first that $\covsa = I_3$.~Setting
\begin{equation*}
\cov_1 = \begin{pmatrix}
    33  &  -5  & -10\\
    -5  &   6  &   3\\
   -10  &   3  &   4
\end{pmatrix}\quad\text{and}\quad \cov_2 = \begin{pmatrix}
     6  &  -4  &   5\\
    -4  &  11  &  -2\\
     5  &  -2  &  18
\end{pmatrix},
\end{equation*}
one readily verifies that $\cov_1, \cov_2 \succ 0$, while $D (\cov_1, I_3) = 16.4501$ and $D (\cov_2, I_3) = 16.2111$. In addition, we find 
\begin{equation*}
    \textstyle D( \frac{1}{2}\cov_1 + \frac{1}{2} \cov_2, I_3) = 18.6796 > \max\{ 16.4501,\ 16.2111 \} = \max\{ D(\cov_1, I_3), D (\cov_2, I_3) \} .
\end{equation*}
This shows that $D(\cov, I_3)$ fails to be quasi-convex in~$\cov$. For a generic $\covsa \in \PD^3$, we define $\cov_1' = \covsa^{\half} \cov_1\covsa^{\half}$ and $\cov_2' = \covsa^{\half} \cov_2\covsa^{\half}$. The above inequality then immediately implies that
\[
    \textstyle D ( \frac{1}{2}\cov_1' + \frac{1}{2} \cov_2' , \covsa)  >  \max\{ D (\cov_1', \covsa), D (\cov_2', \covsa) \} .
\]
Consequently, the function $D(\cov, \covsa)$ fails to be quasi-convex in~$\cov$ irrespective of $\covsa\in\PD^3$.
\end{example}

\begin{example}[Non-convexity of the inverse Stein uncertainty set] \label{example:quasi-convex:inverse-stein}

The function $D(\cov, \covsa) = \frac{1}{2}( \Tr{\cov^{-1} \covsa} - 3 + \log \det (\cov \covsa^{-1}))$ is not quasi-convex in~$\cov$ for any fixed $\covsa\in \PD^3$. Indeed, if $\covsa = I_3$, we may set
\begin{equation*}
\cov_1 = \begin{pmatrix}
    30  &  13  & 23\\
    13  &   12  &   9\\
   23  &   9  &   20
\end{pmatrix}\quad\text{and}\quad \cov_2 = \begin{pmatrix}
     27  &  13  &   23\\
    13  &  10  &  14\\
     23  &  14  &  30
\end{pmatrix}.
\end{equation*}
It can be verified that $\cov_1, \cov_2 \succ 0$, while $D (\cov_1, I_3) = 4.0427$ and $D (\cov_2, I_3) = 4.3020$. In addition, we find
\begin{equation*}
    \textstyle D ( \frac{1}{2}\cov_1 + \frac{1}{2} \cov_2, I_3) = 4.3262 > \max\{ 4.0427,\ 4.3020 \} = \max\{ D (\cov_1, I_3), D (\cov_2, I_3) \} .
\end{equation*}
This shows that $D (\cov, I_3)$ fails to be quasi-convex in~$\cov$. For a generic $\covsa \in \PD^3$, we define $\cov_1' = \covsa^{\half} \cov_1\covsa^{\half}$ and $\cov_2' = \covsa^{\half} \cov_2\covsa^{\half}$. The above inequality then immediately implies that
\[
    \textstyle D ( \frac{1}{2}\cov_1' + \frac{1}{2} \cov_2', \covsa)  >  \max\{ D (\cov_1', \covsa), D (\cov_2', \covsa) \} 
\]
that is, the function $D (\cov, \covsa)$ fails to be quasi-convex in~$\cov$ irrespective of $\covsa\in\PD^3$.
\end{example}

\subsection{Riemannian Geometry and Geodesic Convexity}
\label{subsec:RG_GC}

In order to keep this paper self-contained, we now briefly review some basic concepts from Riemannian geometry. For a more comprehensive survey of this topic, we refer to the excellent textbooks~\cite{lang2012fundamentals,lee2006riemannian}. 
\begin{definition}[Riemannian manifold]
    A Riemannian manifold is a pair $(\mc M, \{\langle \cdot, \cdot \rangle_u\}_{u\in\mc M})$ consisting of a  differentiable manifold~$\mc M$ and a smooth family of inner products $\{\langle \cdot, \cdot \rangle_u\}_{u\in\mc M}$ defined on the tangent spaces $T_u \mc M$ of $\mc M$. That is, for any $u\in\mc M$, $\langle\cdot, \cdot \rangle_u$ represents a symmetric, positive definite bilinear map on~$T_u \mc M$. The family $\{\langle \cdot, \cdot \rangle_u\}_{u\in\mc M}$ of inner products is called a Riemannian metric on $\mc M$.
\end{definition}
Throughout this paper we will restrict attention to Hadamard manifolds.
\begin{definition}[Hadamard manifolds]\label{def:Hadamard_manifold}
    A Hadamard manifold is a complete, simply connected Riemannian manifold that has everywhere non-positive sectional curvature.
\end{definition}

Intuitively, the sectional curvature of a Riemannian manifold is non-positive at a point~$u$ if and only if the area of any small two-dimensional disc centered at~$u$ is larger or equal to the area of a disc with the same radius in flat space. For a formal definition see~\cite[p.~236]{lang2012fundamentals} or~\cite[p.~154]{lee2006riemannian}. All piecewise continuously differentiable curves on a Riemannian manifold---and, in particular, on a Hadamard manifold---can be assigned a length. 
\begin{definition}[Length of a curve]
The length of a continuously differentiable curve $c:[0,1] \to \mc M$ on a Riemannian manifold $(\mc M, \{\langle \cdot, \cdot \rangle_u\}_{u\in\mc M})$ is defined as
\begin{equation*}
        L(c) = \int_0^1 \sqrt{ \langle \dot{c}(t), \dot{c}(t)\rangle_{c(t)} } \, {\rm d}t.
\end{equation*}
If $c$ is piecewise continuously differentiable, then its length is defined as the sum of the lengths of its pieces.
\end{definition}

The Riemannian distance between two points $u_1,u_2\in\mc M$ is defined as $d_{\mc M} (u_1, u_2) = \min_{c} L(c)$, where the minimum is over all continuously differentiable curves~$c$ with constant speed $(\langle\dot c(t), \dot c(t)\rangle_{c(t)})^\half$ that connect $u_1$ and $u_2$. For complete and connected Riemannian manifolds, the minimum is guaranteed to exist, and any minimizer is a geodesic.
Moreover, by the Hopf-Rinow theorem~\cite{lang2012fundamentals,lee2006riemannian}, any two points on a Hadamard manifold are connected by a unique geodesic. This greatly simplifies the study of convexity on such manifolds.

\begin{definition}[Geodesically convex sets]
    If $(\mc M, \{\langle \cdot, \cdot \rangle_u\}_{u\in\mc M})$ is a Hadamard manifold, then $\mc U\subseteq \mc M$ is geodesically convex if, for any $u_1,u_2\in\mc U$, the image of the geodesic connecting $u_1$ and $u_2$ lies within~$\mc U$.
\end{definition}

\begin{definition}[Geodesically (quasi-)convex function]
    \label{def:geodesically-convex-fct}
    If $(\mc M, \{\langle \cdot, \cdot \rangle_u\}_{u\in\mc M})$ is a Hadamard manifold and $\mc U\subseteq \mc M$ is geodesically convex, then the function $\psi : \mc U\to \mathbb{R}$ is geodesically (quasi-)convex if the composition $\psi\circ c: [0,1] \to \mathbb{R}$ is (quasi-)convex function in the usual Euclidean sense for every geodesic~$c$ connecting two arbitrary points in $\mc U$. In addition, $\phi$ is geodesically (quasi-)concave if $-\phi$ is geodesically (quasi-)convex.
\end{definition}
Definition~\ref{def:geodesically-convex-fct} makes sense because a geodesic is always parametrized proportionally to arc length. It readily implies that all sublevel sets of a geodesically quasi-convex function are geodesically convex.

\begin{proposition}[{\cite[Theorem~3.4]{udriste2013convex}}]
    \label{prop:level_set_geodesic_convex}
    If $(\mc M, \{\langle \cdot, \cdot \rangle_u\}_{u\in\mc M})$ is a Hadamard manifold and $\psi:\mc M\to \mathbb{R}$ is geodesically quasi-convex, then the sublevel set $\{ u\in\mc M: \psi (u) \le \alpha \}$ is geodesically convex for any $ \alpha \in \mathbb{R}$.
\end{proposition}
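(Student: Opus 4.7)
The plan is to unfold the definitions directly, so this statement turns out to require only a short chain of implications. Fix $\alpha \in \mathbb{R}$ and let $S_\alpha = \{u \in \mc M : \psi(u) \le \alpha\}$ denote the sublevel set. To establish geodesic convexity of $S_\alpha$, I need to show that for any two points $u_1, u_2 \in S_\alpha$, the image of the geodesic connecting them is entirely contained in $S_\alpha$.

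First, I would invoke the Hadamard structure of $\mc M$. By Definition~\ref{def:Hadamard_manifold}, $\mc M$ is complete, simply connected, and of non-positive sectional curvature. The Cartan-Hadamard/Hopf-Rinow theorem (as noted in the excerpt just before Definition~\ref{def:geodesically-convex-fct}) then guarantees that there exists a unique geodesic $c : [0,1] \to \mc M$ with $c(0) = u_1$ and $c(1) = u_2$. Hence, speaking of ``the geodesic connecting $u_1$ and $u_2$'' is unambiguous.

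Next, I would apply the geodesic quasi-convexity of $\psi$. By Definition~\ref{def:geodesically-convex-fct}, the composition $\psi \circ c : [0,1] \to \mathbb{R}$ is quasi-convex in the standard Euclidean sense. Consequently, for every $t \in [0,1]$,
\begin{equation*}
\psi(c(t)) \;=\; (\psi \circ c)(t) \;\le\; \max\{ (\psi \circ c)(0),\, (\psi \circ c)(1) \} \;=\; \max\{ \psi(u_1),\, \psi(u_2) \} \;\le\; \alpha,
\end{equation*}
where the final inequality uses the assumption $u_1, u_2 \in S_\alpha$. This shows that $c(t) \in S_\alpha$ for every $t \in [0,1]$, which is precisely the geodesic convexity of $S_\alpha$.

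There is no genuine obstacle here: the statement is essentially a tautological consequence of the definitions, with the only substantive input being the existence of a connecting geodesic, which the Hadamard hypothesis delivers for free. The only stylistic care needed is to distinguish the two roles of quasi-convexity---geodesic quasi-convexity on $\mc M$ for the hypothesis on $\psi$, and ordinary Euclidean quasi-convexity on $[0,1]$ for the restriction $\psi \circ c$.
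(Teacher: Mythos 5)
Your proof is correct. The paper does not actually prove this proposition---it cites it directly from Udriste's textbook---and your argument is exactly the standard one: uniqueness of the connecting geodesic from the Hadamard (Cartan--Hadamard) structure, followed by the one-line estimate $\psi(c(t))\le\max\{\psi(u_1),\psi(u_2)\}\le\alpha$ from Euclidean quasi-convexity of $\psi\circ c$. The only pedantic remark is that the paper's Definition~\ref{def:geodesically-convex-fct} is phrased for functions on a geodesically convex subset $\mc U$; here $\mc U=\mc M$, which is trivially geodesically convex, so the definition applies without issue.
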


The examples below are useful for our theoretical development and used in the proof of Proposition~\ref{prop:minimax_assumption}.
\begin{example}\label{exam:Euclidean_Hadamard}
The Euclidean spaces $\mathbb{R}^p$ and $\Sym^p$ equipped with their usual inner products constitute Hadamard manifolds. In both cases, geodesic convexity (of sets as well as functions) reduces to Euclidean convexity.
\end{example}

\begin{example}\label{exam:PD_Hadamard}
The cone of positive definite matrices $\PD^p$ represents a differentiable manifold~\cite{ref:bhatia2007pd,lang2012fundamentals}. The tangent space $T_\cov \PD^p$ at $\cov\in \PD^p$ is naturally identified with~$\mathbb{S}^p$, that is, all tangent vectors constitute symmetric matrices. We can assign every~$\cov\in\PD^p$ an inner product $\langle \cdot, \cdot\rangle_{\cov} : \mathbb{S}^p\times \mathbb{S}^p \to \mathbb{R}$ defined through
\[
    \langle \cov_1, \cov_2\rangle_{\cov} = \Tr{\cov^{-1} \cov_1 \cov^{-1} \cov_2} \quad\forall \cov_1,\cov_2 \in \mathbb{S}^p.
\]
By \cite[Theorem~XII~1.2]{lang2012fundamentals}, $\PD^p$ equipped with the inner products $\langle \cdot, \cdot\rangle_{\cov}$, $\cov\in\PD^p$, is a Hadamard manifold.
\end{example}

\begin{remark}\label{rmk:manifold_topology}
By definition, any Hadamard manifold $(\mc M, \{\langle \cdot, \cdot \rangle_u\}_{u\in\mc M})$ is simply connected and therefore, in particular, connected. Hence, \cite[Theorem 13.29]{lee2013smooth} implies that the metric topology on~$\mc M$ induced by the Riemannian distance~$d_{\mathcal{M}}$ coincides with the manifold topology. For instance, the metric topology on the Hadamard manifold $\PD^p$ from Example~\ref{exam:PD_Hadamard} coincides with the subspace topology on~$\PD^p$ inherited from the ambient vector space~$\Sym^p$, which is the standard (Euclidean norm) topology used for matrices.
\end{remark}

In the following lemmas, we treat $\PD^p$ as a Hadamard manifold in the sense of Example~\ref{exam:PD_Hadamard}. 
\begin{lemma}[Compactness and convexity {\cite[Theorem~2.5]{nguyen2019calculating}}]\label{lem:FR_compact_g-convex}
For any fixed $\cov'\in\PD^p$, the set 
\[
     \left\{ \cov \in \PD^p: \, \| \log ({\cov'}^{-\half} \cov {\cov'}^{-\half}) \|_{\mathrm{F}}^2 \leq \eps^2 \right\}
\]
constitutes a compact and geodesically convex subset of~$\PD^p$.
\end{lemma}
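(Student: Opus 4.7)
The plan is to identify the given set with the closed ball of radius~$\eps$ centered at~$\cov'$ in the Fisher-Rao metric on the Hadamard manifold $\PD^p$ introduced in Example~\ref{exam:PD_Hadamard}. A standard computation (or the statement of Lemma~\ref{lem:FR-Gauss}) shows that the Riemannian distance on $\PD^p$ is exactly $d_{\PD^p}(\cov_1,\cov_2)=\|\log(\cov_1^{-\half}\cov_2\cov_1^{-\half})\|_{\mathrm{F}}$, so the set in question coincides with $\{\cov\in\PD^p: d_{\PD^p}(\cov,\cov')\le\eps\}$.

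For geodesic convexity, I would appeal to a fundamental property of Hadamard manifolds: the squared distance $\psi(\cov)=d_{\PD^p}(\cov,\cov')^2$ from any fixed point is a geodesically convex function (a direct consequence of non-positive sectional curvature, via the CAT(0) inequality for geodesic triangles). Proposition~\ref{prop:level_set_geodesic_convex} then immediately implies that the sublevel set $\{\psi\le \eps^2\}$ is geodesically convex. A self-contained verification can be done by first normalizing to the case $\cov'=I_p$ (using that $\cov\mapsto \cov'^{-\half}\cov\cov'^{-\half}$ is an isometry of the Fisher-Rao geometry), then evaluating $t\mapsto \|\log(\gamma(t))\|_{\mathrm{F}}^2$ along the explicit geodesic $\gamma(t)=\cov_1^{\half}(\cov_1^{-\half}\cov_2\cov_1^{-\half})^t\cov_1^{\half}$ connecting two arbitrary points of the ball; convexity of this scalar function in~$t$ is the technical heart of the argument and is where I expect the main work to lie.

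For compactness, Remark~\ref{rmk:manifold_topology} reduces the task to showing that the set is closed and bounded in the Euclidean space~$\Sym^p$. If $\|\log(\cov'^{-\half}\cov\cov'^{-\half})\|_{\mathrm{F}}^2\le\eps^2$, then every eigenvalue $\mu_i$ of $\cov'^{-\half}\cov\cov'^{-\half}$ satisfies $(\log\mu_i)^2\le\eps^2$, i.e., $e^{-\eps}\le\mu_i\le e^{\eps}$. Combining this with the identity $\cov=\cov'^{\half}(\cov'^{-\half}\cov\cov'^{-\half})\cov'^{\half}$ and the standard inequalities $\lambda_p(AB)\le\lambda_p(A)\lambda_p(B)$ and $\lambda_1(AB)\ge\lambda_1(A)\lambda_1(B)$ for positive semidefinite matrices yields the two-sided spectral bound
\[
\lambda_1(\cov')\,e^{-\eps}\;\le\;\lambda_1(\cov)\;\le\;\lambda_p(\cov)\;\le\;\lambda_p(\cov')\,e^{\eps}.
\]
The upper bound gives boundedness in $\Sym^p$, while the lower bound keeps the set uniformly away from the boundary of $\PD^p$. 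Since the Fisher-Rao squared distance is continuous on $\PD^p$, the set is closed in $\PD^p$, and combined with the uniform positive lower bound on the spectrum it is also closed in the ambient space $\Sym^p$. The Heine-Borel theorem then delivers compactness, completing the proof. The only genuinely delicate ingredient is the convexity of the squared Fisher-Rao distance along geodesics; boundedness, closedness, and the reduction to a metric-ball picture are straightforward.
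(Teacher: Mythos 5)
Your proof is correct, but there is nothing in the paper to compare it against: the lemma is imported wholesale from~\cite[Theorem~2.5]{nguyen2019calculating} and never proved internally, so what you have written is a self-contained substitute rather than a variant of the paper's argument. The structure is sound. Identifying the set as the closed metric ball $\{\cov:d_{\PD^p}(\cov,\cov')\le\eps\}$ for the affine-invariant metric of Example~\ref{exam:PD_Hadamard} is the right move (one small caution: Lemma~\ref{lem:FR-Gauss} gives the distance only up to the constant factor $1/\sqrt{2}$ relative to that metric, since the Fisher information metric carries an extra $\tfrac12$; this is harmless because rescaling the metric changes neither the geodesics nor the topology, but you should not claim the two distances are ``exactly'' equal). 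Given that, geodesic convexity does follow from the packaged CAT(0) fact that the squared distance to a fixed point is geodesically convex on a Hadamard manifold, combined with Proposition~\ref{prop:level_set_geodesic_convex}; the self-contained route you sketch along $\gamma(t)=\cov_1^{\half}(\cov_1^{-\half}\cov_2\cov_1^{-\half})^{t}\cov_1^{\half}$ is precisely the convexity-of-the-metric computation in Bhatia's treatment of $\PD^p$, and you are right that this is where the real work sits if one refuses the black box. The compactness argument is complete: the spectral sandwich $e^{-\eps}\cov'\preceq\cov\preceq e^{\eps}\cov'$ gives boundedness and a uniform positive lower bound on the spectrum, so relative closedness in $\PD^p$ (by continuity of the divergence) upgrades to closedness in $\Sym^p$, Heine--Borel applies, and Remark~\ref{rmk:manifold_topology} correctly justifies working in the Euclidean topology throughout.
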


We now show that several popular matrix functions are geodesically convex. Here, we adopt the standard terminology whereby a function that is both geodesically convex and concave is called geodesically linear.

\begin{lemma}[Geodesic convexity of popular matrix functions] \label{lemma:convex-inner}
    The following hold.
    \begin{enumerate}[label = (\roman*)]
        \item\label{lemma:convex-inner_i} $g(\cov)= \Tr{X\cov}$ is geodesically convex on~$\PD^p$ for every $X\in \PSD^p$.
        \item\label{lemma:convex-inner_ii} $g(\cov)= \Tr{X\cov^{-1}}$ is geodesically convex on~$\PD^p$ for every $X\in \PSD^p$.
        \item\label{lemma:convex-inner_iii} $g(\cov)= \log \det \cov$  is geodesically linear on~$\PD^p$.
    \end{enumerate}   
\end{lemma}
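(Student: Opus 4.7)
The plan is to prove all three parts by reducing to the explicit formula for geodesics on the Hadamard manifold $\PD^p$ from Example~\ref{exam:PD_Hadamard} and then performing a spectral diagonalization that turns matrix expressions into scalar sums whose convexity/linearity in the geodesic parameter is evident. Recall that the unique geodesic $c:[0,1]\to \PD^p$ connecting two points $\Sigma_1,\Sigma_2\in\PD^p$ is given by
\[
c(t) = \Sigma_1^{1/2}\bigl(\Sigma_1^{-1/2}\Sigma_2\Sigma_1^{-1/2}\bigr)^{t}\Sigma_1^{1/2}.
\]
Writing $M = \Sigma_1^{-1/2}\Sigma_2\Sigma_1^{-1/2}\in\PD^p$ and using its spectral decomposition $M = U\Diag(\lambda)U^\top$, with $\lambda_i>0$, we obtain $c(t) = \Sigma_1^{1/2}U\,\Diag(\lambda^{t})\,U^\top\Sigma_1^{1/2}$, where $\lambda^t$ denotes the entrywise power.

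For part~\ref{lemma:convex-inner_iii}, I will simply take the log-determinant of this expression and use multiplicativity of $\det$ to write $\log\det c(t) = \log\det\Sigma_1 + t\sum_{i=1}^p \log\lambda_i = (1-t)\log\det\Sigma_1 + t\log\det\Sigma_2$. Since this is affine in $t$, $g$ is geodesically linear. For part~\ref{lemma:convex-inner_i}, I will set $A = U^\top\Sigma_1^{1/2}X\Sigma_1^{1/2}U$, which is positive semidefinite (hence has non-negative diagonal entries) because $X\in\PSD^p$. Cyclicity of the trace then yields
\[
g(c(t)) = \Tr{X\,c(t)} = \Tr{A\,\Diag(\lambda^t)} = \sum_{i=1}^p A_{ii}\,e^{t\log\lambda_i},
\]
which is a non-negative linear combination of convex exponential functions of $t$, hence convex.

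For part~\ref{lemma:convex-inner_ii}, I will first note that $c(t)^{-1} = \Sigma_1^{-1/2}U\,\Diag(\lambda^{-t})\,U^\top\Sigma_1^{-1/2}$ (this is immediate from the formula for $c(t)$, using that $(A^t)^{-1}=A^{-t}$ for $A\succ 0$). Defining $B = U^\top\Sigma_1^{-1/2}X\Sigma_1^{-1/2}U\succeq 0$, the same trace manipulation gives $g(c(t)) = \sum_{i=1}^p B_{ii}\,e^{-t\log\lambda_i}$, again a non-negative combination of scalar exponentials in $t$, and therefore convex. The mild obstacle in executing this plan is verifying the stated formula for the Riemannian geodesic on $\PD^p$; I will simply cite~\cite[Chapter~XII]{lang2012fundamentals} or~\cite{ref:bhatia2007pd}, since this geodesic (associated with the affine-invariant metric) is classical. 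Once that formula is in hand, the spectral diagonalization turns each claim into an elementary one-line scalar statement.
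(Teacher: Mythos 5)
Your proof is correct, but it proceeds by a genuinely different route from the paper. The paper verifies geodesic convexity via the second-order characterization: it computes the Riemannian Hessian of each function $g$ on the Hadamard manifold $\PD^p$ (using the conversion formula from the Euclidean gradient and Hessian in \cite[\S~4.2]{ferreira2019gradient}) and checks that $\inner{\mathrm{Hess}\, g(\Sigma)[S]}{S}_\Sigma \ge 0$ for all tangent directions $S\in\Sym^p$, invoking \cite[Theorem~6.2]{udriste2013convex}. You instead work directly from the zeroth-order definition: you write down the explicit geodesic $c(t) = \Sigma_1^{1/2}(\Sigma_1^{-1/2}\Sigma_2\Sigma_1^{-1/2})^{t}\Sigma_1^{1/2}$ of the affine-invariant metric, diagonalize, and reduce each claim to the convexity (or affinity) in $t$ of a non-negative combination of scalar exponentials $e^{\pm t\log\lambda_i}$; your computations of $A_{ii}\ge 0$, $B_{ii}\ge 0$, and of $c(t)^{-1}$ are all sound, and part~\ref{lemma:convex-inner_iii} correctly collapses to $(1-t)\log\det\Sigma_1 + t\log\det\Sigma_2$. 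The trade-off is that your argument is more elementary and self-contained once the geodesic formula is accepted (it avoids Riemannian connections and Hessians entirely and only needs the definition of geodesic convexity used in Definition~\ref{def:geodesically-convex-fct}), whereas the paper's Hessian route avoids having to exhibit and justify the closed-form geodesic and generalizes mechanically to other functions whose Euclidean derivatives are known. The only point you should make fully explicit is the citation for the geodesic formula on $(\PD^p, \langle\cdot,\cdot\rangle_\cov)$, which is classical and available in \cite{ref:bhatia2007pd} or \cite[Chapter~XII]{lang2012fundamentals}, as you indicate.
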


\begin{proof}[Proof of Lemma~\ref{lemma:convex-inner}]
We can prove assertion~\ref{lemma:convex-inner_i} by showing that, for every fixed $\cov\in\PD^p$, the Riemannian Hessian of the function $g(\cov)= \Tr{X\cov}$ is positive semidefinite on the tangent space $T_\cov \PD^p \cong \mathbb{S}^p$ \cite{absil2009optimization, udriste2013convex}. To this end, note first that the Euclidean gradient of~$g$ is given by~$\nabla g(\cov) = X$ and that the Euclidean Hessian $\nabla^2 g(\cov)$ coincides with the zero map from~$\Sym^p$ to~$\Sym^p$. By \cite[\S~4.2]{ferreira2019gradient}, the Riemannian Hessian of~$g$ thus satisfies
\begin{align*}
        \mathrm{Hess}\, g(\Sigma) [S] &= \half ( SX\Sigma + \Sigma XS) \quad \forall S \in \mathbb{S}^p. 
\end{align*}
This implies that
\[
    \inner{\mathrm{Hess}\, g(\Sigma)[S] }{S}_\Sigma =  \Tr{ SXS \Sigma^{-1} } \ge 0 \quad \forall S \in \mathbb{S}^p,
\]
where the inequality holds because $SXS\in \PSD^p$ and $\cov^{-1}\in \PD^p$. Thus, the Riemannian Hessian of~$g$ is positive semidefinite on the tangent space $T_\cov \PD^p \cong \mathbb{S}^p$. As $\cov\in\PD^p$ was chosen freely, this shows via~\cite[Theorem~6.2]{udriste2013convex} that~$g$ is geodesically convex throughout~$\PD^p$.

Assertions~\ref{lemma:convex-inner_ii} and~\ref{lemma:convex-inner_iii} are proved similarly. As for assertion~\ref{lemma:convex-inner_ii}, note that the gradient of~$g(\cov)= \Tr{X\cov^{-1}}$ is given by $\nabla g( \cov) = - \cov^{-1} X \cov^{-1}$ \cite[\S~2.2]{ref:petersen2012cookbook}. Also, the Hessian of $g$ is a linear operator on $\Sym^p$ satisfying 
\begin{align*}
    \nabla^2 g(\cov) [S] & = \left.\frac{\mathrm{d} \nabla g(\cov + t S  )}{\mathrm{d} t}\right|_{t=0} = - \left.\frac{\mathrm{d} (\cov + t S)^{-1}}{\mathrm{d} t}\right|_{t=0} X \cov^{-1} - \cov^{-1} X \left.\frac{\mathrm{d} (\cov + t S)^{-1}}{\mathrm{d} t}\right|_{t=0}\\
    & = \cov^{-1} S \cov^{-1} X \cov^{-1} + \cov^{-1} X\cov^{-1} S \cov^{-1},
\end{align*}
where the third equality exploits~\cite[\S~2.2]{ref:petersen2012cookbook}. By \cite[\S~4.2]{ferreira2019gradient}, the Riemannian Hessian of~$g$ thus satisfies
\begin{align*}
        \mathrm{Hess}\, g(\Sigma) [S] &= \half ( S\Sigma^{-1}X + X\Sigma^{-1} S) \quad \forall S \in \mathbb{S}^p. 
\end{align*}
This implies that
\[
    \inner{\mathrm{Hess}\, g(\Sigma)[S] }{S}_\Sigma = \half \Tr{ \cov^{-1} ( S\Sigma^{-1}X + X\Sigma^{-1} S) \cov^{-1} S } = \Tr{ S\cov^{-1} S \cov^{-1}X\cov^{-1} } \ge 0 \quad \forall S \in \mathbb{S}^p,
\]
where the inequality holds because $S\cov^{-1} S$ and $\cov^{-1}X\cov^{-1}$ are positive semidefinite. Thus, the Riemannian Hessian of~$g$ is positive semidefinite on the tangent space $T_\cov \PD^p \cong \mathbb{S}^p$, and the claim follows.

As for assertion~\ref{lemma:convex-inner_iii}, the gradient of $g(\cov)= \log \det \cov$ is given by $\nabla g( \cov) = - \cov^{-1} $, and the Hessian of $g$ is a linear operator on $\Sym^p$ satisfying $\nabla^2 g(\cov) [S]  =\cov^{-1} S \cov^{-1}$ \cite[\S~2.2]{ref:petersen2012cookbook}. By \cite[\S~4.2]{ferreira2019gradient}, the Riemannian Hessian of~$g$ thus satisfies
$\mathrm{Hess}\, g(\Sigma) [S] = 0$ for all $S \in \mathbb{S}^p$. Hence, $g$ is both geodesically convex and concave on~$\PD^p$.
\end{proof}

\subsection{A Riemannian Generalization of Sion's Minimax Theorem}\label{sec:riemannian_sion}
We now present a generalization of Sion's minimax theorem for geodesically convex-concave saddle functions on Hadamard manifolds.\footnote{While finalizing this paper, we discovered a concurrent work describing a result akin to Theorem~\ref{thm:Sion-manifold} \cite{zhang2022minimax}.}

\begin{theorem}[Sion's minimax theorem for geodesically convex-concave saddle problems] \label{thm:Sion-manifold}
    Let~$\mc U$ and~$\mc V$ be geodesically convex subsets of two Hadamard manifolds, and assume that $\mc U$ is compact. Also, let $\psi: \mc U\times \mc V \rightarrow \R$ be a function with $\psi(u, \cdot)$ being upper semi-continuous and geodesically quasi-concave on $\mc V $ for any fixed~$u \in \mc U $ and with $\psi(\cdot, v)$ being lower semi-continuous and geodesically quasi-convex on $\mc U $ for every fixed $v \in \mc V $. Then,
	\[
	\Min{u \in \mc U } \Sup{v \in \mc V } ~ \psi(u, v) = \Sup{v \in \mc V } \Min{u \in \mc U } ~ \psi(u, v).
	\]
\end{theorem}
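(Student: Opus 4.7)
The weak duality inequality $\sup_{v\in\mc V}\min_{u\in\mc U}\psi(u,v)\le\min_{u\in\mc U}\sup_{v\in\mc V}\psi(u,v)$ is elementary and needs no hypotheses. The real work is in the reverse inequality, and the plan is to adapt the classical proof of Sion's minimax theorem to the Riemannian setting, systematically replacing Euclidean line segments by the unique geodesics granted by the Hopf--Rinow theorem on Hadamard manifolds and invoking Proposition~\ref{prop:level_set_geodesic_convex} at every step where a sub- or super-level set must be shown convex.

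First I would fix an arbitrary $\alpha<\min_{u\in\mc U}\sup_{v\in\mc V}\psi(u,v)$ and, for each $v\in\mc V$, form the sublevel set $C_v=\{u\in\mc U:\psi(u,v)\le\alpha\}$; this set is closed by the lower semi-continuity of $\psi(\cdot,v)$ and geodesically convex by the geodesic quasi-convexity of $\psi(\cdot,v)$ combined with Proposition~\ref{prop:level_set_geodesic_convex}. The open sets $\mc U\setminus C_v$ cover $\mc U$ by the choice of $\alpha$, and compactness of $\mc U$ yields a finite subcover indexed by some $v_1,\ldots,v_n\in\mc V$. Thus it would be enough to prove the finite-sample inequality
\begin{equation*}
\min_{u\in\mc U}\max_{1\le i\le n}\psi(u,v_i)\;\le\;\sup_{v\in\mc V}\min_{u\in\mc U}\psi(u,v)\qquad\forall\,v_1,\ldots,v_n\in\mc V,
\end{equation*}
after which letting $\alpha$ tend upward to $\min_u\sup_v\psi$ completes the argument.

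The finite-sample inequality will be proved by induction on $n$, the case $n=1$ being immediate. The inductive step reduces to a two-point lemma, and this two-point lemma is where I expect the main obstacle to sit: given $v_1,v_2\in\mc V$ and $\beta<\min_{u\in\mc U}\max\{\psi(u,v_1),\psi(u,v_2)\}$, I need to produce a point $v^\star$ on the geodesic connecting $v_1$ to $v_2$ with $\min_{u\in\mc U}\psi(u,v^\star)>\beta$. I would parametrize this geodesic by $c:[0,1]\to\mc V$, assume for contradiction that every $A_t:=\{u\in\mc U:\psi(u,c(t))\le\beta\}$ is nonempty (so $A_0$ and $A_1$ are nonempty, closed, geodesically convex, and disjoint), pick $u_1\in A_0$ and $u_2\in A_1$, and let $\gamma:[0,1]\to\mc U$ be the geodesic from $u_1$ to $u_2$, which remains in $\mc U$ by geodesic convexity. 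The auxiliary function $h(s,t)=\psi(\gamma(s),c(t))$ on the Euclidean square $[0,1]^2$ is then quasi-convex and lower semi-continuous in $s$ and quasi-concave and upper semi-continuous in $t$, because geodesic (quasi-)convexity and (quasi-)concavity collapse to their Euclidean counterparts along any geodesic. The boundary values $h(0,0),h(1,1)\le\beta<h(0,1),h(1,0)$ then set up the classical one-dimensional Sion-type contradiction on the square, which tracks the connected components of $\{s:h(s,t)\le\beta\}$ as $t$ varies and pinches them against each other via upper semi-continuity in $t$.

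The crucial observation is that once the restrictions to $c$ and $\gamma$ are in place, the Riemannian geometry disappears and the remainder of the argument is purely Euclidean and one-dimensional. All other components of the proof---the open-cover compactness extraction, the inductive regrouping from $n$ to $n-1$ points (which preserves geodesic quasi-convexity because the maximum of geodesically quasi-convex functions is again geodesically quasi-convex), and the final limit $\alpha\uparrow\min_u\sup_v\psi$---are topological and rest only on the assumed compactness of $\mc U$ together with the manifold-level existence of geodesics; the Hadamard hypothesis is used solely to guarantee that the connecting geodesics are unique and that level sets of geodesically quasi-convex functions are geodesically convex, both of which are given.
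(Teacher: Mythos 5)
Your overall architecture matches the paper's: weak duality for free, a compactness argument extracting finitely many $v_1,\dots,v_n$ (the paper phrases this via the finite-intersection property of the compact sublevel sets rather than an open cover, but these are equivalent), an induction on $n$ reducing to a two-point lemma, and a connectedness argument for that lemma. The gap is in the two-point lemma itself, and it is a genuine one. You propose to fix $u_1\in A_0$, $u_2\in A_1$, restrict $\psi$ to the product of the geodesic $\gamma$ joining $u_1,u_2$ in $\mc U$ and the geodesic $c$ joining $v_1,v_2$ in $\mc V$, and then run a ``classical one-dimensional Sion-type contradiction'' on $[0,1]^2$ from the four corner values $h(0,0),h(1,1)\le\beta<h(0,1),h(1,0)$. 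Those corner values alone carry no contradiction: the bilinear function $h(s,t)=s(1-t)+t(1-s)$ is continuous, convex (linear) in $s$ and concave (linear) in $t$, satisfies $h(0,0)=h(1,1)=0\le\beta=0<1=h(0,1)=h(1,0)$, and is a perfectly good saddle function with value $\tfrac12$. Moreover, the restriction to $\gamma$ destroys exactly the information your contradiction hypothesis provides: nonemptiness of $A_t=\{u\in\mc U:\psi(u,c(t))\le\beta\}$ for intermediate $t$ is witnessed by points of $\mc U$ that need not lie on $\gamma$, so the sets $\{s:h(s,t)\le\beta\}$ may well be empty (as they are at $t=\tfrac12$ in the example above); and even if your square argument produced a $t^\star$ with $\min_s h(s,t^\star)>\beta$, that only bounds the minimum over $\gamma([0,1])$, not over all of $\mc U$, which is what the lemma must deliver.

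The correct argument (Komiya's, which the paper follows) one-dimensionalizes only the $\mc V$-side and keeps the full sublevel sets in $\mc U$: it introduces an intermediate level $\alpha<\beta<\min_u\max\{\psi(u,v_1),\psi(u,v_2)\}$, shows $\mc L_{v_1}(\beta)\cap\mc L_{v_2}(\beta)=\emptyset$, uses geodesic quasi-concavity of $\psi(u,\cdot)$ along $c$ to get $\mc L_{c(t)}(\beta)\subseteq\mc L_{v_1}(\beta)\cup\mc L_{v_2}(\beta)$, and then uses the geodesic convexity --- hence connectedness --- of the \emph{entire} set $\mc L_{c(t)}(\alpha)\subseteq\mc U$ to conclude that each such set sits in exactly one of the two disjoint pieces. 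The resulting partition $[0,1]=I\cup J$ is shown to consist of two nonempty closed sets (the two-level $\alpha$-versus-$\beta$ device is what makes closedness provable from mere semicontinuity), contradicting the connectedness of $[0,1]$. So the Riemannian structure is used globally through the connectedness of geodesically convex sublevel sets in $\mc U$, not merely along a single geodesic; your reduction to a Euclidean square discards precisely this and cannot be completed as stated.
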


The proof of this Riemannian minimax theorem is omitted because it closely follows the approach in~\cite{ref:komiya1988elementary}, with the natural modification that line segments between two points in linear spaces are replaced by (unique) geodesics connecting two points on Hadamard manifolds.

\section{Verification of the Rearrangement Property}\label{app:verify-ass2c}

\begin{proposition}\label{prop:rearrange}
All the divergences listed in Table~\ref{table:structured_divergence} satisfy Assumption~\ref{assu:D_form}\ref{assu:D_form_iv}.
\end{proposition}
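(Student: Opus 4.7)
The plan is to verify the rearrangement property divergence by divergence, grouping the seven entries of Table~\ref{table:structured_divergence} by the kind of inequality they require. Writing throughout $X = V\Diag(x^\uparrow)V^\top$ and $Y = \Diag(y^\uparrow)$, the first observation is that, by cyclicity of the trace and spectral invariance of $\Tr{\cdot}$ and $\log\det(\cdot)$, every divergence in the table splits into (i) spectral functions of $X$ and $Y$ that are invariant under conjugation by $V$, and (ii) a single ``cross term'' that couples the two arguments. The rearrangement property then reduces, in each case, to a lower bound on this cross term.

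For the Kullback-Leibler, inverse Stein, symmetrized Stein, quadratic, and weighted quadratic divergences the cross term has the form $\Tr{f(X)\,g(Y)}$ for suitable scalar functions $f,g$ applied spectrally (for example, $f(t)=t,\;g(t)=t^{-1}$ for KL, and $f(t)=t^2,\;g(t)=t^{-1}$ for the weighted quadratic). Expanding yields $\Tr{f(X)g(Y)} = \sum_{i,j}V_{ij}^{2}\,f(x_{j}^{\uparrow})\,g(y_{i}^{\uparrow})$, and since $V$ is orthogonal the matrix $(V_{ij}^{2})$ is doubly stochastic. Birkhoff's theorem decomposes it into a convex combination of permutation matrices, and the Hardy-Littlewood-Polya rearrangement inequality---applied with $x^{\uparrow},y^{\uparrow}$ sorted and with $f,g$ having the appropriate (opposite) monotonicity---pins the minimum of the resulting sums at the identity permutation. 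This delivers the desired inequality with the correct sign in each of the five cases.

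The two remaining divergences need genuine matrix inequalities. For the Wasserstein divergence, after cancelling the spectrally invariant $\Tr{X}+\Tr{Y}$, the task reduces to showing $\Tr{(X^{1/2}YX^{1/2})^{1/2}}=\sum_i\sigma_i(Y^{1/2}X^{1/2})\le\sum_i\sqrt{x_i^{\uparrow}y_i^{\uparrow}}$; this follows from Horn's weak log-majorization $\prod_{i=1}^{k}\sigma_i(Y^{1/2}X^{1/2})\le\prod_{i=1}^{k}\sigma_i(Y^{1/2})\sigma_i(X^{1/2})$ combined with the Marshall-Olkin-Arnold fact that weak log-majorization of non-negative vectors implies weak majorization and hence the sum bound. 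For the Fisher-Rao divergence I would chain two estimates: first Bhatia's exponential metric increasing property $\|\log X-\log Y\|_{\mathrm{F}}^{2}\le D_{\mathrm{FR}}(X,Y)$, and then von Neumann's trace inequality applied to the symmetric (but possibly indefinite) matrices $\log X$ and $\log Y$ to bound the cross term $\Tr{\log X\,\log Y}\le\sum_i\log x_i^{\uparrow}\log y_i^{\uparrow}$. Expanding $\|\log X-\log Y\|_{\mathrm{F}}^{2}$ and collecting terms yields $D_{\mathrm{FR}}(X,Y)\ge\sum_i(\log x_i^{\uparrow}-\log y_i^{\uparrow})^{2}$, which is precisely $D_{\mathrm{FR}}(\Diag(x^{\uparrow}),\Diag(y^{\uparrow}))$.

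The equality analysis proceeds uniformly: in the trace-type group, equality in the Hardy-Littlewood-Polya step forces the doubly stochastic matrix $(V_{ij}^{2})$ to act only within blocks of equal entries of $x^{\uparrow}$, which is precisely the condition $V\Diag(x^{\uparrow})V^\top=\Diag(x^{\uparrow})$; in the Wasserstein and Fisher-Rao cases, equality in Horn's log-majorization (respectively, in EMI combined with von Neumann) forces $X$ and $Y$ to commute with matched sorted eigenvectors, which, since $Y$ is already diagonal in sorted form, again reduces to $V\Diag(x^{\uparrow})V^\top=\Diag(x^{\uparrow})$. \textbf{The main obstacle} I anticipate is the Fisher-Rao case: it rests on the non-trivial exponential metric increasing property, and the equality analysis must combine two distinct equality conditions (commutativity from EMI and matched sorting from von Neumann) into the single clean criterion required by Assumption~\ref{assu:D_form}\ref{assu:D_form_iv}. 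A secondary subtlety, present throughout, is the careful treatment of eigenvalue multiplicities in $x^{\uparrow}$, where permutations within an equal block are allowed on the left-hand side without changing the value of the divergence and must be seen to correspond exactly to orthogonal transformations $V$ that fix $\Diag(x^{\uparrow})$.
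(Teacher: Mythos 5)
Your verification of the inequality is correct but follows a genuinely different route from the paper's. The paper writes every divergence in the single template~\eqref{eq:D_form_1}, strips off the conjugation-invariant spectral terms, and then invokes one external matrix rearrangement theorem \cite[Theorem~3]{yue2020matrix}, whose two monotonicity regimes for $tf'(t)$, $g_1$ and $g_2$ cover all seven rows of Table~\ref{table:structured_divergence} at once, including Wasserstein and Fisher--Rao. You instead argue case by case: for the five trace-type divergences your reduction of $\Tr{f(X)g(Y)}$ to $\sum_{i,j}V_{ij}^2 f(x_j^\uparrow)g(y_i^\uparrow)$, followed by Birkhoff and the scalar Hardy--Littlewood--P\'olya inequality, is elementary and self-contained; for Wasserstein you import Horn's weak log-majorization of singular values together with the passage from weak log-majorization to weak majorization; for Fisher--Rao you chain the exponential metric increasing property with the Ruhe/von~Neumann trace inequality for symmetric matrices. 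All three chains are sound and deliver exactly the required lower bounds with the correct signs. The paper's route buys uniformity and a single equality analysis delegated to the cited reference; yours buys transparency and independence from that reference, at the price of three distinct nontrivial matrix-analytic inputs.

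The one genuine soft spot is the ``only if'' direction of the equality claim, and it is a soft spot you share with the paper. If $y^\uparrow$ has repeated entries, the asserted equivalence with $V\Diag(x^\uparrow)V^\top=\Diag(x^\uparrow)$ cannot hold: taking $y$ proportional to the all-ones vector makes $\Diag(y^\uparrow)$ a multiple of the identity, so orthogonal equivariance forces equality for \emph{every} $V$, while $V\Diag(x^\uparrow)V^\top\neq\Diag(x^\uparrow)$ for generic $V$ and non-constant $x$. Concretely, in your HLP step the minimum is attained by every doubly stochastic matrix once $g(y_i^\uparrow)$ is constant on a block, so equality places no constraint whatsoever on $(V_{ij}^2)$ within that block; the analogous degeneracy appears in the equality conditions for Horn's inequality and for EMI combined with von~Neumann. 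Your closing remark about multiplicities identifies the right phenomenon but attributes it only to ties in $x^\uparrow$, whereas it is ties in $y^\uparrow$ that actually break the stated equivalence. Since the paper's own proof disposes of the equality case by deferring to ``inspecting the proofs'' of the cited reference, this is best read as a defect in the literal formulation of Assumption~\ref{assu:D_form}\ref{assu:D_form_iv} rather than a flaw specific to your argument; still, a complete write-up along your lines would need to either restate the equality condition or restrict it to the non-degenerate configurations actually used downstream.
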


\begin{proof}[Proof of Proposition~\ref{prop:rearrange}]
Let~$D$ be the Kullback-Leibler, Fisher-Rao, inverse Stein or symmetric Stein divergence. In either case, if~$x$ or~$y$ contains any vanishing entry, then both sides of the rearrangement inequality in Assumption~\ref{assu:D_form}\ref{assu:D_form_iv} evaluate to~$+\infty$; see the definitions in Table~\ref{table:structured_divergence}. Thus, Assumption~\ref{assu:D_form}\ref{assu:D_form_iv} is trivially satisfied. It therefore suffices to prove the inequality for $x,y\in \R_{++}^p$. Next, let~$D$ be the weighted quadratic divergence. Hence, if~$y$ contains any vanishing entry, then both sides of the rearrangement inequality evaluate again to~$+\infty$, and Assumption~\ref{assu:D_form}\ref{assu:D_form_iv} is trivially satisfied. It therefore suffices to assume that~$y\in\R_{++}^p$. With these assumptions in place, both sides of the rearrangement inequality are guaranteed to be finite.

The subsequent proof requires additional notation. We use~$\sigma_i (S)$ to denote the $i$-th smallest singular value of the matrix~$S\in \Sym^p$. The vector $\sigma(S)\in\R_+^p$ is then defined through $(\sigma (S))_i = \sigma_i(S)$ for all $i = 1,\dots,p$. Any univariate function $g :\R\to \R$ naturally induces multivariate functions $g:\R^p \to \R^p$ and $g:\Sym^p\to \Sym^p$, which, by slight abuse of notation, are represented by the same symbol~$g$. Specifically, for any $x\in\R^p$, we define $g(x) \in \R^p$ through $( g(x) )_i = g(x_i)$ for all $i = 1,\dots,p$. Similarly, for any $S\in\Sym^p$ with eigenvalue decomposition $S = V_S \Diag (\lambda (S)) V_S^\top$ with $V_S\in \mathcal{O}_p$, we define $g(S)\in\Sym^p$ through $g(S) = V_S \Diag(g(\lambda (S))) V_S^\top$.

Observe now that all divergences listed in Table~\ref{table:structured_divergence} are representable as
\begin{equation}\label{eq:D_form_1}
    D(X, Y) = \sum_{i=1}^p \big( h_1 (\lambda_i(X)) + h_2(\lambda_i(Y)) \big) + \sum_{i = 1}^p f\big(\lambda_i( g_2( Y^{\half}) g_1 (X) g_2( Y^{\half}) )\big)
\end{equation}
for some functions $f$, $h_1$, $h_2$, $g_1$ and $g_2$ from~$\R$ to~$\overline{\R}$ as specified in Table~\ref{table:structured_divergence_f_h}.
\bgroup
\def\arraystretch{1.8}
\begin{table}[t]
\begin{tabular}{|l||c|c|c|c|c|c|c|}
\hline
Divergence & $h_1(t)$ & $h_2(t)$ & $g_1(t )$ & $g_2(t)$ & $f(t)$ & $tf'(t)$\\ \hline\hline

Kullback-Leibler & $-\tfrac{1}{4}$& $-\tfrac{1}{4}$ & $t$  & $\frac{1}{t}$ & $\frac{1}{2}\left( t- \log t \right)$& $\frac{1}{2} (t - 1)$\\ \hline

Wasserstein & $t$ & $t$ &  $t$ & $t$ & $-2 \sqrt{t}$ & $-\sqrt{t}$\\ \hline

Fisher-Rao & $0$ & $0$ & $t$ & $\frac{1}{t}$ & $\left( \log t  \right)^2$ & $2\log t$ \\ \hline

Inverse Stein & $-\tfrac{1}{4}$& $-\tfrac{1}{4}$ & $t$ & $\frac{1}{t}$ & $\frac{1}{2}\left(\frac{1}{t} + \log t\right)$ & $ \frac{1}{2}(1- \frac{1}{t}) $ \\ \hline

Symmetrized Stein & $-\tfrac{1}{2}$ & $-\tfrac{1}{2}$ & $t$ & $\frac{1}{t}$ & $\frac{1}{2}\left(t+\frac{1}{t}\right)$ & $\frac{1}{2}(t - \frac{1}{t})$ \\ \hline

Quadratic & $t^2$ & $t^2$& $t$ & $t$ & $-2t$ & $-2t$\\ \hline

Weighted quadratic & $-2t$ & $t$& $t^2$ & $\frac{1}{t}$ & $t$ 
& $t$ \\ \hline
\end{tabular}
\vspace{2mm}
\caption{Functions $h_1$, $h_2$, $g_1$, $g_2$ and $f$ in the representation~\eqref{eq:D_form_1} of the divergences of Table~\ref{table:structured_divergence}.}
\label{table:structured_divergence_f_h}
\end{table}
\egroup
\noindent As the spectrum of any matrix is invariant under conjugation with an orthogonal matrix $V\in {\mathcal{O}_p}$, we have 
\begin{equation*}
    \sum_{i=1}^p \left( h_1 (\lambda_i( V \Diag(x^\uparrow) V^\top )) + h_2(\lambda_i(\Diag(y^\uparrow))) \right) = \sum_{i=1}^p \left( h_1 (\lambda_i(  \Diag(x^\uparrow)  )) + h_2(\lambda_i(\Diag(y^\uparrow))) \right)
\end{equation*}
for all~$x,y \in \mathbb{R}^p$. In view of the representation~\eqref{eq:D_form_1} and the above identity, it remains to be shown that 
\begin{equation}\label{ineq:rearrange_f}
    \sum_{i = 1}^p f \left(\lambda_i( \Diag(\sqrt{y}^\uparrow ) V g_1 ( \Diag(x^\uparrow) ) V^\top g_2( \Diag(\sqrt{y}^\uparrow )) )\right) \ge \sum_{i = 1}^p f\left(\lambda_i( g_1 ( \Diag(x^\uparrow) ) g_2( \Diag(y^\uparrow)) )\right) 
\end{equation}
for all $x, y\in\R_+^p$ and $V\in {\mathcal{O}_p}$. Table~\ref{table:structured_divergence_f_h} shows that always either of the following two conditions holds:
\begin{itemize}
    \item $t\mapsto t f'(t)$ is strictly increasing, $g_1$ is strictly increasing and $g_2$ is is strictly decreasing;
    
    \item $t\mapsto t f'(t)$ is strictly decreasing, and $g_1$ and $g_2$ are both strictly increasing.
\end{itemize}
The desired inequality~\eqref{ineq:rearrange_f} then follows from \cite[Theorem~3]{yue2020matrix}. 
Inspecting the proofs of \cite[Theorem~3 and Lemma~1]{yue2020matrix} further reveals that \eqref{ineq:rearrange_f} holds if and only if $V g_1 ( \Diag(x^\uparrow) ) V^\top = g_1 ( \Diag(x^\uparrow) )$, which is equivalent to $V  \Diag(x^\uparrow)  V^\top =  \Diag(x^\uparrow) $ because $g_1$ is strictly increasing.
This observation completes the proof.
\end{proof}

\section{Proofs of Section~\ref{sec:new_CSE}}\label{app:proofs_section-4}
\begin{proof}[Proof of Theorem~\ref{thm:verification}]
We prove the assumptions one by one. Note first that, by Proposition~\ref{prop:minimax_assumption}, every divergence~$D$ in Table~\ref{table:structured_divergence} satisfies the minimax property specified in Assumption~\ref{assu:inf_sup}. 

Assumption~\ref{assu:D_form} requires~$D$ to be a spectral divergence. To show that~$D$ is orthogonally equivariant, recall that the spectrum of a matrix is preserved under similarity transformations. As the trace and the determinant are spectral functions, the orthogonal equivariance of all divergences in Table~\ref{table:structured_divergence} is easily verified using elementary rules of matrix algebra. It is also straightforward to verify that every divergence~$D$ in Table~\ref{table:structured_divergence} is spectral with generator~$d$ as specified in Table~\ref{table:a}. In addition, the domain of~$d$ contains a point~$(a,b)$ with~$b>0$, and~$d$ is ostensibly continuous throughout its domain. The rearrangement property holds thanks to Proposition~\ref{prop:rearrange}.

Assumption~\ref{assu:d_convex} follows immediately from definitions of the generators in Table~\ref{table:a}. For example, it is clear that the generator~$d_b(\cdot)=d(\cdot,b)=( \log (\cdot/b))^2$ of the Fisher-Rao divergence is twice continuously differentiable on~$\R_{++}$ for any fixed~$b>0$. In addition, we have $d_b'' (a) = 2( 1 - \log(a/b))/a^2 > 0$ for any~$a \in (0,b]$ and~$b>0$, which shows that~$d_b$ is convex on~$[0,b]$. Similarly, one can prove Assumption~\ref{assu:d_convex} for all other divergences.

It remains to be shown that all generators in Table~\ref{table:a} satisfy the differential inequality of Assumption~\ref{assu:d_b_second_derivative}. For example, the generator $d(a,b)=( \log (a/b))^2$ of the Fisher-Rao divergence satisfies
\[
    \frac{\partial}{\partial a}d(a,b) = \frac{2}{a}\log\frac{a}{b}, \quad \frac{\partial^2}{\partial a^2} d(a,b)= \frac{2}{a^2}\left( 1 - \log\frac{a}{b} \right)\quad \text{and}\quad  \frac{\partial^2}{\partial b\partial a}d(a,b) = -\frac{2}{ab} \quad\forall a,b\in\R_{++}.
\]
Therefore, we obtain
\[ 
    a \, \frac{\partial^2}{\partial a^2}d(a,b) + b\, \frac{\partial^2}{\partial a\partial b}d(a,b)  -\frac{\partial}{\partial a}d(a,b)= \frac{2}{a}\left( 1 - \log\frac{a}{b} \right) - \frac{2}{a} - \frac{2}{a}\log\frac{a}{b} = - \frac{4}{a}\log \frac{a}{b} > 0
\]
for all for any $b > a >0$. Hence, Assumption~\ref{assu:d_b_second_derivative} holds for the Fisher-Rao divergence. Similarly, Assumption~\ref{assu:d_b_second_derivative} can be proved for all other divergences using the basic rules of calculus.
\end{proof}

We now prove Corollaries~\ref{thm:KL}, \ref{thm:Wass} and~\ref{thm:FR}, which characterize the eigenvalue map as well as the inverse shrinkage intensity of the KL, Wasserstein and Fisher-Rao covariance shrinkage estimators, respectively.

\begin{proof}[Proof of Corollary~\ref{thm:KL}]
The generator of the KL divergence is given by $d(a,b) = \frac{1}{2}(\frac{a}{b} - 1 - \log \frac{a}{b})$; see Table~\ref{table:a}.
Note that Assumptions~\ref{assu:inf_sup}, \ref{assu:D_form}, \ref{assu:d_convex} and~\ref{assu:d_b_second_derivative} hold by Theorem~\ref{thm:verification}, Assumption~\ref{assu:data}\ref{assu:data-d} holds because~$\covsa\in \PD^p$, and Assumption~\ref{assu:data}\ref{assu:data-eps} holds because $d(0,b) = +\infty$ for any $b > 0$. Therefore, Theorem~\ref{thm:general_CSE} applies, which implies that problem~\eqref{eq:CSE} is uniquely solved by $X \opt= \Vsa \Diag (x\opt) \Vsa^\top$, where $x\opt_i = s(\gamma\opt, \xsa_i)$ for every $i = 1,\dots,p$. Next, we construct the eigenvalue map~$s$ defined in~\eqref{eq:s}. If~$b>0$, then~$s(\gamma, b)$ is the unique solution~$a^\star\geq 0$ of
\[
    0 = 2a\opt + \gamma \frac{\partial}{\partial a}  d(a\opt, b) = 2 a^\star + \frac{\gamma }{2 }\left( \frac{1}{b} - \frac{1}{a^\star} \right).
\]
We thus obtain
\[ 
    s(\gamma, b) = \frac{-\gamma  + \sqrt{{\gamma }^2 + 16 b^2\gamma }}{8b}. 
\]
It remains to find a formula for~$\gamma\opt$. By Theorem~\ref{thm:general_CSE}, $\gamma^\star$ is the unique positive root of the equation
\begin{equation*} 
    \sum_{i = 1}^p d(s(\gamma^\star,\xsa_i), \xsa_i) -\eps = 0\quad \iff\quad  2\varepsilon +p + \sum_{i=1}^p \left[-\frac{s(\gamma\opt, \xsa_i)}{\xsa_i} +\log\frac{s(\gamma\opt, \xsa_i)}{\xsa_i} \right]=0.
\end{equation*} 
To show that~$\dualvar_{\KL}$ provides an upper bound on~$\gamma^\star$, note that the above equation implies that
\begin{align*}
    0 & = 2\eps + p + \sum_{i=1}^p \left[-\frac{s(\gamma\opt, \xsa_i)}{\xsa_i} +\log\frac{s(\gamma\opt, \xsa_i)}{\xsa_i} \right] \ge 2\eps + \sum_{i=1}^p \log\frac{s(\gamma\opt, \xsa_i)}{\xsa_i} \ge 2\eps + p  \log\frac{s(\gamma\opt, \xsa_p)}{\xsa_p}.
\end{align*}
Here, the two inequalities follow from Lemmas~\ref{lem:a} and~\ref{lemma:cond_decrease}, which imply that $s(\gamma,b) < b$ for all~$\gamma,b>0$ and that $s(\gamma, b)/b$ is non-increasing in~$b$, respectively. Rearranging the above inequality yields $\xsa_p \,e^{-\frac{2\eps}{p} } \ge s(\gamma\opt, \xsa_p)$. As $s(\gamma, \xsa_p)$ is strictly increasing in~$\gamma$ by virtue of Lemma~\ref{lem:a}\ref{lem:a-1}, the unique solution~$\dualvar_{\KL}$ of the equation
\begin{align*}
    \xsa_p \,e^{-\frac{2\eps}{p} } = s(\dualvar_{\KL}, \xsa_p) = \frac{-\dualvar_{\KL} + \sqrt{{\dualvar_{\KL}}^2 + 16\xsa_p^2 \dualvar_{\KL}}}{8\xsa_p}
\end{align*}
provides an upper bound on~$\gamma\opt$. The desired formula for~$\dualvar_{\KL}$ is obtained by solving this equation.
\end{proof}

\begin{proof}[Proof of Corollary~\ref{thm:Wass}]
The generator of the Wasserstein divergence is given by $d(a,b) = a+b-2\sqrt{ab}$; see Table~\ref{table:a}. Assumptions~\ref{assu:inf_sup}, \ref{assu:D_form}, \ref{assu:d_convex} and~\ref{assu:d_b_second_derivative} hold by Theorem~\ref{thm:verification}, Assumption~\ref{assu:data}\ref{assu:data-d} holds because~$\covsa\in\PSD^p$, and Assumption~\ref{assu:data}\ref{assu:data-eps} holds because $\eps \in ( 0, \Tr{\covsa} ) $, which implies that $\sum_{i=1}^p d(0, \xsa_i) = \sum_{i=1}^p \xsa_i = \Tr{\covsa} > \eps$. Thus, Theorem~\ref{thm:general_CSE} applies. Recall now from~\eqref{eq:s} that if~$\gamma>0$, then~$s(\gamma, b)$ is defined as the unique solution~$a^\star\geq 0$ of
\[
    0 = 2a\opt + \gamma \frac{\partial}{\partial a}  d(a\opt, b) = 2a\opt + \gamma \left(1-\sqrt{\frac{b}{a\opt}}\right).
\]
Solving a cubic equation in~$\sqrt{a\opt}$ thus reveals that~$s(\gamma, b)$ is given by~\eqref{eq:Wass-xiopt}. Theorem~\ref{thm:general_CSE} further implies that the inverse shrinkage intensity~$\gamma\opt$ is the unique positive root of the equation~\eqref{eq:Wass-FOC}. To show that~$\dualvar_{\W}$ provides an upper bound on~$\gamma\opt$, let $ i'\in \{1,\dots,p\}$ be the smallest index~$i$ with~$\xsa_i > 0$. As~$s(\gamma\opt, 0) = 0$, \eqref{eq:Wass-FOC} implies
\begin{align}
    0 & = \eps - \sum_{i= i'}^p \left(\sqrt{\xsa_i} - \sqrt{s(\gamma\opt, \xsa_i)} \right)^2\notag \ge \eps - \xsa_p \sum_{i=i'}^p \left( 1 - \sqrt{\frac{s(\gamma\opt,\xsa_i )}{\xsa_i}} \right)^2 \notag\\
    & \ge \eps - p \xsa_p \left( 1- \sqrt{\frac{s(\gamma\opt,\xsa_p )}{\xsa_p}} \right)^2 = \eps - p \left( \sqrt{\xsa_p} - \sqrt{s(\gamma\opt,\xsa_p)} \right)^2, \label{eq:Wass_a_proof_2}
\end{align}
where the first inequality holds because~$\xsa_i \le \xsa_p$, and the second inequality follows from Lemmas~\ref{lemma:cond_decrease} and~\ref{lem:a}, which imply that $s(\gamma, b)/b$ is non-increasing in~$b$ and that $0< s(\gamma, b) < b$ for all~$\gamma,b>0$, respectively. The defining equation for~$s(\gamma\opt,\xsa_p)$ further implies that
\begin{equation}
    \label{eq:Wass_proof_3}
    \left( \sqrt{\xsa_p} - \sqrt{s(\gamma\opt,\xsa_p)} \right)^2 = \frac{4 s(\gamma\opt, \xsa_p)^3}{{\gamma\opt}^2}.
\end{equation}
Substituting \eqref{eq:Wass_proof_3} into \eqref{eq:Wass_a_proof_2} yields
\begin{align*}
    0 \ge \eps - \frac{4 p s(\gamma\opt, \xsa_p)^3}{{\gamma\opt}^2} \ge \eps - \frac{4 p \xsa_p^3}{{\gamma\opt}^2} \quad \iff\quad \gamma\opt \le 2\sqrt{\frac{p \xsa_p^3 }{\eps}} =\dualvar_{\W}.
\end{align*}
This observation completes the proof.
\end{proof}

\begin{proof}[Proof of Corollary~\ref{thm:FR}]
The generator of the Fisher-Rao divergence is $d(a,b) = (\log\frac{a}{b})^2$; see Table~\ref{table:a}. Assumptions~\ref{assu:inf_sup}, \ref{assu:D_form}, \ref{assu:d_convex} and~\ref{assu:d_b_second_derivative} hold by Theorem~\ref{thm:verification}, Assumption~\ref{assu:data}\ref{assu:data-d} holds because~$\covsa\in \PD^p$, and Assumption~\ref{assu:data}\ref{assu:data-eps} holds because~$d(0,b) = +\infty$ for any~$b>0$. Thus, Theorem~\ref{thm:general_CSE} applies. If~$b>0$, $s(\gamma, b)$ is the unique solution~$a^\star\geq 0$~of
\[
    0 = 2a\opt + \gamma \frac{\partial}{\partial a}  d(a\opt, b) = 2 a\opt + \frac{2\gamma }{a\opt}\log\frac{a\opt}{b} \quad \iff\quad \frac{2(a\opt)^2}{\gamma} e^{\frac{2(a\opt)^2}{\gamma}} = \frac{2b^2}{\gamma}.
\]
Recall now that, for any $t > -e^{-1}$, the principal branch of the Lambert $W$-function is defined as the unique solution~$W_0 (t)$ of the equation $W e^{W} = t$. Identifying~$W$ with $2(a\opt)^2/\gamma$ and~$t$ with~$2b^2/\gamma>0$, we thus find
\begin{equation}
    \label{eq:FR_a_proof_1}
    s(\gamma, b) = \sqrt{\frac{\gamma}{2} W_0\left( \tfrac{2b^2}{\gamma} \right)} = b \exp\left( - \frac{1}{2} W_0 (\tfrac{2b^2}{\gamma}) \right),
\end{equation} 
where the second equality holds because $W_0(t) =te^{-W_0(t)}$. This proves~\eqref{eq:FR-xiopt}.
Theorem~\ref{thm:general_CSE} further implies that the inverse shrinkage intensity~$\gamma\opt$ is the unique positive root of the equation~\eqref{eq:FR_opt_gamma}. It remains to prove that~$\dualvar_{\FR}$ upper bounds~$\gamma\opt$. Recalling that $0 \le W_0 (t) = t \exp(-W_0(t)) \le t$ for any $t\ge 0$, \eqref{eq:FR_opt_gamma} implies that
    \begin{align*}
        4 \eps = \sum_{i = 1}^p W_0^2 \left( \frac{2\xsa_i^2}{\gamma^\star} \right) \le \sum_{ i = 1 }^p \frac{4\xsa_i^4}{{\gamma^\star}^2} \quad\implies \quad \gamma^\star \le  \sqrt{\sum_{i=1}^p \frac{\xsa_i^4}{\eps} } \le \|\covsa\|_{\mathrm{F}}^2\sqrt{\varepsilon}=\dualvar_{\FR}.
    \end{align*}
This observation completes the proof.
\end{proof}

\end{appendix}

\end{document}